\title{The Role of Memory in Stochastic Optimization}
\author{} 
\author{ {\bf Antonio Orvieto$^*$, \ Jonas Kohler, \ Aurelien Lucchi} \\
Department of Computer Science \\
ETH Z\"urich, Switzerland \\
}
\newcommand\blfootnote[1]{%
  \begingroup
  \renewcommand\thefootnote{}\footnote{#1}%
  \addtocounter{footnote}{-1}%
  \endgroup
}
\def\BState{\State\hskip-\ALG@thistlm}
\setlist[2]{noitemsep} 
\newcommand{\tr}{\text{Tr}}
\DeclareMathOperator{\var}{Cov}
\DeclareMathOperator*{\argmin}{arg\,min}
\newcommand{\EE}{\mathcal{E}}
\newcommand{\G}{\mathcal{G}}
\newcommand{\OO}{\mathcal{O}}
\newcommand{\m}{{\mathfrak{m}}}
\newcommand{\C}{{\mathcal C}}
\newcommand{\E}{{\mathbb E}}
\newcommand{\N}{{\mathbb N}}
\newcommand{\R}{{\mathbb{R}}}
\newtheorem*{theorem*}{Theorem}
\newmdtheoremenv{theorem}{Theorem}[section]
\newmdtheoremenv{lemma}[theorem]{Lemma}
\newmdtheoremenv{proposition}[theorem]{Proposition}
\newmdtheoremenv{corollary}[theorem]{Corollary}
\newtheorem{lemmama}{Lemma}[section]
\newtheorem{remark}{Remark}[section]
\newcommand*\circled[1]{\tikz[baseline=(char.base)]{
            \node[shape=circle,draw,inner sep=2pt] (char) {#1};}}
\def\msquare{\mathord{\scalerel*{\Box}{gX}}}            
\definecolor{colorantonio}{RGB}{0,128,255}
\begin{document}
\maketitle
\blfootnote{
$^*$ Correspondence to [orvietoa@ethz.ch].\\
Accepted paper at the 35th Conference on Uncertainty in Artificial Intelligence (UAI), Tel Aviv, 2019.
}

\begin{abstract}
\vspace{-2mm}
The choice of how to retain information about past gradients dramatically affects the convergence properties of state-of-the-art stochastic optimization methods, such as Heavy-ball, Nesterov's momentum, RMSprop and Adam. Building on this observation, we use stochastic differential equations (SDEs) to explicitly study the role of memory in gradient-based algorithms. We first derive a general continuous-time model that can incorporate arbitrary types of memory, for both deterministic and stochastic settings. We provide convergence guarantees for this SDE for weakly-quasi-convex and quadratically growing functions.
We then demonstrate how to discretize this SDE to get a flexible discrete-time algorithm that can implement a board spectrum of memories ranging from short- to long-term. Not only does this algorithm increase the degrees of freedom in algorithmic choice for practitioners but it also comes with better stability properties than classical momentum in the convex stochastic setting. In particular, no iterate averaging is needed for convergence. Interestingly, our analysis also provides a novel interpretation of Nesterov's momentum as stable gradient amplification and highlights a possible reason for its unstable behavior in the (convex) stochastic setting. Furthermore, we discuss the use of long term memory for second-moment estimation in adaptive methods, such as Adam and RMSprop. Finally, we provide an extensive experimental study of the effect of different types of memory in both convex and nonconvex settings.
\end{abstract}

\section{INTRODUCTION}
Our object of study is the classical problem of minimizing finite-sum objective functions:
\begin{equation}
    \tag{P}
    x^* = \argmin_{x\in\R^d} f(x):=\frac{1}{n}\sum_{i=1}^nf_i(x).
    \label{P}
\end{equation}

Accelerated gradient methods play a fundamental role in optimizing such losses, providing optimal rates of convergence for certain types of function classes such as the ones being convex~\cite{nesterov2018lectures}.
The two most popular momentum methods are Heavy-ball (\ref{HB})~\cite{polyak1964some} and Nesterov's accelerated gradient (NAG)~\cite{nesterov1983method}.
They are based on the fundamental idea of augmenting gradient-based algorithms with a momentum term that uses \textit{previous gradient directions} in order to accelerate convergence, which yields the following type of iterative updates:
\begin{equation}
\tag{HB}
x_{k+1} = x_{k}+\beta_k (x_k-x_{k-1}) - \eta \nabla f(x_k),
\label{HB}
\end{equation}
with $\beta_k$ an iteration dependent \textit{momentum parameter}\footnote{Gradient Descent~\cite{cauchy1847methode} can be seen as a special case of \ref{HB} for $\beta_k = 0$.} and $\eta$ a positive number called \textit{learning rate} (a.k.a. \textit{stepsize}).

Although both \ref{HB} and NAG have received a lot of attention in the literature, the idea of acceleration is still not entirely well understood.
For instance, a series of recent works~\cite{su2014differential,wibisono2016variational,yang2018physical} has studied these methods from a physical perspective, which yields a connection to damped linear oscillators. Arguably, the insights provided by these works are mostly descriptive and have so far not been able to help with the design of conceptually new algorithms. Furthermore, the resulting analysis often cannot be easily translated to stochastic optimization settings, where stability of momentum methods may actually be reduced due to inexact gradient information~\cite{jain2018accelerating, kidambi2018insufficiency}.



This lack of theoretical understanding is rather unsatisfying. Why is it that acceleration able to provide faster rates of convergence for convex functions but fails when used on non-convex functions or in a stochastic setting? This question is especially relevant given that momentum techniques (such as Adam~\cite{kingma2014adam}) are commonly used in machine learning in order to optimize non-convex objective functions that arise when training deep neural networks.

In order to address this issue, we here exploit an alternative view on the inner workings of momentum methods which is not physically-inspired but instead builds upon the theoretical work on memory gradient diffusions developed by~\cite{cabot2009long} and~\cite{ gadat2014long}. In order to leverage this analogy, we first rewrite~\ref{HB} as follows:
\begin{tcolorbox}
\vspace{-4mm}
\begin{equation}
\tag{HB-SUM}
\hspace{-1mm}
x_{k+1} = x_k - \eta \sum_{j=0}^{k-1} \left(\prod_{h=j+1}^k\beta_h\right) \nabla f(x_j)- \eta\nabla f(x_k)
\label{HB-SUM}
\end{equation}
\end{tcolorbox}
where $x_0=x_{-1}$ is assumed. That is, at each iteration $k$ the next step is computed using a \textit{weighted average of past gradients} : $x_{k+1} = x_k + \eta\sum_{j=0}^{k} w(j,k) \nabla f(x_k))$. In particular, if $\beta_h$ is constant across all iterations, the memory --- which is controlled by the weights --- vanishes exponentially fast (short-term memory). Such averaging provides a cheap way to 1) adapt to the geometry of ill-conditioned problems (also noted in~\cite{sutskever13}) and 2) denoise stochastic gradients if the underlying true gradients are changing slowly. Similarly, adaptive methods~\cite{duchi2011adaptive,kingma2014adam} use memory of past square gradients to automatically adjust the learning rate during training. For the latter task, it has been shown~\cite{reddi2018} that some form of long-term memory is convenient both for in theory (to ensure convergence) and in practice, since it has been observed that, in a mini-batch setting, \textit{large gradients are not common and might be quite informative}. 

In summary --- most modern stochastic optimization methods can be seen as composition of memory systems. Inspired by this observation and by the undeniable importance of shining some light on the acceleration phenomenon, we make the following contributions.
\vspace{-5mm}
\begin{enumerate}
    \item  Following previous work from \cite{cabot2009long}, we generalize the continuous-time limit of \ref{HB} to an interpretable ODE that can implement various types of gradient forgetting (Sec.~\ref{sec:mem_fun_grad_forg}). Next, we extend this ODE to the stochastic setting (Sec.~\ref{sec:stoc_gradients}). 
    \item By comparing the resulting SDE to the model for Nesterov momentum developed in~\cite{su2014differential}, we provide a novel interpretation of acceleration as gradient amplification and give some potential answers regarding the source of instability of stochastic momentum methods (Sec.~\ref{nesterov_comparison}).
    \item We study the convergence guarantees of our continuous-time memory system and show that, in the convex setting, long-term (polynomial) memory is more stable than classical momentum (Sec.~\ref{sec:analysis_discretization}).
    \item We discretize this memory system and derive an algorithmic framework that can incorporate various types of gradient forgetting efficiently. Crucially, we show the discretization process preserves the convergence guarantees.
    \item We run several experiments to support our theory with empirical evidence in both deterministic and stochastic settings (Sec.~\ref{sec:experiments}).
    \item We propose a modification of Adam which uses long-term memory of gradient second moments to adaptively choose the learning rates (Sec.~\ref{padam}).
\end{enumerate}
\vspace{-3mm}
We provide an overview of our notation in App. \ref{notation}.

\section{RELATED WORK}
\label{sec:related_work}
\vspace{-1mm}

\begin{scriptsize}
\begin{table*}
\centering
\begin{tabular}{| l | l | c | c |}
  \hline
   \textbf{Function}& \textbf{Gradient} &  \textbf{Rate} &  \textbf{Reference} \\
 \hline
$\mu$-strongly-convex, $L$-smooth & Deterministic & $f(x_k) -f(x^*) \le \mathcal{O}(q^k)$ & \cite{polyak1964some} \\
Convex, $L$-smooth & Deterministic & $f(\bar{x}_k) -f(x^*) \le \mathcal{O}(1/k)$ & \cite{ghadimi2015global} \\
Convex & Stochastic & $\E\left(f(\bar{x}_k) -f(x^*)\right)\le \mathcal{O}(1/\sqrt{k})$ & \cite{yang2016unified} (*) \\
Non-convex, $L$-smooth & Stochastic & $\min_{i\le k}\E \left[||\nabla f(x_i)||^2\right]\le \mathcal{O} (1/\sqrt{k})$ & \cite{yang2016unified} (*) \\
\hline
\end{tabular}
\caption{\footnotesize{Existing convergence rate for Heavy-ball for general functions (special cases for quadratic functions are mentioned in the main text). The term $\bar{x}_k$ denotes the Cesaro average of the iterates. The constant $q$ is defined as $q = \frac{\sqrt{L}-\sqrt{\mu}}{\sqrt{L}+\sqrt{\mu}}$. (*) The results of~\cite{yang2016unified} also require bounded noise and bounded gradients as well as a step size decreasing as $1/\sqrt{k}$.}}
\label{tb:summary_existing_rates}
\end{table*}
\end{scriptsize}
\paragraph{Momentum in deterministic settings.}
The first accelerated proof of convergence for the deterministic setting dates back to~\cite{polyak1964some} who proved a local linear rate of convergence for Heavy-ball (with constant momentum) for twice continuously differentiable, $\mu$-strongly convex and $L$-smooth functions (with a constant which is faster than gradient descent). \cite{ghadimi2015global} derived a proof of convergence of the same method for convex functions with Lipschitz-continuous gradients, for which the Cesàro average of the iterates converges in function value like $\mathcal{O}(1/k)$ (for small enough $\eta$ and $\beta$).

A similar method, Nesterov's Accelerated Gradient (NAG), was introduced by~\cite{nesterov1983method}. It achieves the optimal $\mathcal{O}(1/k^2)$ rate of convergence for convex functions and, with small modifications, an accelerated (with respect to gradient desacent) linear convergence rate for smooth and strongly-convex functions.

\vspace{-2mm}
\paragraph{Momentum in stochastic settings.}
Prior work has shown that the simple momentum methods discussed above lack stability in stochastic settings, where the evaluation of the gradients is affected by noise~(see motivation in \cite{allen2017katyusha} for the Katyusha method). In particular, for quadratic costs, ~\cite{polyak1987introduction} showed that stochastic Heavy-ball does not achieve any accelerated rate but instead matches the
rate of SGD. More general results are proved in \cite{yang2016unified} for these methods, both for convex and for smooth functions, requiring a decreasing learning rate, bounded noise and bounded subgradients (see Tb.\ref{tb:summary_existing_rates}). For strongly-convex functions,~\cite{yuan2016influence} also studied the mean-square error stability and showed that convergence requires small (constant) learning rates. Furthermore, the rate is shown to be equivalent to SGD and therefore the theoretical  benefits of acceleration in the deterministic setting do not seem to carry over to the stochastic setting.

\vspace{-2mm}
\paragraph{Continuous-time perspective.} The continuous time ODE model of NAG for convex functions presented in~\cite{su2014differential} led to the developments of several variants of Nesterov-inspired accelerated methods in the deterministic setting (e.g.~\cite{krichene2015accelerated} and~\cite{wilson2016lyapunov}). In this line of research, interesting insights often come from a numerical analysis and discretization viewpoint~\cite{zhang2018direct,betancourt2018symplectic}. Similarly, in stochastic settings, guided by SDE models derived from Nesterov's ODE in \cite{su2014differential} and by the variational perspective in~\cite{wibisono2016variational}, ~\cite{xu2018accelerated} and~\cite{xu2018continuous} proposed an interpretable alternative to AC-SA (an accelerated stochastic approximation algorithm introduced in ~\cite{lan2012optimal} and \cite{ghadimi2012optimal}). This is a sophisticated momentum method that in expectation achieves a $\mathcal{O}(L/k^2 + \varsigma_*^2 d/(\mu k))$  rate\footnote{$\varsigma_*^2$ bounds the stochastic gradient variance in each direction.} for $\mu$-strongly convex and $L$-smooth functions and $\mathcal{O}(L/k^2 + \varsigma_*^2 d/\sqrt{k})$ for convex $L$-smooth functions. These rates are nearly optimal, since in the deterministic limit $\varsigma_*\to 0$ they still capture acceleration.

Unlike~\cite{xu2018accelerated,xu2018continuous}, we focus on how the memory of past gradients relates to the \textit{classical} and most widely used momentum methods (\ref{HB}, NAG) and, with the help of the SDE models, show that the resulting insights can be used to design building blocks for new optimization methods.

\vspace{-1mm}
\section{MEMORY GRADIENT SDE}
\label{sec:MGSDE}
\vspace{-2mm}
In his 1964 paper, Polyak motivated \ref{HB} as the discrete time analogue of a second order ODE:
\begin{tcolorbox}
\vspace{-2mm}
\begin{equation}
    \tag{HB-ODE}
    \ddot X(t) + a(t) \dot X(t) + \nabla f(X(t)) = 0, \ \ 
    \label{HB-ODE}
\end{equation}
\end{tcolorbox}
which can be written in phase-space as
\begin{tcolorbox}
\vspace{-5mm}
\begin{equation}
    \tag{HB-ODE-PS}
    \begin{cases}
    \dot V(t) = -a(t) V(t) -\nabla f(X(t))\\
    \dot X(t) = V(t)
    \end{cases}.
    \label{HB-ODE_PS}
\end{equation}
\vspace{-5mm}
\end{tcolorbox}

This connection can be made precise: in App.~\ref{discretization_HB_proof} we show that \ref{HB} is indeed the result of semi-implicit Euler integration\footnote{~\cite{hairer2006geometric} for an introduction.} on \ref{HB-ODE_PS}.

\subsection{MEMORY AND GRADIENT FORGETTING}
\label{sec:mem_fun_grad_forg}
If the viscosity parameter $\alpha = a(t)$ is time-independent, \ref{HB-ODE}, with initial condition $\dot X(0) = 0$ and $X(0) = x_0$, can be  cast into an integro-differential equation\footnote{By computing $\ddot X$ from \ref{HB-ODE-INT-C} using the fundamental theorem of calculus and plugging in $\dot X(0) = 0$.}:
\begin{tcolorbox}
\vspace{-2mm}
\begin{equation}
    \tag{HB-ODE-INT-C}
    \dot X(t) = -\int_0^t e^{-\alpha \cdot (t-s)} \nabla f(X(s)) ds.
    \label{HB-ODE-INT-C}
\end{equation}
\end{tcolorbox}
\vspace{-1mm}
\paragraph{Bias correction.}
Notice that the instantaneous update direction of \ref{HB-ODE-INT-C} is a weighted average of the past gradients, namely $\int_0^t w(s,t) \nabla f(X(s)) ds$ with $w(s,t) := e^{\alpha(t-s)}$. However, the weights do not integrate to one. Indeed, for all $t$, we have $\int_0^t w(s,t) ds = (1-e^{-\alpha t})/\alpha$, which goes to $1/\alpha$ as $t\to\infty$. As a result, in the constant gradient setting, the previous sum is a \textit{biased estimator} of the actual gradient. This fact suggests a simple modification of \ref{HB-ODE-INT-C}, for $t>0$:
\begin{equation}
   \dot X(t) = -\frac{\alpha}{1-e^{-\alpha t}}\int_0^t e^{-\alpha\cdot(t-s)} \nabla f(X(s)) ds.\label{normalization_step} 
\end{equation}
which we write as $\dot X = -\frac{\alpha}{e^{\alpha t}-1}\int_0^t e^{\alpha s} \nabla f(X(s)) ds$. We note that this normalization step follows exactly the same motivation as bias correction in Adam; we provide an overview of this method in App.~\ref{sec:unbiasing}. If we define $\mathfrak{m}(t) := e^{\alpha t}-1$, the previous formula takes the form:
\begin{tcolorbox}
\vspace{-2mm}
\begin{equation}
\tag{MG-ODE-INT}
    \dot X(t) = -\int_0^t \frac{\dot\m(s)}{\mathfrak{m}(t)} \nabla f(X(s)) ds.
    \label{MG-ODE-INT}
\end{equation}
\end{tcolorbox}
This memory-gradient integro-differential equation (\ref{MG-ODE-INT}) provides a generalization of \ref{HB-ODE-INT-C}, with bias correction. The following crucial lemma is consequence of the fundamental theorem of calculus.
\begin{lemmama}
    For any $\mathfrak{m}\in\C^1(\R,\R)$ s.t. $\mathfrak{m}(0) = 0$, \ref{MG-ODE-INT} is normalized : $\int_0^t \frac{\dot\m(s)}{\m(t)}ds =1$, for all $t > 0$.
    \label{lemma:unbiased}
\end{lemmama}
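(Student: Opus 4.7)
The plan is to apply the fundamental theorem of calculus directly, since the integrand splits cleanly into a part that depends only on $t$ and a part that depends only on $s$.

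First I would observe that for any fixed $t>0$, the factor $1/\m(t)$ is constant with respect to the integration variable $s$, so it can be pulled outside the integral:
\begin{equation*}
\int_0^t \frac{\dot\m(s)}{\m(t)}\,ds \;=\; \frac{1}{\m(t)}\int_0^t \dot\m(s)\,ds.
\end{equation*}
Next, since $\m\in\C^1(\R,\R)$, the fundamental theorem of calculus gives $\int_0^t \dot\m(s)\,ds = \m(t) - \m(0)$. Finally, using the hypothesis $\m(0)=0$, this simplifies to $\m(t)$, and we obtain
\begin{equation*}
\int_0^t \frac{\dot\m(s)}{\m(t)}\,ds \;=\; \frac{\m(t)-\m(0)}{\m(t)} \;=\; 1,
\end{equation*}
which is valid for every $t>0$ such that $\m(t)\neq 0$ (a condition implicitly required for \ref{MG-ODE-INT} to be well defined).

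There is essentially no obstacle: the only subtlety worth flagging is the implicit non-vanishing assumption $\m(t)\neq 0$ for $t>0$, which must hold for the integro-differential equation to make sense in the first place. The memory kernel $\m(t)=e^{\alpha t}-1$ from which \ref{MG-ODE-INT} was derived clearly satisfies both $\m(0)=0$ and $\m(t)>0$ for $t>0$, so the lemma applies to the motivating example and more generally to any $\C^1$ kernel vanishing only at the origin.
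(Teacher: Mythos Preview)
Your proof is correct and follows exactly the same approach as the paper: apply the fundamental theorem of calculus to get $\int_0^t \dot\m(s)\,ds = \m(t) - \m(0) = \m(t)$, then divide by $\m(t)$. The paper's proof is a one-liner to the same effect; your version is simply more explicit and helpfully flags the implicit assumption $\m(t)\neq 0$ for $t>0$.
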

\vspace{-2em}
\begin{proof}
Since $\mathfrak{m}(0)=0$, $\int_0^t\dot \m(s)ds = \mathfrak{m}(t)$.
\end{proof}
\vspace{-1mm}
Based on Lemma~\ref{lemma:unbiased}, we will always set $\m(0) = 0$. What other properties shall a general $\m(\cdot)$ have? Requiring $\dot \m(s) \neq 0$ for all $s\ge0$ ensures that there does not exist a time instant where the gradient is systematically discarded. Hence, since $\m(0)=0$, $\m(\cdot)$ is either monotonically decreasing and negative or monotonically increasing and positive. In the latter case, without loss of generality, we can flip its sign. This motivates the following definition.
\vspace{-1 mm}
\paragraph{Definition.}
$\m\in\mathcal{C}^1(\R_+,\R)$ is a \textbf{memory function} if it is non-negative, strictly increasing and s.t. $\m(0)=0$.

For example, $e^{\alpha t}-1$, from which we started our discussion, is a valid memory function. Crucially, we note that $\dot \m(\cdot)$ plays the important role of controlling the speed at which we \textit{forget} previously observed gradients. For instance, let $\m(t) = t^3$; since $\dot\m(s) = 3s^2$, the system forgets past gradients \textit{quadratically fast}. In contrast, $\m(t) = e^{\alpha t}-1$ leads to \textit{exponential forgetting}. Some important memory functions are listed in Tb.~\ref{tb:memories}, and their respective influence on past gradients is depicted in Fig.~\ref{fig:memory_types}. We point out that, in the limit $\alpha\to\infty$, the weights $w(s,t) = \frac{\dot\m(s)}{\m(t)}$ associated with exponential forgetting converge to a Dirac distribution $\delta(t-s)$. Hence, we recover the Gradient Descent ODE~\cite{mertikopoulos2018convergence}: $\dot X(t) = -\nabla f(X(t))$. For the sake of comparability, we will refer to this as \textit{instantaneous forgetting}.

Finally, notice that \ref{MG-ODE-INT} can be written as a second order ODE. Too see this, we just need to compute the second derivative. For $t>0$ we have that
$$\ddot X(t) = \frac{\dot\m(t)}{\m(t)^2} \int_0^t\dot \m(s) \nabla f(X(s)) ds - \frac{\dot\m(t)}{\m(t)}\nabla f(X(t)).$$
Plugging in the definition of $\dot X$ from the integro-differential equation, we get the memory-gradient ODE:
\begin{tcolorbox}
\vspace{-5mm}
\begin{equation}
    \tag{MG-ODE}
    \ddot X(t) + \frac{\dot\m(t)}{\m(t)}\dot X(t) + \frac{\dot\m(t)}{\m(t)}\nabla f(X(t)) = 0.
    \label{MG-ODE}
\end{equation}
\vspace{-5mm}
\end{tcolorbox}

\begin{figure}
    \centering
    \includegraphics[width=0.70\linewidth]{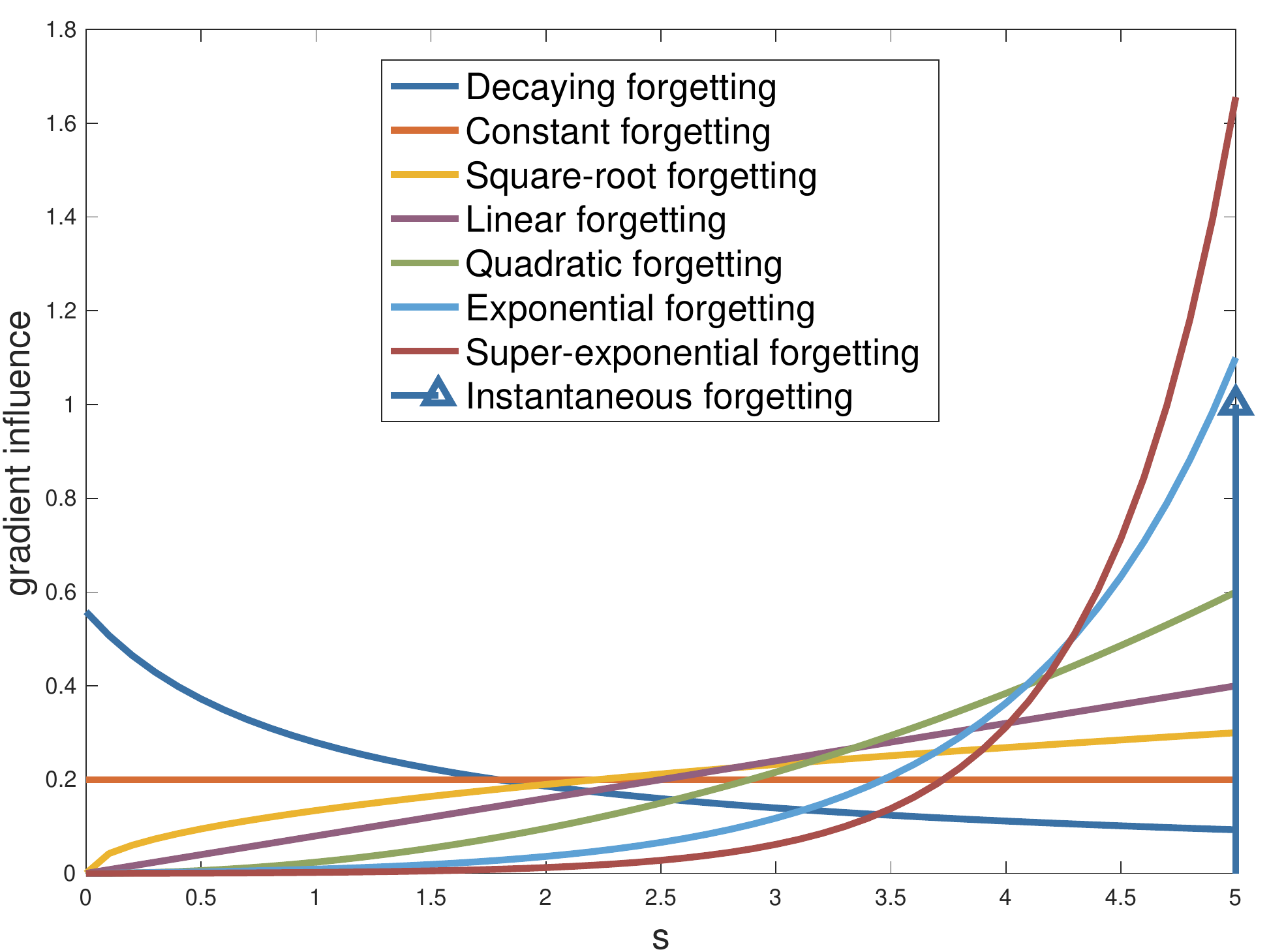}
    \caption{\footnotesize{Illustration of the influence of past gradients on $\dot X(6)$ (i.e. the right hand side of equation \ref{MG-ODE-INT} with $t=5$). The corresponding memory function can be found in Tb. \ref{tb:memories}. The influence is computed as $\dot\m(s)/\m(6)$. By Lemma \ref{lemma:unbiased}, the area under all curves is 1.}}
    \label{fig:memory_types}
\end{figure}
\begin{table}[ht]
\begin{center}
\begin{tabular}{l|l|l}
\small{\textbf{Forgetting}}  &\small{\textbf{Memory}   $\boldsymbol{\m}$} & \small{\textbf{ODE Coeff. } $\boldsymbol{\dot\m/\m}$} \\
\hline
Decaying    &$\log(1+t)$ & $1/(t\log(t+1))$ \\
Constant    &$t$ & $1/t$ \\
Square-root    &$t^{1.5}$ & $1.5/t$ \\
Linear      &$t^2$ & $2/t$ \\
Quadratic   &$t^3$ & $3/t$ \\
Exponential   &$e^{\alpha t}-1$ & $\alpha e^{\alpha t}/\left(e^{\alpha t}-1\right)$ \\
Super-exp   &$e^{t^\alpha}-1$ & $\alpha t^{\alpha-1} e^{t^\alpha}/\left(e^{t^\alpha}-1\right)$ \\
Instantaneous  &$-$ & $-$ \\
\hline
\end{tabular}
\end{center}
\caption{\footnotesize{Some important examples of memory functions.}}
\label{tb:memories}
\end{table}

Equivalently, we can transform this second order ODE into a system of two first order ODEs by introducing the variable $V(t):=\dot X(t)$ and noting that $\dot V(t) = -\frac{\dot\m(t)}{\m(t)} V(t) -\frac{\dot\m(t)}{\m(t)}\nabla f(X(t))$. This is called the \textit{phase-space representation} of~\ref{MG-ODE}, which we use in Sec.~\ref{sec:stoc_gradients} to provide the extension to the stochastic setting. Also, for the sake of comparison with recent literature~(e.g.~\cite{wibisono2016variational}), we provide a variational interpretation of~\ref{MG-ODE} in App.~\ref{lagrangian}.
\vspace{-2mm}
\paragraph{Existence and uniqueness.}
Readers familiar with ODE theory probably realized that, since by definition $\m(0) = 0$, the question of existence and uniqueness of the solution to~\ref{MG-ODE} is not trivial. This is why we stressed its validity for $t>0$ multiple times during the derivation. Indeed, it turns out that such a solution may not exist globally on $[0,\infty)$ (see App.~\ref{app:counterexample}).  Nevertheless, if we allow to start integration from \textit{any} $\epsilon>0$ and assume $f(\cdot)$ to be $L$-smooth, standard ODE theory~\cite{khalil2002nonlinear} ensures that the sought solution exists and is unique on $[\epsilon,\infty)$. Since $\epsilon$ can be made as small as we like (in our simulations in App.~\ref{app:ODEs_quadratic} we use $\epsilon = 10^{-16}$) this apparent issue can be regarded an artifact of the model. Also, we point out that the integral formulation~\ref{MG-ODE-INT} is well defined for every $t>0$. Therefore, in the theoretical part of this work, we act as if integration starts at $0$ but we highlight in the appendix that choosing the initial condition $\epsilon>0$ induces only a negligible difference (see Remarks~\ref{rmk1_exi} and~\ref{rmk2_exi}).

\subsection{INTRODUCING STOCHASTIC GRADIENTS}
\label{sec:stoc_gradients}
In this section we introduce stochasticity in the~\ref{MG-ODE} model. As already mentioned in the introduction, at each step $k$, iterative stochastic optimization methods have access to an estimate $\G(x_k)$ of $\nabla f(x_k)$: the so called \textit{stochastic gradient}. This information is used and possibly combined with previous gradient estimates $\G(x_0),\dots, \G(x_{k-1})$, to compute a new approximation $x_{k+1}$ to the solution $x^*$. There are many ways to design $\G(k)$: the simplest~\cite{robbins1951stochastic} is to take $\mathcal{G}_{\text{MB}}(x_k) := \nabla f_{i_k}(x_k)$,
where $i_k \in \{1,\dots,n\}$ is a uniformly sampled datapoint. This gradient estimator is trivially unbiased (conditioned on past iterates) and we denote its covariance matrix at point $x$ by $\Sigma(x) = \frac{1}{n}\sum_{i=1}^n (\nabla f_i(x)-\nabla f(x))(\nabla f_i(x)-\nabla f(x))^T$.

Following~\cite{krichene2017acceleration} we model such stochasticity adding a volatility term in \ref{MG-ODE}.

\begin{tcolorbox}
\vspace{-5mm}
\small{
\begin{equation}
    \tag{MG-SDE}
    \begin{cases}
    dX(t) = V(t) dt\\ 
    dV(t) = -\frac{\dot \m(t)}{\m (t)}V(t)dt\\
    \ \ \ \ \ \ \ \ \ \ \ \ \ \ \  -\frac{\dot \m(t)}{\m (t)}\left[\nabla f(X(t))dt +\sigma(X(t)) dB(t)\right]
    \end{cases}
    \label{MG-SDE}
\end{equation}
}
\vspace{-5mm}
\end{tcolorbox}
where $\sigma(X(t))\in\R^{d\times d}$ and $\{B(t)\}_{t\ge0}$ is a standard Brownian Motion. Notice that this system of equations reduces to the phase-space representation of \ref{MG-ODE} if $\sigma(X(t))$ is the null matrix. The connection from $\sigma(x)$ to the gradient estimator covariance matrix $\Sigma(x)$ can be made precise: \cite{li2017stochastic} motivate the choice  $\sigma(x) = \sqrt{h\Sigma(x)}$, where $\sqrt{\cdot}$ denotes the principal square root and $h$ is the discretization stepsize. 

The proof of existence and uniqueness to the solution of this SDE\footnote{See e.g. Thm. 5.2.1 in \cite{oksendal2003stochastic}, which gives sufficient conditions for (strong) existence and uniqueness.} relies on the same arguments made for \ref{MG-ODE} in Sec. \ref{sec:mem_fun_grad_forg}, with one additional crucial difference: \cite{orvieto2018continuous} showed that $f(\cdot)$ needs to additionally be three times continuously differentiable with bounded third derivative (i.e. $f\in\C^3_b(\R^d,\R)$), in order for $\sigma(\cdot)$ to be Lipschitz continuous. Hence, we will assume this regularity and refer the reader to~\cite{orvieto2018continuous} for further details.


\subsection{THE CONNECTION TO NESTEROV'S SDE}
\label{nesterov_comparison}
\cite{su2014differential} showed that the continuous-time limit of NAG for convex functions is \ref{HB-ODE} with time-dependent viscosity $3/t$: $\ddot X(t) +\frac{3}{t}\dot X(t) + \nabla f(X(t))=0$, which we refer to as \textit{Nesterov's ODE}. Using Bessel functions, the authors were able to provide a new insightful description and analysis of this mysterious algorithm. In particular, they motivated how the vanishing viscosity is essential for acceleration\footnote{Acceleration is not achieved for a viscosity of e.g. $2/t$.}. Indeed, the solution to the equation above is s.t. $f(X(t))-f(x^*) \le \OO(1/t^2)$; in contrast to the solution to the GD-ODE $\dot X(t) = -\nabla f(X(t))$, which only achieves a rate $\OO(1/t)$.

A closer look at Tb.~\ref{tb:memories} reveals that the choice $3/t$ is related to \ref{MG-ODE} with quadratic forgetting, that is $\ddot X(t) +\frac{3}{t}\dot X(t) +\frac{3}{t} \nabla f(X(t))=0$. However, it is necessary to note that in \ref{MG-SDE} also the gradient term is premultiplied by $3/t$. Here we analyse the effects of this intriguing difference and its connection to acceleration.


\vspace{-2mm}
\paragraph{Gradient amplification.}
A na\"ive way to speed up the convergence of the GD-ODE $\dot X(t) = -\nabla f(X(t))$ is to consider $\dot X(t) = -t\nabla f(X(t))$. This can be seen by means of the Lyapunov function $\EE(x,t) = t^2(f(x)-f(x^*)) +\|x-x^*\|^2$. Using convexity of $f(\cdot)$, we have $\dot \EE(X(t),t) = -t^2\|\nabla f(X(t))\|^2\le 0 $ and therefore, the solution is s.t. $f(X(t))-f(x^*)\le \mathcal{O}(1/t^2)$. However, the Euler discretization of this ODE is the gradient-descent-like recursion $x_{k+1} = x_{k} -\eta k \nabla f(x_k)$ --- which is \textit{not} accelerated. Indeed, this \textit{gradient amplification} by a factor of $t$ is effectively changing the Lipschitz constant of the gradient field from $L$ to $kL$. Therefore, each step is going to yield a descent only if\footnote{See e.g. \cite{bottou2018optimization}.} $\eta \le \frac{1}{kL}$. Yet, this iteration dependent learning rate effectively cancels out the gradient amplification, which brings us back to the standard convergence rate $\mathcal{O}(1/k)$. It is thus natural to ask: "\textit{Is the mechanism of acceleration behind Nesterov's ODE related to a similar gradient amplification?}"

In App.~\ref{app:nesterov_vs_quadratic_noise} we show that $\{X_N(t), V_N(t)\}_{t\ge0}$, the solution to Nesterov's SDE\footnote{Nesterov's SDE is defined, as for \ref{MG-SDE} by augmenting the phase space representation with a volatility term. The resulting system is then : $dX(t) = V(t) dt; \  dV(t) = -3/t V(t) dt -\sigma(X(t))dB(t)$.}, is s.t. the infinitesimal update direction $V_N(t)$ of the position $X_N(t)$ can be written as
\begin{equation}
    V_{N}(t) =  -\int_0^t \frac{s^{3}}{t^3}\nabla f(X(s))ds  +\zeta_{N}(t),
    \label{eq:vn}
\end{equation}
where $\zeta_{N}(t)$ is a random vector with $\E[\zeta_{N}(t)]=0$ and $\var[\zeta_{N}(t)]=\frac{1}{7} t\sigma\sigma^T$. In contrast, the solution $\{X_{\m2}(t), V_{\m2}(t)\})_{t\ge 0}$ of \ref{MG-SDE} with quadratic forgetting satisfies
\begin{equation}
    V_{\m2}(t) =  -\int_0^t \frac{3s^{2}}{t^3}\nabla f(X(s))ds  +\zeta_{\m2}(t),
    \label{eq:vm}
\end{equation}

\begin{figure}
  \centering
    \includegraphics[width=0.97\linewidth]{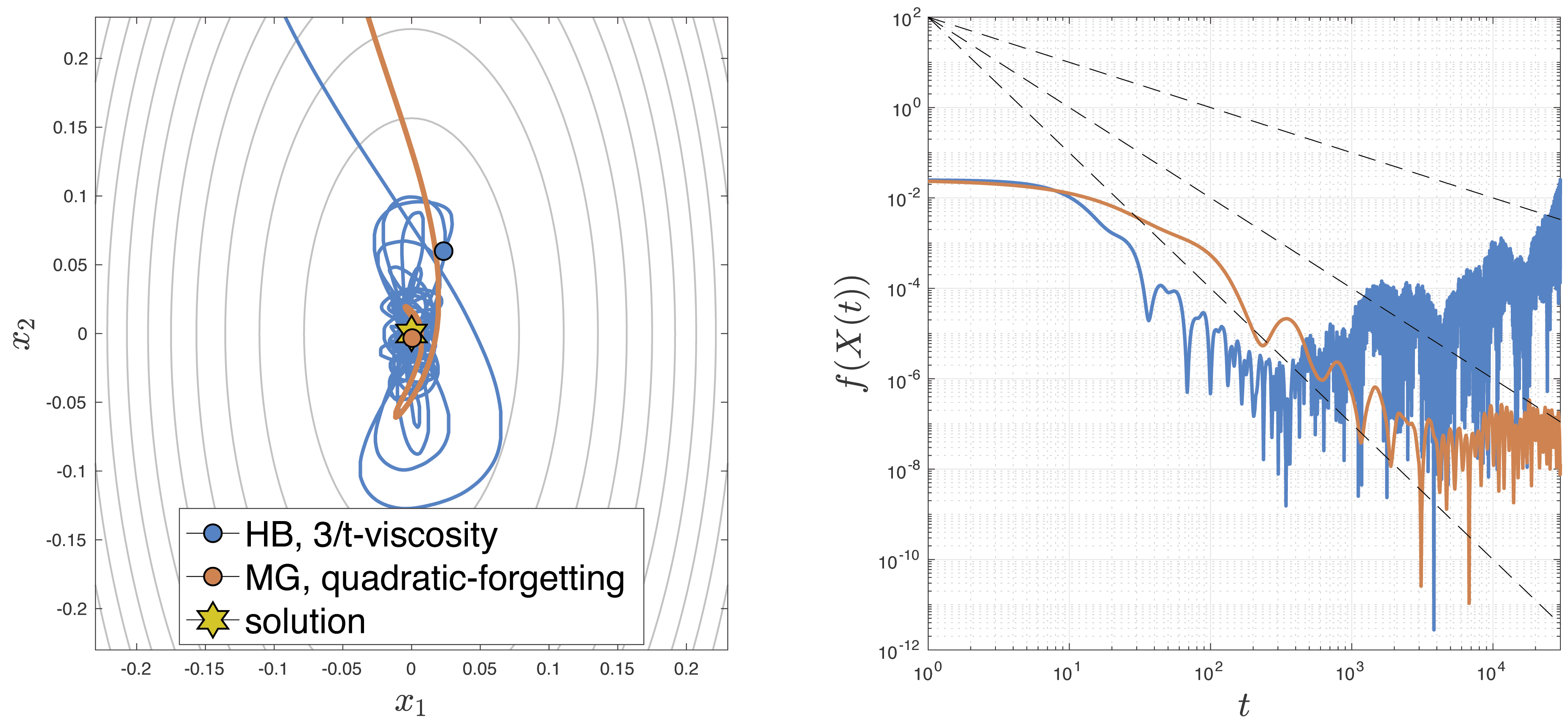} 
    \caption{\footnotesize{HB-SDE with $\alpha(t)=3/t$ (i.e. Nesterov's SDE) compared to \ref{MG-SDE} with quadratic forgetting. Setting as in \cite{su2014differential}: $f(x) = 2\times10^{-2} x_1^2 + 5\times10^{-3} x_2^2$ starting from $X_0 = (1,1)$ and $\dot X(0) = (0,0)$. Both systems are exposed to the same noise volatility. Simulation using the Milstein scheme \cite{mil1975approximate} with stepsize $10^{-3}$.}
    }
    \label{fig:HB_amplification}
\end{figure}

where $\zeta_{\m2}(t)$ is a random vector with $\E[\zeta_{\m2}(t)]=0$ but $\var[\zeta_{\m2}(t)]=\frac{9}{5t}\sigma\sigma^T$.
Even though the reader might already have spotted an important difference in the noise covariances, to make our connection to gradient amplification even clearer, we consider the simpler setting of constant gradients: in this case, we have $V_{N}(t) =  -\frac{1}{4} t \nabla f(X(t))  +\zeta_{N}(t)$, $V_{\m2}(t) =  -\nabla f(X(t))  +\zeta_{\m2}(t)$. That is, stochastic algorithms with increasing momentum (i.e. decreasing\footnote{See the connection between $\alpha$ and $\beta$ in Thm.~\ref{thm:HB_discretization}.} viscosity, like the Nesterov's SDE) \textit{are systematically amplifying the gradients over time}. Yet, at the same time they also linearly amplify the noise variance (see Fig.~\ref{fig:verif_noise} in the appendix). This argument can easily be extended to the non-constant gradient case by noticing that $\E[V_{\m2}(t)]$ is a weighted average of gradients where the weights integrate to 1 for all $t\ge 0$ (Lemma~\ref{lemma:unbiased}) . In contrast, in $\E[V_{N}(t)]$ these weights integrate to $t/4$. This behaviour is illustrated in Fig.~\ref{fig:HB_amplification}: While the Nesterov's SDE is faster compared to~\ref{MG-SDE} with $\m(t) = t^3$ at the beginning, it quickly becomes unstable because of the increasing noise in velocity and hence position. 

This gives multiple insights on the behavior of Nesterov's accelerated method for convex functions, both in for deterministic and the stochastic gradients:
\vspace{-3mm}
\begin{enumerate}
    \item Deterministic gradients get linearly amplified overtime, which counteracts the slow-down induced by the vanishing gradient problem around the solution. Interestingly Eq.~\eqref{eq:vn} reveals that  this amplification is \textit{not} performed directly on the local gradient but on past history, with \textit{cubic} forgetting. It is this feature that makes the discretization stable compared to the na\"ive approach $\dot X = -t\nabla f(X(t))$.
    \item Stochasticity corrupts the gradient amplification by an increasing noise variance (see Eq.~\eqref{eq:vn}), which makes Nesterov's SDE unstable and hence not converging. This finding is in line with~\cite{allen2017katyusha}.
\end{enumerate}
\vspace{-2mm}
Furthermore, our analysis also gives an intuition as to why a constant momentum cannot yield acceleration. Indeed, we saw already that \ref{HB-ODE-INT-C} does not allow such persistent amplification, but at most a constant amplification inversely proportional to the (constant) viscosity. Yet, as we are going to see in Sec.~\ref{sec:analysis_discretization}, this feature makes the algorithm more stable under stochastic gradients.

To conclude, we point the reader to App.~\ref{sec:quadratic_variance}, where we extend the last discussion from the constant gradient case to the quadratic cost case and get a close form for the (exploding) covariance of Nesterov's SDE (which backs up theoretically the unstable behavior shown in Fig.~\ref{fig:HB_amplification}). Nonetheless, we remind that this analysis still relies on continuous-time models; hence, the results above can only be considered as insights and further investigation is needed to translate them to the actual NAG algorithm.

\vspace{-2mm}
\paragraph{Time warping of linear memory.}
Next, we now turn our attention to the following question: "\textit{How is the gradient amplification mechanism of NAG related to its --- notoriously wiggling\footnote{Detailed simulations in App.~\ref{app:ODEs_quadratic}.}--- path?}". Even though Nesterov's ODE and \ref{MG-ODE} with quadratic forgetting are described by similar formulas, we see in Fig. \ref{fig:HB_amplification} that the trajectories are very different, even when the gradients are large. The object of this paragraph is to show that Nesterov's path has a strong link to --- surprisingly --- linear forgetting. Consider speeding-up the linear forgetting ODE $\ddot X(t) + \frac{2}{t} \dot X(t) + \frac{2}{t} \nabla f(X(t))$ by introducing the time change $\tau(t) = t^2/8$ and let $Y(t) = X(\tau(t))$ be the \textit{accelerated} solution to linear forgetting. By the chain rule, we have $\dot Y(t) = \dot\tau(t) \dot X(\tau(t))$ and $\ddot Y(t) = \ddot\tau(t) \dot X(\tau(t)) + \dot\tau(t)^2\ddot X(\tau(t))$.  It can easily be verified that we recover $\ddot Y(t) + \frac{3}{t} \dot Y(t) + \nabla f(Y(t))$. However, in the stochastic setting, the behaviour is still quite different: as predicted by the theory, in Fig. \ref{fig:timewarp} we see that --- when gradients are large --- the trajectory of the two sample paths are almost identical$^{11}$ (yet, notice that Nesterov moves faster); however, as we approach the solution, Nesterov diverges while linear forgetting stably proceeds towards the minimizer along the Nesterov's ODE path, but at a different speed, until convergence to a neighborhood of the solution, as proved in Sec.~\ref{sec:analysis_discretization}.
Furthermore, in App. \ref{time} we prove that there are no other time changes which can cast \ref{MG-ODE} into \ref{HB-ODE}, which yields the following interesting conclusion: the \textit{only} way to translate a memory system into a momentum method is by using a time change $\tau(t) = \mathcal{O}(t^2)$.
 
 \begin{figure}
  \centering
     \includegraphics[width=0.97\linewidth]{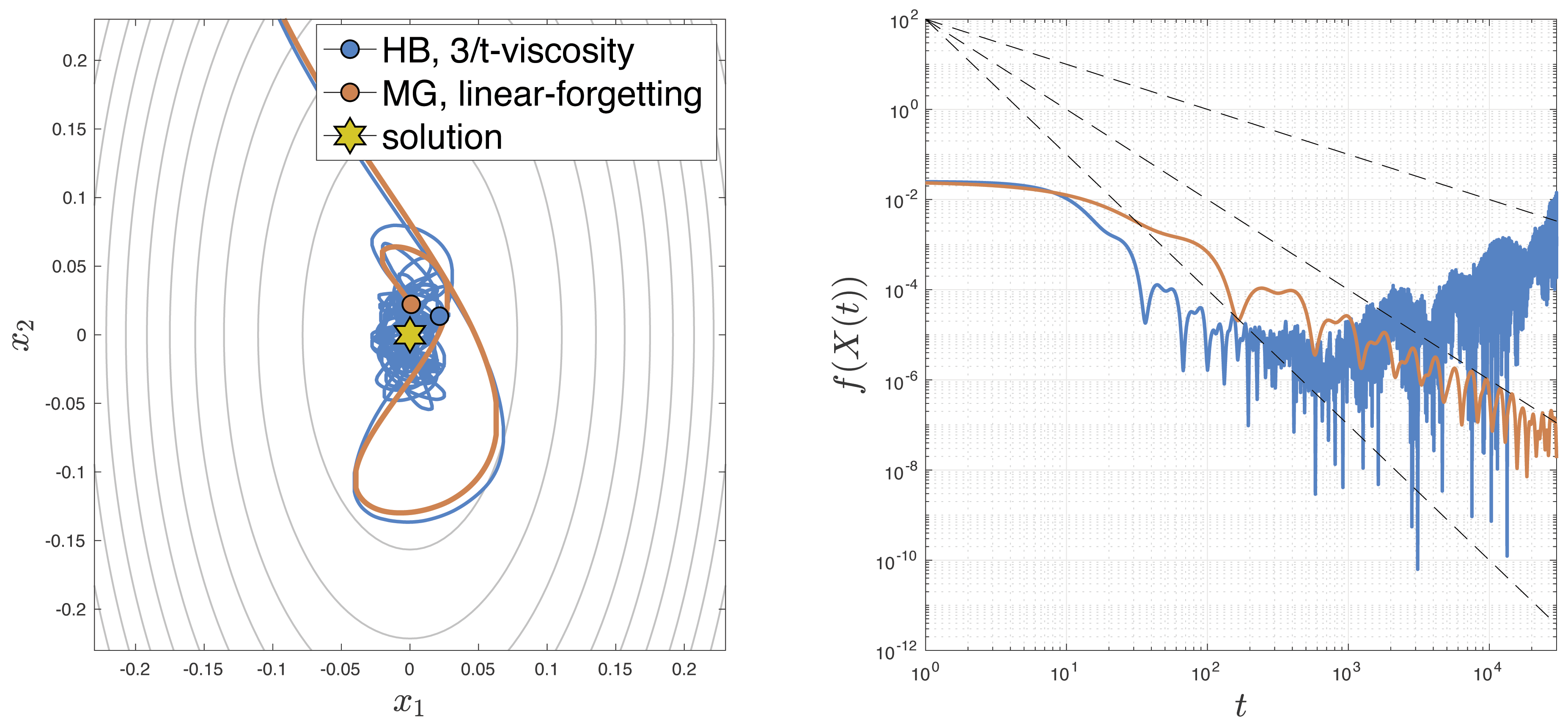}\caption{\footnotesize{Nesterov's ODE compared to \ref{MG-SDE} with linear forgetting (i.e. $\dot\m(t) / \m(t) = 2/t$). Same settings as Fig. \ref{fig:HB_amplification}.}}
    \label{fig:timewarp}
\end{figure}

\begin{table*}
\centering
\small{
\bgroup
\def\arraystretch{1.5}
\begin{tabular}{| l | l | l | l |}
  \hline
  \textbf{Forgetting}& \textbf{Assumption} &  \textbf{Rate} &  \textbf{Reference} \\
  \hline
  Instantaneous & \textbf{(H0c)}, \textbf{(H1)}& $\E[f(\bar X(t))-f(x^*)] \le C_i / t  + \  d\ \sigma^2_*/2$& \cite{mertikopoulos2018convergence}\\   
  \hline
  Exponential & \textbf{(H0c)}, \textbf{(H1)}& $\E[f(\bar X(t))-f(x^*)] \le C_e / t  + \  d \ \sigma^2_*/2$& App. \ref{proofs}, Thm. \ref{prop:CONV_HBF_WQC_app}\\
  \hline
    Polynomial  & \textbf{(H0c)}, \textbf{(H1)}, $p\ge 2$& $\E[f(X(t))-f(x^*)] \le C_p/ t + p \  d \ \sigma^2_*/2$& App. \ref{proofs}, Thm. \ref{prop:memory_continuous_WQC}\\    
\hline
\end{tabular}
\egroup
}
\caption{\footnotesize{Rates of \ref{MG-SDE} on convex smooth functions . $\bar X(t)=\int_0^t X(s)ds$} and $C_i, C_e, C_p$ can be found in the references.}
\label{tb:summary_rates_cont}
\end{table*}

\section{ANALYSIS AND DISCRETIZATION}
\label{sec:analysis_discretization}
\vspace{-1mm}
In this section we first analyze the convergence properties of \ref{MG-SDE} under different memory functions. Next, we use the Lyapunov analysis carried out in continuous-time to derive an iterative discrete-time method which implements polynomial forgetting and has provable convergence guarantees. We state a few assumptions:

\textbf{(H0c)} $f\in\C^3_b(\R^d,\R), \ \sigma_*^2 := \sup_{x} \|\sigma(x)\sigma(x)^T\|_s < \infty$.

The definition of $\sigma_*^2$ nicely decouples the measure of noise magnitude to the problem dimension $d$ (which will then, of course, appear explicitly in all our rates).

\textbf{(H1)} \ \ The cost $f(\cdot)$ is $L$-smooth and convex.

\textbf{(H2)} \ \ The cost $f(\cdot)$ is $L$-smooth and $\mu$-strongly convex.

We provide the proofs (under the less restrictive assumptions of weak-quasi-convexity and quadratic growth\footnote{$\tau$-weak-quasiconvexity is implied by convexity and has been shown to be of chief importance in the context of learning dynamical systems~\cite{hardt2018gradient}. Strong convexity implies quadratic growth with a unique minimizer~\cite{karimi2016linear} as well as $\tau$-weak-quasiconvexity. More details in the appendix.}), as well as an introduction to stochastic calculus, in App.~\ref{proofs}.

\subsection{EXPONENTIAL FORGETTING}
If $\m(t) = e^{\alpha t}-1$, then $\dot\m(t)/\m(t) = \frac{\alpha e^{\alpha t}}{e^{\alpha t}-1}$ which converges to $\alpha$ exponentially fast. To simplify the analysis and for comparison with the literature on HB-SDE~(which is usually analyzed under constant volatility \cite{shi2018understanding}) we consider here \ref{MG-SDE} with the approximation $\dot\m(t)/\m(t)\simeq\alpha$. In App.~\ref{proofs} we show that, under \textbf{(H1)}, the rate of convergence of $f(\cdot)$, evaluated at the Cesàro average $\bar X(t) = \int_0^t X(s)ds$ is sublinear~(see Tb. \ref{tb:summary_rates_cont}) to a ball\footnote{Note that the term "ball" might be misleading: indeed, the set $\mathcal{N}_\epsilon(x^*) = \{x\in\R^d, f(x)-f(x^*)\le\epsilon\}$ is not compact in general if $f(\cdot)$ is convex but not strongly convex.} around $x^*$ of size $d \ \sigma^2_*/2$, which is in line with known results for SGD \cite{bottou2018optimization}\footnote{Note that, by definition of $\sigma(\cdot)$ (see discussion after \ref{MG-SDE}), $\sigma^2_*$ is proportional both to the learning rate and to the largest eigenvalue of the stochastic gradient covariance}. Note that the size of this ball would change if we were to study a stochastic version of~\ref{HB-ODE} with constant volatility (i.e. $\ddot X + \alpha \dot X + \nabla f(X)$). In particular, it would depend on the normalization constant in  Eq.~\eqref{normalization_step}.  Also, in App.~\ref{proofs}, under \textbf{(H2)}, we provide a \textit{linear} convergence rate of the form $f(X(t))-f(x^*)\le \mathcal{O}(e^{-\gamma t})$ to a ball ($\gamma$ depends on $\mu$ and $\alpha$). Our result generalizes the analysis in \cite{shi2018understanding} to work with \textit{any viscosity} and with stochastic gradients. 

\vspace{-2mm}
\paragraph{Discretization.} As shown in Sec.~\ref{sec:mem_fun_grad_forg}, the discrete equivalent of~\ref{MG-SDE} with exponential forgetting is Adam without adaptive stepsizes (see App.~\ref{sec:unbiasing}). As for the continuous-time model we just studied, for a sufficiently large iteration, exponential forgetting can be approximated with the following recursive formula:
 $$x_{k+1} = x_k + \beta (x_k-x_{k-1}) - \eta(1-\beta)\nabla f(x_k),$$
which is exactly \ref{HB} with learning rate $(1-\beta)\eta$. Hence, the corresponding rates can be derived from Tb.~\ref{tb:summary_existing_rates}.  

\subsection{POLYNOMIAL FORGETTING}
The insights revealed in Sec.~\ref{nesterov_comparison} highlight the  importance of the choice $\m(t) = t^p$ in this paper. In contrast to instantaneous~\cite{mertikopoulos2018convergence} and exponential forgetting, the rate we prove in App.~\ref{proofs} for this case under \textbf{(H1)} does not involve a Cesàro average --- \textit{but holds for the last time point} (see Tb.~\ref{tb:summary_rates_cont}). This stability property is directly linked to our discussion in Sec.~\ref{nesterov_comparison} and shows that different types of memory may react to noise very differently. Also, we note that the size of the ball we found is now also proportional to $p$; this is not surprising since, as the memory becomes more focused on recent past, we get back to the discussion in the previous subsection and we need to consider a Cesàro average.
\vspace{-2mm}
\paragraph{Discretization.} Finally, we consider the burning question "\textit{Is it possible to discretize~\ref{MG-SDE} --- with polynomial forgetting --- to derive a cheap iterative algorithm with similar properties?}". In App.~\ref{poly-discre}, we build this algorithm in a non-standard way: we  reverse-engineer the proof of the rate for~\ref{MG-SDE} to get a method which is able to mimic each step of the proof. Starting from $x_{-1} = x_0$, it is described by the following recursion

\begin{tcolorbox}
\vspace{-5mm}
\begin{equation}
    \tag{MemSGD-p}
    x_{k+1} = x_k + \frac{k}{k+p} (x_k-x_{k-1}) - \frac{p}{k+p} \eta \nabla f(x_k).
    \label{MemSGD-p}
\end{equation}
\vspace{-5mm}
\end{tcolorbox}

As a direct result of our derivation, we show in Thm.~\ref{thm:discretization} (App.~\ref{poly-discre}) that \textit{this algorithm preserves exactly the rate} of its continuous-time model in the stochastic setting\footnote{Required assumptions: \textbf{(H1)}, $p\ge 2$, $\eta\le \frac{p-1}{pL}$ and $\varsigma_*^2$ bounds the gradient variance in each direction of $\R^d$.}:
$$\E[f(x_k)-f(x^*)] \le \frac{(p-1)^2\|x_0-x^*\|^2}{2 \eta p (k+p-1)} + \frac{1}{2}p d \eta \varsigma_*^2 .$$
We also show that \ref{MemSGD-p} can be written as $x_{k+1} = x_k -\eta\sum_{j=0}^k w(j,k)\nabla f(x_k)$, where $\sum_{j=0}^k w(j,k)=1$ (in analogy to the bias correction in Adam) and with $ w(\cdot,k)$ increasing as a polynomial of order $p-1$ for all $k$, again in complete analogy with the model. Fig.~\ref{fig:p_verification} shows the behaviour of different types of memory in a simple convex setting; as predicted, polynomial (in this case linear) forgetting has a much smoother trajectory then both exponential and instantaneous forgetting. Also, the reader can verify that the asymptotic noise level for MemSGD (p=2) is slightly higher, as just discussed. 

For ease of comparison with polynomial memory, we will often write MemSGD (p=e) to denote exponential forgetting (i.e. Adam without adaptive steps) and SGD (p=inf) to stress that SGD implements instantaneous forgetting.

 \begin{figure}
  \centering
    \includegraphics[width=0.65\linewidth]{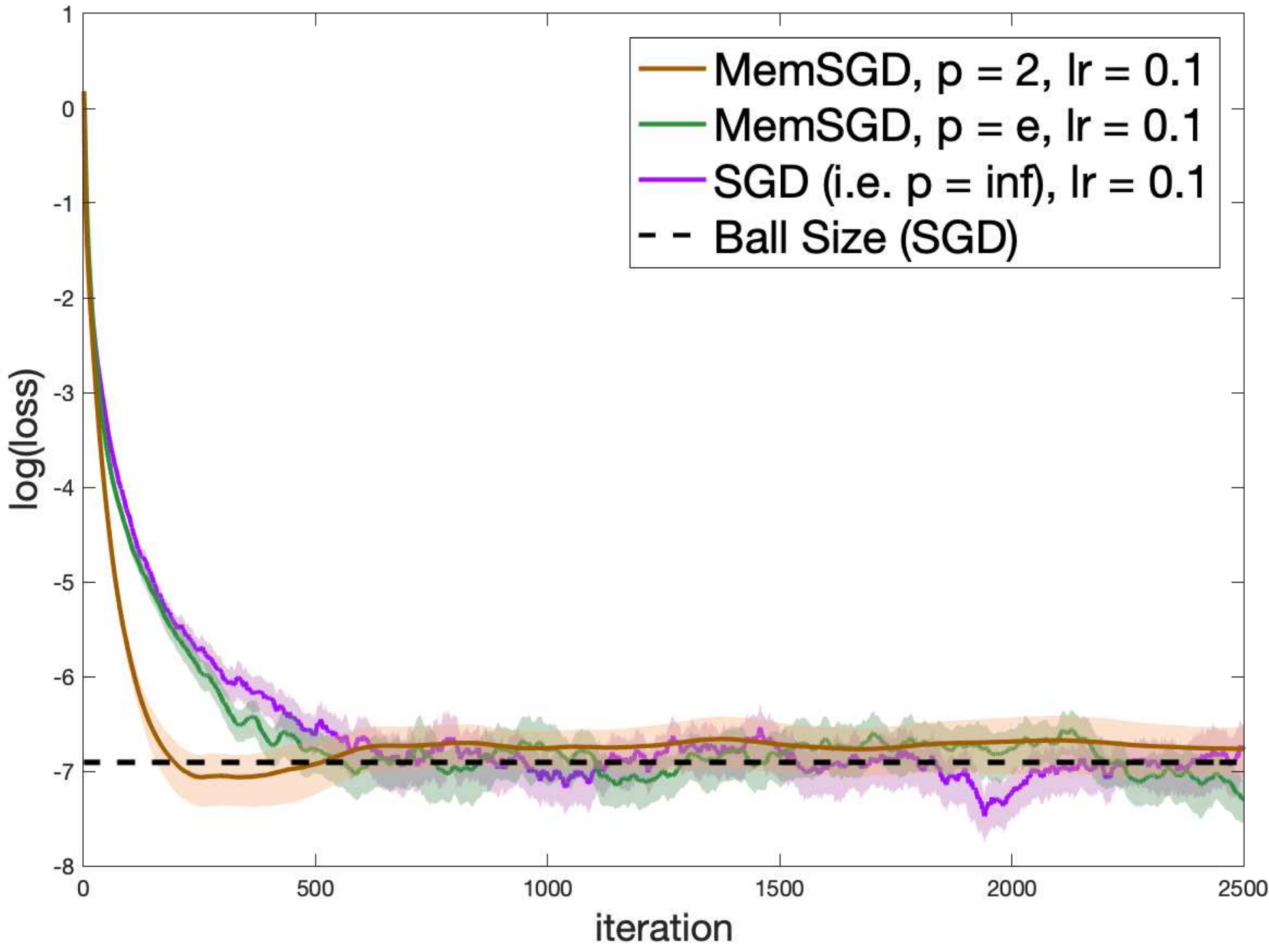}\caption{\footnotesize{Synthetic example: $f(x_1,x_2) = 0.8\times x_1^4 + 0.4\times x_2^4$ with Gaussian noise. Displayed is linear forgetting (i.e. MemSGD-2), exponential forgetting (denoted p=e) with $\beta = 0.8$ and instantaneous forgetting. Average and 95\% confidence interval for 150 runs starting from $(1,1)$.}}
    \label{fig:p_verification}
\end{figure}

\section{LARGE SCALE EXPERIMENTS} \label{sec:experiments}
\vspace{-1mm}
In order to assess the effect of different types of memory in practical settings, we benchmark MemSGD with different memory functions: from instantaneous to exponential, including various types of polynomial forgetting. As a reference point, we also run vanilla \ref{HB} with constant momentum as stated in the introduction. To get a broad overview of the performance of each method, we run experiments on a convex logistic regression loss as well as on non-convex neural networks in both a mini- and full-batch setting. Details regarding algorithms, datasets and architectures can be found in App.~\ref{sec:exp_setting}.
\label{sec:exp}
\begin{figure*}[ht]
\centering 
\begin{adjustbox}{center}
\begin{tabular}{c@{}c@{}c@{}c@{}}
        \small{Covtype Logreg} & \small{MNIST Autoencoder} & \small{FashionMNIST MLP} & \small{CIFAR-10 CNN}
          \\
            \includegraphics[width=0.245\linewidth,valign=c]{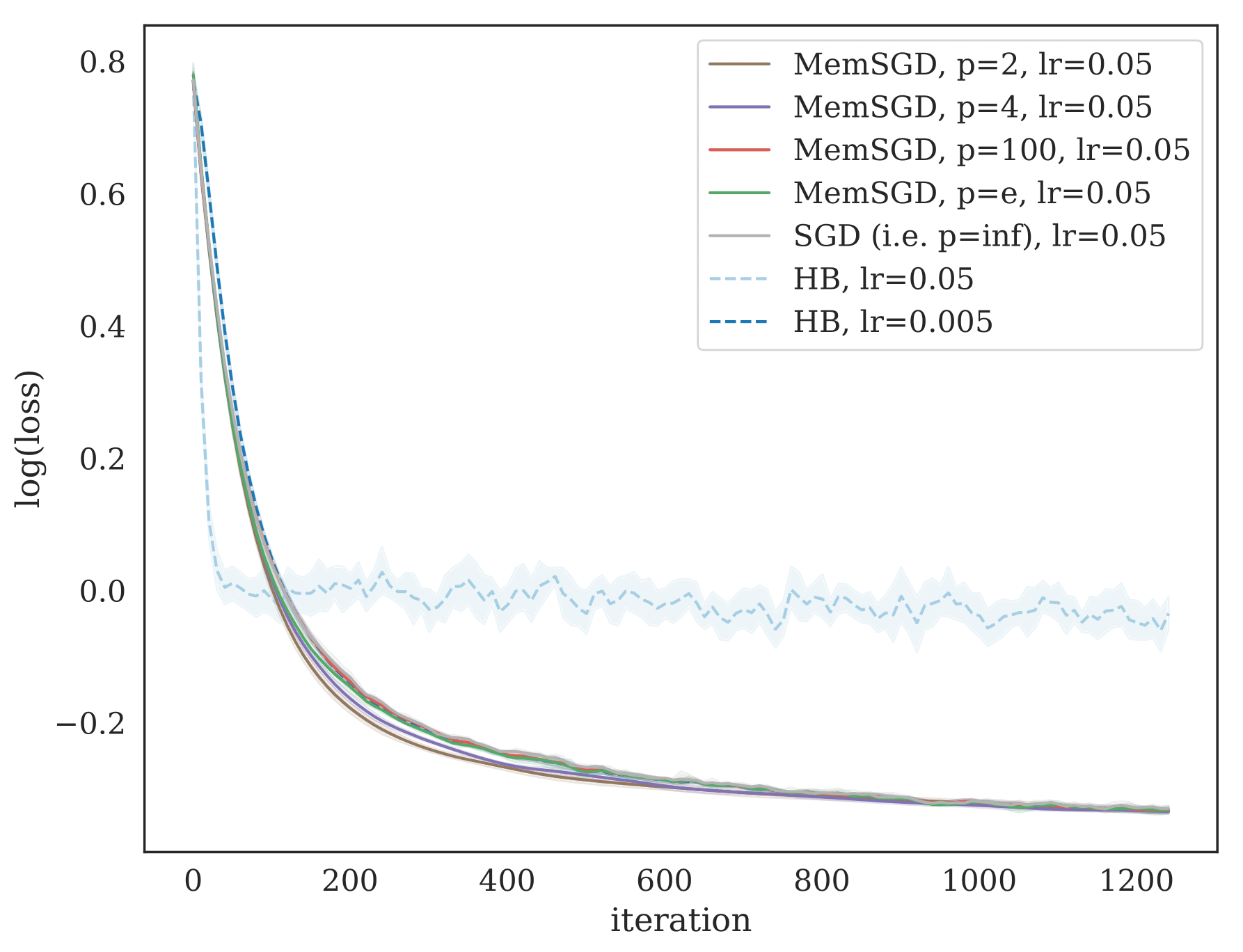}&
            \includegraphics[width=0.245\linewidth,valign=c]{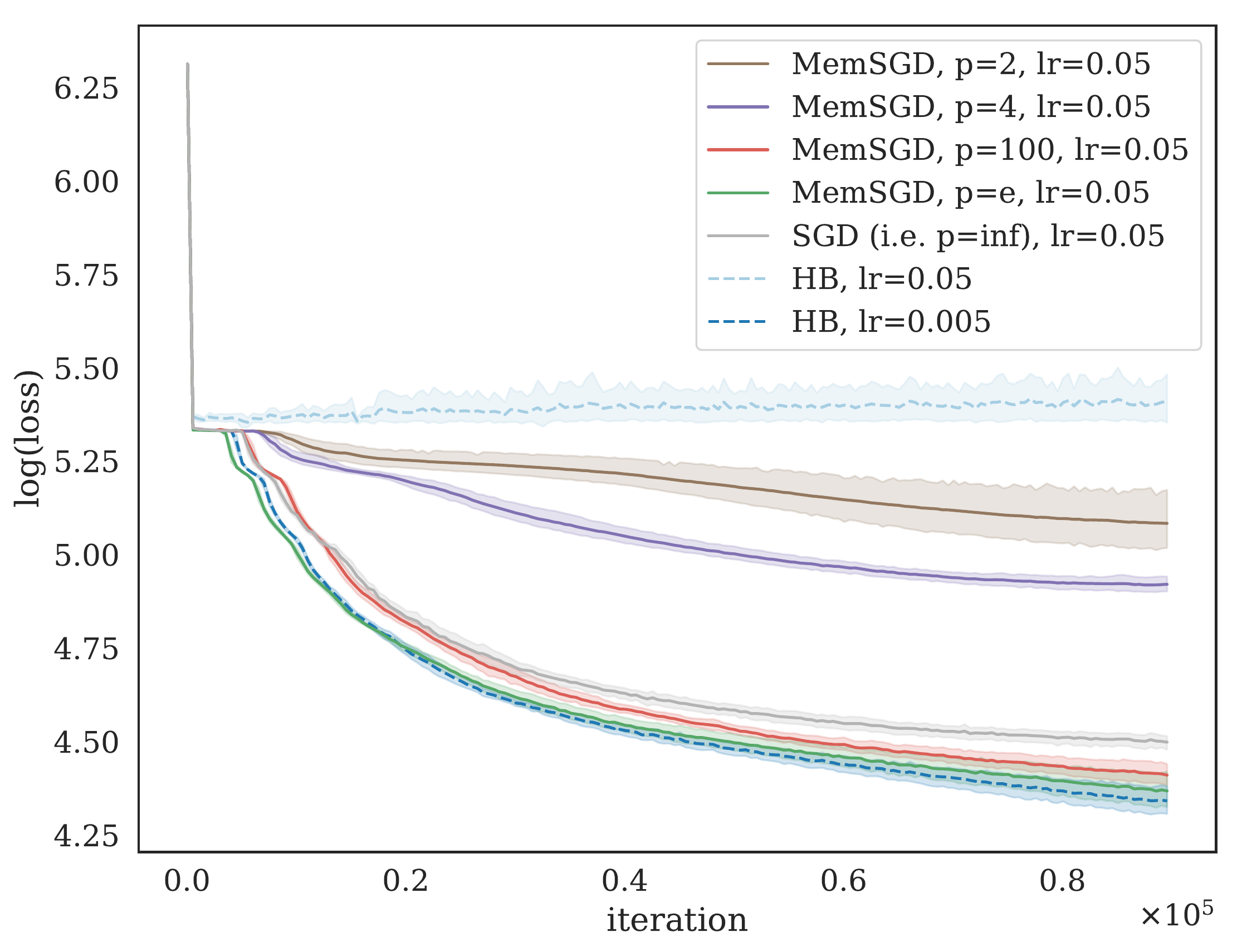}&
            \includegraphics[width=0.245\linewidth,valign=c]{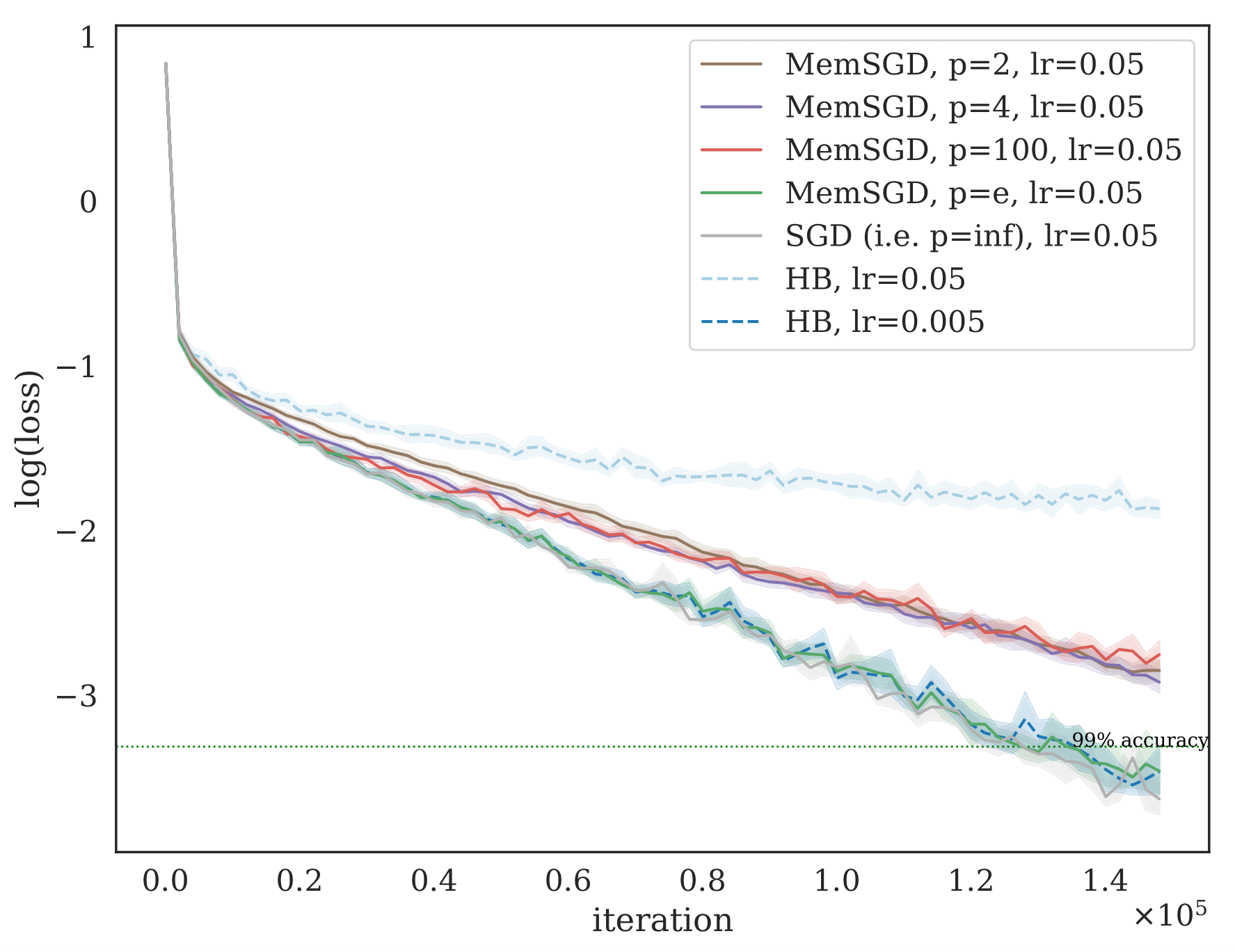}&
            \includegraphics[width=0.245\linewidth,valign=c]{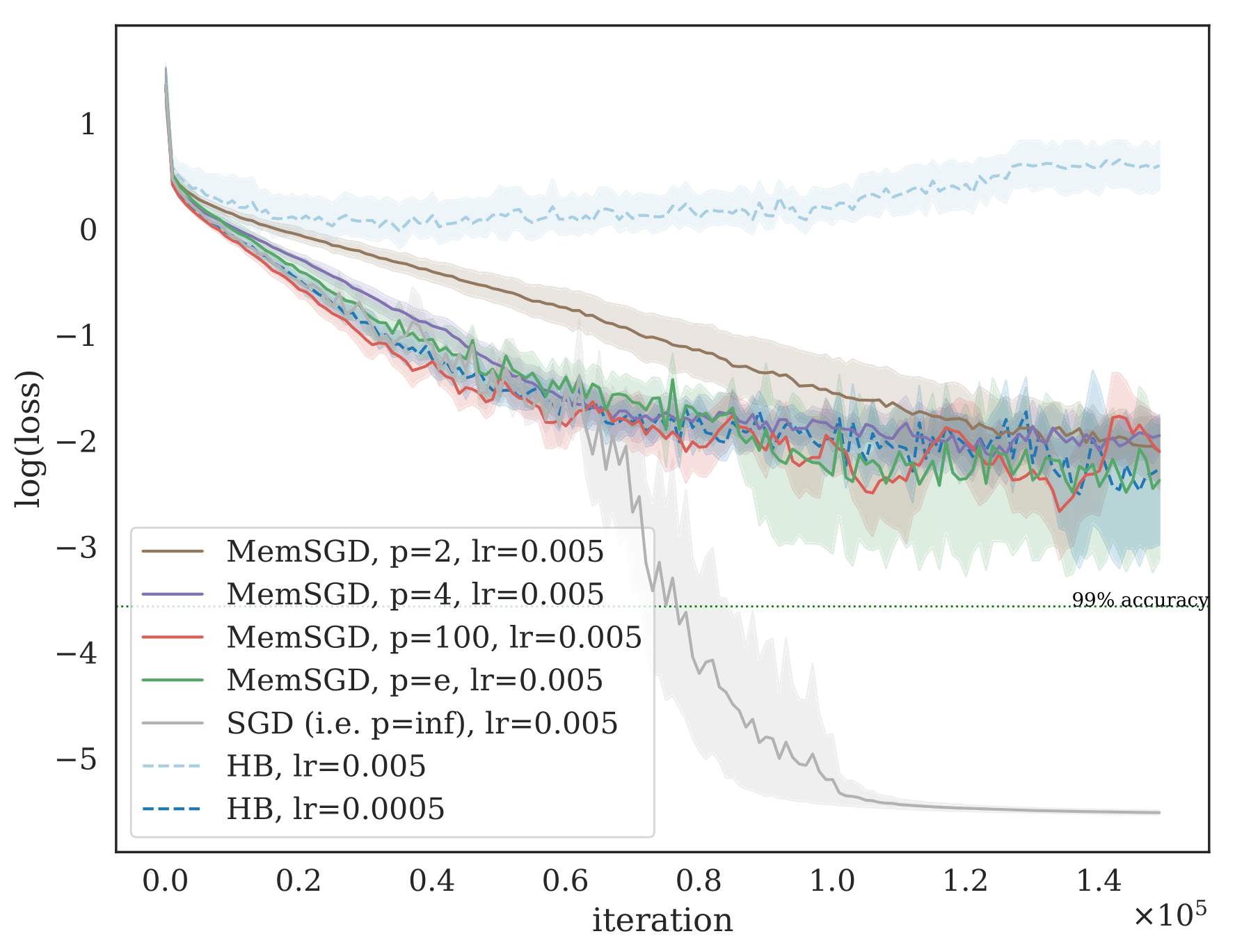}\\  
            ~
             \includegraphics[width=0.245\linewidth,valign=c]{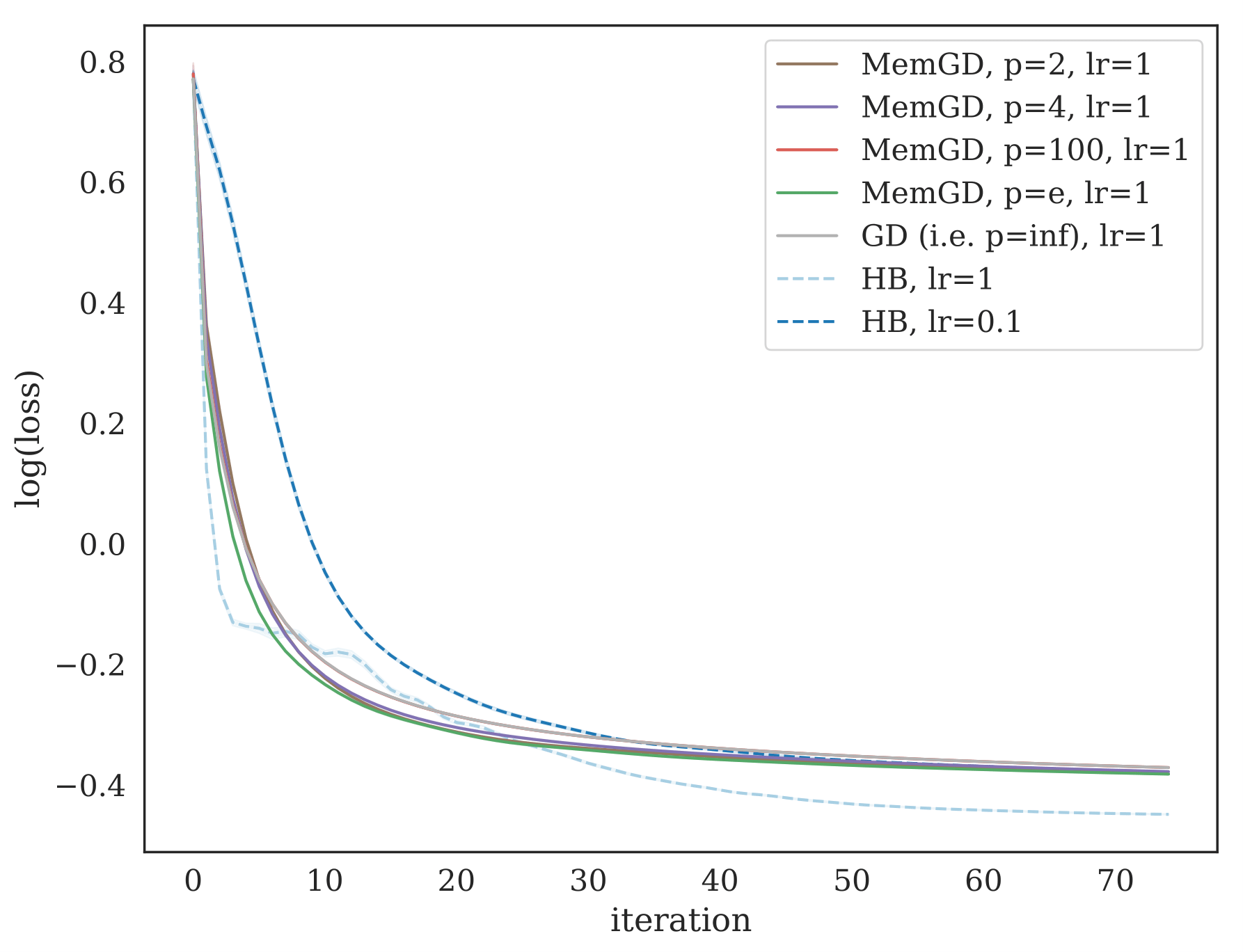}&
            \includegraphics[width=0.245\linewidth,valign=c]{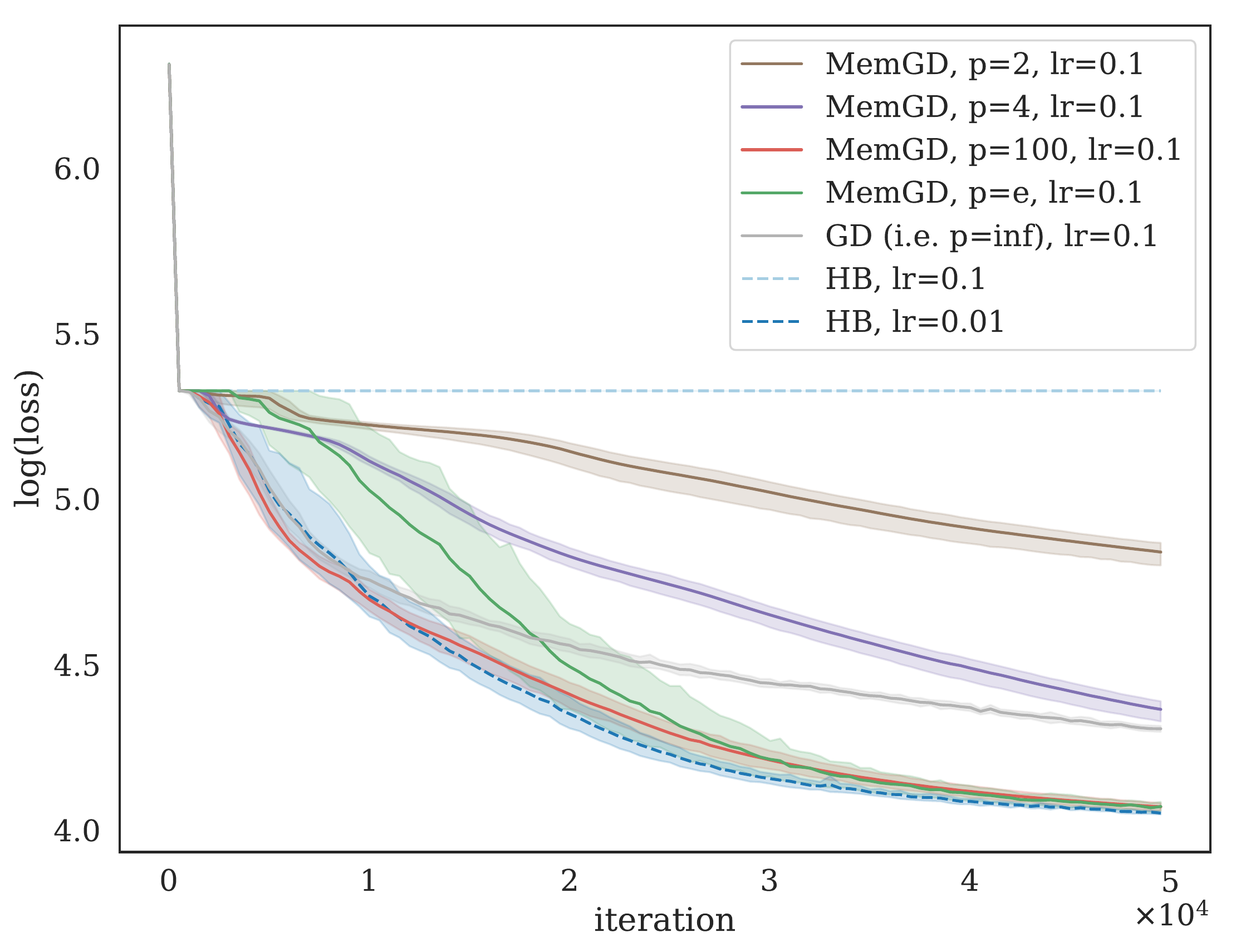}&
            \includegraphics[width=0.245\linewidth,valign=c]{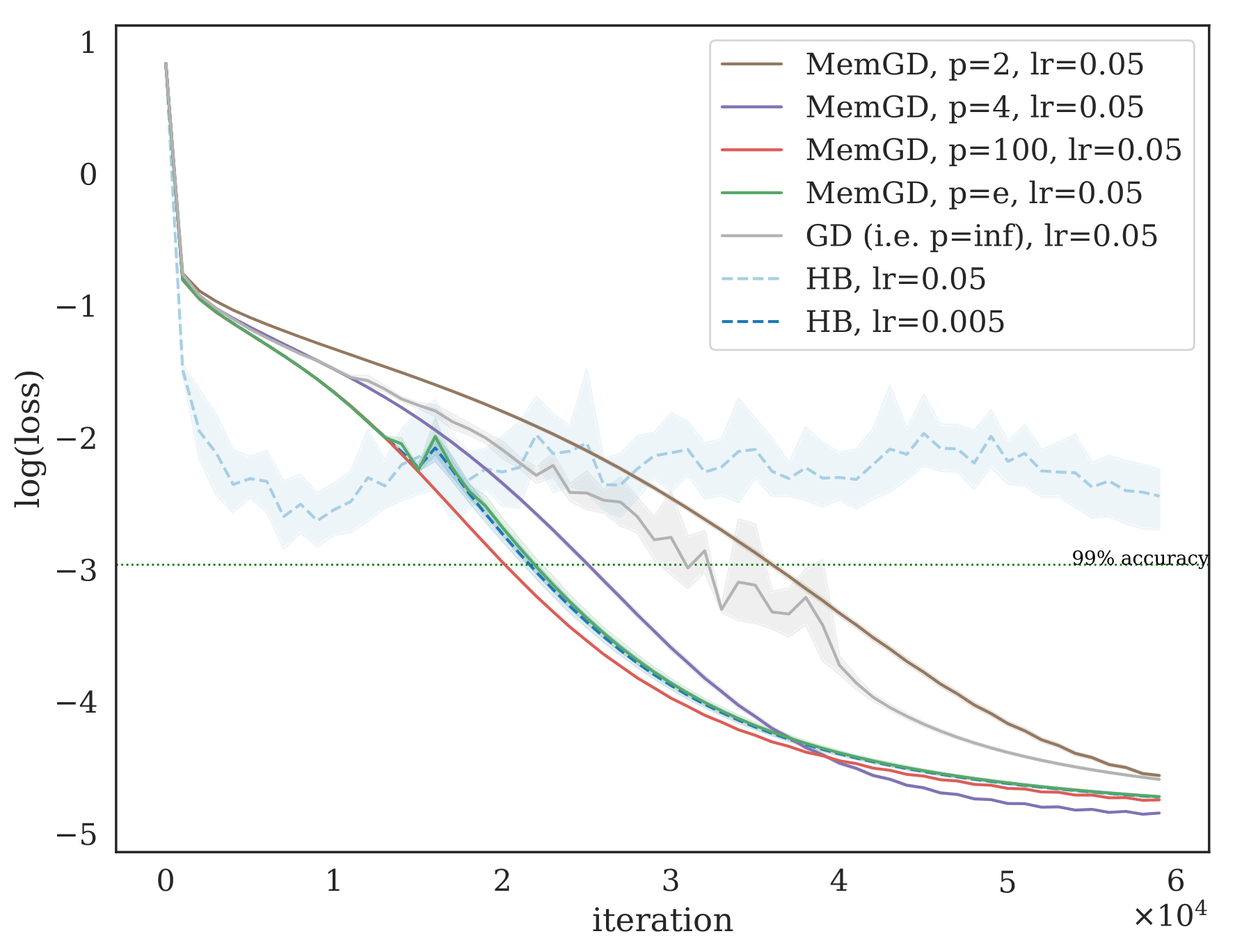}&
            \includegraphics[width=0.245\linewidth,valign=c]{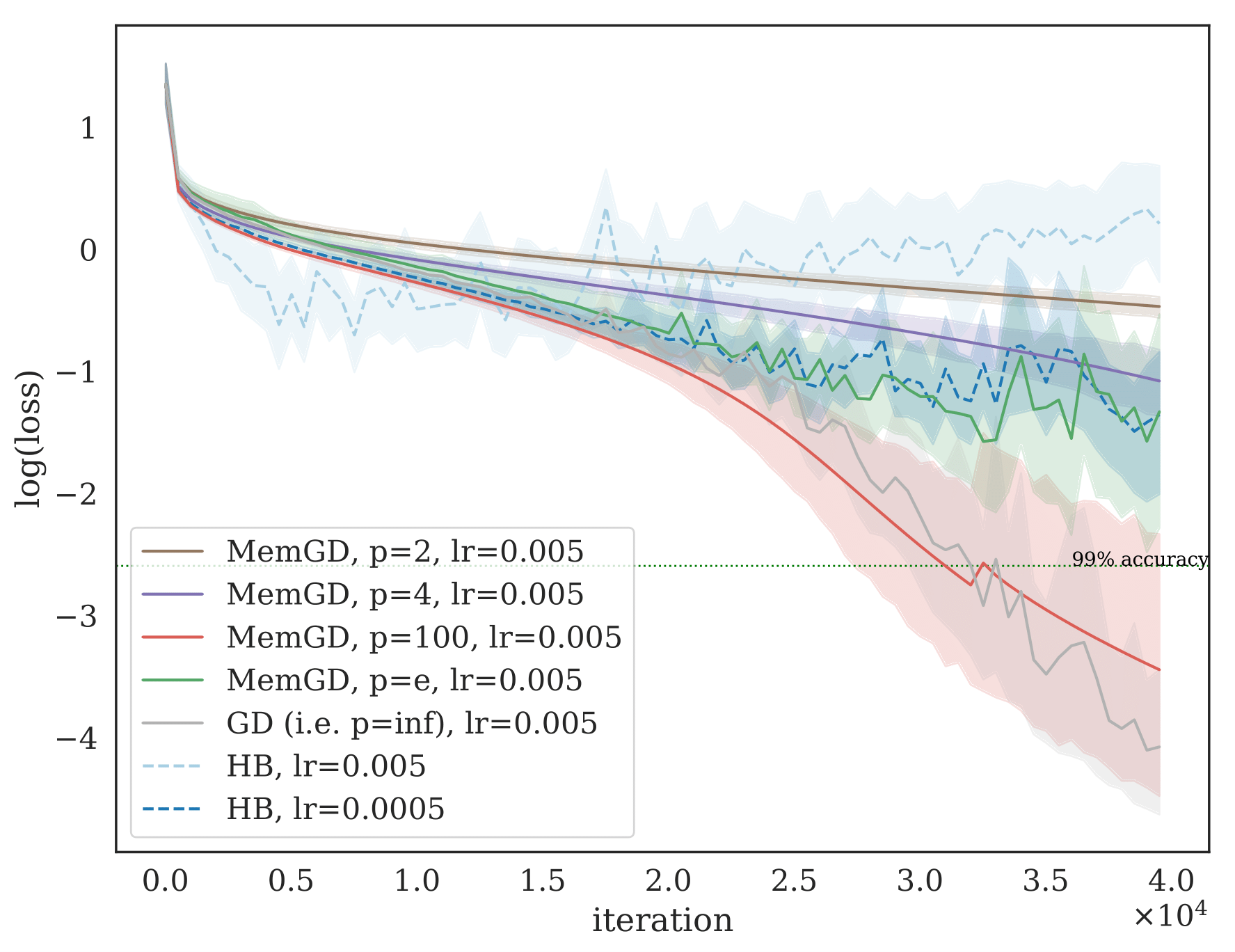}\\  
\end{tabular}
\end{adjustbox}
\vspace{-1mm}
\caption{\footnotesize{Log loss over iterations in mini- (top) and full-batch (bottom) setting. Average and $95\%$ CI of 10 random initializations.}}
\label{fig:results_iterations_main}
\end{figure*}

\vspace{-2mm}
\paragraph{Results and discussion.} Fig.~\ref{fig:results_iterations_main} summarizes our results in terms of training loss. While it becomes evident that no method is best on all problems, we can nevertheless draw some interesting conclusions. 

First, we observe that while long-term memory (especially $p=2$) is faster than SGD in the convex case, it does not provide any empirical gain in the neural network settings. This is not particularly surprising since past gradients may quickly become outdated in non-convex landscapes. Short term memory is at least as good as SGD in all cases except for the CIFAR-10 CNN, which represents the most complex of our loss landscapes in terms of curvature.

Secondly, we find that the best stepsize for \ref{HB} is always strictly smaller than the one for SGD in the non-convex setting. MemSGD, on the other hand, can run on stepsizes as large as SGD which reflects the gradient amplification of \ref{HB} as well as the unbiasedness of MemSGD. Interestingly, however, a closer look at Fig.~\ref{fig:exp_stepnorm} (appendix) reveals that \ref{HB} (with best stepsize) actually takes much smaller steps than SGD for almost all iterations. While this makes sense from the perspective that memory averages past gradients, it is somewhat counter-intuitive given the inertia interpretation of \ref{HB} which should make the method travel further than SGD. Indeed, both \cite{sutskever13} and \cite{goodfellow2016deep} attribute the effectiveness of \ref{HB} to its increased velocity along consistent directions (especially early on in the optimization process). However, our observation, together with the fact that MemSGD with fast forgetting ($p=e$ and $p=100$) is as good as HB, suggests that there is actually more to the success of taking past gradients into account and that this must lie in the altered directions that adapt better to the underlying geometry of the problem.\footnote{Note that we find the exact opposite in the convex case, where \ref{HB} does take bigger steps and converges faster.}

Finally, we draw two conclusions that arise when comparing the mini- and full batch setting. First, the superiority of \ref{HB} and fast forgetting MemSGD over vanilla SGD in the deterministic setting is indeed reduced as soon as stochastic gradients come into play (this is in line with the discussion in Sec. \ref{nesterov_comparison}). Second, we find that stochasticity per se is not needed to optimize the neural networks in the sense that all methods eventually reach very similar methods of suboptimality. That is, not even the full batch methods get stuck in any elevated local minima including the saddle found in the MNIST autoencoder which they nicely escape (given the right stepsize).
\section{MEMORY IN ADAPTIVE METHODS}
\label{padam}
\vspace{-1mm}
While the main focus of this paper is the study of the effect of different types of memory on the \emph{first moment} of the gradients, past gradient information is also commonly used to adapt stepsizes. This is the case for Adagrad and Adam which both make use of the second moment of past gradients to precondition their respective update steps. 

Of course, the use of polynomial memory generalizes directly to the second moment estimates and we thus consider a comprehensive study of the effect of long- versus short-term memory in adaptive preconditioning an exciting direction of future research. In fact, as shown in \cite{reddi2018} the non-convergence issue of Adam can be fixed by making the method forget past gradients less quickly. For that purpose the authors propose an algorithm called AdamNC that essentially differs from Adam by the choice of $\beta_2=1-1/k$, which closely resembles Adagrad with constant memory. Interestingly, the memory framework introduced in this paper allows to interpolate between the two extremes of constant- and exponential memory (i.e. Adagrad and Adam) in a principled way. Indeed, by tuning the additional parameter $p$ --- which specifies the degree of the polynomial memory function --- one can equip Adam with any degree of short- to long-term memory desired. As a proof of concept, Fig.~\ref{fig:padam} shows that Adam equipped with a polynomial memory of the squared gradients (PolyAdam) can in fact be faster than both Adam and Adagrad.

\label{sec:adaptive}
\vspace{-2mm}
\begin{figure}[H]
\centering 
\begin{adjustbox}{center}
\begin{tabular}{c@{}c@{}}
\includegraphics[width=0.49\linewidth,valign=c]{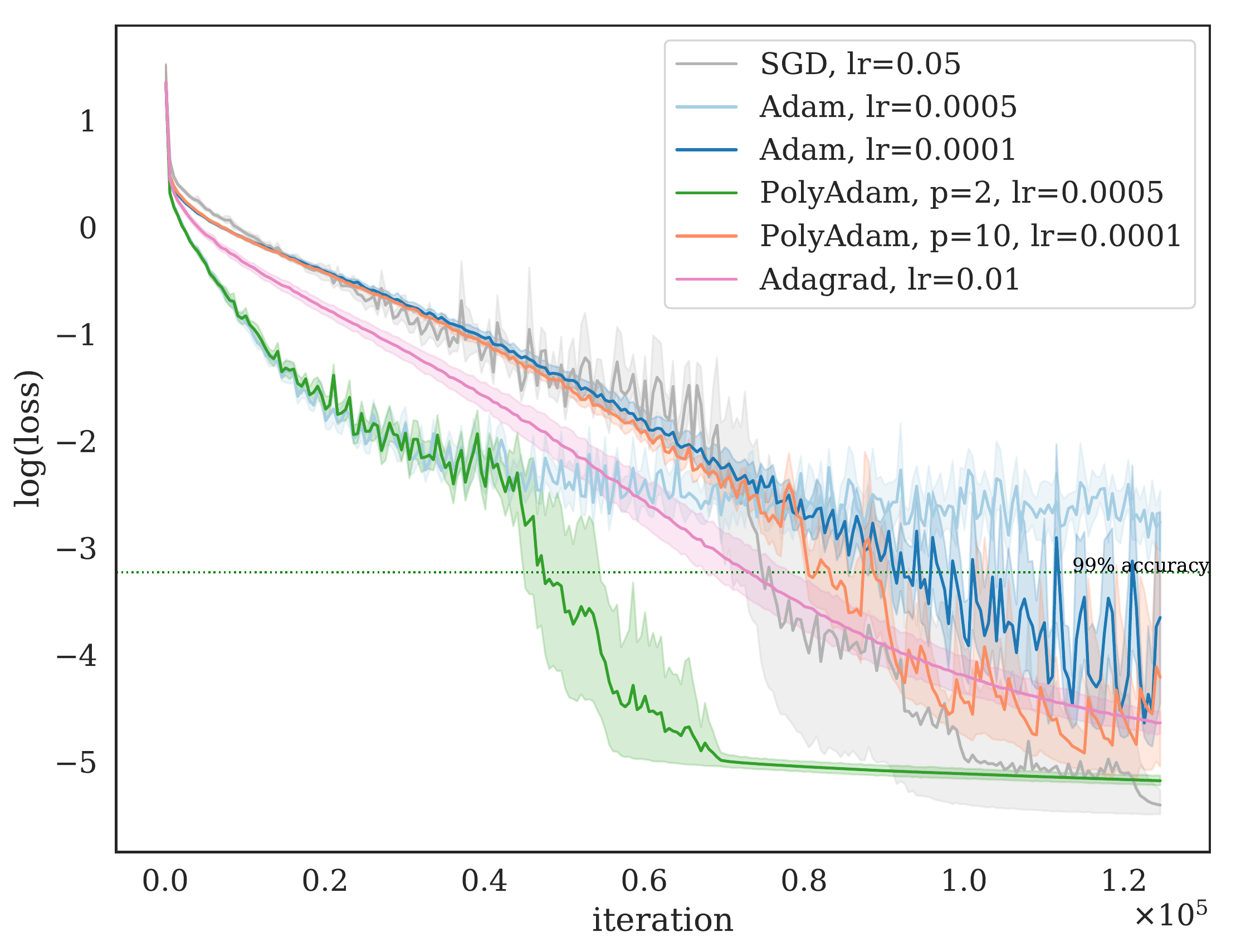}&
\includegraphics[width=0.49\linewidth,valign=c]{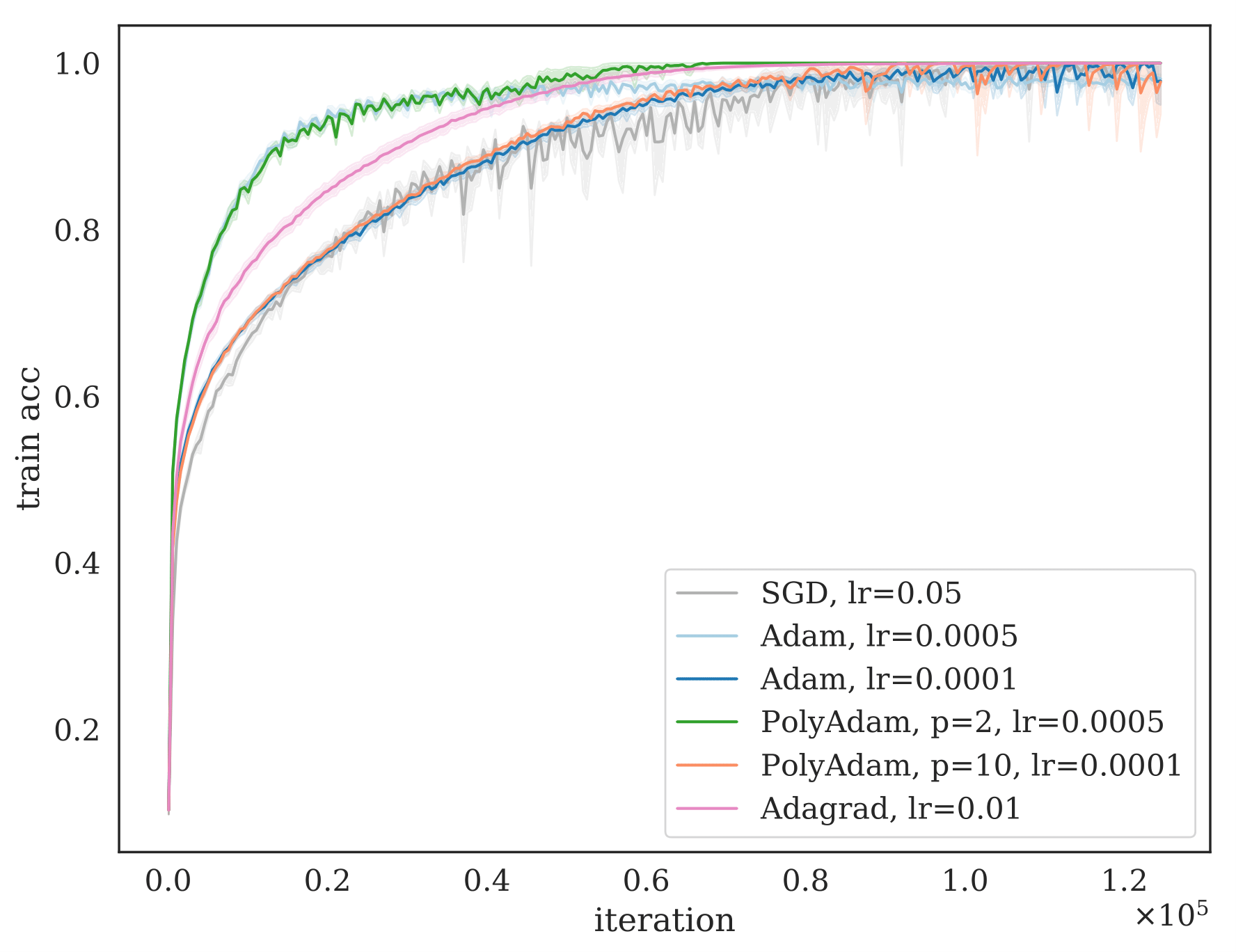}\\
\end{tabular}
\end{adjustbox}
\vspace{-2mm}
 \caption{\footnotesize{Cifar-10 CNN: Log loss over iterations (left) and training accuracy (right). Average and $95\%$ confidence interval of 10 runs with random initialization.}}\label{fig:padam}
\end{figure}
\section{CONCLUSION}
\vspace{-1mm}
We undertook an extensive theoretical study of the role of memory in (stochastic) optimization. We provided convergence guarantees for memory systems as well as for novel algorithms based on such systems. This study led us to derive novel insights on momentum methods. We complemented these findings with empirical results, both on simple functions as well as more complex functions based on neural networks. There, long- and short-term memory methods exhibit a different behaviour, which suggests further investigation is needed to better understand the interplay between the geometry of neural networks losses, memory and gradient stochasticity. On a more theoretical side, an interesting direction of future work is the study of the role of memory in state-of-the art momentum methods such as algorithms that include primal averaging, increasing gradient sensitivity or decreasing learning rates (see e.g. \cite{krichene2017acceleration}).


\bibliographystyle{apalike}
\bibliography{refs}

\begin{thebibliography}{}

\bibitem[Allen-Zhu, 2017]{allen2017katyusha}
Allen-Zhu, Z. (2017).
\newblock Katyusha: The first direct acceleration of stochastic gradient
  methods.
\newblock In {\em Proceedings of the 49th Annual ACM SIGACT Symposium on Theory
  of Computing}, pages 1200--1205. ACM.

\bibitem[Arnol'd, 2013]{arnol2013mathematical}
Arnol'd, V.~I. (2013).
\newblock {\em Mathematical methods of classical mechanics}, volume~60.
\newblock Springer Science \& Business Media.

\bibitem[Betancourt et~al., 2018]{betancourt2018symplectic}
Betancourt, M., Jordan, M.~I., and Wilson, A.~C. (2018).
\newblock On symplectic optimization.
\newblock {\em arXiv preprint arXiv:1802.03653}.

\bibitem[Bottou et~al., 2018]{bottou2018optimization}
Bottou, L., Curtis, F.~E., and Nocedal, J. (2018).
\newblock Optimization methods for large-scale machine learning.
\newblock {\em SIAM Review}, 60(2):223--311.

\bibitem[Cabot et~al., 2009]{cabot2009long}
Cabot, A., Engler, H., and Gadat, S. (2009).
\newblock On the long time behavior of second order differential equations with
  asymptotically small dissipation.
\newblock {\em Transactions of the American Mathematical Society},
  361:5983--6017.

\bibitem[Cauchy, 1847]{cauchy1847methode}
Cauchy, A. (1847).
\newblock M{\'e}thode g{\'e}n{\'e}rale pour la r{\'e}solution des systemes
  d’{\'e}quations simultan{\'e}es.
\newblock {\em Comp. Rend. Sci. Paris}, 25(1847):536--538.

\bibitem[Duchi et~al., 2011]{duchi2011adaptive}
Duchi, J., Hazan, E., and Singer, Y. (2011).
\newblock Adaptive subgradient methods for online learning and stochastic
  optimization.
\newblock {\em Journal of Machine Learning Research}, 12(Jul):2121--2159.

\bibitem[Gadat and Panloup, 2014]{gadat2014long}
Gadat, S. and Panloup, F. (2014).
\newblock Long time behaviour and stationary regime of memory gradient
  diffusions.
\newblock In {\em Annales de l'IHP Probabilit{\'e}s et statistiques},
  volume~50, pages 564--601.

\bibitem[Ghadimi et~al., 2015]{ghadimi2015global}
Ghadimi, E., Feyzmahdavian, H.~R., and Johansson, M. (2015).
\newblock Global convergence of the heavy-ball method for convex optimization.
\newblock In {\em Control Conference (ECC), 2015 European}, pages 310--315.
  IEEE.

\bibitem[Ghadimi and Lan, 2012]{ghadimi2012optimal}
Ghadimi, S. and Lan, G. (2012).
\newblock Optimal stochastic approximation algorithms for strongly convex
  stochastic composite optimization i: A generic algorithmic framework.
\newblock {\em SIAM Journal on Optimization}, 22(4):1469--1492.

\bibitem[Goodfellow et~al., 2016]{goodfellow2016deep}
Goodfellow, I., Bengio, Y., and Courville, A. (2016).
\newblock {\em Deep learning}.
\newblock MIT press.

\bibitem[Hairer et~al., 2006]{hairer2006geometric}
Hairer, E., Lubich, C., and Wanner, G. (2006).
\newblock {\em Geometric numerical integration: structure-preserving algorithms
  for ordinary differential equations}, volume~31.
\newblock Springer Science \& Business Media.

\bibitem[Hardt et~al., 2018]{hardt2018gradient}
Hardt, M., Ma, T., and Recht, B. (2018).
\newblock Gradient descent learns linear dynamical systems.
\newblock {\em The Journal of Machine Learning Research}, 19(1):1025--1068.

\bibitem[Hinton and Salakhutdinov, 2006]{hinton2006reducing}
Hinton, G.~E. and Salakhutdinov, R.~R. (2006).
\newblock Reducing the dimensionality of data with neural networks.
\newblock {\em science}, 313(5786):504--507.

\bibitem[Jain et~al., 2018]{jain2018accelerating}
Jain, P., Kakade, S.~M., Kidambi, R., Netrapalli, P., and Sidford, A. (2018).
\newblock Accelerating stochastic gradient descent for least squares
  regression.
\newblock In {\em Conference On Learning Theory}, pages 545--604.

\bibitem[Karimi et~al., 2016]{karimi2016linear}
Karimi, H., Nutini, J., and Schmidt, M. (2016).
\newblock Linear convergence of gradient and proximal-gradient methods under
  the polyak-{\l}ojasiewicz condition.
\newblock In {\em Joint European Conference on Machine Learning and Knowledge
  Discovery in Databases}, pages 795--811. Springer.

\bibitem[Khalil and Grizzle, 2002]{khalil2002nonlinear}
Khalil, H.~K. and Grizzle, J. (2002).
\newblock {\em Nonlinear systems}, volume~3.
\newblock Prentice hall Upper Saddle River, NJ.

\bibitem[Kidambi et~al., 2018]{kidambi2018insufficiency}
Kidambi, R., Netrapalli, P., Jain, P., and Kakade, S. (2018).
\newblock On the insufficiency of existing momentum schemes for stochastic
  optimization.
\newblock In {\em 2018 Information Theory and Applications Workshop (ITA)},
  pages 1--9. IEEE.

\bibitem[Kingma and Ba, 2014]{kingma2014adam}
Kingma, D.~P. and Ba, J. (2014).
\newblock Adam: A method for stochastic optimization.
\newblock {\em arXiv preprint arXiv:1412.6980}.

\bibitem[Krichene and Bartlett, 2017]{krichene2017acceleration}
Krichene, W. and Bartlett, P.~L. (2017).
\newblock Acceleration and averaging in stochastic descent dynamics.
\newblock In {\em Advances in Neural Information Processing Systems}, pages
  6796--6806.

\bibitem[Krichene et~al., 2015]{krichene2015accelerated}
Krichene, W., Bayen, A., and Bartlett, P.~L. (2015).
\newblock Accelerated mirror descent in continuous and discrete time.
\newblock In {\em Advances in neural information processing systems}, pages
  2845--2853.

\bibitem[Lan, 2012]{lan2012optimal}
Lan, G. (2012).
\newblock An optimal method for stochastic composite optimization.
\newblock {\em Mathematical Programming}, 133(1-2):365--397.

\bibitem[Li et~al., 2017]{li2017stochastic}
Li, Q., Tai, C., et~al. (2017).
\newblock Stochastic modified equations and adaptive stochastic gradient
  algorithms.
\newblock In {\em Proceedings of the 34th International Conference on Machine
  Learning-Volume 70}, pages 2101--2110. JMLR. org.

\bibitem[Ma and Yarats, 2018]{ma2018quasi}
Ma, J. and Yarats, D. (2018).
\newblock Quasi-hyperbolic momentum and adam for deep learning.
\newblock {\em arXiv preprint arXiv:1810.06801}.

\bibitem[Mao, 2007]{mao2007stochastic}
Mao, X. (2007).
\newblock {\em Stochastic differential equations and applications}.
\newblock Elsevier.

\bibitem[Mertikopoulos and Staudigl, 2018]{mertikopoulos2018convergence}
Mertikopoulos, P. and Staudigl, M. (2018).
\newblock On the convergence of gradient-like flows with noisy gradient input.
\newblock {\em SIAM Journal on Optimization}, 28(1):163--197.

\bibitem[Mil’shtejn, 1975]{mil1975approximate}
Mil’shtejn, G. (1975).
\newblock Approximate integration of stochastic differential equations.
\newblock {\em Theory of Probability \& Its Applications}, 19(3):557--562.

\bibitem[Nesterov, 2018]{nesterov2018lectures}
Nesterov, Y. (2018).
\newblock {\em Lectures on convex optimization}.
\newblock Springer.

\bibitem[Nesterov, 1983]{nesterov1983method}
Nesterov, Y.~E. (1983).
\newblock A method for solving the convex programming problem with convergence
  rate o (1/k\^{} 2).
\newblock In {\em Dokl. Akad. Nauk SSSR}, volume 269, pages 543--547.

\bibitem[{\O}ksendal, 2003]{oksendal2003stochastic}
{\O}ksendal, B. (2003).
\newblock Stochastic differential equations.
\newblock In {\em Stochastic differential equations}, pages 65--84. Springer.

\bibitem[Orvieto and Lucchi, 2018]{orvieto2018continuous}
Orvieto, A. and Lucchi, A. (2018).
\newblock Continuous-time models for stochastic optimization algorithms.
\newblock {\em arXiv preprint arXiv:1810.02565}.

\bibitem[Paszke et~al., 2017]{paszke2017automatic}
Paszke, A., Gross, S., Chintala, S., Chanan, G., Yang, E., DeVito, Z., Lin, Z.,
  Desmaison, A., Antiga, L., and Lerer, A. (2017).
\newblock Automatic differentiation in pytorch.
\newblock In {\em NIPS-W}.

\bibitem[Polyak, 1964]{polyak1964some}
Polyak, B.~T. (1964).
\newblock Some methods of speeding up the convergence of iteration methods.
\newblock {\em USSR Computational Mathematics and Mathematical Physics},
  4(5):1--17.

\bibitem[Polyak, 1987]{polyak1987introduction}
Polyak, B.~T. (1987).
\newblock Introduction to optimization. optimization software.
\newblock {\em Inc., Publications Division, New York}, 1.

\bibitem[Reddi et~al., 2018]{reddi2018}
Reddi, S., Kale, S., and Kumar, S. (2018).
\newblock On the convergence of adam and beyond.
\newblock In {\em International Conference on Learning Representations}.

\bibitem[Robbins and Monro, 1951]{robbins1951stochastic}
Robbins, H. and Monro, S. (1951).
\newblock A stochastic approximation method.
\newblock {\em The annals of mathematical statistics}, pages 400--407.

\bibitem[Shi et~al., 2018]{shi2018understanding}
Shi, B., Du, S.~S., Jordan, M.~I., and Su, W.~J. (2018).
\newblock Understanding the acceleration phenomenon via high-resolution
  differential equations.
\newblock {\em arXiv preprint arXiv:1810.08907}.

\bibitem[Shi et~al., 2019]{shi2019acceleration}
Shi, B., Du, S.~S., Su, W.~J., and Jordan, M.~I. (2019).
\newblock Acceleration via symplectic discretization of high-resolution
  differential equations.
\newblock {\em arXiv preprint arXiv:1902.03694}.

\bibitem[Su et~al., 2016]{su2014differential}
Su, W., Boyd, S., and Candes, E.~J. (2016).
\newblock A differential equation for modeling nesterov’s accelerated
  gradient method: Theory and insights.
\newblock {\em Journal of Machine Learning Research}, 17:1--43.

\bibitem[Sutskever et~al., 2013]{sutskever13}
Sutskever, I., Martens, J., Dahl, G., and Hinton, G. (2013).
\newblock On the importance of initialization and momentum in deep learning.
\newblock In Dasgupta, S. and McAllester, D., editors, {\em Proceedings of the
  30th International Conference on Machine Learning}, volume~28 of {\em
  Proceedings of Machine Learning Research}, pages 1139--1147, Atlanta,
  Georgia, USA. PMLR.

\bibitem[Wibisono et~al., 2016]{wibisono2016variational}
Wibisono, A., Wilson, A.~C., and Jordan, M.~I. (2016).
\newblock A variational perspective on accelerated methods in optimization.
\newblock {\em proceedings of the National Academy of Sciences},
  113(47):E7351--E7358.

\bibitem[Wilson et~al., 2016]{wilson2016lyapunov}
Wilson, A.~C., Recht, B., and Jordan, M.~I. (2016).
\newblock A lyapunov analysis of momentum methods in optimization.
\newblock {\em arXiv preprint arXiv:1611.02635}.

\bibitem[Xu et~al., 2018a]{xu2018accelerated}
Xu, P., Wang, T., and Gu, Q. (2018a).
\newblock Accelerated stochastic mirror descent: From continuous-time dynamics
  to discrete-time algorithms.
\newblock In {\em International Conference on Artificial Intelligence and
  Statistics}, pages 1087--1096.

\bibitem[Xu et~al., 2018b]{xu2018continuous}
Xu, P., Wang, T., and Gu, Q. (2018b).
\newblock Continuous and discrete-time accelerated stochastic mirror descent
  for strongly convex functions.
\newblock In {\em International Conference on Machine Learning}, pages
  5488--5497.

\bibitem[Yang et~al., 2018]{yang2018physical}
Yang, L., Arora, R., Zhao, T., et~al. (2018).
\newblock The physical systems behind optimization algorithms.
\newblock In {\em Advances in Neural Information Processing Systems}, pages
  4377--4386.

\bibitem[Yang et~al., 2016]{yang2016unified}
Yang, T., Lin, Q., and Li, Z. (2016).
\newblock Unified convergence analysis of stochastic momentum methods for
  convex and non-convex optimization.
\newblock {\em arXiv preprint arXiv:1604.03257}.

\bibitem[Yuan et~al., 2016]{yuan2016influence}
Yuan, K., Ying, B., and Sayed, A.~H. (2016).
\newblock On the influence of momentum acceleration on online learning.
\newblock {\em The Journal of Machine Learning Research}, 17(1):6602--6667.

\bibitem[Zhang et~al., 2018]{zhang2018direct}
Zhang, J., Mokhtari, A., Sra, S., and Jadbabaie, A. (2018).
\newblock Direct runge-kutta discretization achieves acceleration.
\newblock In {\em Advances in Neural Information Processing Systems}, pages
  3904--3913.

\end{thebibliography}
\newpage
\appendix
\onecolumn
{\Huge \textbf{Appendix}}

\section{Basic Definitions and Notation}
\label{notation}
In this paper we work in $\R^d$ with the metric induced by the Euclidean norm, which we denote by $\|\cdot\|$. We say that $f:\R^d\to\R$ is $\C^m(\R^d,\R)$ if it is $m$ times continuously differentiable and we say that it is $L$-Lipschitz if $\|\nabla f(x) - \nabla f(y)\|\le L\|x-y\|$ for all $x,y\in\R^d$. We say that $f(\cdot)$ is $\mu$-strongly convex if $f(y)\ge f(x) + \langle\nabla f(x), y-x \rangle + \frac{\mu}{2} \|y-x\|^2$ for all $x,y\in \R^n$. We call a function convex if it is $0$-strongly convex. An equivalent definition involves the Hessian: a twice differentiable function $f(\cdot)$ is $\mu$-strongly convex and $L$-smooth if and only if, for all $x\in \R^d$ , $\mu I_d \preceq \|\nabla^2 f(x)\|_{s}\preceq L I_d$, where $I_d$ is the identity matrix in $\R^d$ and $\|\cdot\|_{s}$ denotes the operator norm of a matrix in Euclidean space: $\|\nabla^2 f(x)\|_{s} = \sup_{\|y\|= 1} \|\nabla^2 f(x) y\|$. For a symmetric matrix, the operator norm is the norm of the maximum positive eigenvalue.

\section{Background on Momentum and Adam}
\label{sec:app_background}
All the algorithms/ODEs mentioned in this appendix are reported in Sec.~\ref{sec:MGSDE}.
\subsection{Discretization of~\ref{HB-ODE}}
\label{discretization_HB_proof}

The next theorem provides a strong link between \ref{HB} and \ref{HB-ODE}, and in partially included in~\cite{shi2019acceleration}.
\begin{theorem}
The Heavy-Ball is the result of some semi-implicit Euler integration on \eqref{HB-ODE_PS}.
\label{thm:HB_discretization}
\end{theorem}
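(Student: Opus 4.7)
The plan is to show by direct substitution that a symplectic (semi-implicit) Euler discretization of the phase-space system~\ref{HB-ODE_PS}, after elimination of the velocity variable, reproduces the \ref{HB} recursion. Concretely, I will fix a step size $h>0$, discretize time as $t_k = kh$, write $X_k \approx X(t_k)$, $V_k \approx V(t_k)$, and $a_k := a(t_k)$.

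The first step is to apply semi-implicit Euler by treating the position update implicitly (i.e.\ using the \emph{updated} velocity) while advancing the velocity explicitly:
\begin{equation*}
V_{k+1} = V_k - h\, a_k V_k - h\,\nabla f(X_k), \qquad X_{k+1} = X_k + h V_{k+1}.
\end{equation*}
This is a standard symplectic splitting of the phase-space flow.

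The second step is to eliminate $V$. From the position update at the previous iterate, $X_k = X_{k-1} + h V_k$, so $V_k = (X_k - X_{k-1})/h$. Substituting this, together with the velocity update, into $X_{k+1} = X_k + h V_{k+1}$ gives
\begin{equation*}
X_{k+1} \;=\; X_k \;+\; (1 - h a_k)(X_k - X_{k-1}) \;-\; h^2 \nabla f(X_k).
\end{equation*}
Setting $\beta_k := 1 - h a_k$ and $\eta := h^2$ yields exactly \ref{HB}. (An alternative semi-implicit choice, treating the friction term implicitly as $V_{k+1} = V_k - h a_k V_{k+1} - h\nabla f(X_k)$, leads to the same conclusion with $\beta_k = 1/(1+ha_k)$ and $\eta = h^2/(1+h a_k)$; this merely reflects that several semi-implicit variants exist, all producing a \ref{HB}-style update.)

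The final step is bookkeeping of initial conditions: the ODE condition $\dot X(0) = 0$ corresponds to $V_0 = 0$, which through $X_1 = X_0 + h V_1 = X_0 - h^2\nabla f(X_0)$ is consistent with the convention $x_{-1} = x_0$ stated just below~\ref{HB-SUM}. I do not expect a real obstacle here; the only delicate point is to be explicit about \emph{which} variable is advanced implicitly, since the label ``semi-implicit Euler'' admits several conventions. I will state the chosen splitting up front so that the identification $\beta_k \leftrightarrow a_k$, $\eta \leftrightarrow h$ is unambiguous, and then the derivation reduces to the one-line substitution above.
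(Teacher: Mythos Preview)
Your proposal is correct and follows essentially the same approach as the paper: write the semi-implicit Euler scheme $V_{k+1}=(1-ha_k)V_k-h\nabla f(X_k)$, $X_{k+1}=X_k+hV_{k+1}$, eliminate $V_k=(X_k-X_{k-1})/h$, and read off $\beta_k=1-ha_k$, $\eta=h^2$. Your added remarks on the alternative implicit-friction splitting and on the initial condition $V_0=0\leftrightarrow x_{-1}=x_0$ are nice extras but not required; the core argument is identical to the paper's.
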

\begin{proof}
Semi-implicit integration with stepsize $h$, at iteration $k$ and from the current integral approximation $(x_k,v_k)\simeq\left(X(kh),V(kh)\right)$, computes $(x_{k+1},v_{k+1})\simeq \left(X(h(k+1)), V(h(k+1))\right)$ as follows:

\begin{equation}
    \begin{cases}
    v_{k+1} = v_k + h(-a_k v_k -\nabla f(x_k))\\
    x_{k+1} = x_k + h v_{k+1}
    \end{cases},
    \label{eq:HB_int}
\end{equation}

where $a_k = a(hk)$. Notice that $v_{k+1} = \frac{x_{k+1}-x_k}{h}$ and
\begin{equation}
x_{k+1} = x_{k}-(1-a_k h) (x_k-x_{k-1}) - h^2 \nabla f(x_k),
\label{HB_link_continuous}
\end{equation}

which is exactly an Heavy Ball iteration, with $\beta_k = 1-h a_k$ and $\eta = h^2$.
\end{proof}

\begin{remark}
The semi-implicit Euler method ---when applied to an Hamiltonian system--- is \textit{symplectic}, meaning that is preserves some geometric properties of the true solution \cite{hairer2006geometric}. However, as also pointed out in \cite{zhang2018direct}, continuous time models of momentum methods are energy-dissipative ---hence not Hamiltonian. Therefore, as opposed to \cite{betancourt2018symplectic, shi2019acceleration}, we avoid using this misleading moniker.
\label{rmk:symplectic}
\end{remark}

Moreover, the last result also has an integral formulation.
\begin{proposition}
The differential equation \ref{HB-ODE-INT-C} is the continuous time limit of \ref{HB-SUM}.
\end{proposition}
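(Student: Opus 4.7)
The plan is to run the argument of Theorem~\ref{thm:HB_discretization} backward: start from \ref{HB-SUM} with the discretization choices $\beta_k = 1 - h\,a_k$ and $\eta = h^2$ dictated by semi-implicit Euler on \ref{HB-ODE_PS}, and show that the resulting recursion collapses to \ref{HB-ODE-INT-C} when $a(t) \equiv \alpha$ is constant and $h \to 0$ with $kh \to t$ fixed.

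First, I would rewrite the product that appears as the weight of $\nabla f(x_j)$ in \ref{HB-SUM}. Under $\beta_h = 1 - h\alpha$ we get
\begin{equation*}
\prod_{h=j+1}^{k} \beta_h = (1 - h\alpha)^{k-j}.
\end{equation*}
For any $s, t \in \mathbb{R}_{\ge 0}$ with $s = jh$ and $t = kh$, a standard expansion of $\log(1-h\alpha) = -h\alpha + O(h^2)$ gives $(1-h\alpha)^{k-j} = \exp\bigl((k-j)\log(1-h\alpha)\bigr) \to e^{-\alpha(t-s)}$ as $h\to 0$. Then, plugging $\eta = h^2$ into \ref{HB-SUM} and dividing both sides by $h$ yields
\begin{equation*}
\frac{x_{k+1} - x_k}{h} = -h\sum_{j=0}^{k-1} (1-h\alpha)^{k-j}\nabla f(x_j) - h\nabla f(x_k).
\end{equation*}

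Next, identify $x_k \simeq X(kh)$ so that the left-hand side converges to $\dot X(t)$. The right-hand side is a Riemann sum of mesh $h$: the term $-h\nabla f(x_k)$ vanishes, while $h\sum_{j=0}^{k-1}(1-h\alpha)^{k-j}\nabla f(x_j)$ converges, along the partition $\{jh\}_{j=0}^{k}$ of $[0,t]$, to $\int_0^t e^{-\alpha(t-s)}\nabla f(X(s))\,ds$. This gives exactly \ref{HB-ODE-INT-C}. The convergence of the sum to the integral requires $\nabla f \circ X$ to be continuous on $[0,t]$, which follows from the standard regularity assumptions used elsewhere in the paper ($f \in \mathcal{C}^1$ suffices, and the solution $X$ of \ref{HB-ODE} is $\mathcal{C}^2$).

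The only subtlety, and what I would expect to be the main (mild) obstacle, is justifying the uniform convergence of the discrete weights $(1-h\alpha)^{k-j}$ to the continuous kernel $e^{-\alpha(t-s)}$ on $[0,t]$, so that the Riemann-sum-to-integral passage is rigorous rather than merely pointwise. Writing $(1-h\alpha)^{k-j} = e^{(k-j)\log(1-h\alpha)}$ and using $\log(1-h\alpha) = -h\alpha + O(h^2)$ gives $(1-h\alpha)^{k-j} = e^{-\alpha(t-s)}\bigl(1 + O(t\,h)\bigr)$ uniformly in $s \in [0,t]$; together with boundedness of $\nabla f \circ X$ on compacts, this yields the desired limit and concludes the proof.
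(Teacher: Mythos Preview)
Your proposal is correct and follows essentially the same approach as the paper: identify $\beta=1-h\alpha$, $\eta=h^2$, show the discrete weight $(1-h\alpha)^{k-j}$ converges to the exponential kernel $e^{-\alpha(t-s)}$, and recognize the remaining sum (after pulling out one factor of $h$) as a Riemann sum for the integral. The paper's own proof is a two-line sketch of exactly these steps; your version simply supplies the uniformity argument for the weight convergence and the regularity needed for the Riemann-sum limit, which the paper leaves implicit.
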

\begin{proof}
Recalling that $\beta = 1-h\alpha$ (where $h$ is the stepsize of the semi-implicit Euler integration defined in Thm.~\ref{thm:HB_discretization}) and choosing $t = kh$ ($k > 0$) and $s = jh$ ($j > 0$),
$$\lim_{h\to 0} \beta^{k-j} = \lim_{h\to 0} (1-h\alpha)^{(t-s)/h}=\lim_{h\to 0} \left((1-h\alpha)^{1/h}\right)^{(t-s)}= \left(e^{\alpha}\right)^{(t-s)} = e^{\alpha (t-s)}.$$
Moreover, since $\eta=h^2$, taking one $h$ inside the summation, we get a Riemann sum which then rightfully converges to the integral in the limit. 
\end{proof}

\subsection{Unbiasing \ref{HB-SUM} under constant momentum: the birth of Adam}
\label{sec:unbiasing}

\cite{kingma2014adam} noticed that, in the limit case where true gradients are constant, the averaging procedure in \ref{HB-SUM} (defined in the Introduction section of the main paper) is biased: let $i_j\in\{1,\dots, N\}$ be the data-point selected at iteration $j$ and $\nabla f_{i_j}$ the corresponding stochastic gradient; if we define $\epsilon_j:= \nabla f(x_j)-\nabla f_{i_j}(x_j)$ and pick constant momentum $\beta_j = \beta$, we have
$$\E\left[\sum_{j=0}^{k} \beta^{k-j} \nabla f_{i_j}(x_j)\right] = \sum_{j=0}^{k} \beta^{k-j} \E[\nabla f + \epsilon_j] = \frac{\beta^{k+1}-1}{\beta -1} \nabla f. $$
Therefore ---to ensure an unbiased update, at least for this simple case--- \cite{kingma2014adam} normalize the sum above  by $\frac{\beta^{k+1}-1}{\beta -1}$, showing significant benefits in the experimental section. Indeed, such normalization retains all the celebrated geometric properties of momentum (see Introduction), while improving statistical accuracy  ---a crucial feature of SGD. For convenience of the reader, we report below the full Adam algorithm.

 \begin{tcolorbox}
 Initialize $m_0=v_0=0$ and choose initial estimate $x_0$. Let "$\circ$" denote the element-wise product.
 \begin{equation}
 \tag{ADAM}
     \begin{cases} 
      m_{k+1} &= \beta_1 m_k + (1-\beta_1) \nabla f(x_k) \\
      v_{k+1} &= \beta_2 v_k + (1-\beta_2) \nabla f_k^{\circ 2}(x_k) \\
        \hat m_{k+1} &= m_{k+1}/ (1-\beta_1)^{k+1} \\
            \hat v_{k+1} &= v_{k+1}/ (1-\beta_2)^{k+1} \\
                       x_{k+1} &= x_{k} - \eta \frac{\hat m_k}{\sqrt{\hat v_k +\epsilon}} \\
   \end{cases}
   \label{eq:adam}
\end{equation}
 \end{tcolorbox}

In addition, such normalization also performs variance reduction. Indeed, let $\Sigma := \var\left[\nabla f_{i}(x)\right]$ be the gradient covariance, which we assume constant and finite for simplicity. Then,
\begin{align*}
    \var\left[\frac{\beta-1}{\beta^{k+1}-1}\sum_{j=0}^{k} \beta^{k-j} \nabla f_{i_j}(x_j)\right]&= \frac{(\beta-1)^2}{(\beta^{k+1}-1)^2}\sum_{j=0}^{k} \beta^{2(k-j)} \Sigma\\ &= \frac{(\beta-1)^2}{(\beta^{k+1}-1)^2}\frac{\beta^{2(k+1)}-1}{\beta^2-1}\Sigma\\&= \frac{\beta-1}{(\beta^{k+1}-1)}\frac{\beta^{k+1}+1}{\beta+1}\Sigma \preceq \Sigma.\\
\end{align*}

Note that if $k$ increases, the covariance monotonically decreases until reaching the minimum $\frac{1-\beta}{\beta+1}\Sigma$ at infinity. We also note that, in case $\beta = 0$ (Gradient Descent) we have no variance reduction, as expected.

Supported by the empirical success of Adam\cite{kingma2014adam}, we suggest in the main paper to modify the standard Heavy Ball (\ref{HB-SUM}) with constant momentum to match the Adam update:
 \begin{equation}
    x_{k+1} = x_k - \eta \frac{\beta-1}{\beta^{k+1}-1} \sum_{j=0}^{k} \beta^{k-j} \nabla f(x_j).
     \label{HB-sum_modified}
 \end{equation}
 
 Which can be written recursively using 3 variables (see again \cite{kingma2014adam}). Notice that, after a relatively small number of iterations, we converge to the simpler update rule
 $$x_{k+1} = x_k + \beta (x_k-x_{k-1}) - \eta(1-\beta)\nabla f(x_k),$$
 
 which is also the starting point in some recent elaboration on Heavy Ball \cite{ma2018quasi}.

\section{Time-warping, acceleration and gradient amplification}

\subsection{Counterexample for existence of a solution of \ref{MG-ODE} starting integration at 0}
\label{app:counterexample}
Consider the one dimensional dynamics with gradients always equal to one and $\m(t) = t^3$. The \ref{MG-ODE} (see main paper) reads $\ddot X(t) + \frac{3}{t} \dot X + \frac{3}{t} = 0$. It is easy to realize using a Cauchy-Euler argument that all solutions are of the form $X(t) = \frac{c_1}{t^2} + c_2 - t$. Unfortunately, the constraint $X(0)=x_0$ fixes both the degrees of freedom and fixes $X(t) = x_0-t$, so that we necessarily have $\dot X(0) = -1$.

  \subsection{Variational point of view}
  \label{lagrangian}
Let $X\in\C^1([t_1,t_2],\R^d)$ be a curve; the action (see \cite{arnol2013mathematical} for the precise definition) associated with a Lagrangian $\mathcal{L}:\R^d\times\R^d\times\R\to\R$ is $\int_{t_1}^{t_2}\mathcal{L}(X(s),\dot X(s), s) ds$. The fundamental result in variational analysis states that the curve $X$ is a stationary point for the action only if it solves the Euler-Lagrange equation $\frac{d}{dt}\partial_{\dot X} \mathcal{L}(X(t),\dot X(t), t) = \partial_{X} \mathcal{L}(X(t),\dot X(t), t)$. We define the \textit{Memory Lagrangian} as
$$\boxed{\mathcal{L}_\m(X,\dot X,t) := \frac{1}{2}\m(t)\|\dot X\|^2-\dot\m(t) f(X)}.$$
It is straightforward to verify that the associated Euler-Lagrange equations give rise to \ref{MG-ODE}. We cannot help but noticing the striking simplicity of this Lagrangian when compared to others arising from momentum methods (see e.g. the Bregman Lagrangian in \cite{wibisono2016variational}) . For the sake of delivering other points in this paper, we leave the analysis of  the symmetries of $\mathcal{L}_\m$ to future work.

\subsection{Time-warping of memory: a general correspondence to \ref{HB-ODE}}
\label{time}
Consider the ODE $\boxed{\ddot X(t) + \frac{\dot\m(t)}{\m(t)}\dot X(t) + \frac{\dot\m(t)}{\m(t)}\nabla f(X)=0}$ and the time change $\tau(t)$. 

Let $Y(t) = X(\tau(t))$, following \cite{wibisono2016variational} we have, by the chain rule, 
\begin{equation*}
    \dot Y(t) = \dot X(\tau(t))\dot\tau(t);\ \ \ \ \ \ \ \
    \ddot Y(t) = \dot X(\tau(t))\ddot\tau(t) + (\dot\tau(t))^2\ddot X(\tau(t)).\\
\end{equation*}

Therefore, we have
\begin{equation*}
    \dot X(\tau(t)) = \frac{\dot Y(t)}{\dot\tau(t)};\ \ \ \ \ \ \ \
    \ddot X(\tau(t)) = \frac{\ddot Y(t)}{(\dot\tau(t))^2}-\frac{\ddot \tau(t)}{(\dot\tau(t))^3} \dot Y(t).\\
\end{equation*}

Next, we construct a new ODE for $Y$ using the previous formulas:
$$\frac{1}{(\dot\tau(t))^2}\ddot Y(t) + \left(\frac{\dot\m(\tau(t))}{\dot\tau(t) \m(\tau(t))} - \frac{\ddot \tau(t)}{(\dot\tau(t))^3}\right)\dot Y(t) + \frac{\dot \m(\tau(t))}{\m(\tau(t))}\nabla f(X(\tau(t)));$$

Next, we multiply everything by $\m(\tau(t))/\dot\m(\tau(t))$, in order to eliminate the coefficient in front of the gradient:

$$\frac{\m(\tau(t))}{\dot\m(\tau(t))(\dot\tau(t))^2}\ddot Y(t) + \left(\frac{1}{\dot\tau(t)} - \frac{\m(\tau(t))\ddot \tau(t)}{\dot\m(\tau(t))(\dot\tau(t))^3}\right)\dot Y(t) + \nabla f(Y(t))$$

If we also want the coefficient in front of $\ddot Y(t)$ to be one, we need $\tau(t)$ to satisfy the following differential equation:
$$\m(\tau(t)) = \dot\m(\tau(t))(\tau(t))^2.$$

For simplicity, let us consider $\m(t) = t^p$, for some $p$ (polynomial forgetting), the equation reduces to $\tau(t) = p (\dot \tau(t))^2$, which has the general solution 
$$\tau(t) = \frac{1}{4p}\left(-2\sqrt{2} c_1 t + 2c_1^2 + t^2\right).$$
To be a valid time change, we need to have $\tau(0)=0$; hence, --- \textit{only one choice is possible}: $\tau(t) = \frac{t^2}{4p}$. Plugging in this choice into the differential equation, we get 
$$\boxed{\ddot Y(t) + \frac{2p-1}{t}\dot Y(t) + \nabla f(Y(t))=0},$$
which is of the form of \ref{HB-ODE}.

In all this, it is the time change is \textit{fixed to be} $\frac{t^2}{4p}$. For this reason, we postulate that this time change analysis has deep links to the general mechanism of acceleration. We verify this change of time/ODE formula in App. \ref{app:ODEs_quadratic}.

\subsection{Behaviour of Nesterov's SDE compared to the quadratic forgetting SDE}
\label{app:nesterov_vs_quadratic_noise}

We compare here the variance of the Nesterov's SDE to the variance of the quadratic forgetting SDE.

\subsubsection{A general result}
First, we need to prove a result in stochastic integration~(for the definition of integral w.r.t. a Brownian Motion the reader can check \cite{mao2007stochastic}).

\begin{lemma}
Let $\{B\}_{t\ge 0}$ be a $d-$dimensional Brownian Motion,
$$\var\left(\int_0^t s^p dB(s)\right) = \frac{t^{2p+1}}{2p+1} I_{d}.$$
\label{lemma:isometry}
\end{lemma}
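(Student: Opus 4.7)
The plan is to recognize this as a direct application of the Itô isometry to a deterministic integrand, with a sanity check on integrability at the origin. Since $s \mapsto s^p$ is deterministic (hence adapted and independent of the Brownian increments) and square-integrable on $[0,t]$ whenever $p > -1/2$ (which is the implicit regime, since otherwise the integral is not defined), the stochastic integral $M(t) := \int_0^t s^p \, dB(s)$ is a well-defined $d$-dimensional martingale with $\E[M(t)] = 0$.

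Next I would compute the covariance componentwise. Writing $B(s) = (B_1(s), \dots, B_d(s))^T$ with independent scalar Brownian motions $B_i$, we have $M_i(t) = \int_0^t s^p \, dB_i(s)$. For the diagonal entries, the one-dimensional Itô isometry gives
\begin{equation*}
\E[M_i(t)^2] = \int_0^t s^{2p} \, ds = \frac{t^{2p+1}}{2p+1}.
\end{equation*}
For the off-diagonal entries with $i \neq j$, independence of $B_i$ and $B_j$ combined with the polarization form of the Itô isometry (or equivalently the fact that $\langle B_i, B_j\rangle_t \equiv 0$) yields $\E[M_i(t) M_j(t)] = 0$. Assembling these entries gives $\var(M(t)) = \frac{t^{2p+1}}{2p+1} I_d$, as claimed.

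There is essentially no hard step here; the only thing to be mindful of is the integrability condition $2p + 1 > 0$ at the lower endpoint, which is satisfied in all the uses of this lemma in the paper (where $p \geq 1$, corresponding to quadratic or cubic forgetting in the comparison between Nesterov's SDE and \ref{MG-SDE}). The lemma will subsequently feed into the explicit variance computations $\var[\zeta_N(t)] = \tfrac{1}{7} t\,\sigma\sigma^T$ and $\var[\zeta_{\m 2}(t)] = \tfrac{9}{5t}\sigma\sigma^T$ quoted in Section \ref{nesterov_comparison} by evaluating at $p=3$ and $p=2$ respectively (with appropriate prefactors from $\dot\m/\m$).
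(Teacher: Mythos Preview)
Your proof is correct and follows the same approach as the paper: both rely on the vanishing mean of the stochastic integral and the It\^{o} isometry to evaluate the second moment. Your version is in fact more careful than the paper's, since you spell out the off-diagonal vanishing via independence of the Brownian components and note the integrability constraint $p>-1/2$, whereas the paper treats the $d$-dimensional case implicitly.
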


\begin{proof}
First, notice that $\E\left[\int_0^t s^p dB(s)\right]=0$; therefore the variance is equal to the second moment:
$$\var\left(\int_0^t s^p dB(s)\right) = \E\left[\left(\int_0^ts^p dB(s)\right)^2\right] .$$
By the It\^{o} isometry (see e.g. \cite{mao2007stochastic}).
$$\E\left[\left(\int_0^ts^p dB(s)\right)^2\right] = \int_0^t s^{2p} dt = \frac{t^{2p+1}}{2p+1}.$$
\end{proof}

We want to compare the SDEs below.
\begin{equation*}
    \tag{quadratic forgetting SDE}
    \boxed{
    \begin{cases}
    dX(t) = V(t) dt\\ 
    dV(t) = -\frac{3}{t}V(t)dt -\frac{3}{t}\left[\nabla f(X(t))dt +\sigma(X(t)) dB(t)\right]
    \end{cases}}
\end{equation*}

\begin{equation*}
    \tag{Nesterov's SDE}
    \boxed{
    \begin{cases}
    dX(t) = V(t) dt\\ 
    dV(t) = -\frac{3}{t}V(t)dt -\left[\nabla f(X(t))dt +\sigma(X(t)) dB(t)\right]
    \end{cases}} \ \ \ \ \ \ \ \ \ \ \ \ \ \ \ \ \ \
\end{equation*}

We have the following result, which is included in Sec.~\ref{nesterov_comparison} of the main paper.
\begin{proposition}
Assume persistent volatility $\sigma(X(t))= \sigma$.
Let $\{X_N(t), V_N(t)\}_{t\ge0}$ be the stochastic process which solves Nesterov's SDE. The infinitesimal update direction $V_N(t)$ of the position $X_N(t)$ can be written as
\begin{equation*}
    V_{N}(t) =  -\int_0^t \frac{s^{3}}{t^3}\nabla f(X_N(s))ds  +\zeta_{N}(t),
\end{equation*}
where $\zeta_{N}(t)$ is a random vector with $\E[\zeta_{N}(t)]=0$ and $\var[\zeta_{N}(t)]=\frac{1}{7} t\sigma\sigma^T$. In contrast, the solution $\{X_{\m2}(t), V_{\m2}(t)\})_{t\ge 0}$ of \ref{MG-SDE} with quadratic forgetting satisfies
\begin{equation*}
    V_{\m2}(t) =  -\int_0^t \frac{3s^{2}}{t^3}\nabla f(X_{\m2}(s))ds  +\zeta_{\m2}(t),
\end{equation*}
where $\zeta_{\m2}(t)$ is a random vector with $\E[\zeta_{\m2}(t)]=0$ but $\var[\zeta_{\m2}(t)]=\frac{9}{5t}\sigma\sigma^T$.
\label{prop:var}
\end{proposition}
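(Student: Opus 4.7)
The plan is to solve explicitly for the velocity process $V(t)$ in each SDE by the standard integrating-factor technique, and then extract the noise component $\zeta(t)$ as a stochastic integral whose moments we can compute via Itô isometry. Both SDEs share the same linear friction term $-\frac{3}{t} V(t)\,dt$, so the integrating factor is the same in both cases: since $d(t^3)/dt = 3t^2$, multiplying the velocity equation by $t^3$ and applying the product rule (here just the deterministic product rule, since $t^3$ has no diffusion) produces
\begin{equation*}
d\bigl(t^3 V(t)\bigr) \;=\; 3t^2 V(t)\,dt + t^3\,dV(t) \;=\; -t^3\bigl[\nabla f(X(t))\,dt + \sigma\,dB(t)\bigr]
\end{equation*}
for Nesterov's SDE, and $d(t^3 V(t)) = -3t^2[\nabla f(X(t))\,dt + \sigma\,dB(t)]$ for the quadratic-forgetting SDE. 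The $V$-dependence on the right-hand side has cancelled exactly as desired.

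Next I would integrate from $0$ to $t$ (treating the lower endpoint as discussed in Section~\ref{sec:mem_fun_grad_forg}; formally one starts at $\epsilon>0$ and lets $\epsilon\to 0$, noting that the integrands $s^3$, $s^2$ vanish at zero so there is no singularity). Using $V(0)=0$ (or, more precisely, the property that the boundary term $\epsilon^3 V(\epsilon)\to 0$), we obtain in each case
\begin{equation*}
V_N(t) = -\int_0^t \tfrac{s^3}{t^3}\nabla f(X_N(s))\,ds + \zeta_N(t),\qquad \zeta_N(t) := -\tfrac{1}{t^3}\int_0^t s^3\sigma\,dB(s),
\end{equation*}
and analogously $V_{\m2}(t) = -\int_0^t \tfrac{3s^2}{t^3}\nabla f(X_{\m2}(s))\,ds + \zeta_{\m2}(t)$ with $\zeta_{\m2}(t) := -\tfrac{3}{t^3}\int_0^t s^2\sigma\,dB(s)$. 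This already yields the claimed integral representations.

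For the moment computation, the mean of each $\zeta$ is zero because stochastic integrals against Brownian motion are martingales starting at zero; this is immediate from standard properties of the Itô integral. For the variance, I would invoke Lemma~\ref{lemma:isometry} (which is exactly Itô's isometry applied to the deterministic integrand $s^p$) together with the fact that premultiplication by the constant matrix $\sigma$ conjugates the covariance by $\sigma$, i.e.\ $\var[\sigma \int_0^t s^p\,dB(s)] = \sigma\cdot \tfrac{t^{2p+1}}{2p+1}I_d \cdot \sigma^T = \tfrac{t^{2p+1}}{2p+1}\sigma\sigma^T$. For Nesterov ($p=3$) this gives $\var[\zeta_N(t)] = t^{-6}\cdot \tfrac{t^7}{7}\sigma\sigma^T = \tfrac{t}{7}\sigma\sigma^T$, and for quadratic forgetting ($p=2$) it gives $\var[\zeta_{\m2}(t)] = 9 t^{-6}\cdot \tfrac{t^5}{5}\sigma\sigma^T = \tfrac{9}{5t}\sigma\sigma^T$, as claimed.

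The only real subtlety I expect is justifying the $\epsilon\to 0$ limit carefully (so that starting integration at $0$ is legitimate), since the SDEs themselves are only well-defined on $[\epsilon,\infty)$; but because the integrands $s^3\nabla f(X(s))$ and $s^3\sigma$ (respectively $3s^2$ counterparts) are bounded near $s=0$, both the Lebesgue and the Itô integrals from $0$ to $t$ exist and the boundary contribution $\epsilon^3 V(\epsilon)$ vanishes in $L^2$ as $\epsilon\to 0$. Everything else is a mechanical application of the integrating factor and Itô's isometry.
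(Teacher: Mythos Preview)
Your proposal is correct and follows essentially the same route as the paper: the paper applies It\^o's lemma to $Q(v,t)=t^3v$, which is exactly your integrating-factor/product-rule computation (there is no second-order correction because $t^3$ is deterministic), and then invokes Lemma~\ref{lemma:isometry} for the variance. Your treatment of the $\epsilon\to0$ boundary is also what the paper does in the remark immediately following the proof.
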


\ \\
\begin{remark} [implications of the proposition]
Note that, in the result above, the time integrals themselves are random variables; therefore, in general, $\var[V_{\m2}(t)] \ne \var[\zeta_{\m2}(t)]$ and $\var[V_{N}(t)] \ne\var[\zeta_{N}(t)]$. Therefore, the result does \underline{not} directly imply that $\var[V_{N}(t)]$ explodes. However, if gradients are constant, then clearly $\var[V_{N}(t)]$ diverges since the integrals are deterministic (see Fig.\ref{fig:verif_noise}). A more careful analysis, presented in App.~\ref{sec:quadratic_variance}, shows that this fact also holds in the quadratic convex case.
\end{remark}

\begin{proof}
 Let us consider first the Quadratic forgetting SDE. Define the function $Q(v,t) = t^3 v$, which has Jacobian $\partial_vQ(v,t) = t^3 I_d$. Using It\^{o}'s Lemma (Eq.~\eqref{ITO}) coordinate-wise, we get
\begin{align}
    dQ(V(t),t) &= \partial_t Q(V(t),t) dt+ \left\langle\partial_v Q(V(t),t),-\frac{3}{t}V(t)dt -\frac{3}{t}\nabla f(X(t))dt \right\rangle + \left\langle\partial_v Q(V,t),-\sigma dB(t) \right\rangle\nonumber\\
    &= \cancel{3t^{2} V(t) dt}-\cancel{3t^{2}V(t)dt} -3t^{2}\nabla f(X(t))dt  -3t^{2} \sigma dB(t),\label{middle_divergence_proof}
\end{align}
which implies, after taking the stochastic integral,
$$t^3V(t) = Q(V(t),t) - Q(V(0),0) = -\int_0^t 3s^{2}\nabla f(X(s))ds  - \int_0^t 3s^{2} \sigma dB(s).$$

Therefore, for any $t\ge 0$,
$$V_{2\m}(t) = -\int_0^t \frac{3s^{2}}{t^3}\nabla f(X(s))ds  - \int_0^t \frac{3s^{2}}{t^3} \sigma dB(s).$$

Let us call $\zeta_{2\m}(t)$ the stochastic integral, by Lemma~\ref{lemma:isometry}, 
$$\var\left[\zeta_{2\m}(t)\right] = \frac{9}{t^6}\frac{t^5}{5}\sigma\sigma^T = \frac{9}{5 t}\sigma\sigma^T.$$

If we apply the same procedure to Nesterov's SDE, we instead get
$$V_{N}(t) = -\int_0^t \frac{s^{3}}{t^3}\nabla f(X(s))ds  - \int_0^t \frac{s^3}{t^3} \sigma dB(s) = -\int_0^t \frac{s^{3}}{t^3}\nabla f(X(s))ds  - \zeta_{2\m}(t),$$
where $\var\left(\zeta_{2\m}(t)\right) = \frac{t}{7}\sigma\sigma^T.$
\end{proof}
\begin{figure}
  \centering
    \includegraphics[width=0.52\linewidth]{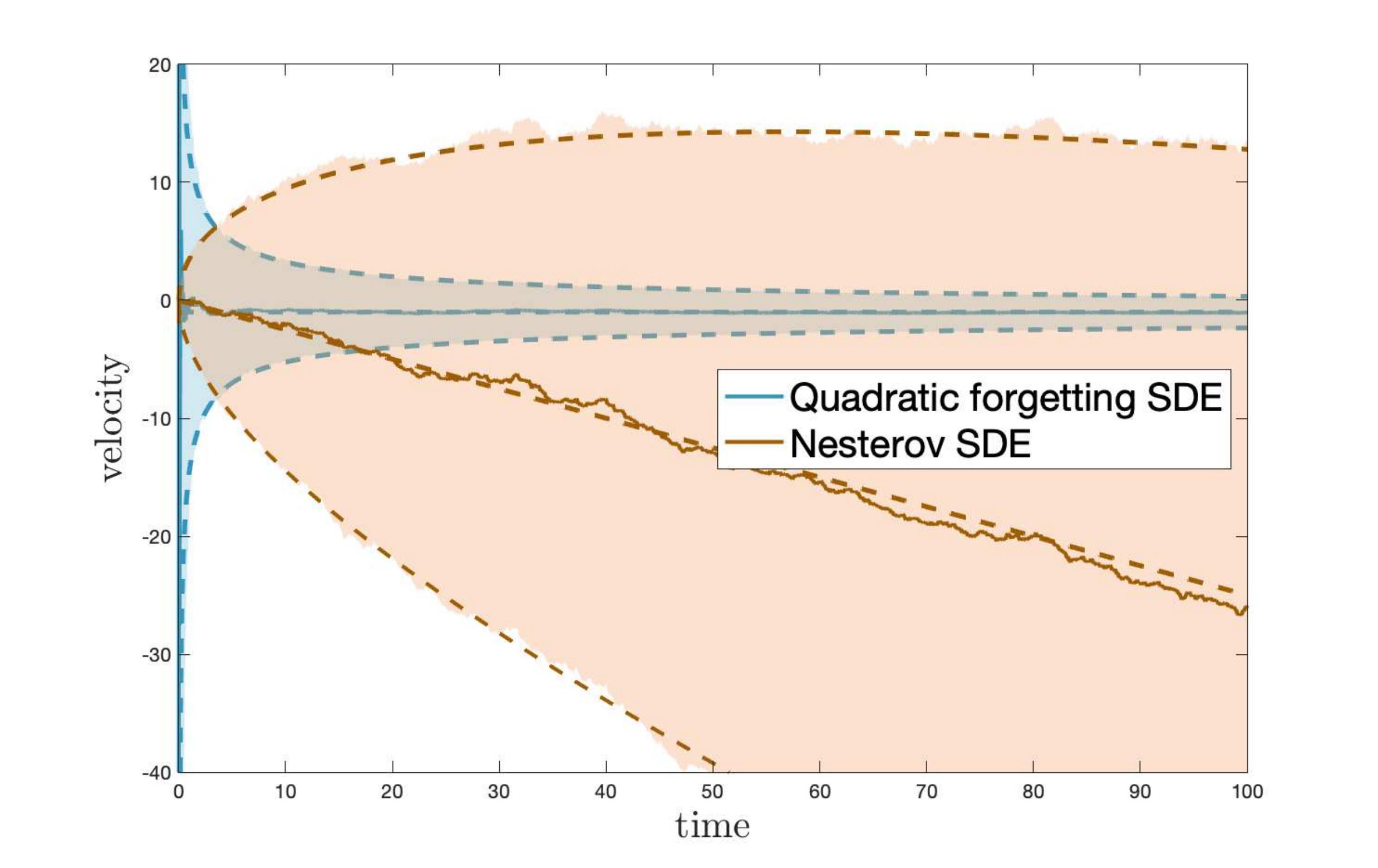} 
    \caption{Simulation using Milstein scheme \cite{mil1975approximate} (stepsize $10^{-3}$, equivalent for this case to Euler-Maruyama) to verify gradient amplification and covariance explosion under a constant gradient equal to 1, $\sigma = 10$. Plotted is the empirical mean and standard deviation of the velocity variable using 1000 runs. The dashed lines indicates the prediction from Prop.~\ref{prop:var}. \ref{MG-SDE} with quadratic forgetting quickly settles to the value $-1$ (the true negative gradient) with decreasing variance. Nesterov's SDE amplifies linearly the negative gradient, and such amplification makes the velocity noisy, with exploding variance.}
    \label{fig:verif_noise}
\end{figure}

\begin{remark}[effect of starting integration after 0]
Effect of starting integration after 0. We take the chance here to explain what changes if we start integration at $\epsilon>0$ with $V_\epsilon(\epsilon)=0$ and $X_\epsilon(\epsilon) = x_0$. Starting from Eq.~\eqref{middle_divergence_proof}, which is still valid, we have to integrate on $[\epsilon, t].$

we have, after taking the stochastic integral,
$$t^3V_\epsilon(t) - \epsilon^3V_\epsilon(\epsilon) = -\int_{\epsilon}^t 3s^{2}\nabla f(X_\epsilon(s))ds  - \int_{\epsilon}^t 3s^{2} \sigma dB(s).$$

Notice that $V_{\epsilon}(\epsilon)=0$; therefore

$$V(t) = -\int_{\epsilon}^t \frac{3s^{2}}{t^3}\nabla f(X(s))ds  - \int_{\epsilon}^t 3s^{2} \sigma dB(s).$$

Notice that, for all $t\ge 0$, the integral dependency on $\epsilon$ vanishes as $\epsilon$ goes to zero. More explicitly, this can be seen using a change of variable in the integral and considering \textit{any} fixed function $X(s)$.
\label{rmk1_exi}
\end{remark}

\subsubsection{Variance divergence of Nesterov's SDE in the quadratic case}
\label{sec:quadratic_variance}

Consider $f(x) = \frac{1}{2}\langle x-x^*,H(x-x^*)\rangle$ for some positive semidefinite $H$. Without loss of generality, we can assume $H$ to be diagonal and $x^* = 0_d$, the $\R^d$ vector of all zeros. Then, in the quadratic-forgetting SDE and Nesterov's SDE, each direction in the original space evolves separately and is decoupled from the others. In other words, the problem becomes linear and two dimensional. 

Let us perform the analysis for Nesterov first. We call $\{X_N(t), V_N(t)\}_{t\ge0}$, the solution to Nesterov's SDE and denote by $X_N^i(t)$ and $V_N^i(t)$ the i-th coordinates of the space and velocity variables and by $\lambda_i\in\R_+$ the eigenvalue of $H$ in the i-th eigendirection. We want to show that $\E[(V_N^i(t))^2]$ explodes. The pair $(X_N^i(t), V_N^i(t))$ evolves with the SDE

\begin{equation*}
    \begin{cases}
    dX_N^i(t) = V_N^i(t) dt\\ 
    dV_N^i(t) = -\frac{3}{t}V_N^i(t)dt - \lambda_i X_N^i dt +\sigma dB(t),
    \end{cases}
\end{equation*}
where $\{B(t)\}_{t\ge 0}$ is a one-dimensional Brownian Motion. This SDE is linear and can be written in matrix form:

\begin{equation*}
    \begin{pmatrix}dX_N^i(t)\\
    dV_N^i(t)\end{pmatrix} = \begin{pmatrix} 0 & 1\\ -\lambda_i & -3/t\end{pmatrix}\begin{pmatrix}X_N^i(t)\\
    V_N^i(t)\end{pmatrix}dt + \begin{pmatrix} 0 \\ \sigma\end{pmatrix} dB(t).
\end{equation*}

By the stochastic variations-of-constants formula (see e.g. Sec. 3.3 in~\cite{mao2007stochastic}), the matrix of second moments (uncentered covariance)
$$\begin{pmatrix} \E[(X_N^i(t))^2]& \E[X_N^i(t)V_N^i(t)]\\ \E[X_N^i(t)V_N^i(t)] & \E[(V_N^i(t))^2]\end{pmatrix} =: \begin{pmatrix}p_1(t) & p_2(t) \\ p_2(t)& p_3(t) \end{pmatrix}$$
 evolves with the following matrix ODE
 
 $$\begin{pmatrix}\dot p_1(t) & \dot p_2(t) \\ \dot p_2(t)& \dot p_3(t) \end{pmatrix} = \begin{pmatrix} 0 & 1\\ -\lambda_i & -3/t\end{pmatrix}\begin{pmatrix}p_1(t) & p_2(t) \\ p_2(t)& p_3(t) \end{pmatrix} + \begin{pmatrix}p_1(t) & p_2(t) \\ p_2(t)& p_3(t) \end{pmatrix} \begin{pmatrix} 0 & 1\\ -\lambda_i & -3/t\end{pmatrix}^T + \begin{pmatrix} 0 & 0\\ 0&  \sigma^2\end{pmatrix},$$
 
 with initial condition 
 $\begin{pmatrix}1 & 0 \\ 0& 0 \end{pmatrix}$. This translates in a system of linear time-dependent ODEs

\begin{equation}
    \begin{cases}
        \dot p_1(t) = 2p_2(t)\\
        \dot p_2(t) = -\lambda_i p_1(t) -\frac{3}{t} p_2(t)  +p_3(t)\\
        \dot p_3(t) = -2\lambda_i p_2(t)  -\frac{6}{t} p_3(t)  + \sigma^2\\
    \end{cases}.
    \label{eq:var-nag}
\end{equation}

This system can be easily tackled numerically with accurate solvers such as MATLAB \texttt{ode45}. We show the integrated variables in Fig.\ref{fig:quadratic_variance}, and compare them with the ones relative to quadratic-forgetting, which can be shown to solve a similar system:
\begin{equation}
    \begin{cases}
        \dot p_1(t) = 2p_1(t)\\
        \dot p_2(t) = -\frac{3\lambda_i}{t} p_1(t) -\frac{3}{t} p_2(t)  +p_3(t) \\
        \dot p_3(t) = -\frac{6\lambda_i}{t} p_2(t)  -\frac{6}{t} p_3(t)  + \frac{3}{t}\sigma^2\\
    \end{cases}.
    \label{eq:var-qf}
\end{equation}

\begin{figure}
  \centering
    \includegraphics[width=0.7\linewidth]{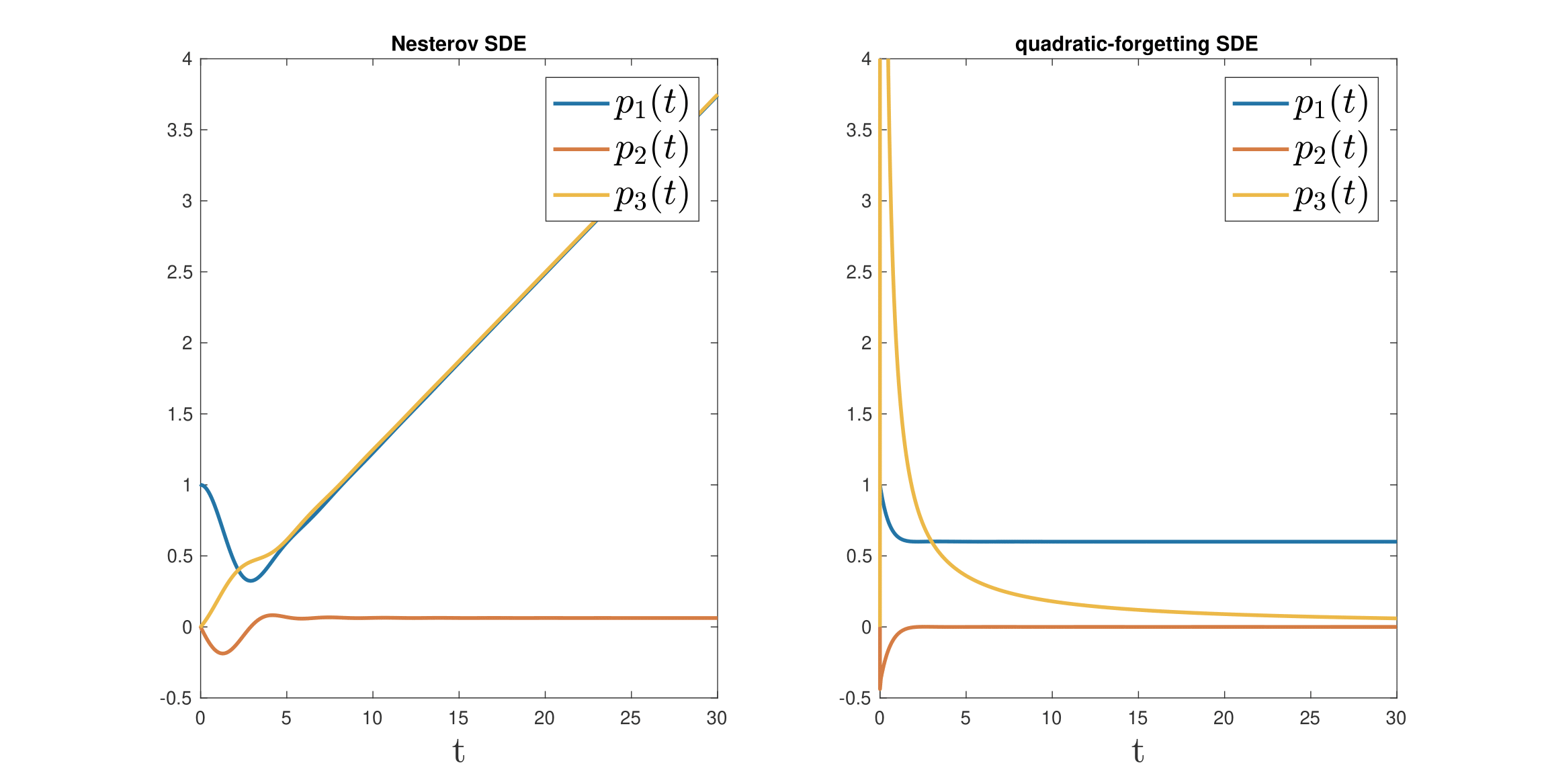} 
    \caption{Numerical solution with MATLAB \texttt{ode45} to Eq.~\eqref{eq:var-nag} (on the left) and Eq.~\eqref{eq:var-qf} (on the right). We choose $\lambda = \sigma = 1$, but the results do not change qualitatively (only scale the axis) for $\sigma,\lambda_i\in\R_+$.}
    \label{fig:quadratic_variance}
\end{figure}

\begin{figure}
  \centering
    \includegraphics[width=0.48\linewidth]{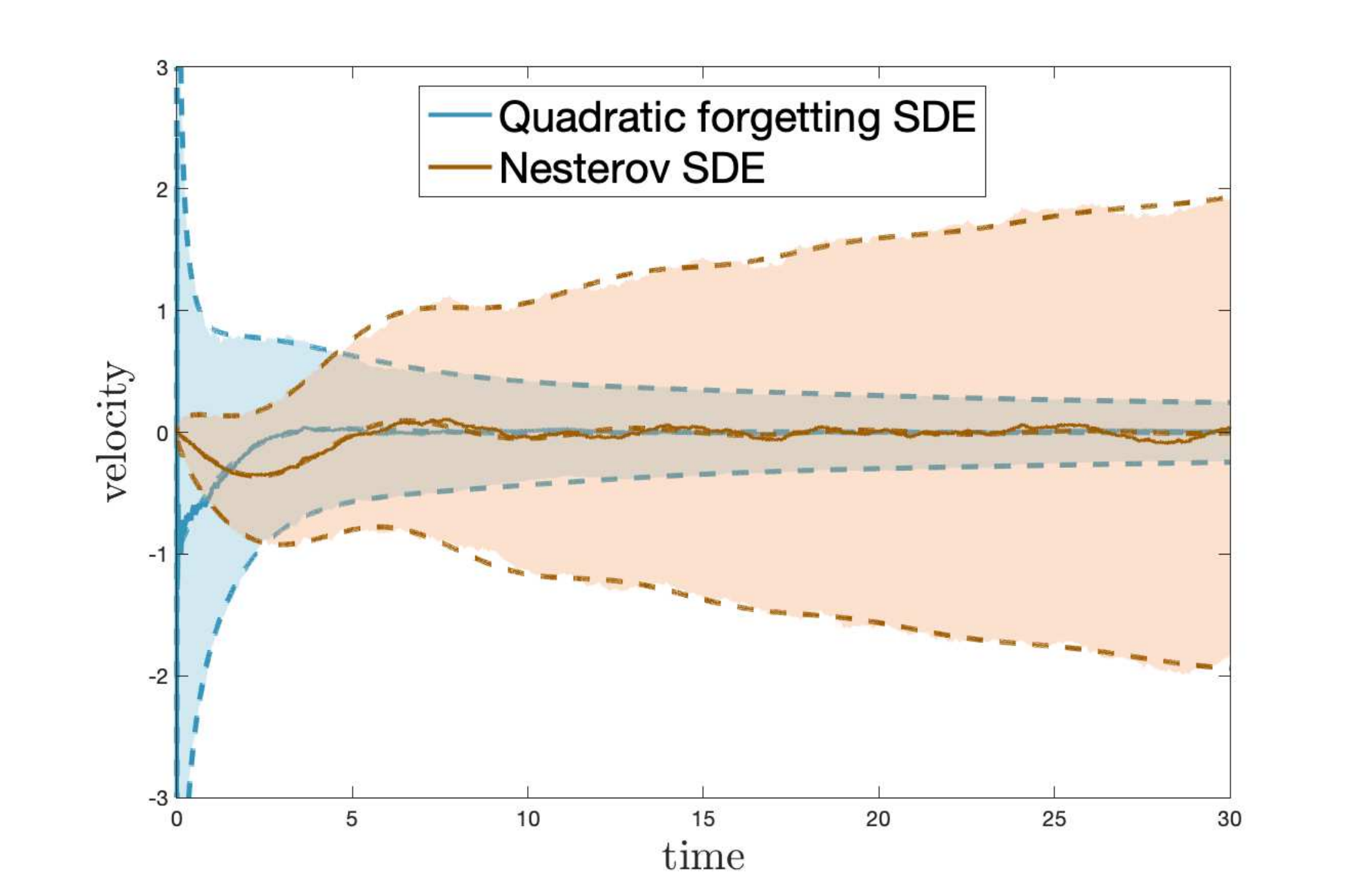} 
    \caption{Simulation using Milstein scheme \cite{mil1975approximate} (stepsize $10^{-3}$, equivalent for this case to Euler-Maruyama) to verify covariance explosion of Nesterov's SDE under a quadratic one dimensional cost $f(x) = x^2/2$. Plotted is the empirical mean and standard deviation of the velocity variable using 1000 runs. The dashed lines indicate the predictions from Eq.~\eqref{eq:var-nag} and Eq.~\eqref{eq:var-qf} (both solved numerically with MATLAB \texttt{ode45} along with standard Nesterov's ODE to get the mean). \ref{MG-SDE} with quadratic forgetting has decreasing variance. Nesterov's SDE amplifies linearly the negative gradient, and such amplification makes the velocity noisy, with exploding variance.}
    \label{fig:verif_noise_quadratic}
\end{figure}

From the simulation results we conclude that, in the convex quadratic setting, the covariance of $V_N(t)$ explodes, while the one of $V_{\m2}(t)$ does not: it converges. Indeed, as we show formally in App.~\ref{proofs}, for quadratic forgetting we get convergence to a ball around the solution. We validate these findings with a numerical simulation in Fig.~\ref{fig:verif_noise_quadratic}.

\newpage
\section{Proofs of convergence rates for the memory SDE}
\label{proofs}

We start by refreshing the reader's knowledge in stochastic calculus. Basic definitions (SDEs, stochastic integrals, etc) can be found in~\cite{mao2007stochastic}.

\subsection{Stochastic calculus for the memory SDE}
Consider the memory SDE

\begin{equation*}
    \begin{cases}
    dX(t) = V(t) dt\\ 
    dV(t) = -\frac{\dot \m(t)}{\m (t)}V(t)dt -\frac{\dot \m(t)}{\m (t)}\nabla f(X(t))dt - \frac{\dot \m(t)}{\m (t)}\sigma(X(t),t) dB(t)
    \end{cases}.
\end{equation*}

We can rewrite this in vector notation ($0_{d\times d}$ is the $d\times d$ of all zeros)
\begin{align}
    \begin{pmatrix}
    dX(t)\\
    dV(t)
    \end{pmatrix} &=
    \begin{pmatrix}
    V(t)\nonumber\\
    -\frac{\dot \m(t)}{\m (t)}V(t)-\frac{\dot \m(t)}{\m (t)}\nabla f(X(t))
    \end{pmatrix}dt + 
    \begin{pmatrix}
    0_{d\times d} & 0_{d\times d}\\
     0_{d\times d} & -\frac{\dot \m(t)}{\m (t)}\sigma(X(t),t)
    \end{pmatrix} dB(t)\nonumber\\
    &= b(X(t),V(t),t) dt + \xi(X(t),V(t),t) dB(t), \label{eq:memory_SDE_general}
\end{align}

where $\{B(t)\}_{t\ge 0}$ is a d-dimensional Brownian Motion. We write this for simplicity as $b(t) dt + \xi(t) dt$.

Let $\EE:\R^d\times\R^d\times\R\to \R$ be twice continuously differentiable jointly in the first two variables (indicated as $x$ and $v$) and continuously differentiable in the last (which we indicate as $t$). Then, by It\^{o}'s lemma \cite{mao2007stochastic},  the stochastic process $\{\EE(X(t),V(t),t)\}_{t\ge0}$ satisfies the following SDE:
\begin{multline}
d\EE(X(t),V(t),t) = \partial_t \EE (X(t),V(t), t)) dt + \langle\partial_{(x,v)} \EE(X(t),V(t),t), b(t)\rangle dt\\
        +  \frac{1}{2}\tr\left(\xi(t)\xi(t)^T\partial^2_{(x,v)}\EE(X(t),V(t),t) \  \right)dt+ \langle\partial_{(x,v)}\EE(X(t),V(t),t),\xi(t) dB(t)\rangle.
        \label{ITO}
\end{multline}

where $\partial_{(x,v)}$ is the partial derivative with respect to $(x,v)$ and $\partial^2_{(x,v)}$ the matrix of second derivatives with respect to $(x,v)$. Notice that, in the deterministic case $\xi(t)=0$, the equation reduces to standard differentiation using the chain rule:

 $$\frac{d\EE(X(t),V(t),t)}{dt} = \partial_t \EE (X(t),V(t) t)) + \langle\partial_{(x,v)} \EE(X(t),V(t),t), b(t)\rangle. $$

Following \cite{mao2007stochastic}, we introduce the \textbf{It\^{o} diffusion differential operator} $\mathscr{A}$ associated with Eq.~\eqref{eq:memory_SDE_general} acting on a scalar function :
\begin{equation}
\mathscr{A}(\cdot)= \partial_t(\cdot) + \langle\partial_{(x,v)} (\cdot), b(t)\rangle + \frac{1}{2}\tr\left(\xi(t)\xi(t)^T\partial^2_{(x,v)}(\cdot)\right).
    \label{eqn:SDE_diffusion_op}
\end{equation}

It is then clear that, thanks to It\^{o}'s lemma,
$$d\EE(X(t),V(t),t) = \mathscr{A}\EE(X(t),V(t),t) dt + \langle\partial_{(x,v)}\EE(X(t),V(t),t),\xi(t)dB(t)\rangle.$$

The It\^{o} diffusion differential operator generalizes the concept of derivative: in fact, with slight abuse of notation\footnote{The expectation of a differential is not well defined. However, if by $\E[d\EE(t)]$ we mean a small change in $\E[\EE(t)]$ over a "small" period of time $dt$, we can write understand the intuition behind this writing using Dynkin's formula (Eq.~\eqref{eq:dynkin}), presented below.}

$$\frac{\E\left[d\EE(X(t),V(t),t)\right]}{dt} = \mathscr{A}\EE(X(t),V(t),t).$$
Moreover, by the definition of the solution to an SDE (see \cite{mao2007stochastic}), we know that at any time $t>0$,
$$\EE(X(t),V(t),t) = \EE(x_0,v_0,0) + \int_0^t \mathscr{A}\EE(X(s),V(s),s) ds + \int_0^t \partial_{x}\EE(X(s),V(s),s)^T\sigma(s) dB(s).$$

Taking the expectation the stochastic integral vanishes\footnote{see e.g. Thm. 1.5.8 \cite{mao2007stochastic}} and we have

\begin{equation}
    \E[\EE(X(t),V(t),t)]-\EE(x_0,0) = \E \left[\int_0^t \mathscr{A}\EE(X(s),V(s),s) ds\right].
    \label{eq:dynkin}
\end{equation}

This result is known as \textbf{Dynkin's formula} and generalizes the fundamental theorem of calculus to the stochastic setting.

\subsection{Convergence rates for polynomial forgetting}

We recall our assumptions below.

\textbf{(H0c)} \ \ \  $f\in\C^3_b(\R^d,\R), \ \sigma_*^2 := \sup_{x} \|\sigma(x)\sigma(x)^T\|_s < \infty$.

\textbf{(H1')} \ \ \ $f:\R^d\to\R$ is $L$-smooth and there exist $x^*\in\R^d$ s.t. for all $x\in\R^d$, $\langle \nabla f(x), x-x^*\rangle\ge \tau (f(x)-f(x^*))$.

The last condition is known as weak-quasi-convexity, and generalizes convexity (convex functions are wqc with constant 1 ~\cite{hardt2018gradient}).
The next fundamental lemma can also be found in \cite{krichene2017acceleration}.
\begin{lemma}
Consider two symmetric $d-$dimensional square matrices $P$ and $Q$. We have
$$\tr(PQ) \le  d\cdot \|P\|_s\cdot\|Q\|_s,$$
where $\|\cdot\|_s$ denotes the spectral norm.
\label{lemma:CONV_trace_product}
\end{lemma}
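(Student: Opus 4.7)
The plan is to exploit the symmetry of $P$ via its spectral decomposition, reducing the trace to a weighted sum of quadratic forms that can each be bounded by the operator norm of $Q$. Since $P$ is symmetric, the spectral theorem gives $P = \sum_{i=1}^d \lambda_i u_i u_i^T$ for an orthonormal eigenbasis $\{u_i\}_{i=1}^d$ and real eigenvalues $\{\lambda_i\}_{i=1}^d$, with $\|P\|_s = \max_i |\lambda_i|$ (this identification is what makes the symmetry hypothesis useful).

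Next I would use linearity and the cyclic property of the trace to write
\begin{equation*}
\tr(PQ) \;=\; \sum_{i=1}^d \lambda_i \, \tr(u_i u_i^T Q) \;=\; \sum_{i=1}^d \lambda_i \, u_i^T Q u_i.
\end{equation*}
Each scalar $u_i^T Q u_i$ satisfies $|u_i^T Q u_i| \le \|u_i\| \, \|Q u_i\| \le \|Q\|_s$ by Cauchy--Schwarz together with the definition of the operator (spectral) norm, since $\|u_i\| = 1$. Combining with $|\lambda_i| \le \|P\|_s$ and the triangle inequality yields
\begin{equation*}
|\tr(PQ)| \;\le\; \sum_{i=1}^d |\lambda_i| \cdot |u_i^T Q u_i| \;\le\; d \cdot \|P\|_s \cdot \|Q\|_s,
\end{equation*}
from which the stated inequality follows immediately.

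There is no real obstacle here; the only subtle point is the identification $\|P\|_s = \max_i |\lambda_i|$, which genuinely requires $P$ to be symmetric (otherwise the spectral norm equals the largest singular value rather than the largest absolute eigenvalue). Note that symmetry of $Q$ is not actually needed for the argument above, but if one prefers a fully symmetric proof, one can alternatively invoke von Neumann's trace inequality $|\tr(PQ)| \le \sum_i \sigma_i(P)\sigma_i(Q)$ and bound each singular value by the corresponding spectral norm.
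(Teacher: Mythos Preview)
Your proof is correct. The route, however, differs from the paper's. The paper does not invoke the spectral theorem at all: it writes $\tr(PQ)=\sum_{j=1}^d P_j^T Q_j$ where $P_j$ and $Q_j$ denote the $j$-th columns (using symmetry of $P$ to identify its $j$-th row with its $j$-th column), applies Cauchy--Schwarz termwise to get $\sum_j \|P_j\|\,\|Q_j\|$, and then bounds each column norm by the spectral norm via $\|A_j\|=\|Ae_j\|\le\|A\|_s$. Your argument instead diagonalizes $P$, expresses the trace as $\sum_i \lambda_i\, u_i^T Q u_i$, and bounds each quadratic form by $\|Q\|_s$. Both approaches ultimately rely on Cauchy--Schwarz and neither actually needs the symmetry of $Q$; the paper's version is slightly more elementary (no eigendecomposition required), while yours makes transparent exactly where the symmetry of $P$ enters and yields the two-sided bound $|\tr(PQ)|\le d\,\|P\|_s\|Q\|_s$ directly. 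Your closing remark about von Neumann's trace inequality is a nice aside but unnecessary for the lemma as stated.
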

\begin{proof}
Let $P_j$ and $Q_j$ be the j-th row(column) of $P$ and $Q$, respectively.
\begin{align}
    \tr(PQ) &= \sum_{j =1}^d P_j^TQ_j \le \sum_{j =1}^d \|P_j\|\cdot\|Q_j\| \nonumber \le \sum_{j =1}^d \|P\|_s\cdot\|Q\|_s = d\cdot \|P\|_s\cdot\|Q\|_s,
\end{align}
where we first used the  Cauchy-Schwarz inequality, and then the following inequality:
$$\|A\|_s = \sup_{\|z\|\le 1}\|Az\|\ge \|Ae_j\| = \|A_j\|,$$
where $e_j$ is the j-th vector of the canonical basis of $\mathbb{R}^d$.
\end{proof}

We start with a general result about convergence of the memory SDE (see Sec.~\ref{sec:MGSDE} of the main paper) for arbitrary memory $\m(\cdot)$ in the stochastic setting. We will then specialize this result to polynomial forgetting.

\begin{lemma}[Continuous-time Convex Master Inequality] Assume \textbf{(H0)} and \textbf{(H1')} hold. Let $\{X(t),V(t)\}_{t\ge 0}$ be a solution to \ref{MG-SDE} with memory $\m(\cdot)$, define $\lambda(t) := -\m(t)\int\frac{1}{\m(t)}dt$ (where $\int$ denotes the antiderivative\footnote{Equivalently, $\lambda$ is such that $\dot \lambda(t) = \frac{\dot\m(t)}{\m(t)}\lambda(t)-1$.}) and $r(t) := \frac{\dot\m(t)}{\m(t)}\lambda(t)^2$. If $\m(\cdot)$ is such that $\dot r(t)\le \tau \lambda(t)\frac{\dot\m(t)}{\m(t)}$, then for any $t> 0$
$$\E[f(X(t))-f(x^*)] \le \frac{r(0)(f(x_0)-f(x^*)) + \frac{1}{2}\|x_0-x^*\|^2}{r(t) } + \ \frac{d\sigma^2_*}{2} \  \frac{\int_0^t \lambda(s)^2 \left(\frac{\dot\m}{\m}(s)\right)^2 ds}{r(t)}.$$
\label{lemma:app_master_inequality}
\end{lemma}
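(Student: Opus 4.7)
The plan is to carry out a Lyapunov analysis based on the candidate
$$\EE(x,v,t) \;=\; r(t)\bigl(f(x)-f(x^*)\bigr) \;+\; \tfrac{1}{2}\bigl\|x + \lambda(t) v - x^*\bigr\|^{2},$$
which is the natural generalization of the "$x + \lambda v$ trick" used in ODE analyses of heavy-ball/Nesterov flows to the present memory setting. The claimed expressions for $\lambda(t)$ and $r(t)$ are designed precisely to eliminate the mixed terms that will appear after applying It\^o's lemma, so the proof essentially reduces to identifying those cancellations.

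First, I would compute $\mathscr{A}\EE(X(t),V(t),t)$ using the It\^o operator from~\eqref{eqn:SDE_diffusion_op} with drift and diffusion read off from Eq.~\eqref{eq:memory_SDE_general}. The partial derivatives are straightforward:  $\partial_x\EE = r\nabla f(x) + (x+\lambda v-x^*)$, $\partial_v\EE = \lambda(x+\lambda v-x^*)$, $\partial_t\EE = \dot r(f-f^*) + \dot\lambda\langle x+\lambda v-x^*,v\rangle$, and the only nontrivial block of $\partial^{2}_{(x,v)}\EE$ that interacts with $\xi\xi^\top$ is $\partial^{2}_{vv}\EE = \lambda^{2} I_d$. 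Collecting the drift contribution yields, after grouping,
$$\partial_t\EE + \langle\partial_{(x,v)}\EE,b\rangle \;=\; \dot r(f-f^*) + \bigl(\dot\lambda + 1 - \lambda\tfrac{\dot\m}{\m}\bigr)\langle x+\lambda v-x^*, v\rangle + \bigl(r - \lambda^{2}\tfrac{\dot\m}{\m}\bigr)\langle\nabla f,v\rangle - \lambda\tfrac{\dot\m}{\m}\langle x-x^*,\nabla f(x)\rangle.$$
The definitions in the statement ($\dot\lambda = \tfrac{\dot\m}{\m}\lambda - 1$ and $r = \tfrac{\dot\m}{\m}\lambda^{2}$) kill the two middle terms exactly. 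Invoking weak-quasi-convexity \textbf{(H1')} on the last term (and noting $\lambda\tfrac{\dot\m}{\m}\ge 0$ for the memory functions of interest) bounds the drift by $(\dot r - \tau\lambda\tfrac{\dot\m}{\m})(f(x)-f^*)$, which is $\le 0$ by hypothesis.

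Next, I would handle the It\^o correction. Since the diffusion only appears in the $v$-component, $\tfrac{1}{2}\tr(\xi\xi^\top\partial^{2}_{(x,v)}\EE) = \tfrac{1}{2}\lambda^{2}(\dot\m/\m)^{2}\tr(\sigma\sigma^\top)$, which by Lemma~\ref{lemma:CONV_trace_product} and \textbf{(H0c)} is at most $\tfrac{d}{2}\sigma_*^{2}\lambda^{2}(\dot\m/\m)^{2}$. Combining with the drift bound yields
$$\mathscr{A}\EE(X(t),V(t),t) \;\le\; \frac{d\sigma_*^{2}}{2}\,\lambda(t)^{2}\Bigl(\frac{\dot\m(t)}{\m(t)}\Bigr)^{2}.$$
Applying Dynkin's formula~\eqref{eq:dynkin} between $0$ and $t$ and using the initial condition $V(0)=0$ gives
$$\E\bigl[\EE(X(t),V(t),t)\bigr] \;\le\; r(0)(f(x_0)-f^*) + \tfrac{1}{2}\|x_0-x^*\|^{2} + \tfrac{d\sigma_*^{2}}{2}\int_0^t \lambda(s)^{2}\bigl(\tfrac{\dot\m(s)}{\m(s)}\bigr)^{2}ds.$$
Dropping the nonnegative squared-distance term in $\EE$ on the left, dividing by $r(t)$, and taking expectations inside $f$ (using that $f\ge f^*$) yields the stated inequality.

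The main technical hurdle is the first step — discovering the correct Lyapunov function and recognizing that the two unknowns $\lambda$ and $r$ are forced by the demand to cancel exactly two cross terms ($\langle x+\lambda v-x^*,v\rangle$ and $\langle\nabla f,v\rangle$). Once this is done, everything else is bookkeeping: verifying sign conventions (positivity of $\lambda$ and of $\dot\m/\m$ for the admissible memories, which is used when invoking weak-quasi-convexity), bounding the diffusion trace via Lemma~\ref{lemma:CONV_trace_product}, and applying Dynkin's formula in its standard form. A minor caveat is that the coefficients $\dot\m/\m$ blow up as $t\to 0^{+}$, so, strictly speaking, the Dynkin step should be applied from $\epsilon$ to $t$ and then $\epsilon\to 0^{+}$ as discussed in Remarks~\ref{rmk1_exi}; for a polynomial memory with $p\ge 2$ one has $r(0)=0$ and the limit is harmless.
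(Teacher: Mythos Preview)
Your proposal is correct and follows essentially the same approach as the paper: the same Lyapunov function $\EE(x,v,t)=r(t)(f(x)-f(x^*))+\tfrac{1}{2}\|x-x^*+\lambda(t)v\|^2$, the same grouping of terms after applying $\mathscr{A}$, the same use of the defining relations for $\lambda$ and $r$ to cancel the two cross terms, weak-quasi-convexity to bound $\langle x-x^*,\nabla f\rangle$, Lemma~\ref{lemma:CONV_trace_product} for the It\^o correction, and Dynkin's formula to conclude. Your added remark on the $\epsilon\to 0^+$ start of integration matches the paper's handling in Remark~\ref{rmk2_exi}.
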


\begin{proof}
Consider the following Lyapunov function, inspired  from \cite{su2014differential}:
$$\EE(x,v,t) = r(t) (f(x)-f(x^*)) + \frac{1}{2}\|x-x^* + \lambda(t) v\|^2,$$
where $r:\R\to\R$ and $\lambda:\R\to\R$ are two differentiable functions which we will fix during the proof.
First, we find a bound on the infinitesimal diffusion generator of the stochastic process $\{\EE(X(t),V(t),t)\}_{t\ge0}$. Ideally, we want this bound to be independent of the dynamics (i.e the solution $\{(X(t), V(t))\}_{t\ge 0}$) of the problem, so that we can integrate it  and get a rate. 

By It\^{o}'s lemma, we know that
\begin{align*}
\mathscr{A}\EE(X(t),V(t),t) &= \partial_t \EE (X(t),V(t), t)) dt + \langle\partial_{(x,v)} \EE(X(t),V(t),t), b(t)\rangle dt\\& \ \ \ \ +  \frac{1}{2}\tr\left(\xi(t)\xi(t)^T\partial^2_{x}\EE(X(t),V(t),t) \  \right)dt.\\
\end{align*}

Hence, plugging in the SDE definition and the definition of $\EE$,

\begin{align*}
    \mathscr{A}\EE(X(t),V(t),t)&= \partial_t \EE (X(t),V(t), t)) dt\\  & \ \ \ \ +  \langle\partial_{x} \EE(X(t),V(t),t), V\rangle dt\\
    & \ \ \ \ + \left\langle\partial_{v} \EE(X(t),V(t),t), -\frac{\dot \m(t)}{\m (t)}V(t)-\frac{\dot \m(t)}{\m (t)} \nabla f(X(t)) \right\rangle dt\\
    & \ \ \ \ +  \frac{1}{2}\tr\left(\xi(t)\xi(t)^T\partial^2_{(x,v)}\EE(X(t),V(t),t) \  \right)dt\\
    & = \dot r(t) (f(X(t))-f(x^*))dt + 2 \langle X-x^* + \lambda(t) V, \dot \lambda(t) V\rangle dt\\
    & \ \ \ \ + r(t) \langle \nabla f(X), V\rangle dt +  \langle X-x^* + \lambda(t) V, V\rangle dt\\
    & \ \ \ \ +  \lambda(t)\left\langle X-x^* + \lambda(t) V, -\frac{\dot \m(t)}{\m (t)}V(t)-\frac{\dot \m(t)}{\m (t)} \nabla f(X)\right\rangle dt\\
    & \ \ \ \ +  \frac{1}{2}\tr\left(\xi(t)\xi(t)^T\partial^2_{(x,v)}\EE(X(t),V(t),t) \  \right)dt.
\end{align*}
Next, we group some terms together,
\begin{align*}
    \mathscr{A}\EE(X(t),V(t),t) &= \dot r(t) (f(X)-f(x^*))dt - \lambda(t)\frac{\dot\m(t)}{\m(t)}\langle \nabla f(X), X-x^*\rangle dt \\
    & \ \ \ \ + \left(\dot \lambda(t) + 1 - \lambda(t) \frac{\dot\m(t)}{\m(t)}\right)\left\langle X-x^* + \lambda(t) V,V\right\rangle dt\\
     & \ \ \ \ + \left(r(t)-\lambda(t)^2\frac{\dot \m(t)}{\m(t)}\right)\left\langle\nabla f(X), V\right\rangle dt\\
    & \ \ \ \ +  \frac{1}{2}\tr\left(\xi(t)\xi(t)^T\partial^2_{(x,v)}\EE(X(t),V(t),t) \  \right)dt.\\
    \end{align*}
    Using \textbf{(H1')}, we conclude
    \begin{align*}
    \mathscr{A}\EE(X(t),V(t),t) &\le \ \ \ \left(\dot r(t)-\tau \lambda(t)\frac{\dot\m(t)}{\m(t)}\right) (f(X)-f(x^*))dt\\
    & \ \ \ \ + \left(\dot \lambda(t) + 1 - \lambda(t) \frac{\dot\m(t)}{\m(t)}\right)\left\langle X-x^* + \lambda(t) V,V\right\rangle dt\\
     & \ \ \ \ + \left(r(t)-\lambda(t)^2\frac{\dot \m(t)}{\m(t)}\right)\left\langle\nabla f(X), V\right\rangle dt\\
    & \ \ \ \ +  \frac{1}{2}\tr\left(\xi(t)\xi(t)^T\partial^2_{(x,v)}\EE(X(t),V(t),t) \  \right)dt.
\end{align*}
Under the hypotheses of this lemma, since $\dot \lambda(t) = \frac{\dot\m(t)}{\m(t)}\lambda(t)-1$ if and only if $\lambda(t) = -\m(t)\int\frac{1}{\m(t)}dt$, we are left with
\begin{align*}
    \mathscr{A}\EE(X(t),V(t),t) &\le \frac{1}{2}\tr\left(\xi(t)\xi(t)^T\partial^2_{(x,v)}\EE(X(t),V(t),t) \  \right) dt\\
    &= \frac{d}{2}\left(\frac{\dot\m(t)}{\m(t)}\right)^2\tr\left(\sigma(t)\sigma(t)^T\partial^2_{v}\EE(X(t),V(t),t)\right) dt\\
    &\le\frac{d}{2}\left(\frac{\dot\m(t)}{\m(t)}\right)^2\|\sigma(t)\sigma(t)^T\|_s \|\partial^2_{v}\EE(X(t),V(t),t)\|_s dt\\
    &\le\frac{d}{2}\sigma^2_* \lambda(t)^2 \left(\frac{\dot\m(t)}{\m(t)}\right)^2 dt,
\end{align*}

where in the first inequality we used Lemma \ref{lemma:CONV_trace_product} and the definition of $\sigma^2_*$ in \textbf{(H0c)}. Finally, by Dynkin's formula
$$\E[\EE(X(t),V(t),t)]-\EE(x_0,0) \le \frac{d\sigma^2_*}{2} \int_0^t  \lambda(s)^2 \left(\frac{\dot\m}{\m}(s)\right)^2 ds;$$

therefore
\begin{multline}
    r(t) \E[f(X(t))-f(x^*)] + \E\left[\frac{1}{2}\|X(t)-x^* + \lambda(t) V\|^2\right]\\\le r(0)(f(x_0)-f(x^*)) + \frac{1}{2}\|x_0-x^*\|^2 + \frac{d\sigma^2_*}{2}\int_0^t  \lambda(s)^2 \left(\frac{\dot\m}{\m}(s)\right)^2 ds,
    \label{eq_for_exi}
\end{multline}
which implies
$$r(t) \E[f(X)-f(x^*)] \le r(0)(f(x_0)-f(x^*)) + \frac{1}{2}\|x_0-x^*\|^2 + \frac{d\sigma^2_*}{2}\int_0^t  \lambda(s)^2 \left(\frac{\dot\m}{\m}(s)\right)^2 ds.$$
\end{proof}

\begin{theorem}
Under the conditions of Lemma~\ref{lemma:app_master_inequality}, let $\m(t) = t^p$, with $p\ge 1+\frac{1}{\tau}$. Then, for any $t> 0$,
$$\E[f(X(t))-f(x^*)] \le \underbrace{\frac{(p-1)^2\|x_0-x^*\|^2}{2pt}}_{\text{rate of convergence to suboptimal sol.}} + \ \underbrace{\frac{p \ d \ \sigma^2_*}{2}}_{\text{suboptimality}} .$$
\label{prop:memory_continuous_WQC}
\end{theorem}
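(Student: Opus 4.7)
The plan is to specialize Lemma~\ref{lemma:app_master_inequality} to the case $\m(t) = t^p$ and do the four elementary computations that the master inequality packages away: (i) compute $\lambda(t)$, $r(t)$ and $\dot r(t)$ in closed form, (ii) verify the hypothesis $\dot r(t) \le \tau\lambda(t)\frac{\dot\m(t)}{\m(t)}$ that was required in the lemma, (iii) evaluate the noise integral $\int_0^t \lambda(s)^2 (\dot\m(s)/\m(s))^2\,ds$, and (iv) substitute into the master inequality.

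For step (i), since $\dot\m(t)/\m(t) = p/t$, the defining antiderivative gives $\lambda(t) = -t^p\int t^{-p}\,dt = t/(p-1)$ (this is where the condition $p>1$, implicit in $p \ge 1 + 1/\tau$, is used to avoid the logarithmic case). Therefore $r(t) = (p/t)\cdot t^2/(p-1)^2 = pt/(p-1)^2$, so $\dot r(t) = p/(p-1)^2$ is constant and, crucially, $r(0)=0$, which will kill the $(f(x_0)-f(x^*))$ term in the master inequality.

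For step (ii), the hypothesis reads $p/(p-1)^2 \le \tau \cdot (t/(p-1))\cdot (p/t) = \tau p/(p-1)$, i.e.\ $1/(p-1) \le \tau$, which is exactly the standing assumption $p \ge 1+1/\tau$. For step (iii), the integrand is $(s^2/(p-1)^2)(p/s)^2 = p^2/(p-1)^2$, a constant, so the integral equals $p^2 t/(p-1)^2$.

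Plugging into Lemma~\ref{lemma:app_master_inequality}, the deterministic term becomes
\[
\frac{\tfrac{1}{2}\|x_0-x^*\|^2}{r(t)} \;=\; \frac{(p-1)^2\|x_0-x^*\|^2}{2pt},
\]
and the noise term becomes $\tfrac{d\sigma_*^2}{2}\cdot \bigl(p^2 t/(p-1)^2\bigr)/\bigl(pt/(p-1)^2\bigr) = p\,d\,\sigma_*^2/2$, producing exactly the two summands in the statement. There is no real obstacle: once the master inequality is available, the proof is a one-page bookkeeping exercise, and the only subtlety is making sure that the constraint $p\ge 1+1/\tau$ inherited from Lemma~\ref{lemma:app_master_inequality} is the same one appearing in the theorem — which the computation above confirms.
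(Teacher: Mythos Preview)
Your proposal is correct and follows essentially the same approach as the paper's proof: both specialize Lemma~\ref{lemma:app_master_inequality} with $\m(t)=t^p$, compute $\lambda(t)=t/(p-1)$ (choosing the integration constant to be zero), obtain $r(t)=pt/(p-1)^2$, verify the hypothesis $\dot r(t)\le \tau\lambda(t)\dot\m(t)/\m(t)$ reduces to $p\ge 1+1/\tau$, and substitute. Your write-up is in fact more explicit than the paper's, since you evaluate the noise integral $\int_0^t \lambda(s)^2(\dot\m/\m)^2\,ds = p^2 t/(p-1)^2$ and spell out that $r(0)=0$ eliminates the $f(x_0)-f(x^*)$ term, both of which the paper leaves to the reader under ``plugging in.''
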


\begin{proof}
This is a simple application of Lemma \ref{lemma:app_master_inequality}. Since $p\ne 1$, we have $\lambda(t) = -\m(t) \int\frac{1}{\m(t)}dt = -t^p\int\frac{1}{t^p}dt = - t^p \left(\frac{t^{1-p}}{1-p}-C\right)$. Let us choose $C=0$, then $\lambda(t) = \frac{t}{p-1}$ and $r(t) = \frac{\dot \m(t)}{\m(t)} \lambda(t)^2 = \frac{p}{t}\frac{t^2}{(p-1)^2} = \frac{pt}{(p-1)^2}$. Thanks to the Lemma, we get a rate if $\dot r(t)\le\tau \lambda(t)\frac{\dot \m(t)}{\m(t)}$; that is, $\frac{p}{(p-1)^2} \le \tau \frac{t}{p-1} \frac{p}{t}$, which is true if and only if $p\ge 1+\frac{1}{\tau}$. Plugging in the functions $h, \m$ and $r$, we get the desired rate.
\end{proof}

\begin{remark}[Effect of starting integration after 0]
We start again from the rate in Lemma~\ref{lemma:app_master_inequality}, in the setting of Thm.~\ref{prop:memory_continuous_WQC}. We consider integration on an interval $[\epsilon, t]$ with $X_\epsilon(\epsilon) = x_0$ and $V_\epsilon(\epsilon) = 0$:

$$\E[f(X_\epsilon(t))-f(x^*)] \le \frac{\frac{p\epsilon}{(p-1)^2}(f(x_0)-f(x^*)) + \frac{1}{2}\|x_0-x^*\|^2}{r(t) } + \ \frac{p d\sigma^2_*}{2} \frac{t-\epsilon}{t}\ .$$

Trivially, as $\epsilon$ goes to $0$ and for any $t\ge 0$, we converge to the solution of Thm.~\ref{prop:memory_continuous_WQC}, which has to be intended as a limit case.
\label{rmk2_exi}
\end{remark}

\subsection{Convergence rates for exponential forgetting}
We recall again our assumptions below.

\textbf{(H0c)} \ \ \  $f\in\C^3_b(\R^d,\R), \ \sigma_*^2 := \sup_{x\in\R^d} \|\sigma(x)\sigma(x)^T\|_s < \infty$.

\textbf{(H1)} \ \ \ \ $f:\R^d\to\R$ is $L$-smooth and convex.

\textbf{(H2)} \ \ \ \ $f:\R^d\to\R$ is $\mu$-strongly convex.

\textbf{(H1')} \ \ $f:\R^d\to\R$ is $L$-smooth and there exists $x\in\R^d$ s.t. for all $x\in\R^d$, $\langle \nabla f(x), x-x^*\rangle\ge \tau (f(x)-f(x^*)).$

\textbf{(H2')}\ \ \ $f:\R^d\to\R$ is such that, for all $x\in\R^d$ we have quadratic growth w.r.t $x^*\in\R^d$: $f(x)-f(x^*)\ge\frac{\mu}{2}\|x-x^*\|^2$.

\begin{remark}
Convexity (i.e. \textbf{(H1)}) implies weak-quasi-convexity \textbf{(H1')} with $\tau = 1$ ~\cite{hardt2018gradient}. Similarly, $\mu$- strong-convexity (i.e. \textbf{(H2)}) implies \textbf{(H1')} with $\tau = 1$ and \textbf{(H2')} with the same $\mu$~\cite{karimi2016linear}.
\label{rmk:weak}
\end{remark}

In this subsection, we are going to study the simplified\footnote{More precisely, we should study the case $\m(t) = e^{\alpha t}-1$ (see Tb.~\ref{tb:memories} in the main paper), which leads to $\dot \m(t) / \m(t) = \frac{\alpha e^{\alpha t}}{e^{\alpha t}-1}$. However, this function converges very quickly to $\alpha$.} stochastic exponential forgetting system

\begin{equation}
    \tag{MG-SDE-exp}
    \begin{cases}
    dX(t) = V(t) dt\\ 
    dV(t) = -\alpha V(t)dt-\alpha\left[\nabla f(X(t))dt +\sigma(X(t)) dB(t)\right]
    \end{cases}.
    \label{MG-SDE-exp}
\end{equation}

We can rewrite this in vector notation
\begin{align*}
    \begin{pmatrix}
    dX(t)\\
    dV(t)
    \end{pmatrix} &=
    \begin{pmatrix}
    V(t)\nonumber\\
    -\alpha V(t)-\alpha \nabla f(X(t))
    \end{pmatrix}dt + 
    \begin{pmatrix}
    0_{d\times d} & 0_{d\times d}\\
     0_{d\times d} & -\alpha\sigma(X(t))
    \end{pmatrix} dB(t)\nonumber\\
    &= b(X(t),V(t),t) dt + \xi(X(t)) dB(t), 
\end{align*}

which we write for simplicity as $b(t) dt + \xi(t) dB$. 
Existence and uniqueness of the solution to this SDE follows directly from \cite{mao2007stochastic}.

\subsubsection{Result under weak-quasi-convexity}
Below is the main result for this subsection.

\begin{theorem} Assume \textbf{(H0c)} and \textbf{(H1')} hold. Let $\{X(t),V(t)\}_{t\ge0}$ be the stochastic process which solves \ref{MG-SDE-exp} for $t\ge 0$, starting from $X(0)=x_0$ and $V(0)=0$. Then, for $t>0$ we have
\begin{equation*}
    \E\left[f(X(\tilde t))-f(x^*)\right] \le \underbrace{\frac{\left(f(x_0)-f(x^*)\right)+\frac{\alpha}{2}\|x_0-x^*\|^2}{\alpha \tau t}}_{\text{rate of convergence to suboptimal sol.}}+ \underbrace{\frac{d}{2\tau }\sigma^2_*}_{\text{suboptimality}},
\end{equation*}
where $\tilde t$ is sampled uniformly in $[0,t]$. Moreover, if \textbf{(H1)} holds (and therefore $\tau = 1$), we can replace $X(\tilde t)$ with the Cesàro average $\bar X(t) = \int_0^tX(s)ds$.
\label{prop:CONV_HBF_WQC_app}   
\end{theorem}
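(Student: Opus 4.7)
The plan is to mirror the Lyapunov argument used in Lemma~\ref{lemma:app_master_inequality}, but specialized to the simplified exponential system where the coefficients $\dot\m(t)/\m(t)$ are replaced by the constant $\alpha$. Accordingly, I would take the candidate Lyapunov function
$$\EE(x,v,t) = r(f(x)-f(x^*)) + \tfrac{1}{2}\|x-x^* + \lambda v\|^2,$$
with \emph{constant} weights $r,\lambda>0$ to be fixed. Since $V(0)=0$, we have $\EE(x_0,0,0) = r(f(x_0)-f(x^*)) + \tfrac{1}{2}\|x_0-x^*\|^2$, which will produce the initial-condition term in the bound.

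The core of the proof is the computation of the It\^o generator $\mathscr{A}\EE$ applied to \ref{MG-SDE-exp} using Eq.~\eqref{eqn:SDE_diffusion_op}. Expanding the inner products along the drift $b(t) = (V,-\alpha V-\alpha\nabla f(X))$ and grouping terms, the $\langle x-x^*,v\rangle$ and $\|v\|^2$ contributions will cancel exactly when $\alpha\lambda = 1$, i.e.\ $\lambda = 1/\alpha$. Choosing additionally $r = \lambda = 1/\alpha$ annihilates the remaining $\langle\nabla f, v\rangle$ terms (which would otherwise have no definite sign), leaving
$$\mathscr{A}\EE(X,V,t) \;\le\; -\langle X - x^*, \nabla f(X)\rangle + \tfrac{1}{2}\alpha^2\lambda^2\,\tr\!\bigl(\sigma(X)\sigma(X)^T\bigr).$$
Invoking \textbf{(H1')} for the first term and Lemma~\ref{lemma:CONV_trace_product} together with \textbf{(H0c)} for the second yields the clean pointwise bound $\mathscr{A}\EE \le -\tau(f(X)-f(x^*)) + \tfrac{d}{2}\sigma_*^2$, which is independent of the trajectory.

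Having removed the trajectory dependence, I would integrate with Dynkin's formula (Eq.~\eqref{eq:dynkin}) and drop the non-negative quantity $\E[\EE(X(t),V(t),t)] \ge r\,\E[f(X(t))-f(x^*)] \ge 0$ from the left-hand side, leaving an inequality on $\tau\int_0^t\E[f(X(s))-f(x^*)]\,ds$. Dividing by $\tau t$ and substituting $r=\lambda=1/\alpha$ identifies the left-hand side as $\E[f(X(\tilde t))-f(x^*)]$ by Fubini, where $\tilde t$ is uniform on $[0,t]$; a multiplication of numerator and denominator by $\alpha$ recovers exactly the stated constants. For the convex refinement, under \textbf{(H1)} we have $\tau=1$ and Jensen's inequality gives $f(\bar X(t)) \le \tfrac{1}{t}\int_0^t f(X(s))\,ds$, so $X(\tilde t)$ can be replaced by the Cesàro average $\bar X(t)$.

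I do not anticipate a deep obstacle: the main delicacy is the algebraic bookkeeping of selecting $r$ and $\lambda$ so that both the velocity cross-terms and the $\langle\nabla f,v\rangle$ term vanish simultaneously, since any residue in those terms could not be controlled by weak-quasi-convexity alone. The choice $r=\lambda=1/\alpha$ is somewhat rigid—unlike the polynomial case of Thm.~\ref{prop:memory_continuous_WQC} where $\lambda(t)$ was allowed to grow—and this rigidity is precisely why the resulting rate is only $\mathcal{O}(1/t)$ on a Cesàro-type average rather than on the last iterate, consistent with the discussion in Sec.~\ref{sec:analysis_discretization} contrasting exponential with polynomial forgetting.
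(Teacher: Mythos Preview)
Your proposal is correct and follows essentially the same approach as the paper: the paper uses the Lyapunov function $\EE = \alpha(f(x)-f(x^*)) + \tfrac{1}{2}\|v+\alpha(x-x^*)\|^2$, which is just $\alpha^2$ times your choice with $r=\lambda=1/\alpha$, and the generator computation, cancellations, application of \textbf{(H1')}, Dynkin integration, and Jensen step all proceed identically.
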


\begin{remark}
Notice that the suboptimality does not depend on the viscosity $\alpha$ and is exactly the same as for the SGD model \cite{orvieto2018continuous}. This would \textit{not} happen if we erase  $\alpha$ in front of the gradient (i.e. the standard Heavy Ball model $\ddot X + \alpha \dot X + \nabla f(X) =0$ \cite{yang2018physical}).
\end{remark}

\begin{proof}
Consider $\EE(x,t) :=\alpha\left(f(x)-f(x^*)\right) + \frac{1}{2}\|v + \alpha (x-x^*)\|^2$. Using Eq.~\eqref{ITO} we compute an upper bound on the infinitesimal diffusion generator of $\EE$.

\begin{align*}
    \mathscr{A}\EE(X(t),V(t),t) &= \cancel{\partial_t \EE (X(t),V(t), t)) dt} + \langle\partial_{(x,v)} \EE(X(t),V(t),t), b(t)\rangle dt\\ 
    & \ \ \ \ +  \frac{1}{2}\tr\left(\xi(t)\xi(t)^T\partial^2_{(x,v)}\EE(X(t),V(t),t) \  \right)dt\\
    &= \langle\partial_{x} \EE(X(t),V(t),t), V\rangle dt\\
    & \ \ \ \ + \left\langle\partial_{v} \EE(X(t),V(t),t), -\alpha V(t)-\alpha \nabla f(X(t)) \right\rangle dt\\
    & \ \ \ \ +  \frac{1}{2}\tr\left(\xi(t)\xi(t)^T\partial^2_{(x,v)}\EE(X(t),V(t),t) \  \right)dt\\
    &= \langle\alpha\nabla f(X(t)) + \alpha( V(t) + \alpha(X(t)-x^*)), V(t)\rangle dt\\
    & \ \ \ \ + \left\langle V(t) + \alpha(X(t)-x^*), -\alpha V(t)-\alpha \nabla f(X(t)) \right\rangle dt\\
    & \ \ \ \ +\frac{1}{2}\tr\left(\alpha^2\sigma(t)\sigma(t)^T\partial^2_{(v,v)}\EE(X(t),V(t),t) \  \right)dt \\
    &= \alpha\langle\nabla f(X(t)), V(t)\rangle dt +  \alpha^2 \langle X(t)-x^*, V(t)\rangle + \alpha\|V(t)\|^2\\
    & \ \ \ \ -\alpha \|V(t)\|^2 - \alpha \langle \nabla f(X(t)), V(t)\rangle - \alpha^2\langle V(t), X(t)-x^*\rangle - \alpha^2\langle\nabla f(X(t)), X(t)-x^*\rangle\\
    & \ \ \ \ +  \frac{d}{2}\alpha^2\sigma^2_* \\
    &= - \alpha^2\langle\nabla f(X(t)), X(t)-x^*\rangle + \frac{d}{2}\alpha^2\sigma^2_*\\& \le - \alpha^2 \tau (f(X(t))-f(x^*)) + \frac{d}{2}\alpha^2\sigma^2_*. 
\end{align*}

where in the first inequality we used Lemma \ref{lemma:CONV_trace_product} and the definition of $\sigma^2_*$ in \textbf{(H0c)} and in the last we used \textbf{(H1')}. Finally, by Eq.~\eqref{eq:dynkin}

\begin{equation*}
    \E\left[\EE(X(t),t)\right]-\EE(x_0,0) \le - \alpha^2 \tau \int_0^t\E[f(X(s))-f(x^*)]ds + \frac{d}{2}\alpha^2\sigma^2_*t.
\end{equation*}

Since $\E\left[\EE(X(t),t)\right]\ge0$, we have
\begin{equation*}
    \alpha^2 \tau \int_0^t\E[f(X(s))-f(x^*)]ds \le \EE(x_0,0)+ \frac{d}{2}\alpha^2\sigma^2_*t;
\end{equation*}
Therefore
\begin{equation*}
    \E\left[\int_0^t \frac{1}{t} f(X(s))-f(x^*)ds\right] \le \frac{\alpha\left(f(x_0)-f(x^*)\right)+\frac{\alpha^2}{2}\|x_0-x^*\|^2}{\alpha^2 \tau t}+ \frac{d}{2\tau }\sigma^2_*
\end{equation*}
Now, if $f(\cdot)$ is convex, we can use Jensen's inequality and get the result directly. Otherwise, if $f(\cdot)$ is under \textbf{(H1')}, we can view the integral as the expectation of $f(X(\tilde t))-f(x^*)$, where $\tilde t$ is sampled uniformly over $[0,t]$.
\end{proof}
\subsubsection{Result under weak-quasi-convexity and quadratic growth}
In the next proposition, we are going the consider a slightly generalized SDE

\begin{equation}
    \tag{MG-SDE-exp-G}
    \begin{cases}
    dX(t) = V(t) dt\\ 
    dV(t) = -\alpha V(t)dt - \nabla \tilde f(X(t))dt + \tilde\sigma(X(t)) dB(t)
    \end{cases},
    \label{MG-SDE-exp-G}
\end{equation}
where $\tilde f(x) = C f(x)$ and $\tilde \sigma(x) = C \sigma(x)$. We introduce this generalization to provide rates also for the stochastic counterpart of the standard Heavy-ball model $\ddot X + \alpha \dot X + \nabla f(X) = 0$ generalizing \cite{shi2018understanding} to \textit{any} viscosity $\alpha$. 

\paragraph{Further notation.} Under assumptions \textbf{(H0c)}, \textbf{(H1')}, \textbf{(H2')}, we define the constants $\tilde\sigma_*^2$ and $\tilde \mu$ to match our assumptions: we have $\tilde\sigma_*^2 := \sup_{x\in\R^d}\|\tilde \sigma(x)\tilde \sigma(x)^T\|_s = C^2 \sigma_*^2 $, and $\tilde f(x)-\tilde f(x^*) \ge \frac{\tilde\mu}{2}\|x-x^*\|^2 := \frac{C\mu}{2}\|x-x^*\|^2$.

Notice that \ref{MG-SDE-exp-G} is exacly \ref{MG-SDE-exp} for $\tilde f(\cdot) = \alpha f(\cdot)$ and $\tilde \sigma(\cdot) = \alpha\sigma(\cdot)$. However, it also contains the SDE of standard Heavy Ball in the case $\tilde f(\cdot) = f(\cdot)$ and $\tilde \sigma(\cdot) = \sigma(\cdot)$.

\begin{proposition} Assume \textbf{(H0c)}, \textbf{(H1')}, \textbf{(H2')} hold. Let $\{X(t),V(t)\}_{t\ge0}$ be the stochastic process which solves \ref{MG-SDE-exp-G} for $t\ge 0$ starting from $X(0)=x_0$ and $V(0)=0$. Then, for $t\ge0$ we have

    \begin{equation}
    \E[\tilde f(X(t))-\tilde f(x^*)]  \le e^{-\gamma t}\left(\tilde f(x_0)-\tilde f(x^*) + \frac{(\alpha-\gamma)^2}{2}\|(x_0-x^*)\|^2\right) + \frac{d}{2\gamma}\tilde\sigma_*^2,
\end{equation}
    where
    \begin{equation*}
    \begin{cases}
        \gamma = \frac{\tau}{\tau +2}\alpha & \text{for }\alpha\le \alpha_{max}\\
        \gamma = \frac{1}{2}\left(\alpha-\sqrt{\alpha^2-2\tilde\mu\tau}\right) & \text{for }\alpha\ge\alpha_{max} \\
    \end{cases}
\end{equation*}
and $\alpha_{max}$ is the optimal viscosity parameter
$$\alpha_{max} = \frac{\tau+2}{2}\sqrt{\tilde\mu}.$$

\label{prop:CONV_HBF_WQC_QG_app}   
\end{proposition}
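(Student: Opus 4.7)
The plan is to adapt the Lyapunov-function strategy from the proof of Theorem \ref{prop:CONV_HBF_WQC_app}, but now add an exponential time factor to track linear (exponential in continuous time) decay. Specifically, I would introduce
$$\EE(x,v,t) := e^{\gamma t}\left[\tilde f(x)-\tilde f(x^*) + \tfrac{1}{2}\|v+(\alpha-\gamma)(x-x^*)\|^2\right],$$
where $\gamma$ is to be fixed as in the statement. The choice of coefficient $\alpha-\gamma$ in front of $(x-x^*)$ is motivated by the desire to cancel the mixed $\langle v, x-x^*\rangle$ term appearing in $\mathscr{A}\EE$: this is the standard Lyapunov construction used for underdamped Langevin-type SDEs.

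Using It\^o's formula \eqref{ITO}, together with the SDE coefficients of \ref{MG-SDE-exp-G}, I would compute $\mathscr{A}\EE(X(t),V(t),t)$ and simplify. After expanding $\|v+(\alpha-\gamma)(x-x^*)\|^2$ and collecting terms one arrives at
\begin{align*}
\mathscr{A}\EE = e^{\gamma t}\Big[&\gamma(\tilde f(x)-\tilde f(x^*)) -\tfrac{\gamma}{2}\|v\|^2 + \tfrac{\gamma(\alpha-\gamma)^2}{2}\|x-x^*\|^2 \\
& -(\alpha-\gamma)\langle x-x^*,\nabla\tilde f(x)\rangle + \tfrac{d}{2}\tilde\sigma_*^2\Big],
\end{align*}
where the bound on the trace term uses Lemma~\ref{lemma:CONV_trace_product} and \textbf{(H0c)}, exactly as in the proof of Theorem~\ref{prop:CONV_HBF_WQC_app}. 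Next I apply \textbf{(H1')} to replace $-(\alpha-\gamma)\langle x-x^*,\nabla\tilde f(x)\rangle$ by the upper bound $-(\alpha-\gamma)\tau(\tilde f(x)-\tilde f(x^*))$ and apply \textbf{(H2')} to bound $\|x-x^*\|^2 \le \tfrac{2}{\tilde\mu}(\tilde f(x)-\tilde f(x^*))$. Dropping $-\tfrac{\gamma}{2}\|v\|^2\le 0$, the coefficient multiplying $(\tilde f(x)-\tilde f(x^*))$ becomes $\gamma - (\alpha-\gamma)\tau + \gamma(\alpha-\gamma)^2/\tilde\mu$, and the goal reduces to choosing $\gamma$ to make this non-positive, yielding $\mathscr{A}\EE \le \tfrac{d}{2}e^{\gamma t}\tilde\sigma_*^2$.

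The main obstacle is this last algebraic step: the condition $\gamma\tilde\mu + \gamma(\alpha-\gamma)^2 \le \tilde\mu\tau(\alpha-\gamma)$ is cubic in $\gamma$ and must be analyzed in two regimes. For $\alpha\le\alpha_{max}$ I would simply plug in $\gamma = \tfrac{\tau}{\tau+2}\alpha$, yielding $\alpha-\gamma=\tfrac{2\alpha}{\tau+2}$, and verify directly that the inequality collapses to $4\alpha^2/(\tau+2)^2 \le \tilde\mu$, which is precisely $\alpha\le\alpha_{max}$. For $\alpha\ge\alpha_{max}$ I would choose $\gamma$ as the smaller root of $\gamma^2-\alpha\gamma+\tfrac{\tilde\mu\tau}{2}=0$, so that $\gamma(\alpha-\gamma)=\tfrac{\tilde\mu\tau}{2}$; substituting this identity into the condition reduces it to the linear inequality $\gamma \le \tfrac{\tau}{\tau+2}\alpha$, which I would verify holds in this regime by squaring and using $\alpha\ge\alpha_{max}$. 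Continuity at $\alpha=\alpha_{max}$ (already noted to give $\gamma=\tfrac{\tau}{2}\sqrt{\tilde\mu}$ in both formulas) serves as a useful sanity check.

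Finally, I would apply Dynkin's formula \eqref{eq:dynkin} to obtain
$$\E[\EE(X(t),V(t),t)] - \EE(x_0,0,0) \le \tfrac{d}{2}\tilde\sigma_*^2 \int_0^t e^{\gamma s}\,ds = \tfrac{d}{2\gamma}\tilde\sigma_*^2(e^{\gamma t}-1),$$
divide by $e^{\gamma t}$, and drop the non-negative $\tfrac{1}{2}\|V(t)+(\alpha-\gamma)(X(t)-x^*)\|^2$ term on the left. Using $V(0)=0$ yields $\EE(x_0,0,0) = \tilde f(x_0)-\tilde f(x^*) + \tfrac{(\alpha-\gamma)^2}{2}\|x_0-x^*\|^2$, and bounding $1-e^{-\gamma t}\le 1$ in the noise term produces exactly the claimed inequality. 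The overall structure is identical to the exponential-forgetting analysis with added $e^{\gamma t}$ weighting; the delicate part is really just the cubic-in-$\gamma$ calculation that produces the phase transition at $\alpha_{max}$.
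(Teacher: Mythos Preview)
Your proposal is correct and uses the same Lyapunov function as the paper (the paper writes it with a free parameter $\beta$ and then imposes $\beta+\gamma=\alpha$ to kill the mixed $\langle V,X-x^*\rangle$ term, which is exactly your choice $\beta=\alpha-\gamma$ made from the outset). The It\^o computation, application of \textbf{(H1')}, \textbf{(H2')}, and the Dynkin argument are identical.

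The only substantive difference is how the condition $\gamma\tilde\mu+\gamma(\alpha-\gamma)^2\le\tilde\mu\tau(\alpha-\gamma)$ is handled. The paper rewrites it as $\gamma\le \tau\beta/(1+\beta^2/\tilde\mu)$, invokes an auxiliary lemma giving the piecewise lower bound $\min\{\tfrac{\tau}{2}\beta,\tfrac{\tau\tilde\mu}{2\beta}\}$, and then \emph{solves a constrained maximization} in $(\gamma,\beta)$ to derive the two formulas and the threshold $\alpha_{max}$. You instead take the formulas as given and \emph{verify} them directly in each regime. Your verification is shorter and entirely self-contained (no auxiliary lemma needed), while the paper's optimization viewpoint explains more clearly why these particular $\gamma$ arise and why $\alpha_{max}=\tfrac{\tau+2}{2}\sqrt{\tilde\mu}$ is the natural crossover point. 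Both routes are valid.
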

We first need a lemma, of which we leave the proof to the reader.

\begin{lemma}
Let $\tau,\beta,\tilde\mu$ be positive real numbers then 
\begin{equation*}
    \frac{\tau\beta}{1 + \frac{\beta^2}{\tilde\mu}}\ge
    \begin{cases}
    \frac{\tau}{2}\beta & \text{if } \beta\le \sqrt{\tilde\mu}\\
    \frac{\tau\tilde\mu}{2\beta} & \text{if } \beta\ge \sqrt{\tilde\mu}
    \end{cases}.
\end{equation*}
\vspace{-3mm}
\label{lemma:CONV_HBF_WQC_QG}
\end{lemma}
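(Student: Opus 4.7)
The plan is to split into the two declared cases and, in each case, replace the denominator $1+\beta^2/\tilde\mu$ by a simple upper bound so that the resulting lower bound on the ratio matches exactly one branch of the piecewise expression. Since $\tau$ appears only as a positive multiplicative factor on both sides, it can be suppressed throughout the argument; the inequality reduces to a statement about $\beta/(1+\beta^2/\tilde\mu)$.

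In the first case, $\beta\le\sqrt{\tilde\mu}$, I would observe that $\beta^2/\tilde\mu\le 1$, so the denominator satisfies $1+\beta^2/\tilde\mu\le 2$; dividing the numerator $\tau\beta$ by the larger quantity $2$ instead of the true denominator can only decrease the value, which gives the claimed lower bound $\tau\beta/2$. In the second case, $\beta\ge\sqrt{\tilde\mu}$, I would note that now $1\le\beta^2/\tilde\mu$, so $1+\beta^2/\tilde\mu\le 2\beta^2/\tilde\mu$; substituting this upper bound for the denominator yields $\tau\beta/(2\beta^2/\tilde\mu)=\tau\tilde\mu/(2\beta)$, matching the second branch.

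Each of the two bounds on the denominator is tight up to a factor of $2$ at the crossover point $\beta=\sqrt{\tilde\mu}$, where both branches of the right-hand side agree (both equal $\tau\sqrt{\tilde\mu}/2$), so the piecewise lower bound is continuous and the argument is internally consistent. I do not anticipate any real obstacle here: the lemma is a purely algebraic one-line estimate in each case, needed only to interpolate between the small-$\beta$ and large-$\beta$ regimes of $\gamma$ appearing in Proposition \ref{prop:CONV_HBF_WQC_QG_app}. The only thing worth double-checking is that positivity of $\tau,\beta,\tilde\mu$ is used to justify that multiplying or dividing by these quantities preserves inequality direction, which is immediate from the hypothesis.
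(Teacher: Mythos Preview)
Your proposal is correct: the two case-wise upper bounds on the denominator, $1+\beta^2/\tilde\mu\le 2$ when $\beta\le\sqrt{\tilde\mu}$ and $1+\beta^2/\tilde\mu\le 2\beta^2/\tilde\mu$ when $\beta\ge\sqrt{\tilde\mu}$, immediately yield the two branches, and positivity of $\tau,\beta,\tilde\mu$ preserves the inequality direction throughout. The paper itself does not give a proof --- it explicitly leaves this lemma to the reader and only illustrates it graphically --- so your argument is exactly the kind of routine verification the authors intended.
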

We illustrate this result graphically in Fig. \ref{fig:bound_beta} and proceed with the proof of the rate.

\begin{figure}
  \centering
    \includegraphics[width=0.5\textwidth]{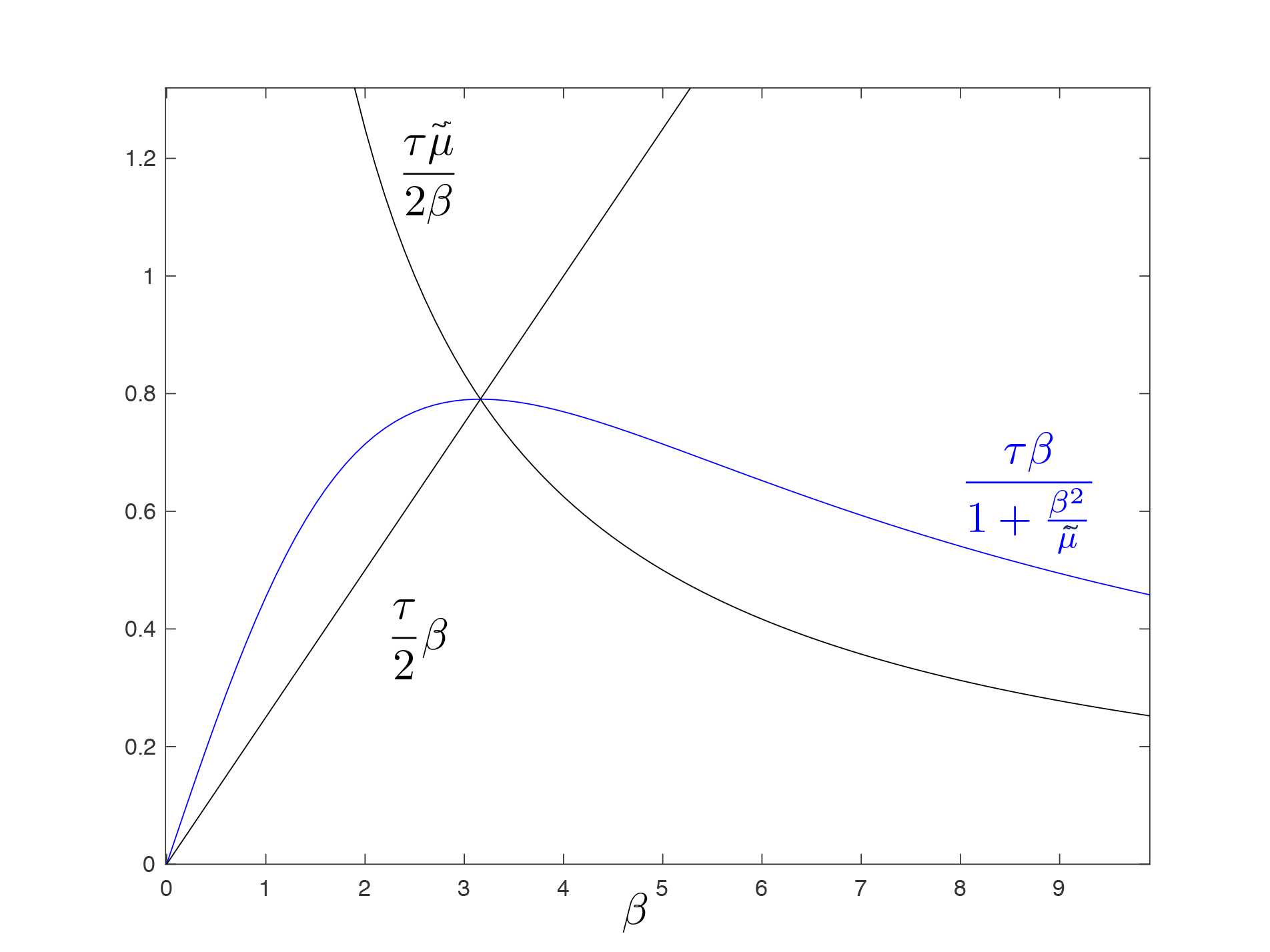}
     \caption{Bounds from Lemma~\ref{lemma:CONV_HBF_WQC_QG} for $\tau = 0.5$ and $\tilde\mu = 10$. The bounds overlap at $\sqrt{10} \simeq 3.13$.}
	\label{fig:bound_beta}
\end{figure}
\begin{proof}(of Prop. \ref{prop:CONV_HBF_WQC_QG_app})
Define the following parametric energy

$$\EE_{\gamma,\beta}(X(t),t) := e^{\gamma t}\left(\tilde f(X(t))-\tilde f(x^*) + \frac{1}{2}\|V(t) + \beta (X(t)-x^*)\|^2\right),$$
where $\beta$ and $\gamma$ are positive real numbers. The infinitesimal diffusion generator of $\EE_{\gamma,\beta}(X(t),t)$ is 

\begin{align*}
     \mathscr{A}\EE(X(t),V(t),t) &=\underbrace{\partial_t \EE(X(t),V(t),t) dt}_{\circled{1}} + \underbrace{\langle\partial_{x} \EE(X(t),V(t),t), V\rangle dt}_{\circled{2}}\\
    & \ \ \ \ + \underbrace{\left\langle\partial_{v} \EE(X(t),V(t),t), -\alpha V(t)-\alpha \nabla \tilde f(X(t)) \right\rangle dt}_{\circled{3}}\\
    & \ \ \ \ +   \underbrace{\frac{1}{2}\tr\left(\xi(t)\xi(t)^T\partial^2_{(x,v)}\EE(X(t),V(t),t) \  \right)dt}_{\circled{4}}.
    \end{align*}
Using the definition of $\tilde\sigma_*^2$ and Lemma \ref{lemma:CONV_trace_product}, we can upper bound the generator with
    \begin{align*}
       \mathscr{A}\EE(X(t),V(t),t) & \le \underbrace{\gamma e^{\gamma t}\left[\tilde f(X(t))-\tilde f(x^*) + \frac{1}{2}\|V(t) + \beta (X(t)-x^*)\|^2\right]dt}_{\circled{1}}\\ 
        & \ \ \   + \underbrace{e^{\gamma t} \langle\nabla \tilde f + \beta V + \beta^2 (X-x^*),V\rangle dt}_{\circled{2}}\ \ \ + \underbrace{e^{\gamma t} \langle V+\beta(X-x^*),-\alpha V -\nabla \tilde f \rangle dt}_{\circled{3}}+ \underbrace{\frac{d}{2}e^{\gamma t}\tilde\sigma_*^2dt}_{\circled{4}}.\\
\end{align*}

Next, we multiply both sides by $\frac{e^{-\gamma t}}{dt}$ and proceeding with the calculations we get

\begin{equation*}
    \begin{split}
        \frac{e^{-\gamma t}\mathscr{A}\EE}{dt} & = \gamma \tilde f(X(t))-\tilde f(x^*) + \frac{\gamma}{2}\|V(t) + \beta (X(t)-x^*)\|^2\\ 
        & \ \ \ + \frac{d}{2}\tilde\sigma_*^2 +  \langle\nabla \tilde f + \beta V + \beta^2 (X-x^*),V\rangle\\
        & \ \ \ + \langle V+\beta(X-x^*),-\alpha V -\nabla \tilde f \rangle\\
        & = \gamma \tilde f(X(t))-\tilde f(x^*) + \frac{\gamma}{2}\|V\|^2 + \frac{\beta^2\gamma}{2} \|X-x^*\|^2 + \beta\gamma\langle V, X-x^*\rangle\\
        & \ \ \ + \frac{d}{2}\tilde\sigma_*^2 + \langle\nabla \tilde f,V\rangle +  \beta \|V\|^2 + \beta^2 \langle V,X-x^* \rangle\\
        & \ \ \ -\alpha \|V\|^2 -\langle \nabla \tilde f, V\rangle -\alpha \beta \langle V,X-x^*\rangle -\beta\langle \nabla \tilde f, X-x^*\rangle.\\ 
        & \le \left(\gamma + \frac{\beta^2\gamma}{\tilde\mu} - \beta \tau\right) \tilde f(X(t))-\tilde f(x^*) + \frac{d}{2}\tilde\sigma_*^2\\
        & \ \ \  + \left(\frac{\gamma}{2} + \beta -\alpha \right)\|V\|^2 + (\beta\gamma + \beta^2 -\alpha\beta)\langle V, X-x^*\rangle,
    \end{split}
\end{equation*}
where in the inequality we used \textbf{(H1')} and \textbf{(H2')}. Next, since we a priori don't have a way to bound the term $\langle V,X-x^*\rangle$ in general, we require
$$\gamma\beta + \beta^2 -\alpha\beta = 0,$$
which holds if and only if $\gamma + \beta = \alpha$. This also implies the coefficient multiplying $\|V\|^2$, that is $\left(\frac{\gamma}{2} + \beta - \alpha\right)$, is negative (because $\gamma$ and $\beta$ are positive).

Assume for now that $\left(\gamma + \frac{\beta^2\gamma}{\tilde\mu} - \beta \tau\right)\le0$ (we will deal with this at the end of this proof), we then get $\mathscr{A}\EE \le \frac{d}{2}\tilde\sigma_*^2e^{\gamma t}.$

Next, using Eq.~\eqref{eq:dynkin},

$$\E\left[\EE_{\gamma,\beta}(X(t),t)\right] - \EE_{\gamma,\beta}(x_0,0) \le \int_0^t\frac{d}{2}\tilde\sigma_*^2e^{\gamma s} ds.$$

Using the definition of $\EE_{\gamma,\beta}$, we get
\begin{equation*}
    e^{\gamma t}\E\left[\tilde f(X(t))-\tilde f(x^*) + \frac{1}{2}\|V(t) + \beta (X(t)-x^*)\|^2\right] \\ \le \tilde f(x_0)-\tilde f(x^*) + \frac{\beta^2}{2}\|(x_0-x^*)\|^2 + \int_0^t\frac{d}{2}\tilde\sigma_*^2e^{\gamma s} ds.
\end{equation*}

Hence, discarding the positive term $\frac{1}{2}\|V(t) + \beta (X(t)-x^*)\|^2$ and multiplying both sides by $ e^{-\gamma t}$ and solving the integral
\begin{align*}
    \E[\tilde f(X(t))-\tilde f(x^*)]  &\le e^{-\gamma t}\left(\tilde f(x_0)-\tilde f(x^*) \frac{\beta^2}{2}\|(x_0-x^*)\|^2\right) + \frac{d}{2}\tilde\sigma_*^2\int_0^te^{-\gamma (t-s)} ds\\ &\le e^{-\gamma t}\left(\tilde f(x_0)-\tilde f(x^*) \frac{\beta^2}{2}\|(x_0-x^*)\|^2\right) + \frac{d}{2}\tilde\sigma_*^2\frac{1-e^{-\gamma t}}{\gamma}\\
    &\le e^{-\gamma t}\left(\tilde f(x_0)-\tilde f(x^*) \frac{\beta^2}{2}\|(x_0-x^*)\|^2\right) + \frac{d}{2\gamma}\tilde\sigma_*^2,
\end{align*}

which is the statement of the theorem. Now, we just need to choose $\beta$ and $\gamma$ such that $\left(\gamma + \frac{\beta^2\gamma}{\tilde\mu} - \beta \tau\right)<0$ while $\gamma + \beta = \alpha$. Since we have an inequality, we expect that the choice of $\beta$ and $\gamma$ is not unique. However, since $\gamma$ directly influences the rate, we would like it to be as large as possible. This can be formulated with the following linear program with nonlinear constrains.

\begin{equation*}
   (\gamma^*, \beta^*) = \begin{cases}
    \max_{\gamma, \beta} \ \ \ \gamma\\
    \text{s.t} \ \ \ \ \ \ \ \ \ \ \  \gamma = \alpha - \beta\\
    \ \ \ \ \ \ \ \ \ \ \ \ \ \ \  \gamma\le\frac{\tau\beta}{1 + \frac{\beta^2}{\tilde\mu}}\\
    \ \ \ \ \ \ \ \ \ \ \ \ \ \ \  \gamma, \beta > 0
    \end{cases}.
\end{equation*}

Using Lemma \ref{lemma:CONV_HBF_WQC_QG}, we shrink the feasible region at the cost of having a suboptimal, yet simpler, solution

\begin{equation*}
   (\gamma^*, \beta^*) = \begin{cases}
    \max_{\gamma, \beta} \ \ \ \gamma\\
    \text{s.t} \ \ \ \ \ \ \ \ \ \ \  \gamma = \alpha - \beta \ \ \ \ \ \ \ \ \ \ \ \ \ \ \ \ \  \ \ \ \ \ (\bigcirc)\\
    \ \ \ \ \ \ \ \ \ \ \ \ \ \ \  \gamma\le \min\left\{\frac{\tau}{2}\beta, \frac{\tau\tilde\mu}{2\beta}\right\} \ \ \ \ \ \ \ (\msquare) \\
    \ \ \ \ \ \ \ \ \ \ \ \ \ \ \  \gamma, \beta > 0
    \end{cases}.
\end{equation*}

\begin{figure}
  \centering
    \includegraphics[width=0.6\textwidth]{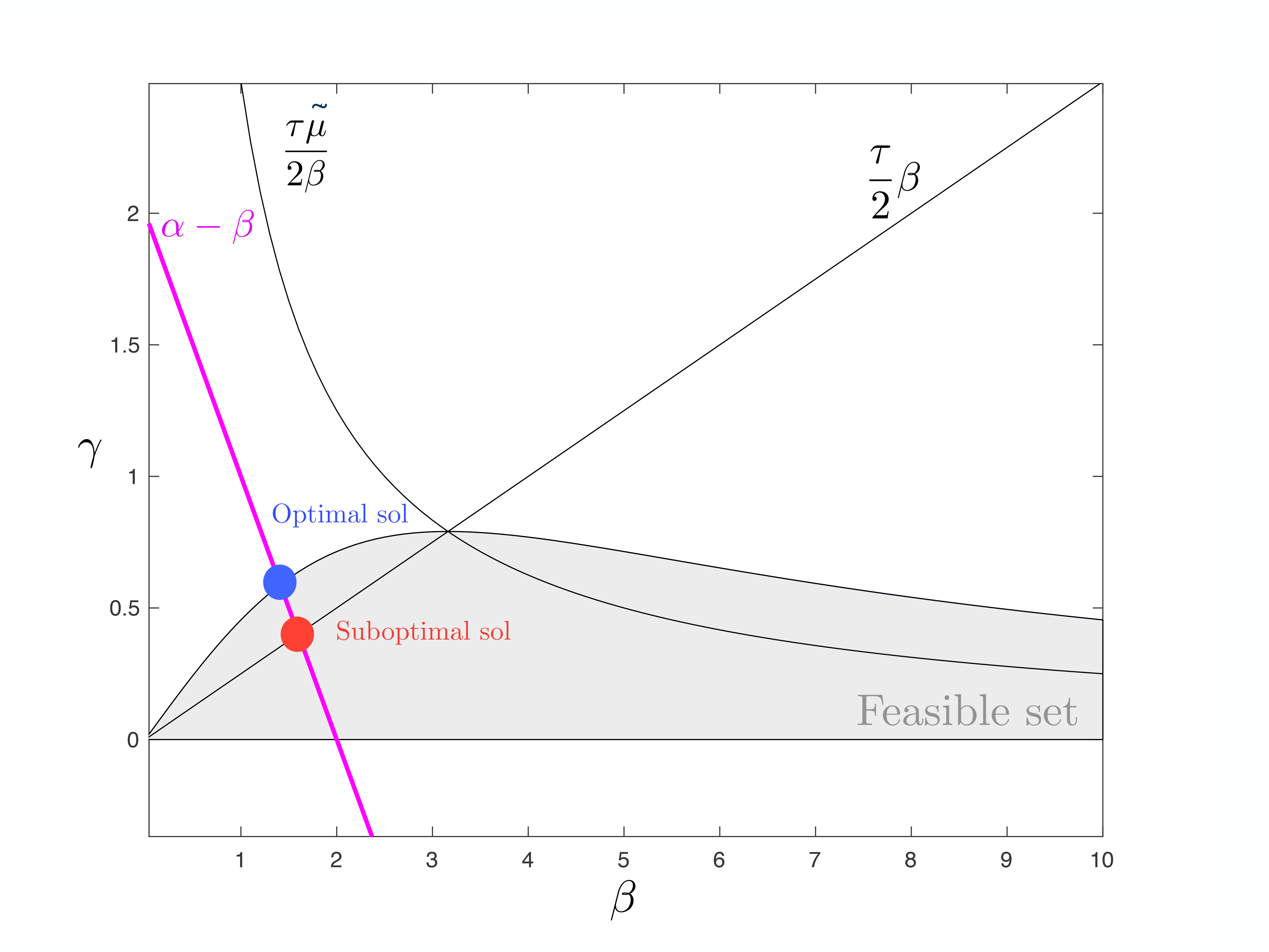}
    \vspace{-3mm}
     \caption{Bounds provided in Lemma \ref{lemma:CONV_HBF_WQC_QG}. Shown is the feasible region for the $\gamma$ maximization problem along with the optimal and approximate solution.}
	\label{fig:bound_beta_2}
\end{figure}

For any fixed $\alpha>0$, the RHS of $(\msquare)$ is positive and the RHS of $(\bigcirc)$ is a line which starts in $(\alpha, 0)$ and eventually exist the set $\{\gamma, \beta) \ : \ \gamma\ge0\}$. Hence, the constrain $(\msquare)$ is binding. The reader can find an helpful representation in Fig. \ref{fig:bound_beta_2}. Notice now that we only have one feasible point, hence

\begin{equation*}
   (\gamma^*, \beta^*) = \begin{cases}
    \gamma = \alpha - \beta\\
    \gamma = \min\left\{\frac{\tau}{2}\beta, \frac{\tau\tilde\mu}{2\beta}\right\}\\
    \gamma, \beta > 0
    \end{cases}.
\end{equation*}

Next, we ask when $\frac{\tau}{2}\beta = \frac{\tau\tilde\mu}{2\beta}$. This clearly happens at $\beta = \sqrt{\tilde\mu}$. This is the maximum of the continuous curve $\min\left\{\frac{\tau}{2}\beta, \frac{\tau\tilde\mu}{2\beta}\right\}$ with respect to $\beta$. The maximum achievable rate is therefore $\frac{\tau}{2}\sqrt{\tilde\mu}$ and it is reached for $\frac{\tau}{2}\sqrt{\tilde\mu} = \alpha -\sqrt{\tilde\mu} \Rightarrow \alpha = \frac{\tau+2}{2}\sqrt{\tilde\mu}$. Moreover, since $\frac{\tau\tilde\mu}{2\beta}$ and $\frac{\tau}{2}\beta$ only touch in one point, it is clear that
\begin{equation*}
    \gamma^* = 
    \begin{cases}
            \frac{\tau}{2}\beta & \alpha\le\frac{\tau+2}{2}\sqrt{\tilde\mu}\\
            \frac{\tau\tilde\mu}{2\beta} & \alpha\ge\frac{\tau+2}{2}\sqrt{\tilde\mu}
    \end{cases}.
\end{equation*}

In any of these two regions, one just has equate the corresponding equation with $\gamma^* = \alpha - \beta^*$, to get
\begin{equation*}
    \gamma^* =\begin{cases}
         \frac{\tau}{\tau +2}\alpha & \text{for }\alpha\le\frac{\tau+2}{2}\sqrt{\tilde\mu} \\
        \frac{1}{2}\left(\alpha-\sqrt{\alpha^2-2\tilde\mu\tau}\right) & \text{for }\alpha\ge\frac{\tau+2}{2}\sqrt{\tilde\mu} \\
    \end{cases}.
\end{equation*}
\end{proof}

From this general proposition, we derive results form the memory ODE and for the standard Heavy-ball ODE.

\begin{theorem}Assume \textbf{(H0c)}, \textbf{(H1)}, \textbf{(H2)} hold. Let $\{X(t),V(t)\}_{t\ge0}$ be the stochastic process which solves \ref{MG-SDE-exp} for $t\ge 0$ starting from $X(0)=x_0$ and $V(0)=0$. Then, for $t\ge0$ we have
    \begin{equation}
    \E[f(X(t))-f(x^*)]  \le e^{-\gamma t}\left((f(x_0)-f(x^*)) + \frac{(\alpha-\gamma)^2}{2\alpha}\|(x_0-x^*)\|^2\right) + \frac{d\alpha}{2\gamma}\sigma_*^2.
\end{equation}
    where
    \begin{equation*}
    \begin{cases}
        \gamma = \frac{1}{3}\alpha & \text{for }\alpha\le \alpha_{max}\\
        \gamma = \frac{1}{2}\left(\alpha-\sqrt{\alpha^2-2\alpha\mu}\right) & \text{for }\alpha\ge\alpha_{max} \\
    \end{cases}
\end{equation*}
and $\alpha_{max}$ is the optimal viscosity parameter
$$\alpha_{max} = \frac{9}{4}\mu.$$
\label{cor:SC}   
\end{theorem}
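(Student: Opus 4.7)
The plan is to deduce this corollary directly from Proposition \ref{prop:CONV_HBF_WQC_QG_app} by identifying \ref{MG-SDE-exp} as a particular instance of \ref{MG-SDE-exp-G} and then translating the abstract constants back to the original problem data. Concretely, \ref{MG-SDE-exp} is recovered from \ref{MG-SDE-exp-G} by taking $\tilde f = \alpha f$ and $\tilde\sigma = \alpha\sigma$ (i.e.\ $C=\alpha$), after absorbing the sign of $\sigma$ into the Brownian motion. Under \textbf{(H1)}, Remark \ref{rmk:weak} gives weak-quasi-convexity \textbf{(H1')} with $\tau=1$, and \textbf{(H2)} yields quadratic growth \textbf{(H2')} with the same $\mu$. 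With these identifications, $\tilde \mu = C\mu = \alpha\mu$ and $\tilde\sigma_*^2 = C^2\sigma_*^2 = \alpha^2\sigma_*^2$, so the hypotheses of Proposition \ref{prop:CONV_HBF_WQC_QG_app} are in force.

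Next, I would simply plug in. Proposition \ref{prop:CONV_HBF_WQC_QG_app} yields
\[
\E[\tilde f(X(t))-\tilde f(x^*)]\le e^{-\gamma t}\Bigl(\tilde f(x_0)-\tilde f(x^*) + \tfrac{(\alpha-\gamma)^2}{2}\|x_0-x^*\|^2\Bigr) + \tfrac{d}{2\gamma}\tilde\sigma_*^2,
\]
and since $\tilde f - \tilde f(x^*) = \alpha (f - f(x^*))$, dividing both sides by $\alpha$ and substituting $\tilde\sigma_*^2/\alpha = \alpha\sigma_*^2$ gives exactly the stated inequality.

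It then remains to compute the threshold $\alpha_{max}$ and the explicit form of $\gamma$. With $\tau=1$, the general formula $\alpha_{max} = \tfrac{\tau+2}{2}\sqrt{\tilde\mu}$ becomes $\alpha_{max} = \tfrac{3}{2}\sqrt{\alpha_{max}\mu}$, a fixed-point relation (since $\tilde\mu = \alpha\mu$ itself depends on $\alpha$); squaring gives $\alpha_{max} = \tfrac{9}{4}\mu$. In the small-viscosity regime $\alpha \le \alpha_{max}$, the formula $\gamma = \tfrac{\tau}{\tau+2}\alpha$ reduces to $\gamma = \tfrac{1}{3}\alpha$, while in the large-viscosity regime $\alpha \ge \alpha_{max}$, $\gamma = \tfrac{1}{2}(\alpha - \sqrt{\alpha^2 - 2\tilde\mu\tau})$ becomes $\gamma = \tfrac{1}{2}(\alpha - \sqrt{\alpha^2 - 2\alpha\mu})$, matching the claim.

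The only mildly subtle point is the self-referential character of $\tilde\mu = \alpha\mu$ when computing $\alpha_{max}$: one must be careful that the threshold is defined implicitly through the same $\alpha$ appearing in the SDE. But this is resolved by the single algebraic equation above, so I do not anticipate any genuine obstacle — the whole argument is a careful substitution into Proposition \ref{prop:CONV_HBF_WQC_QG_app}.
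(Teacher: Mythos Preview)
Your proposal is correct and follows exactly the paper's approach: the paper's proof is the one-liner ``Follows directly from Prop.~\ref{prop:CONV_HBF_WQC_QG_app} plugging in $\tau = 1$, $\tilde f(x) = \alpha f(x)$, $\tilde\mu = \alpha\mu$ and $\tilde\sigma_*^2 = \alpha^2\sigma_*^2$,'' which is precisely the substitution you carry out. Your additional care in resolving the self-referential threshold $\alpha_{max} = \tfrac{3}{2}\sqrt{\alpha_{max}\,\mu}$ to obtain $\alpha_{max} = \tfrac{9}{4}\mu$ is exactly the missing algebra the paper leaves implicit.
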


\begin{proof}
Follows directly from Prop.~\ref{prop:CONV_HBF_WQC_QG_app} plugging in $\tau = 1$, $\tilde f(x) = \alpha f(x)$, $\tilde\mu = \alpha\mu$ and $\tilde\sigma_*^2 = \alpha^2\sigma_*^2$.
\end{proof}
\begin{theorem}Assume \textbf{(H0c)}, \textbf{(H1)}, \textbf{(H2)} hold. Let $\{X(t),V(t)\}_{t\ge0}$ be the stochastic process which solves HB-SDE\footnote{HB-SDE is defined, similarly \ref{MG-SDE}, by augmenting the phase space representation with a volatility : $dX(t) = V(t) dt; \  dV(t) = -\alpha V(t) dt - \nabla f(X(t))dt + \sigma(X(t))dB(t)$, i.e. the stochastic version of the ODE in \cite{shi2018understanding}.} for $t\ge 0$, starting from $X(0)=x_0$ and $V(0)=0$. Then, for $t\ge0$ we have
    \begin{equation}
    \E[f(X(t))-f(x^*)]  \le e^{-\gamma t}\left(f(x_0)-f(x^*) + \frac{(\alpha-\gamma)^2}{2}\|(x_0-x^*)\|^2\right) + \frac{d\alpha}{2\gamma}\sigma_*^2.
\end{equation}
    where
    \begin{equation*}
    \begin{cases}
        \gamma = \frac{1}{3}\alpha & \text{for }\alpha\le \alpha_{max}\\
        \gamma = \frac{1}{2}\left(\alpha-\sqrt{\alpha^2-2\mu}\right) & \text{for }\alpha\ge\alpha_{max} \\
    \end{cases}
\end{equation*}
and $\alpha_{max}$ is the optimal viscosity parameter
$$\alpha_{max} = \frac{3}{2}\sqrt{\mu}.$$
\label{cor:SC_1}   
\end{theorem}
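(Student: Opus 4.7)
The plan is to recognize that Theorem~\ref{cor:SC_1} is an immediate specialization of Proposition~\ref{prop:CONV_HBF_WQC_QG_app}, which was deliberately formulated for the generalized SDE \ref{MG-SDE-exp-G} with scaling constant $C$ precisely so that both \ref{MG-SDE-exp} (Theorem~\ref{cor:SC}) and the classical HB-SDE (Theorem~\ref{cor:SC_1}) can be recovered by choosing $C$ appropriately. Here we choose $C=1$, so that $\tilde f = f$ and $\tilde\sigma = \sigma$, which makes \ref{MG-SDE-exp-G} coincide exactly with the HB-SDE in the footnote of the theorem statement.

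With this choice, I would translate assumptions and constants as follows. By Remark~\ref{rmk:weak}, $\mu$-strong convexity of $f$ (hypothesis \textbf{(H2)}) implies $\tau$-weak-quasi-convexity \textbf{(H1')} with $\tau=1$ and quadratic growth \textbf{(H2')} with the same $\mu$; thus in the notation of Proposition~\ref{prop:CONV_HBF_WQC_QG_app} we have $\tau=1$, $\tilde\mu = C\mu = \mu$, and $\tilde\sigma_*^2 = C^2\sigma_*^2 = \sigma_*^2$. Assumption \textbf{(H0c)} is identical in both statements, so the hypotheses of the proposition are satisfied.

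Substituting $\tau=1$ and $\tilde\mu = \mu$ into the formulas of the proposition yields the critical viscosity
\[
\alpha_{\max} = \frac{\tau+2}{2}\sqrt{\tilde\mu} = \frac{3}{2}\sqrt{\mu},
\]
and the two branches
\[
\gamma = \frac{\tau}{\tau+2}\alpha = \frac{1}{3}\alpha \quad (\alpha \le \alpha_{\max}), \qquad
\gamma = \tfrac{1}{2}\bigl(\alpha - \sqrt{\alpha^2 - 2\tilde\mu\tau}\bigr) = \tfrac{1}{2}\bigl(\alpha - \sqrt{\alpha^2 - 2\mu}\bigr) \quad (\alpha \ge \alpha_{\max}),
\]
matching the theorem statement. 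Plugging $\tilde f = f$, $\tilde\sigma_*^2 = \sigma_*^2$ into the bound of the proposition then directly gives the claimed exponential decay inequality.

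Because the heavy lifting (constructing the Lyapunov energy $\EE_{\gamma,\beta}$, applying Itô's formula, imposing $\gamma+\beta=\alpha$ to kill the indefinite cross-term $\langle V, X-x^*\rangle$, using Lemma~\ref{lemma:CONV_HBF_WQC_QG} to reduce the feasibility region, and then optimizing $\gamma$) was already carried out in Proposition~\ref{prop:CONV_HBF_WQC_QG_app}, there is no substantive obstacle: the entire proof of Theorem~\ref{cor:SC_1} reduces to verifying the specialization of constants and writing ``apply Proposition~\ref{prop:CONV_HBF_WQC_QG_app} with $C=1$'' in a single line. The only subtle points worth a sentence of commentary are (i) the convention that convexity yields $\tau=1$ in weak-quasi-convexity, and (ii) checking that the prefactor in front of $\sigma_*^2$ tracks the $C$-dependence consistently, since Theorem~\ref{cor:SC} (where $C=\alpha$) produces an extra factor of $\alpha$ relative to the $C=1$ case here.
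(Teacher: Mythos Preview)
Your proposal is correct and follows exactly the paper's own proof, which is the one-line specialization ``Follows directly from Prop.~\ref{prop:CONV_HBF_WQC_QG_app} plugging in $\tilde f(x) = f(x)$, $\tau=1$, $\tilde\mu = \mu$ and $\tilde\sigma_*^2 = \sigma_*^2$.'' Your additional remarks (invoking Remark~\ref{rmk:weak} for $\tau=1$, and tracking the $C$-dependence of the noise prefactor) are more explicit than the paper but add nothing new; note in passing that with $C=1$ the proposition yields a noise term $\tfrac{d}{2\gamma}\sigma_*^2$ rather than the $\tfrac{d\alpha}{2\gamma}\sigma_*^2$ printed in the theorem statement, so the ``extra factor of $\alpha$'' you allude to appears to be a typo in the statement itself, not something the specialization produces.
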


\begin{proof}
Follows directly from Prop.~\ref{prop:CONV_HBF_WQC_QG_app} plugging in $\tilde f(x) = f(x)$, $\tau=1$, $\tilde \mu = \mu$ and $\tilde\sigma_*^2 = \sigma_*^2$.
\end{proof}

The optimal rates are exponential for both SDEs; however, for \ref{MG-SDE-exp} the constant in such exponential is proportional to $\sqrt{\mu}$, while for HB-SDE it is proportional to $\mu$. Note that this difference might be big if $\mu\ll 1$, leading to a faster rate for HB-SDE. However, this difference \textit{only comes because of the discretization procedure} and will not be present in the algorithmic counterparts. Indeed, it is possible to show (we leave this exercise to the reader) that the optimal discretization stepsize $h$ for the first SDE (see Sec.~\ref{sec:analysis_discretization} in the main paper) is $1/\sqrt{\mu L}$, while for the second SDE it is $2/(\sqrt{L}+\sqrt{\mu}) = \mathcal{O}(1/\sqrt{L})$. Therefore, since we have the correspondence $t = kh$, in both cases we actually end up with $e^{\gamma_{max} t}\approx (1-C\sqrt{\mu/L})^k$, which is the well known accelerated rate found by~\cite{polyak1964some}. This interesting difference and the link to numerical integration deserves to be explored in a follow-up work.

\section{Provably convergent discrete-time polynomial forgetting}
\label{poly-discre}
The algorithm below (a generalization of Heavy Ball) builds a sequence of iterates $\{x_k\}_{k\in\N}$ as well as moments $\{m_k\}_{k\in\N}$ starting from $m_0=0$ and using the recursion

\begin{align}
        m_{k+1} &= \beta_k(x_k-x_{k-1}) - \delta_k \eta \nabla f_{i_k}(x_k)\label{eq:discrete_time_m}\\
        x_{k+1} &= x_k + m_{k+1}\label{eq:discrete_time_x}
\end{align}

where $i_k\in\{1,\dots,n\}$ is the index of a random data point sampled at iteration $k$,  $\beta_k$ is an iteration-dependent positive momentum parameter and $\delta_k$ is a positive iteration-dependent discount on the learning rate $\eta$. Trivially, for each $x\in\R^d$, $f_{i_k}(x)$ is a random variable with mean zero and finite covariance matrix, which we call $\Sigma(x)$.

We list below two important assumptions

\textbf{(H0d)} \ \ \ $\varsigma_*^2 := \sup_{x\in\R^d} \|\Sigma(x)\|_s<\infty$.

\textbf{(H1)} \ \ \ $f:\R^d\to\R$ is convex and $L$-smooth.

We start with a rather abstract result inspired from \cite{ghadimi2015global}, which we will then use for algorithm design.

\begin{lemma}[Discrete-time Master Inequality] Assume \textbf{(H1)} and \textbf{(H0d)} hold. Let $\{\lambda_k\}_{k\in\mathbb{N}}$ be any sequence such that $\lambda_k\le k$ for all $k$ and define $r_k= \eta(\lambda_k +1)$. If $\beta_k = \frac{\lambda_k}{\lambda_{k+1}+1}$, $\delta_k = \frac{1}{\lambda_{k+1}+1}$ and $\eta\le 1/L$. Then we have for  all iterates $\{x_k\}_{k\in\N}$ given by Eq.~\eqref{eq:discrete_time_m} and \eqref{eq:discrete_time_x} that
$$\E[f(x_k)-f(x^*)] \le \frac{\|x_0-x^*\|^2}{2 r_k} + \frac{d \eta^2 \varsigma_*^2}{2} \frac{k}{r_k}.$$
\label{lemma:master_discrete}
\end{lemma}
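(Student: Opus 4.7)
The plan is to mimic the continuous-time Lyapunov argument of Lemma~\ref{lemma:app_master_inequality} in discrete time. The key idea is that the specific choices $\beta_k = \lambda_k/(\lambda_{k+1}+1)$ and $\delta_k = 1/(\lambda_{k+1}+1)$ are engineered so that a certain linear combination of iterates evolves like plain SGD. Concretely, I would define the auxiliary ``look-ahead'' sequence $z_k := x_k + \lambda_k(x_k - x_{k-1})$. A direct computation using $z_{k+1} = (1+\lambda_{k+1})x_{k+1} - \lambda_{k+1}x_k$ together with the recursion~\eqref{eq:discrete_time_m}--\eqref{eq:discrete_time_x} and the above values of $\beta_k,\delta_k$ collapses to
\[ z_{k+1} = z_k - \eta\,\nabla f_{i_k}(x_k), \]
so $\{z_k\}$ performs a vanilla SGD step, but using the gradient evaluated at $x_k$ rather than at $z_k$. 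This is the discrete analogue of the identity $\dot\lambda(t) = \lambda(t)\dot\m(t)/\m(t)-1$ in the continuous proof.

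Next I would introduce the Lyapunov function
\[ E_k := r_k\bigl(f(x_k)-f(x^*)\bigr) + \tfrac{1}{2}\|z_k-x^*\|^2, \]
exactly mirroring $\EE = r(t)(f(X)-f(x^*))+\tfrac{1}{2}\|X-x^*+\lambda(t)V\|^2$ from Lemma~\ref{lemma:app_master_inequality}. Expanding the square in $\E_k\|z_{k+1}-x^*\|^2$ using the $z$-recursion, unbiasedness of $\nabla f_{i_k}(x_k)$, and the noise bound $\E_k\|\nabla f_{i_k}(x_k)\|^2 \le \|\nabla f(x_k)\|^2 + d\varsigma_*^2$ (from \textbf{(H0d)} and Lemma~\ref{lemma:CONV_trace_product}) produces the cross term $-\eta\langle z_k-x^*,\nabla f(x_k)\rangle$. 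Splitting this as $\langle x_k-x^*,\nabla f(x_k)\rangle + \lambda_k\langle x_k-x_{k-1},\nabla f(x_k)\rangle$ and applying convexity of $f$ twice gives the lower bound $f(x_k)-f(x^*)+\lambda_k(f(x_k)-f(x_{k-1}))$. The residual $\tfrac{\eta^2}{2}\|\nabla f(x_k)\|^2$ is absorbed through the standard $L$-smooth-convex inequality $\|\nabla f(x_k)\|^2 \le 2L(f(x_k)-f(x^*))$ combined with the stepsize restriction $\eta\le 1/L$.

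Third, I would compute $\E_k[E_{k+1}]-E_k$ and show that, under the stated relations between $r_k$ and $\lambda_k$ (in particular $r_{k+1}-r_k = \eta(\lambda_{k+1}-\lambda_k)$), the deterministic contributions cancel against a telescoping companion of the form $\eta\lambda_k(f(x_k)-f(x_{k-1}))$, leaving only the variance surplus $\tfrac{1}{2}d\eta^2\varsigma_*^2$. This mirrors exactly the cancellation of coefficients in the continuous-time master inequality once the ODEs $\dot r = \tau\lambda\dot\m/\m$ and $\dot\lambda = \lambda\dot\m/\m -1$ are enforced. Summing $\E[E_{k+1}-E_k] \le \tfrac{1}{2}d\eta^2\varsigma_*^2$ from $j=0$ to $k-1$, discarding the nonnegative $\tfrac{1}{2}\|z_k-x^*\|^2$ on the left, dividing by $r_k$ and noting $E_0 = \tfrac{1}{2}\|x_0-x^*\|^2$ (since $x_{-1}=x_0$ and $r_0(f(x_0)-f^*)$ can be bounded by the first term of the final estimate via the assumption $\lambda_k\le k$) yields the claimed bound.

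The main obstacle will be step three: the precise bookkeeping needed to verify that every term other than the noise floor cancels. The $L$-smooth descent bound $f(x_{k+1}) \le f(x_k)+\langle\nabla f(x_k),m_{k+1}\rangle +\tfrac{L}{2}\|m_{k+1}\|^2$ must be used carefully together with $\eta\le 1/L$ to control the extra quadratic momentum term $r_{k+1}\tfrac{L}{2}\E_k\|m_{k+1}\|^2$ without breaking the cancellation; matching its deterministic part against the $\tfrac{\eta^2}{2}\|\nabla f(x_k)\|^2$ arising from the $z$-expansion is what forces the exact form of $\delta_k$ and ultimately the condition $\lambda_k\le k$, which controls the accumulated noise in the final sum.
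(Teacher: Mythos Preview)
Your approach is essentially the paper's: the same auxiliary sequence $z_k=x_k+\lambda_k m_k$ (which the paper writes as $x_k-x^*+\lambda_k m_k$), the same observation that $z_{k+1}=z_k-\eta\nabla f_{i_k}(x_k)$ under the stated $\beta_k,\delta_k$, and the same Lyapunov structure. Two technical choices, however, make your route harder than necessary and cause the ``bookkeeping obstacle'' you anticipate in step three.

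First, the paper's Lyapunov carries a one-step index shift: it pairs $r_k(f(x_k)-f(x^*))$ with $\tfrac{1}{2}\|z_{k+1}-x^*\|^2$, not with $\tfrac{1}{2}\|z_k-x^*\|^2$. With this shift, the inequality from the $z$-expansion alone already reads
\[
r_k\,\E[f(x_k)-f(x^*)]+\tfrac{1}{2}\E\|z_{k+1}-x^*\|^2 \;\le\; \eta\lambda_k\,\E[f(x_{k-1})-f(x^*)]+\tfrac{1}{2}\E\|z_k-x^*\|^2+\tfrac{1}{2}d\eta^2\varsigma_*^2,
\]
and the assumption $\lambda_k\le\lambda_{k-1}+1$ (implied for the intended $\lambda_k$; in the paper this is the role played by $\lambda_k\le k$) lets $\eta\lambda_k\le r_{k-1}$, so the right-hand side is exactly $\EE_{k-1}+\tfrac{1}{2}d\eta^2\varsigma_*^2$. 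No descent lemma on $f(x_{k+1})$ is ever needed. Your aligned-index choice forces you to control $r_{k+1}\E[f(x_{k+1})-f^*]$ via $L$-smoothness, which drags in $\|m_{k+1}\|^2$ and cross terms that do not cancel cleanly; that is the source of the difficulty you flag, and it disappears with the shift.

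Second, to absorb $\tfrac{\eta^2}{2}\|\nabla f(x_k)\|^2$, the paper uses the convex--smooth co-coercivity inequality $\langle x_k-x^*,\nabla f(x_k)\rangle\ge f(x_k)-f(x^*)+\tfrac{1}{2L}\|\nabla f(x_k)\|^2$ (Nesterov, Thm.~2.1.5). This produces $\tfrac{\eta}{2}(\eta-1/L)\|\nabla f(x_k)\|^2\le 0$ and leaves the full $-\eta(f(x_k)-f^*)$ intact. Your proposed route via $\|\nabla f(x_k)\|^2\le 2L(f(x_k)-f(x^*))$ instead subtracts $\eta^2L(f(x_k)-f^*)$ from that budget, so at $\eta=1/L$ you lose the entire coefficient and the telescoping breaks. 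With these two adjustments your plan is exactly the paper's proof.
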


\begin{proof}
Consider the Lyapunov function inspired by the continuous-time setting in Lemma \ref{lemma:unbiased}.

\begin{equation}\label{eq:lyapunov_discrete}
    \EE_{k} = r_k (f(x_k)-f(x^*)) + \frac{1}{2}\|x_{k+1}-x^* + \lambda_k m_{k+1}\|^2
\end{equation}

First, notice that

\begin{equation}
    \begin{aligned}
     x_{k+1} -x^* + \lambda_{k+1}m_{k+1} &\overset{\eqref{eq:discrete_time_x}}{=} x_k-x^*+(1+\lambda_{k+1})m_{k+1} \\
     &\overset{\eqref{eq:discrete_time_m}}{=} x_k-x^*+(1+\lambda_{k+1})(\beta_k m_k-\delta_k\eta \nabla f_{i_k}(x_k))\\
     &= x_{k} -x^* + \lambda_k m_k - \eta\nabla f_{i_k}(x_k)),
    \end{aligned}
\end{equation}

where in the last line we chose $\lambda_k = (\lambda_{k+1}+1)\beta_k$ and $\delta_k =\frac{1}{\lambda_{k}+1}$.

Consider $\zeta_k := \nabla f_{i_k}(x_k)-\nabla f(x_k)$, then 
\begin{align*}
\E\left[\| x_{k+1}- x^* + \lambda_{k+1}m_{k+1} \|^2\right] 
=& \E\left[\| x_{k} -x^* + \lambda_k m_k - \eta\nabla f(x_k)) +\eta \zeta_k\|^2\right]\\
=& \E\left[\| x_{k} -x^* + \lambda_k m_k\|^2\right] + \E\left[\|\eta\nabla f(x_k) + \eta\zeta_k\|^2\right]\\
& -2\eta \E\left[\langle\nabla f(x_k), x_{k} -x^* + \lambda_k m_k\rangle\right]\\
=& \E\left[\| x_{k} -x^* + \lambda_k m_k\|^2\right] + \eta^2\E\left[\|\nabla f(x_k)\|^2\right] + \eta^2\E\left[\|\zeta_k\|^2\right]\\
&  -2\eta \E\left[\langle\nabla f(x_k), x_{k} -x^*\rangle\right]  -2\eta \lambda_k \E\left[\langle f(x_k),m_k\rangle\right].
\end{align*}
Since $f(\cdot)$ is convex and smooth, it follows from Thm. 2.1.5 in \cite{nesterov2018lectures} that
\begin{equation*}
\begin{aligned}
 \dfrac{1}{L}\|\nabla f(x_k)\|^2&\leq \langle x_k - x^* , \nabla f(x_k) \rangle,\\
 f(x_k)-f(x^*)+\dfrac{1}{2L}\|\nabla f(x_k)\|^2&\leq \langle x_k - x^* , \nabla f(x_k) \rangle,\\
f(x_k)-f(x_{k-1}) &\leq \langle x_k - x_{k-1} , \nabla f(x_k)) \rangle
\end{aligned}
\end{equation*}

 for all $x_k$. Next, notice that $\zeta_k$ is a random variable with mean $0$ and we denote its covariance by $\Sigma(x_k)\geq 0$. Moreover, $$\E\left[\|\zeta_k\|^2\right] = \E\left[\tr(\zeta_k \zeta_k^T)\right] =\tr(\E\left[\zeta_k \zeta_k^T\right]) = \tr(\Sigma(x_k)) \le d\|\Sigma(x_k)\|_s = d\varsigma_*^2.$$ Let us assume $k\ge 1$, then
 
 \begin{align*}
\E\left[\| x_{k+1}- x^* + \lambda_{k+1}m_{k+1} \|^2\right] =& \E\left[\| x_{k} -x^* + \lambda_k m_k\|^2\right] + \eta^2\E\left[\|\nabla f(x_k)\|^2\right] + d\eta^2\varsigma_*^2\\
&  -2\eta \E\left[\langle\nabla f(x_k), x_{k} -x^*\rangle\right]  -2\eta \lambda_k \E\left[\langle\nabla f(x_k), x_k -x_{k-1}\rangle\right]\\
\le& \E\left[\| x_{k} -x^* + \lambda_k m_k\|^2\right] + \eta^2\E\left[\|\nabla f(x_k)\|^2\right] + d\eta^2\varsigma_*^2\\
&  -2\eta \E\left[ f(x_k)-f(x^*)+\dfrac{1}{2L}\|\nabla f(x_k)\|^2
\right]  -2\eta \lambda_k \E\left[f(x_k)-f(x_{k-1})\right]\\
\le& \E\left[\| x_{k} -x^* + \lambda_k m_k\|^2\right] + \eta\left(\eta-\dfrac{1}{L}\right)\E\left[\|\nabla f(x_k)\|^2\right]+ d\eta^2\varsigma_*^2\\
&  -2\eta (1+\lambda_k) \E\left[f(x_k)-f(x^*)\right] + 2\eta \lambda_k\E\left[f(x_{k-1})-f(x^*)\right].\\
 \end{align*}
 
Then, let $\eta\le 1/L$, note that $\E\left[f(x_k)-f(x^*)\right]\geq 0, \forall k$ and assume $\lambda_k \le \lambda_{k-1}+1$. As a result, we have
\begin{multline*}
    \E\left[\eta (1+\lambda_k) (f(x_k)-f(x^*)) + \frac{1}{2}\| x_{k+1}- x^* + \lambda_{k+1}m_{k+1} \|^2\right]\\ \le \E\left[\eta (1+\lambda_{k-1})(f(x_{k-1})-f(x^*)) + \frac{1}{2}\| x_{k} -x^* + \lambda_k m_k\|^2\right]+ \frac{d\eta^2\varsigma_*^2}{2}.
\end{multline*}

Recalling the definition of our Lyapunov function in Eq.~\eqref{eq:lyapunov_discrete} and choosing $r_k := 1+\lambda_k$, the last inequality reads as $\E[\EE_k-\EE_{k-1}]\le \frac{d\eta^2}{2}\varsigma_*^2$ for $k\ge 1$ and by a telescoping sum argument we obtain \begin{equation*}
 \E[\EE_k-\EE_{0}]\le \frac{d\eta^2\varsigma_*^2}{2} (k-1) .  
\end{equation*}

That is, 
\begin{multline*}
    \E\left[\eta (1+\lambda_k) (f(x_k)-f(x^*))\right]\le\E\left[\eta (1+\lambda_k) (f(x_k)-f(x^*)) + \frac{1}{2}\| x_{k+1}- x^* + \lambda_{k+1}m_{k+1} \|^2\right]\\ \le \E\left[\eta (1+\lambda_{0})(f(x_{k-1})-f(x^*)) + \frac{1}{2}\| x_{1} -x^* + \lambda_1 m_1\|^2\right]+ \frac{d\eta^2\varsigma_*^2}{2} (k-1).
\end{multline*}

Following the same procedure as above, we get
$$
\E\left[\frac{1}{2}\| x_{1}- x^* + \lambda_{1}m_{1} \|^2\right] \le \E\left[\frac{1}{2}\| x_{0} -x^*\|^2\right] + \frac{d\eta^2}{2}\varsigma_0^2-\eta \E\left[f(x_0)-f(x^*)\right],
$$
which can also be written as
$$
\eta \E\left[f(x_0)-f(x^*)\right] + \E\left[\| x_{1}- x^* + \lambda_{1}m_{1} \|^2\right] \le \frac{1}{2}\| x_{0} -x^*\|^2 + \frac{d\eta^2\varsigma_*^2}{2}.
$$

The proof is concluded once we set $\lambda_0=0$
\end{proof}

We now apply the lemma above to get an algorithm and a convergence rate. In particular, we want to implement polynomial memory of past gradients and still get the rate found in Thm.~\ref{prop:memory_continuous_WQC}.

\begin{theorem}
Assume \textbf{(H1)} and \textbf{(H0d)} hold. Consider the following iterative method

\begin{equation*}
        x_{k+1} = x_k + \frac{k}{k+p}(x_k-x_{k-1}) - \frac{p}{k+p} \eta \nabla f_{i_k}(x_k)
        \label{eq:poly_mem_app}
\end{equation*}

with $p\ge 2$ and $\eta\le \frac{p-1}{pL}$. We have
$$\E[f(x_k)-f(x^*)] \le \underbrace{\frac{(p-1)^2\|x_0-x^*\|^2}{2 \eta p (k+p-1)} }_{\text{rate of convergence to suboptimal sol.}} + \underbrace{\frac{1}{2}d \eta \varsigma_*^2 p}_{\text{suboptimality}}.$$
Moreover, the resulting update direction can be written as $x_{k+1}-x_k = -\eta\sum_{i=0}^k w(i,k)\nabla f(x_i)$ with $\sum_{i=0}^k w(i,k) = 1$ and $w(\cdot,k):\{0,\dots,k\}\to \R$ behaving like a $(p-1)$-th order polynomial, that is $w(i,k) \sim i^{p-1}$, for all $k$.
\label{thm:discretization}
\end{theorem}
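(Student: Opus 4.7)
The plan is to invoke the Discrete-Time Master Inequality (Lemma~\ref{lemma:master_discrete}) with a specific sequence $\{\lambda_k\}$ chosen so that its prescription $\beta_k = \lambda_k/(\lambda_{k+1}+1)$ and $\delta_k = 1/(\lambda_{k+1}+1)$ reproduces the theorem's recursion with $\beta_k = k/(k+p)$ and $\delta_k = p/(k+p)$. Since $\beta_k/\delta_k = \lambda_k$, the natural affine choice is $\lambda_k = (k-1)/p$ (up to an index shift matching the initial convention $\lambda_0 = 0$), which yields $r_k = \eta(\lambda_k+1) = \eta(k+p-1)/p$. Substituting into the master bound gives, to leading order,
$$\E[f(x_k)-f(x^*)] \le \frac{p\,\|x_0-x^*\|^2}{2\eta(k+p-1)} + \frac{d\,\eta\, \varsigma_*^2\, p\, k}{2(k+p-1)},$$
whose asymptotic floor is precisely the stated $\tfrac{1}{2}pd\eta\varsigma_*^2$. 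Recovering the exact leading coefficient $(p-1)^2/p$ and justifying the sharper step-size condition $\eta\le (p-1)/(pL)$ (in place of the lemma's $\eta\le 1/L$) will require refining the descent step inside the master-lemma proof: under the tighter stepsize, the slack term $\eta(\eta-1/L)\|\nabla f(x_k)\|^2$ is strictly negative with margin $\eta/(p-1)$, and this surplus should be turned into an additional shrinkage of $r_k$ without breaking the telescoping structure. The side-conditions $p\ge 2$ and $\lambda_k\le \lambda_{k-1}+1$ are then immediate.

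For the weighted-sum representation, I would unroll the recursion inductively on $k$, starting from $x_{-1}=x_0$. The identity $x_{k+1}-x_k = \beta_k(x_k-x_{k-1}) - \delta_k\eta \nabla f(x_k)$ enforces the update rule $w(i,k+1) = \beta_k\, w(i,k)$ for $i<k$ and $w(k,k+1) = \delta_k$. Plugging in $\beta_k=k/(k+p)$ and $\delta_k=p/(k+p)$, this closes up as
$$w(i,k) \;=\; \frac{p}{i+p}\prod_{j=i+1}^{k}\frac{j}{j+p}.$$
Normalization $\sum_{i=0}^{k} w(i,k)=1$ follows inductively from the identity $\delta_k+\beta_k=1$. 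Rewriting the product via Gamma functions, $\prod_{j=i+1}^{k}\tfrac{j}{j+p} = \tfrac{\Gamma(k+1)\,\Gamma(i+p+1)}{\Gamma(i+1)\,\Gamma(k+p+1)}$, and applying Stirling for large $k$ with $i/k$ bounded away from $0$, yields $w(i,k)\sim p\, i^{p-1}/k^p$. This is precisely the discrete-time analogue of the continuous-time weight $\dot\m(s)/\m(t) = ps^{p-1}/t^p$ for $\m(t)=t^p$, confirming that the algorithm faithfully realizes polynomial forgetting of order $p-1$.

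The main obstacle I anticipate is the first part: obtaining the sharp constant $(p-1)^2/p$ rather than the crude $p$. This requires a quantitative strengthening of the master lemma that exploits the gap between $\eta$ and $1/L$ to buy an extra factor, and it is essential in order to match the continuous-time rate of Theorem~\ref{prop:memory_continuous_WQC} term-for-term. A secondary, more routine point is controlling $w(i,k)$ uniformly for $i$ close to $k$, where the Gamma asymptotics are tightest but still sufficient to give the claimed polynomial growth.
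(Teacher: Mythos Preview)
Your treatment of the weighted-sum representation is correct and essentially matches the paper's; in fact your inductive proof of $\sum_{i=0}^k w(i,k)=1$ via $\beta_k+\delta_k=1$ is a bit cleaner than the paper's direct combinatorial summation.

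The convergence-rate part, however, has a genuine gap, and your proposed fix misdiagnoses it. In the framework of Lemma~\ref{lemma:master_discrete} one always has
\[
\beta_k+\delta_k \;=\; \frac{\lambda_k+1}{\lambda_{k+1}+1},
\]
which is strictly less than $1$ whenever $\{\lambda_k\}$ is increasing. But the theorem's recursion has $\beta_k+\delta_k = \tfrac{k}{k+p}+\tfrac{p}{k+p}=1$. Hence \emph{no} choice of $\lambda_k$ can make the master lemma apply directly to the theorem's algorithm: your candidate $\lambda_k=(k-1)/p$ (or $k/p$ after your index shift) produces $\beta_k=(k-1)/(k+p)$, not $k/(k+p)$, so the bound you write down is for a different iteration. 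Exploiting the slack in the descent term $\eta(\eta-1/L)\|\nabla f(x_k)\|^2$ cannot repair this $\beta_k$ mismatch.

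The paper's actual device is a one-line rescaling of the learning rate. Take $\lambda_k=k/(p-1)$ (the discrete analogue of $\lambda(t)=t/(p-1)$ from Theorem~\ref{prop:memory_continuous_WQC}); then the master lemma gives $\beta_k=k/(k+p)$ exactly but $\delta_k=(p-1)/(k+p)$. Now observe that the theorem's recursion with learning rate $\eta$ is identical to this master-lemma recursion with learning rate $\eta'=\tfrac{p}{p-1}\eta$, since $\tfrac{p}{k+p}\,\eta=\tfrac{p-1}{k+p}\,\eta'$. The hypothesis $\eta'\le 1/L$ becomes precisely $\eta\le(p-1)/(pL)$, and substituting $\eta'=\tfrac{p}{p-1}\eta$ into $r_k=\eta'(k+p-1)/(p-1)$ yields $r_k=\tfrac{p\,\eta\,(k+p-1)}{(p-1)^2}$, which immediately gives the sharp constant $(p-1)^2/p$ in the first term. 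The requirement $p\ge 2$ is exactly the condition $\lambda_k=k/(p-1)\le k$ of the lemma.
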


\begin{proof}
In the context of Lemma \ref{lemma:master_discrete}, pick the continuous-time inspired (see definition of $\lambda(t)$ in the proof of Prop.~\ref{prop:memory_continuous_WQC}) function $\lambda_k = \frac{k}{p-1}\le k$. If $r_k= \eta(\lambda_k +1) = \eta\frac{k+p-1}{p-1}$, $\beta_k = \frac{\lambda_k}{\lambda_{k+1}+1} = \frac{k}{k+p}$, $\delta_k = \frac{1}{\lambda_{k+1}+1} = \frac{p-1}{k+p}$, the iterates defined by
$$x_{k+1} = x_k + \frac{k}{k+p}(x_k-x_{k-1}) - \frac{p-1}{k+p} \eta \nabla f_{i_k}(x_k)$$
are such that, under $\eta\le 1/L$,
$$\E[f(x_k)-f(x^*)] \le  \frac{\|x_0-x^*\|^2}{2 r_k}  + \frac{d \eta^2 \varsigma_*^2}{2} \frac{k}{r_k} =\frac{(p-1)\|x_0-x^*\|^2}{2 \eta(k+p-1)} + \frac{d \eta \varsigma_*^2 (p-1)}{2} \frac{k}{(k+p-1)}.$$

The algorithm above is similar to Eq~\eqref{eq:poly_mem_app}. Indeed, if we define $\tilde\eta = \frac{p-1}{p}\eta$, then, under $\tilde\eta\le\frac{p-1}{pL}$, the iterates defined by Eq~\eqref{eq:poly_mem_app} are such that

$$\E[f(x_k)-f(x^*)] \le \frac{(p-1)^2\|x_0-x^*\|^2}{2 \tilde\eta p (k+p-1)} + \frac{1}{2}d \tilde\eta \varsigma_*^2 p \frac{k}{(k+p-1)} \le \frac{(p-1)^2\|x_0-x^*\|^2}{2 \tilde\eta p (k+p-1)}  + \frac{1}{2}d \tilde\eta \varsigma_*^2 p.$$

Let us rename $\tilde\eta$ to $\eta$; it's easy to realize by induction that
$$x_{k+1}-x_k = - \eta\sum_{j=0}^{k-1} \left(\prod_{h=j+1}^k\frac{h}{h+p}\right) \frac{p}{j+p} \nabla f(x_j) - \eta\frac{p}{k+p}\nabla f(x_k) = -\eta\sum_{j=0}^k w(j,k)\nabla f(x_j),$$
therefore, using some simple formulas \footnote{The reader can check the formula in Wolphram Alpha\textregistered :  \texttt{http://tinyurl.com/y3quchcd}} from number theory
\begin{align*}
    \sum_{j=0}^k w(j,k) &= \sum_{j=0}^{k-1} \left(\prod_{h=j+1}^k\frac{h}{h+p}\right) \frac{p}{j+p}+\frac{p}{k+p}\\
    &= \sum_{j=0}^{k-1} \frac{(j+1)(j+2)\cdots\cancel{(j+p)}}{(k+1)(k+2)\cdots(k+p)} \frac{p}{\cancel{(j+p)}} + \frac{p}{k+p}\\
    &=  \frac{p}{(k+1)(k+2)\cdots(k+p)}\sum_{j=0}^{k}(j+1)(j+2)\cdots(j+p-1),\\
\end{align*}
From the last formula, we see that indeed the weights behave like a $(p-1)$-order polynomial.
To conclude, notice that
$$\sum_{j=0}^{k}(j+1)(j+2)\cdots(j+p-1) = \frac{(k+1)(k+2)\cdots(k+p)}{p}.$$
\end{proof}

\section{Numerical simulation of memory and momentum ODEs}
\label{app:ODEs_quadratic}
We recall below some ODEs we introduced throughout the paper.
\begin{table}[ht]
\label{sample-table}
\begin{center}
\begin{tabular}{l|l|l}
Forgetting & corresponding viscosity in \ref{HB-ODE} & \ref{MG-ODE} \\
\hline
&&\\
Constant    &$\ddot X + \frac{1}{t}\dot X +\nabla f(X)=0$ & $\ddot X + \frac{1}{t}\dot X +\frac{1}{t}\nabla f(X)=0$ \\
&&\\
Linear      &$\ddot X + \frac{2}{t}\dot X +\nabla f(X)=0$ & $\ddot X + \frac{2}{t}\dot X +\frac{2}{t}\nabla f(X)=0$ \\
&&\\
Quadratic   &$\ddot X + \frac{3}{t}\dot X +\nabla f(X)=0$ & $\ddot X + \frac{3}{t}\dot X +\frac{3}{t}\nabla f(X)=0$ \\
&&\\
Cubic   &$\ddot X + \frac{4}{t}\dot X +\nabla f(X)=0$ & $\ddot X + \frac{4}{t}\dot X +\frac{4}{t}\nabla f(X)=0$ \\
&&\\
Exponential($\alpha$)   &$\ddot X + \frac{\alpha e^{\alpha t}}{e^{\alpha t}-1}\dot X +\nabla f(X)=0$ & $\ddot X + \frac{\alpha e^{\alpha t}}{e^{\alpha t}-1}\dot X +\frac{\alpha e^{\alpha t}}{e^{\alpha t}-1}\nabla f(X)=0$ \\
&$\approxeq\ddot X + \alpha\dot X +\nabla f(X)=0$ & $\approxeq\ddot X + \alpha\dot X +\alpha \nabla f(X)=0$ \\
&&\\
\hline
\end{tabular}
\end{center}
\label{tb:memory_momentum_odes}
\end{table}

Our setting is identical to the one outlined in Fig. 1 from~\cite{su2014differential}: we consider the quadratic objective $f(x_1,x_2) = 2\times10^{-2} x_1^2 + 5\times10^{-3} x_2^2$ and the numerical solution to the ODEs above starting from $X_0 = (1,1)$ and $\dot X(0) = (0,0)$. The solution is computed using the MATLAB \texttt{ode45} function, with absolute tolerance $10^{-10}$ end relative tolerance $10^{-5}$. We start integration at $t_0 =2.2204\times 10^{-16}$ (machine precision). For numerical stability, when using exponential momentum and memory, we pick $\alpha = 10$ and use the approximation $\frac{\alpha e^{\alpha t}}{e^{\alpha t}-1} \approxeq \alpha$ for large values of $t$. In the right part of the subplots, the dashed lines indicate the slope of the rates $1/t$, $1/t^2$, $1/t^3$, etc. Some comments follow.
\begin{enumerate}
    \item As noted by~\cite{su2014differential} in Thm. 8 of their paper , such second order equations can exhibit fast sublinear rates for quadratic objectives once the constant $p$ in the coefficient $p/t$ is increased.\\
    
    \item To realize the worst-case rate in the convex setting, the reader can look at the slope of on the right plot before the first inverse peak: for instance, we see in the HB-ODE with $3/t$ viscosity that the slope is aligned with $1/t^2$, while linear forgetting is aligned with $1/t$. This is predicted by the time-warping presented in App. \ref{time}.\\

    \item As predicted again by App. \ref{time}, $1/t$ viscosity has the same path as constant forgetting and $3/t$ viscosity goes along the same path as linear forgetting.
\end{enumerate}

\begin{figure}[ht]
\centering
\begin{minipage}{.5\textwidth}
  \centering
  \includegraphics[width=0.9\linewidth]{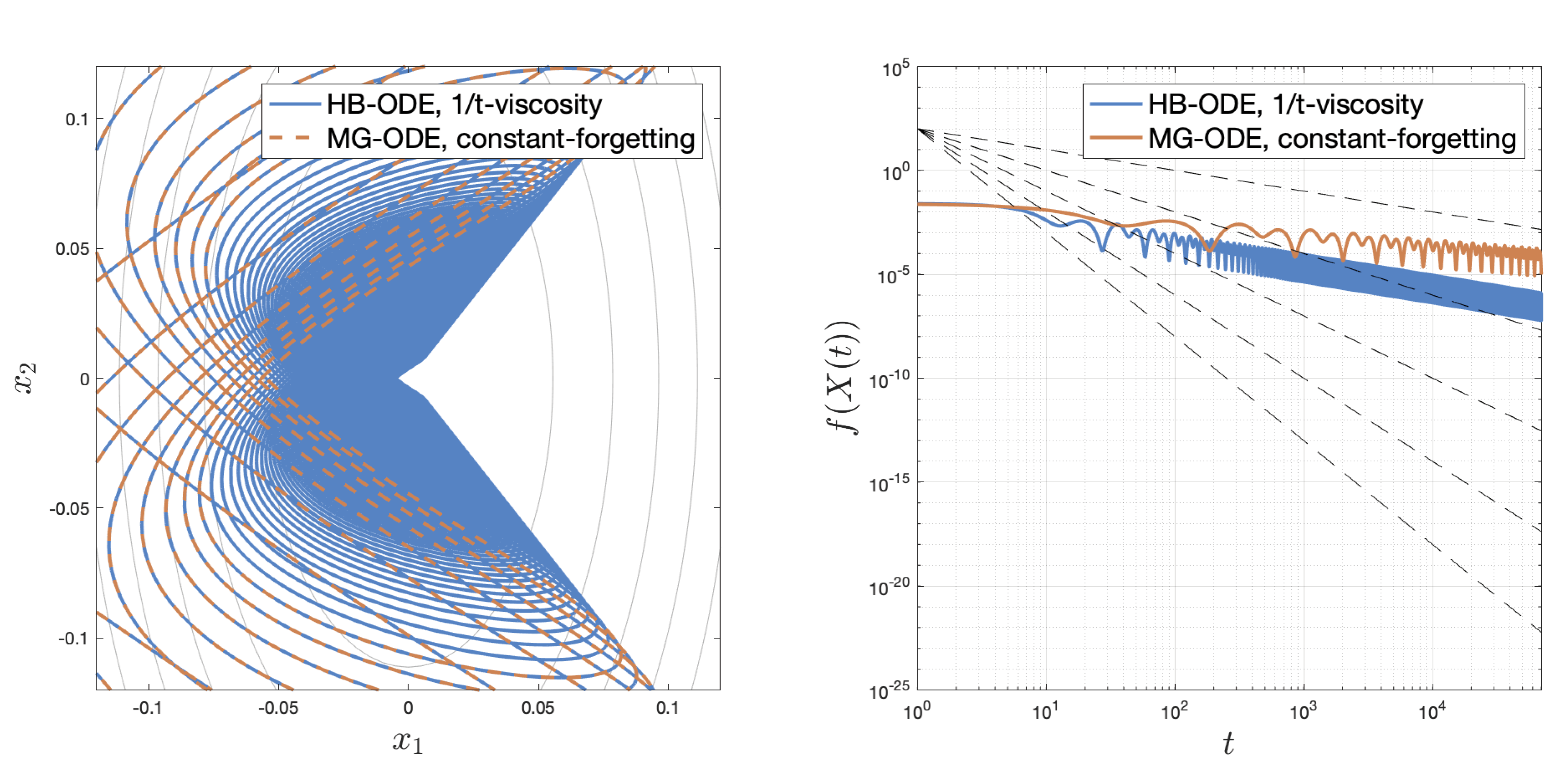}
\end{minipage}%
\begin{minipage}{.5\textwidth}
  \centering
  \includegraphics[width=0.9\linewidth]{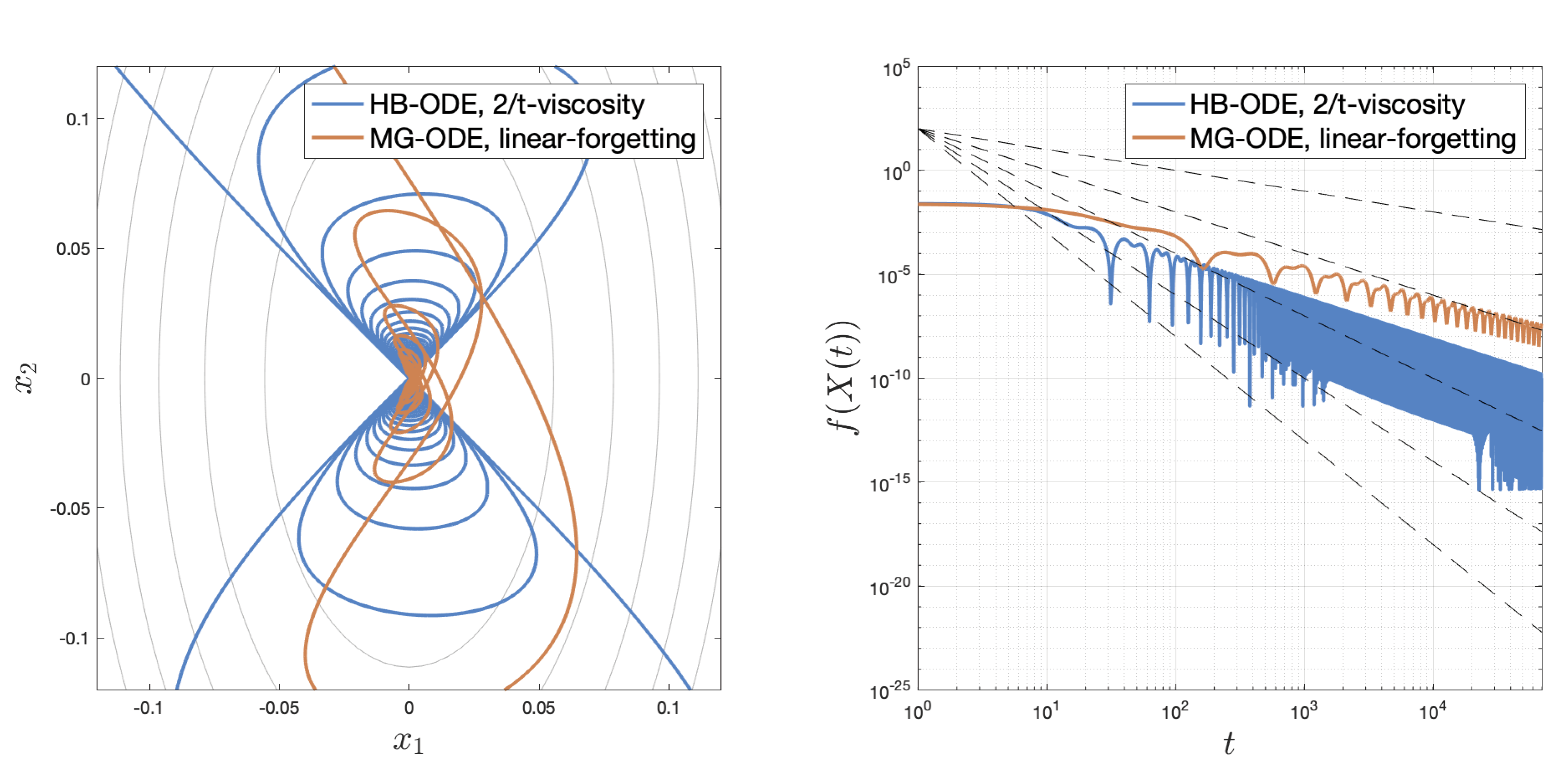}
  \end{minipage}%
\end{figure}

\begin{figure}[ht]
\centering
\begin{minipage}{.5\textwidth}
  \centering
  \includegraphics[width=0.9\linewidth]{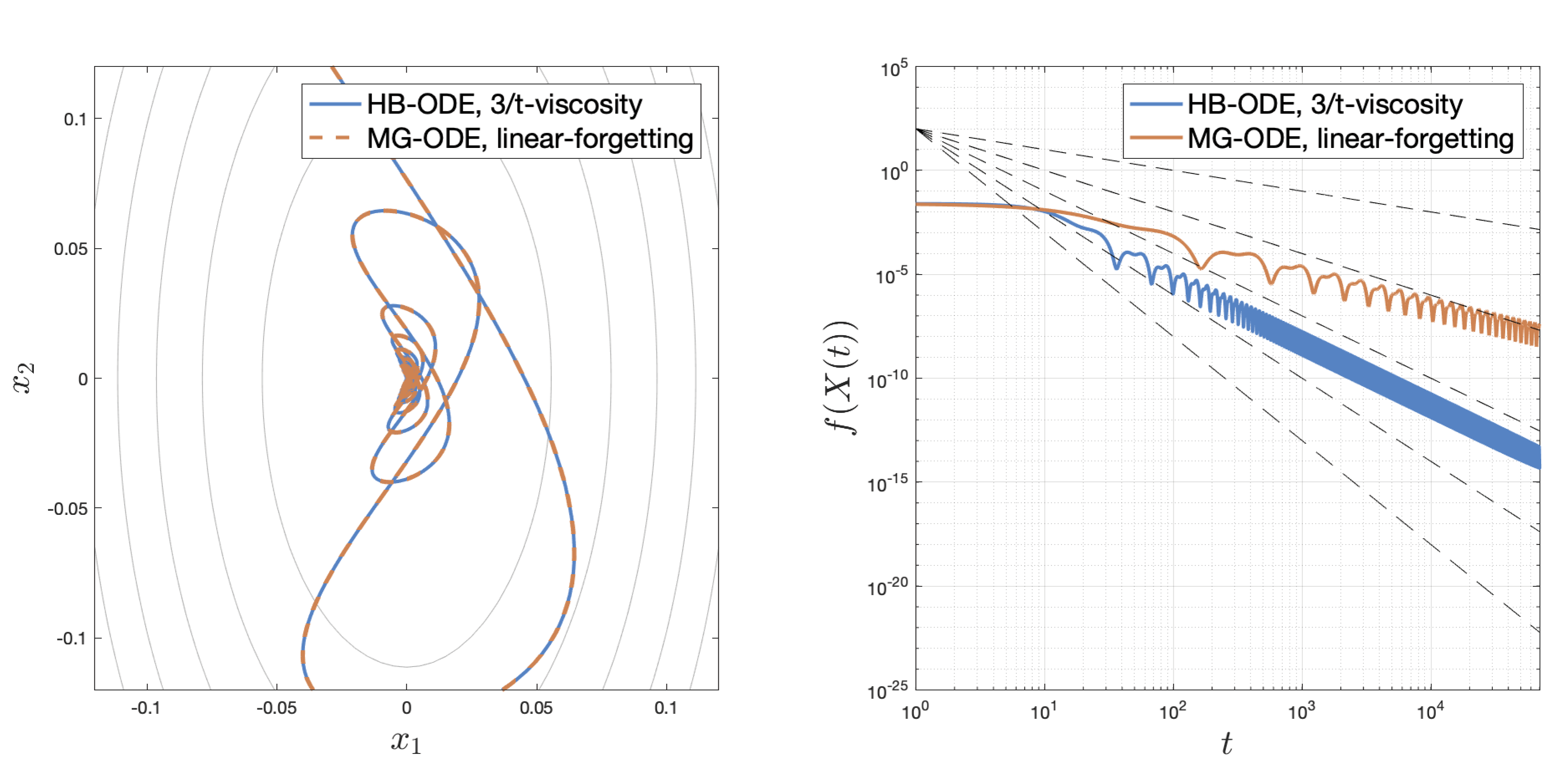}
\end{minipage}%
\begin{minipage}{.5\textwidth}
  \centering
  \includegraphics[width=0.9\linewidth]{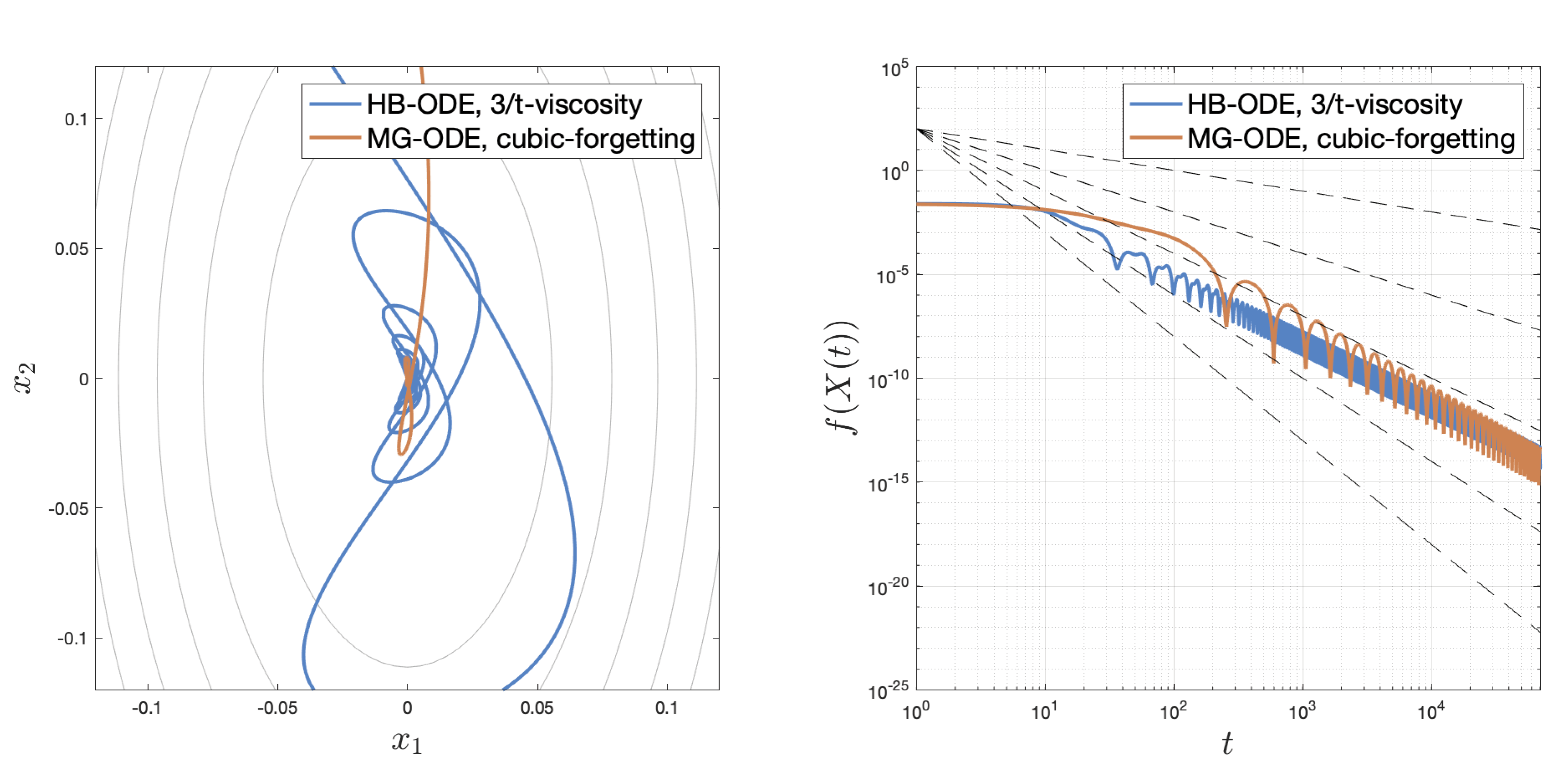}
  \end{minipage}%
\end{figure}

\section{Experiments on real-world datasets}
\subsection{Experimental setting}\label{sec:exp_setting}

\paragraph{Datasets and architecture}
We run four different sets of experiments. First, we optimize a logistic regression model on the covtype dataset ($n=464'809$) from the popular LIBSVM library. This model is strongly convex and has $d=432$ parameters. 

Secondly, we train an autoencoder introduced by \cite{hinton2006reducing} on the MNIST hand-written digits dataset ($n=60'000$). The encoder structure is $784-1000-500-250-30$ and the decoder is mirrored. Sigmoid activations are used in all but the central layer. The reconstructed images are fed pixelwise into a binary cross entropy loss. The network has a total of $2'833'000$ parameters.

Third, we optimize a simple feed-forward network on the Fashion-MNIST dataset ($n=60'000$). The network structure is $784-128-10$ with tanh activations in the hidden layer, cross entropy loss and a total of $101'770$ parameters.

Finally, we train a fairly small convnet on the CIFAR-10 dataset ($n=50'000$) taken from the official PyTorch tutorials (see here \texttt{https://pytorch.org/tutorials/beginner/blitz/cifar10\_tutorial.html}.). The total number of parameters in this network amounts to $62'006$.

Note that all neural networks models have sufficient expressive power to reach full training accuracy or (in the autoencoder case) reconstruct images very accurately. The linear model on covtype, however, achieves at most $\approx 70\%$ accuracy (see Fig.s \ref{fig:exp_acc} and \ref{fig:exp_rec}).

\paragraph{Algorithms.}
We benchmark several types of memory: (i) linear forgetting (p=2), (ii) cubic forgetting (p=4), higher degree polynomial forgetting (p=100) as well as exponential (p=e) and instantaneous forgetting (p=inf). 
Note that the last two algorithms exactly resemble Adam without adaptive preconditioning and vanilla SGD respectively. Furthermore, we also benchmark the classical Polyak Heavy Ball method (HB) as a reference point.
\paragraph{Parameters.} We run HB with a fixed momentum parameter $\beta=0.9$ across all experiments. For both SGD and HB we grid search the stepsize $\eta \in \{1,0.1,0.05,0.01,0.005,0.001,0.0005,0.0001,0.00005,0.00001\}$ and pick the best stepsize for each problem instance (reported in the corresponding legends). Interestingly, the best stepsize for HB is almost always one tenth of the SGD stepsize. The only exception is the convex logistic regression on covtype, where $\alpha_{SGD}=1$ was also the best stepsize for HB. There, we report $\alpha_{HB2}=0.1$ just for the sake of consistency. All versions of MemSGD simply run with the same stepsize as SGD, i.e. we did not grid-search the stepsize for MemSGD.

In order to assess the impact of stochasticity on SGD methods with memory, we run all algorithms in a large and a small batch setting. The large batch setting simply takes all training data points available in each dataset. For the small batch setting we chose the batch sizes as small as possible while still being able to train the networks in reasonable time. In particular, we take a mini-batch size of 16 for covtype and mnist. Fashion-mnist and Cifar-10 are run with 32 and 128 samples per iterations respectively\footnote{In fact training took much longer with smaller batch sizes for those datasets. We suspect that this is partly due to more complex optimization landscapes and partly due to less monotonicity across the data points compared to mnist and covtype.}.

 All of our experiments are run using the PyTorch library \cite{paszke2017automatic}.
 \ \\ \ \\
\begin{figure}[ht]
\centering 
          \begin{tabular}{c@{}c@{}c@{}c@{}}
          Covtype Logreg & MNIST Autoencoder & FashionMNIST MLP & CIFAR-10 CNN
          \\
            \includegraphics[width=0.23\linewidth,valign=c]{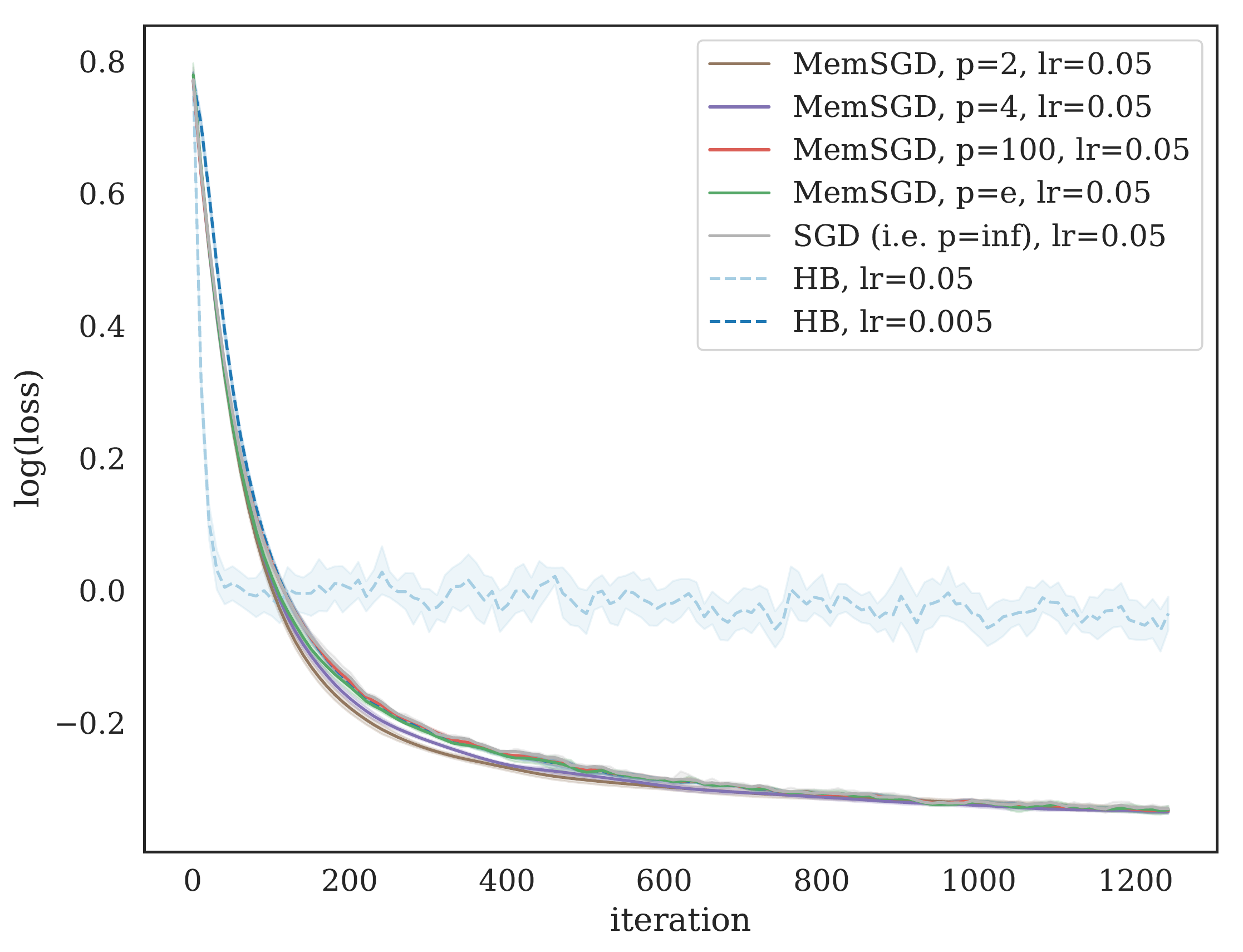}&
            \includegraphics[width=0.23\linewidth,valign=c]{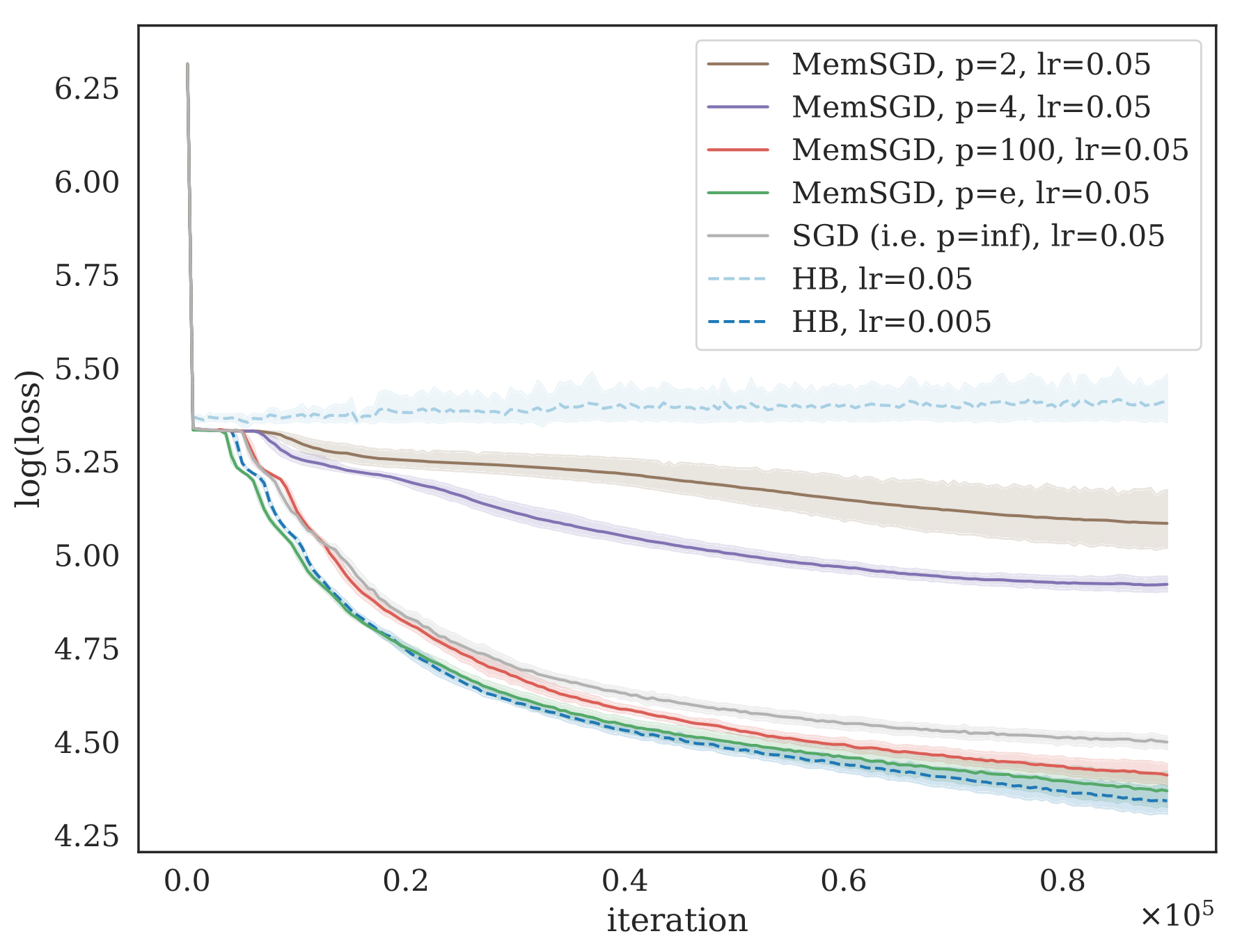}&
            \includegraphics[width=0.23\linewidth,valign=c]{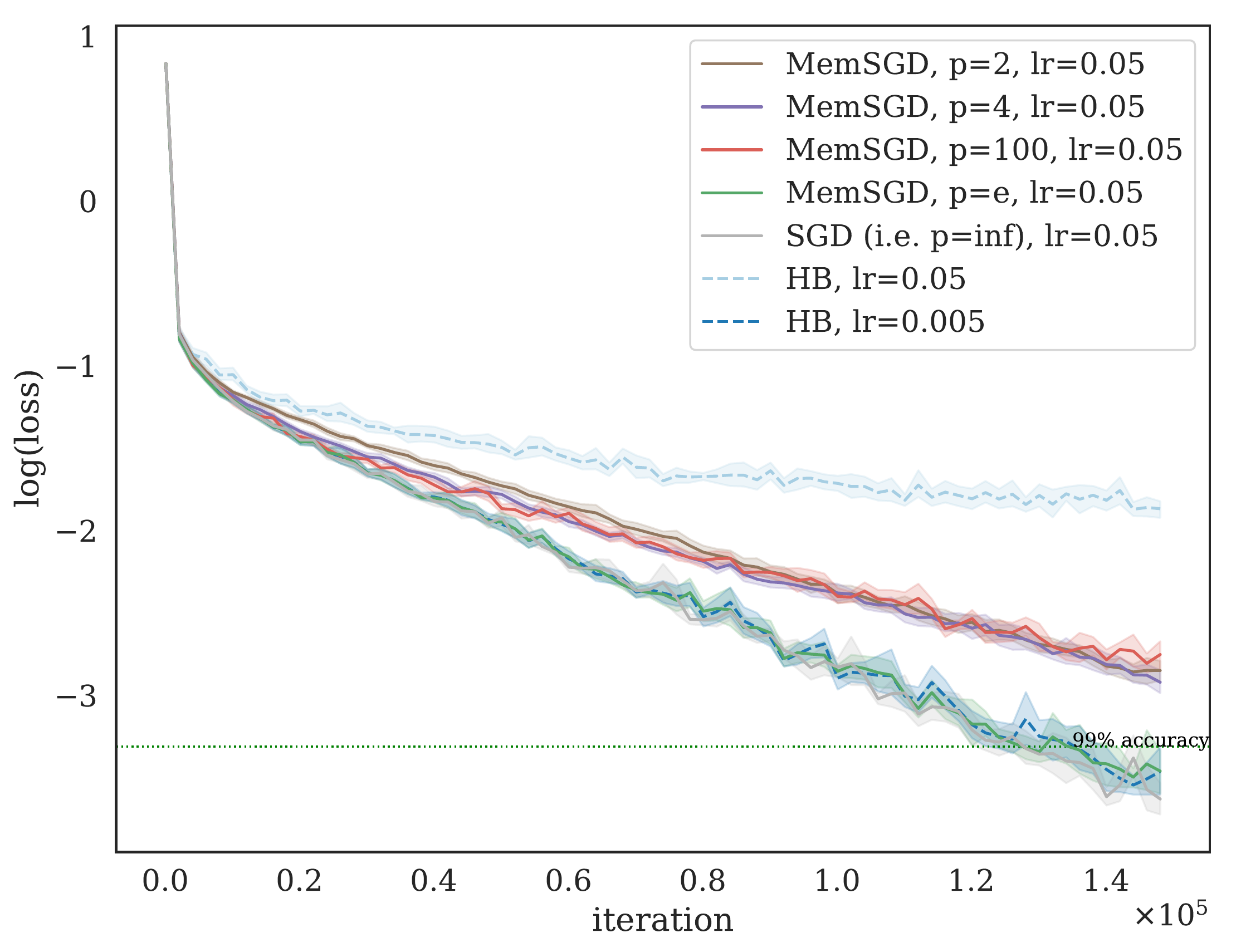}&
            \includegraphics[width=0.24\linewidth,valign=c]{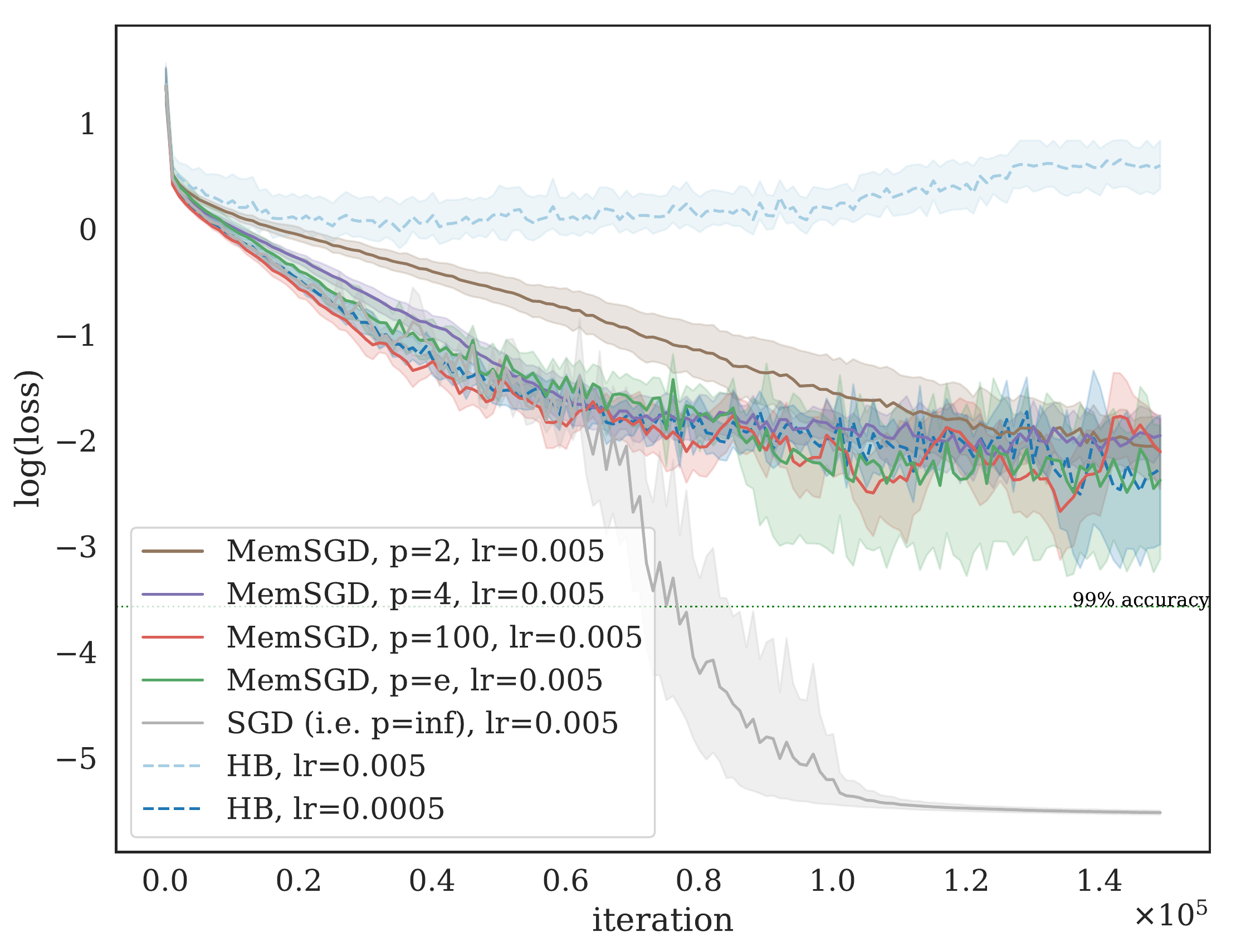}\\  
            ~
             \includegraphics[width=0.23\linewidth,valign=c]{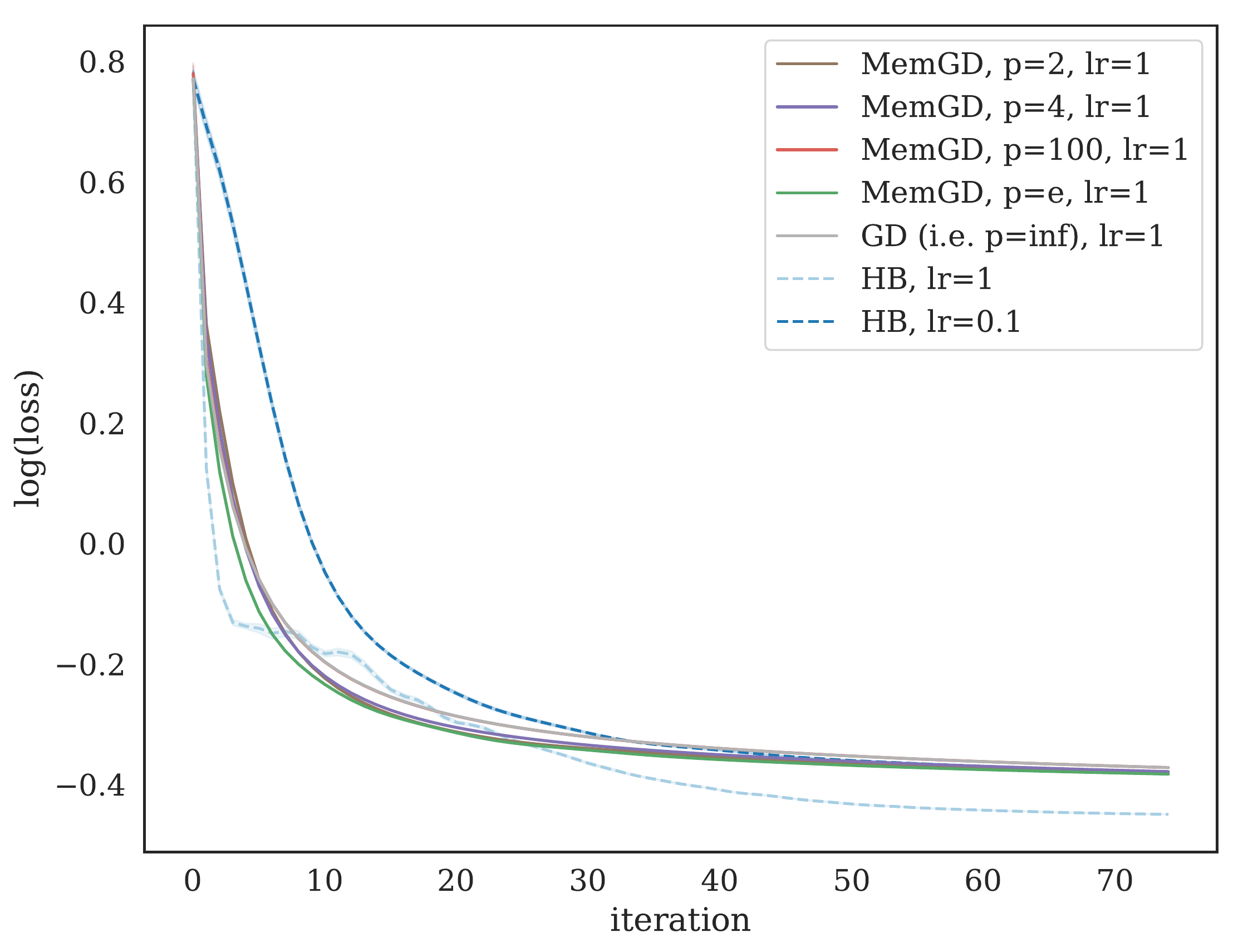}&
            \includegraphics[width=0.23\linewidth,valign=c]{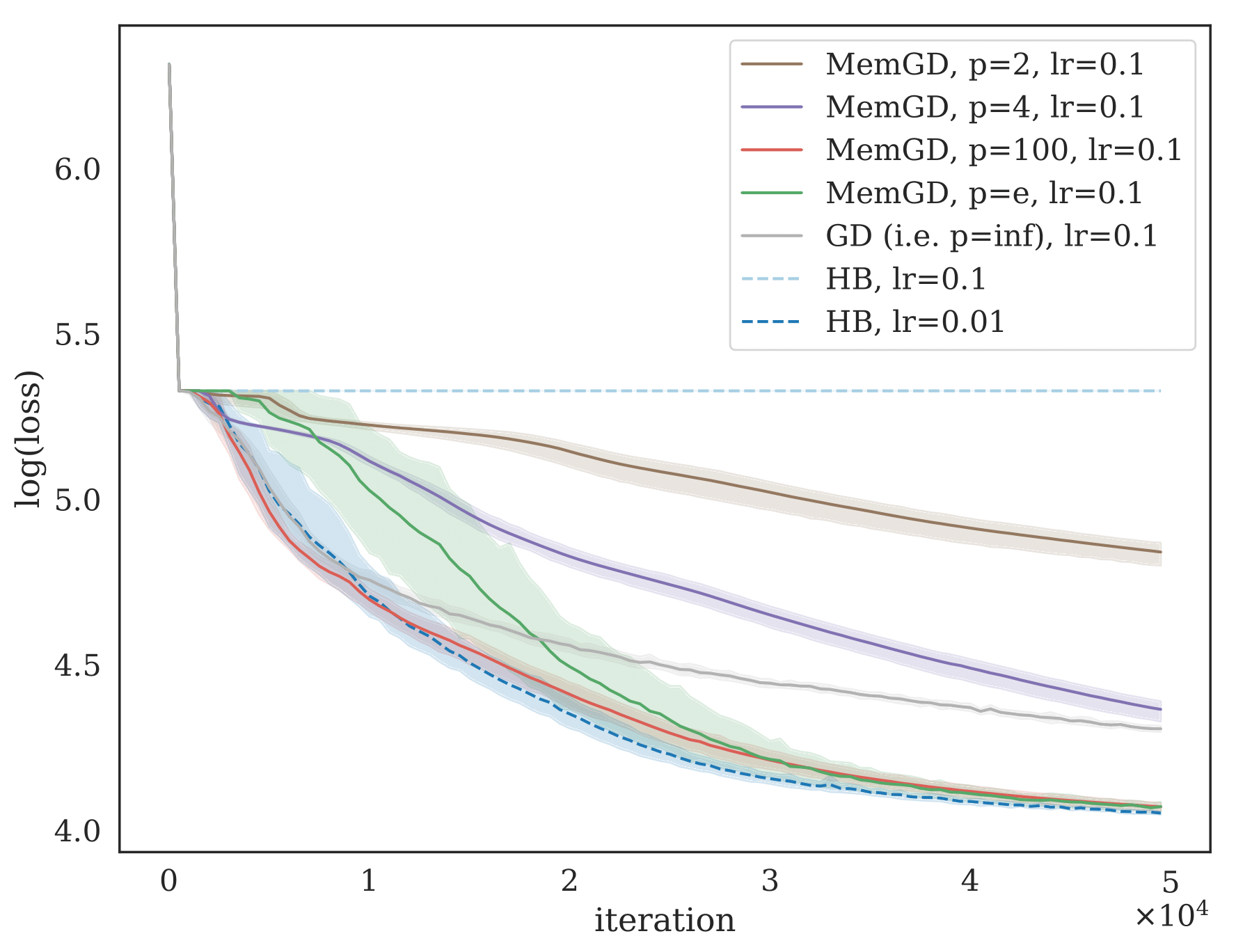}&
            \includegraphics[width=0.23\linewidth,valign=c]{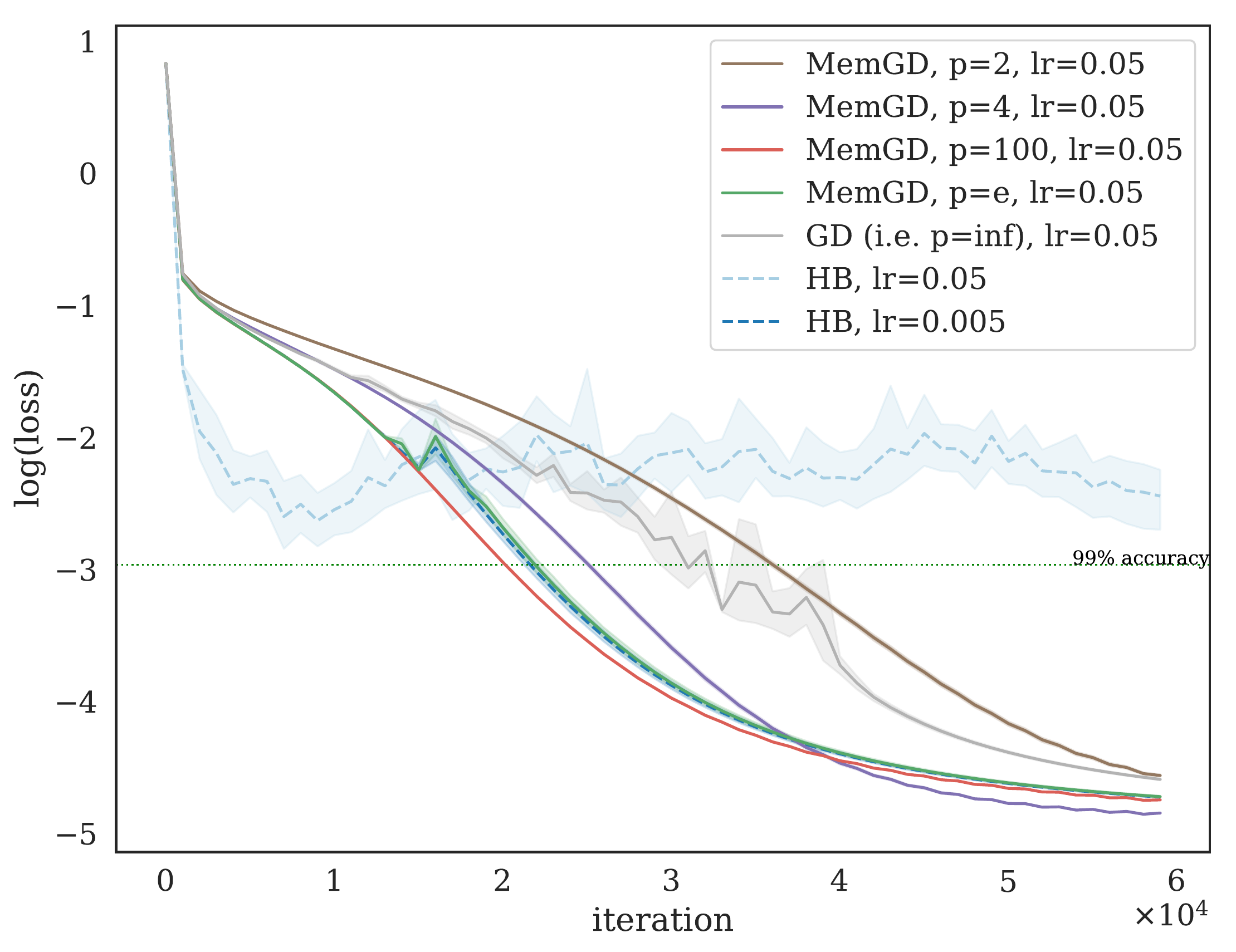}&
            \includegraphics[width=0.23\linewidth,valign=c]{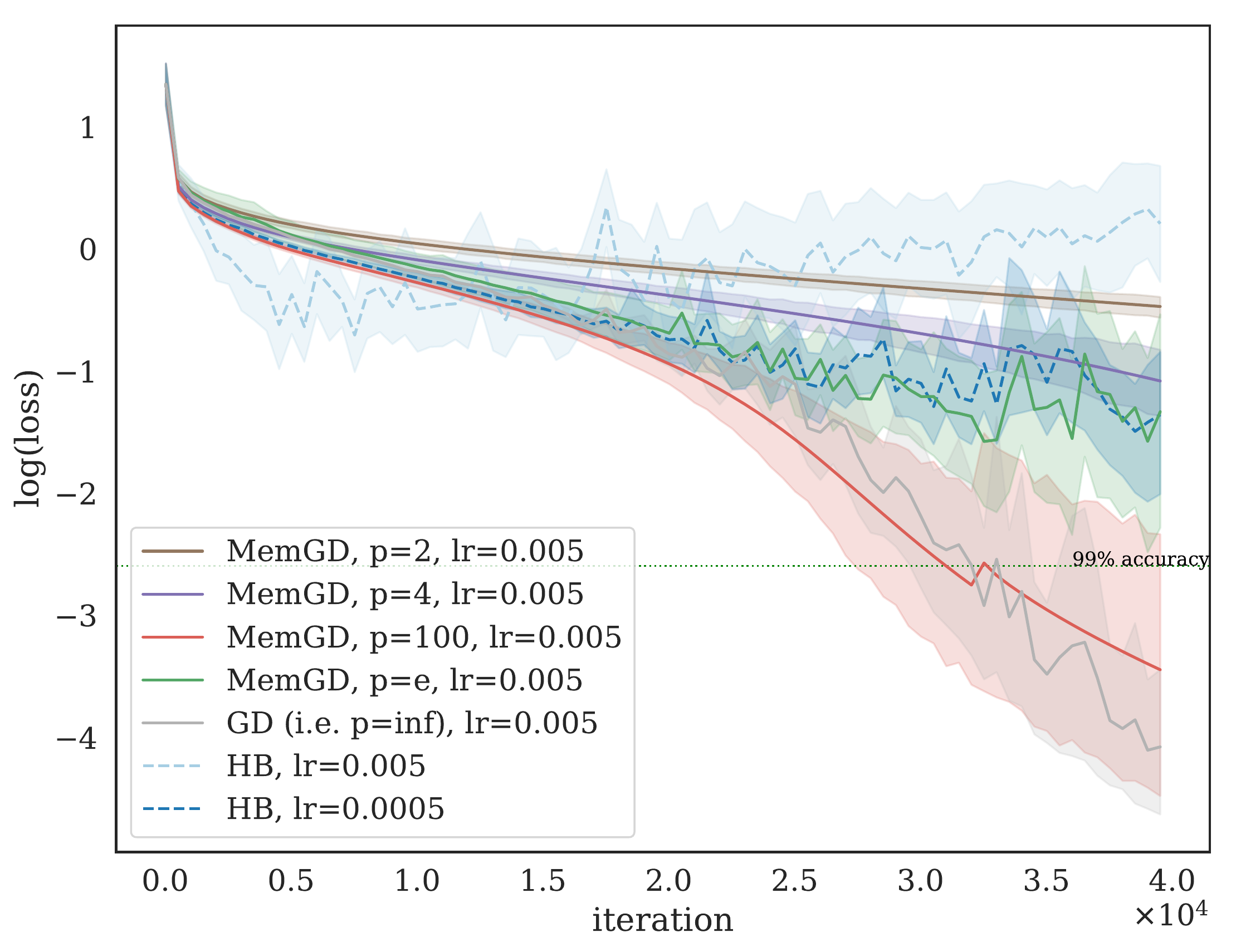}\\  
	  \end{tabular}
          \caption{Real world experiments: Log loss over iterations in mini- (top) and full batch (bottom) setting. Average and $95\%$ confidence interval of 10 runs with random initialization.}
     
\end{figure}

\begin{figure}[ht]
\centering 
          \begin{tabular}{c@{}c@{}c@{}c@{}}
          Covtype Logreg & MNIST Autoencoder & FashionMNIST MLP & CIFAR-10 CNN
          \\
            \includegraphics[width=0.23\linewidth,valign=c]{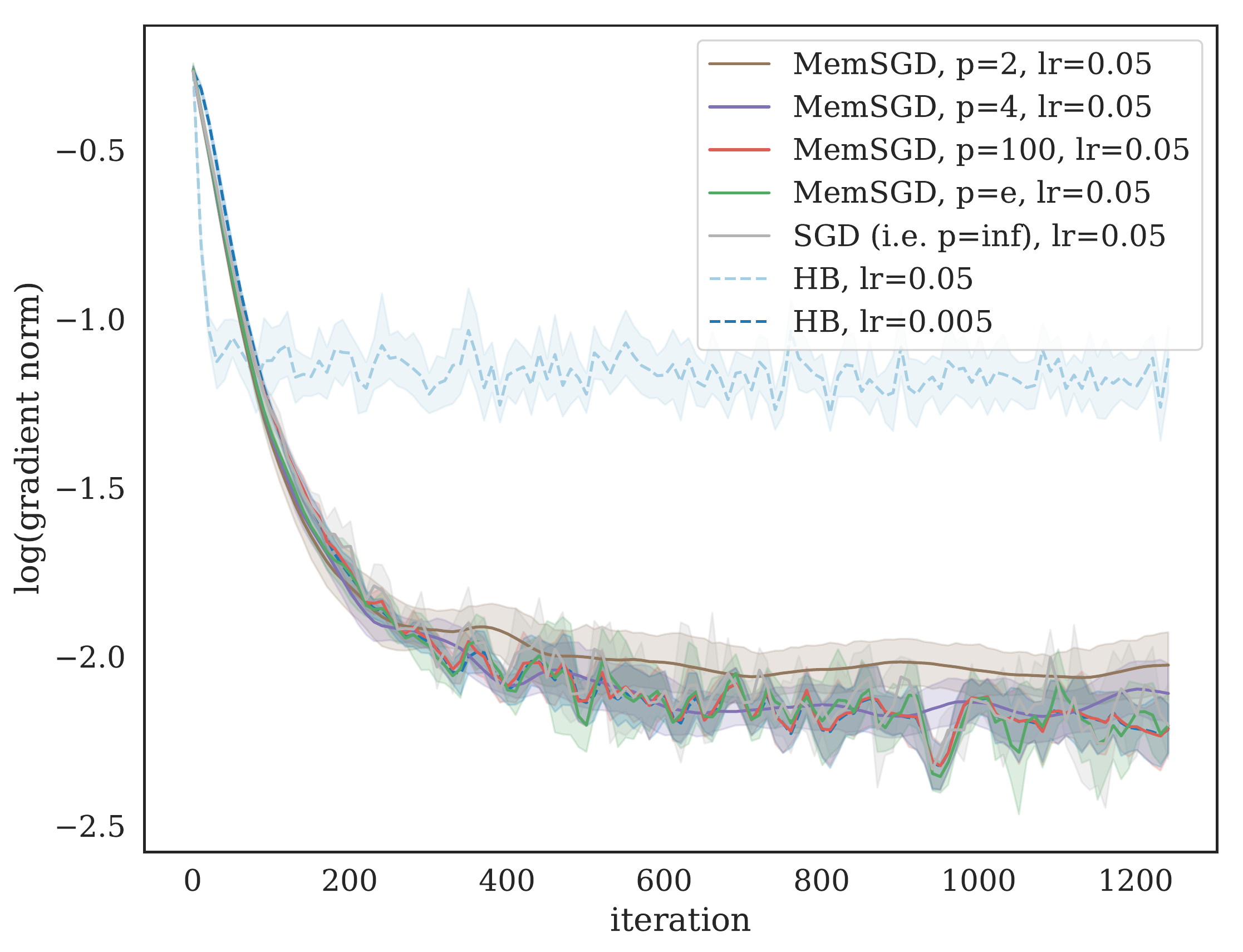}&
            \includegraphics[width=0.23\linewidth,valign=c]{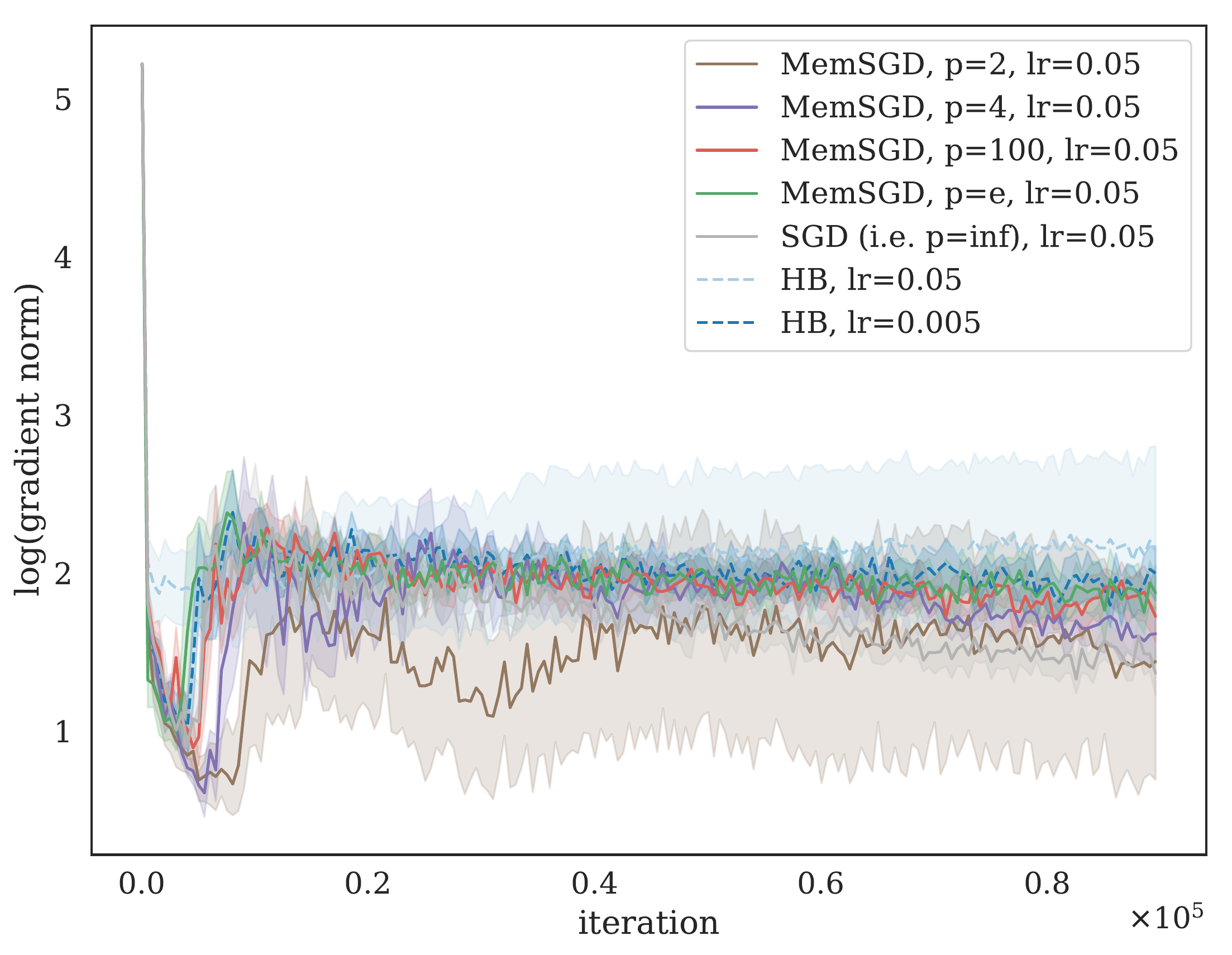}&
            \includegraphics[width=0.23\linewidth,valign=c]{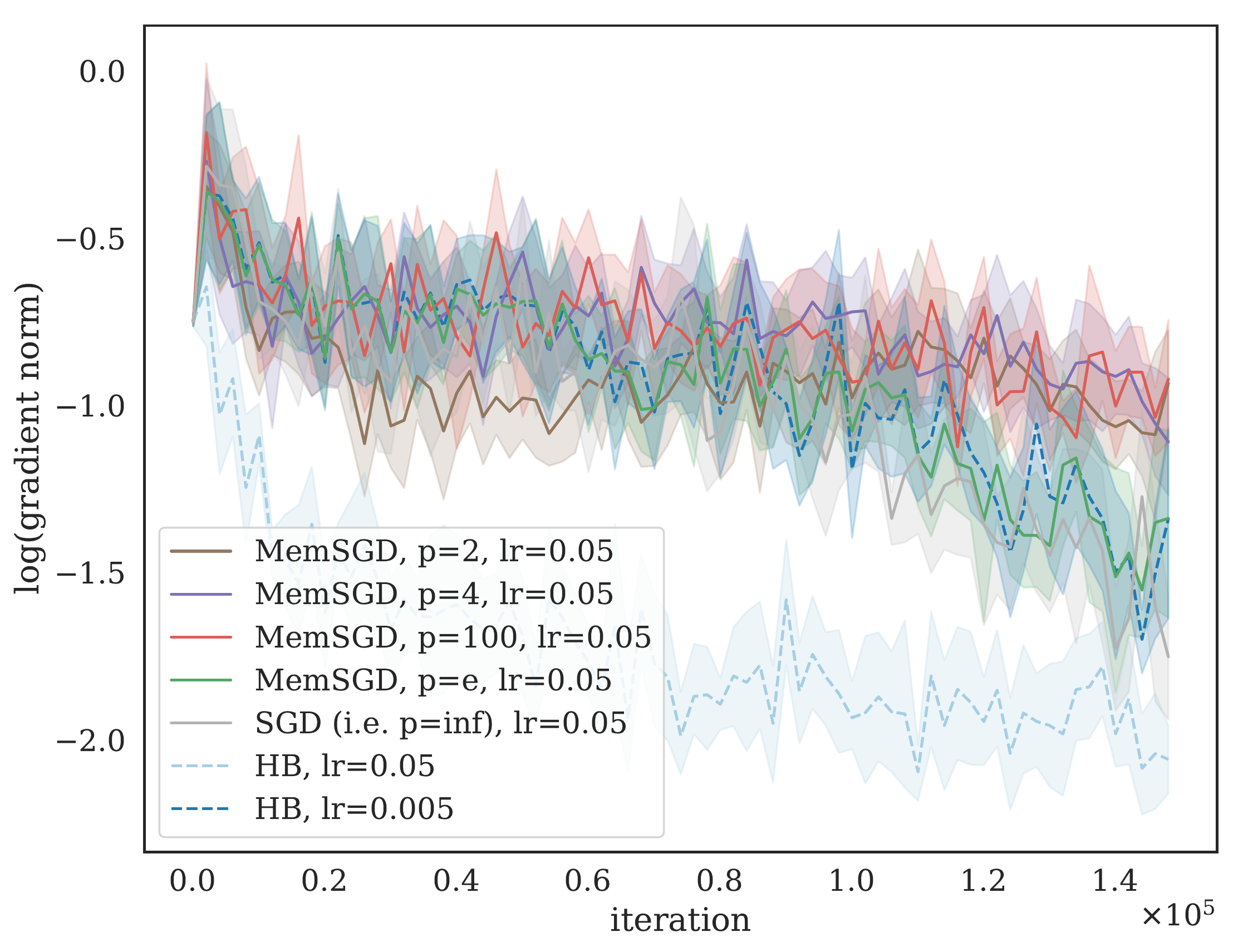}&
            \includegraphics[width=0.23\linewidth,valign=c]{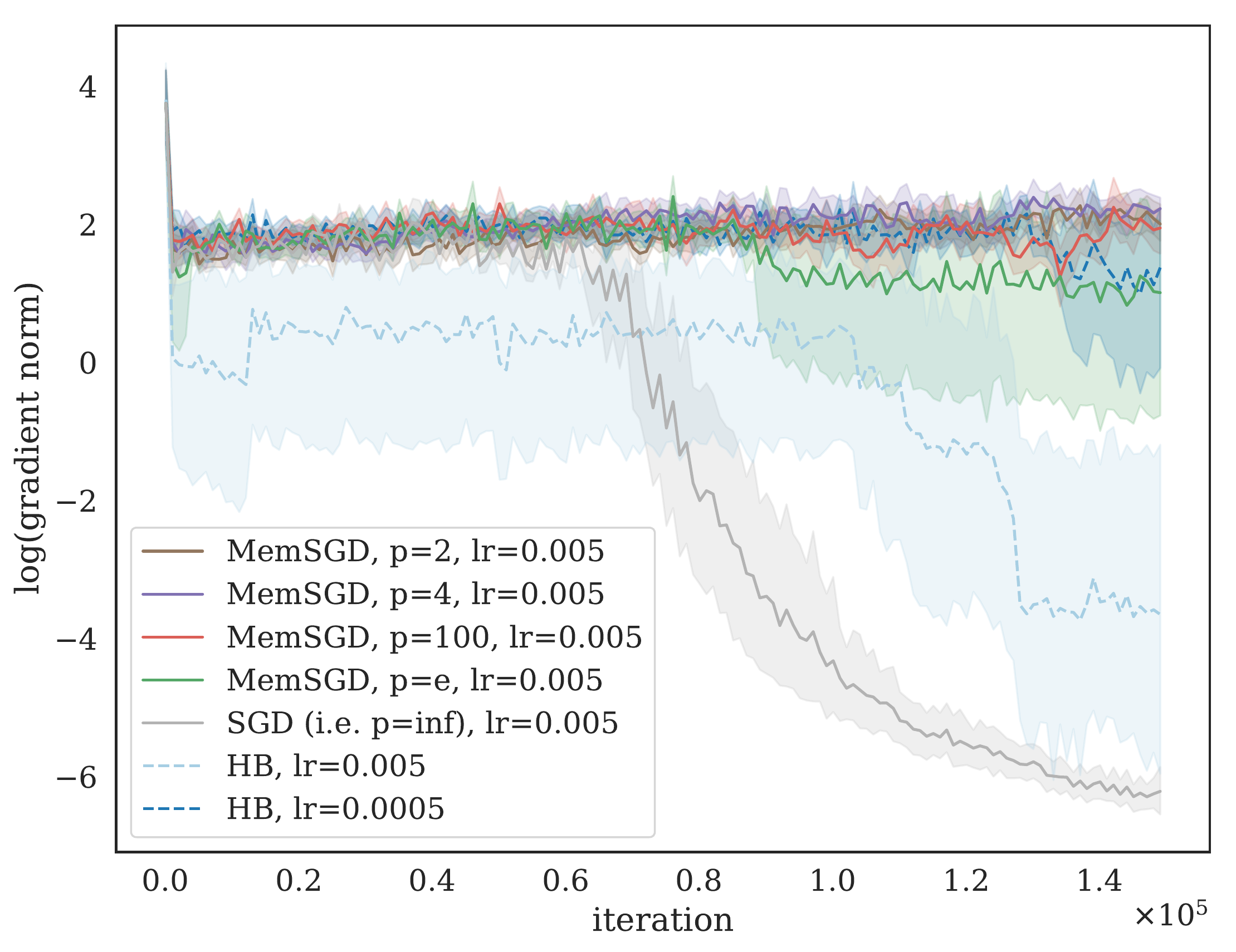}\\  
            ~
             \includegraphics[width=0.23\linewidth,valign=c]{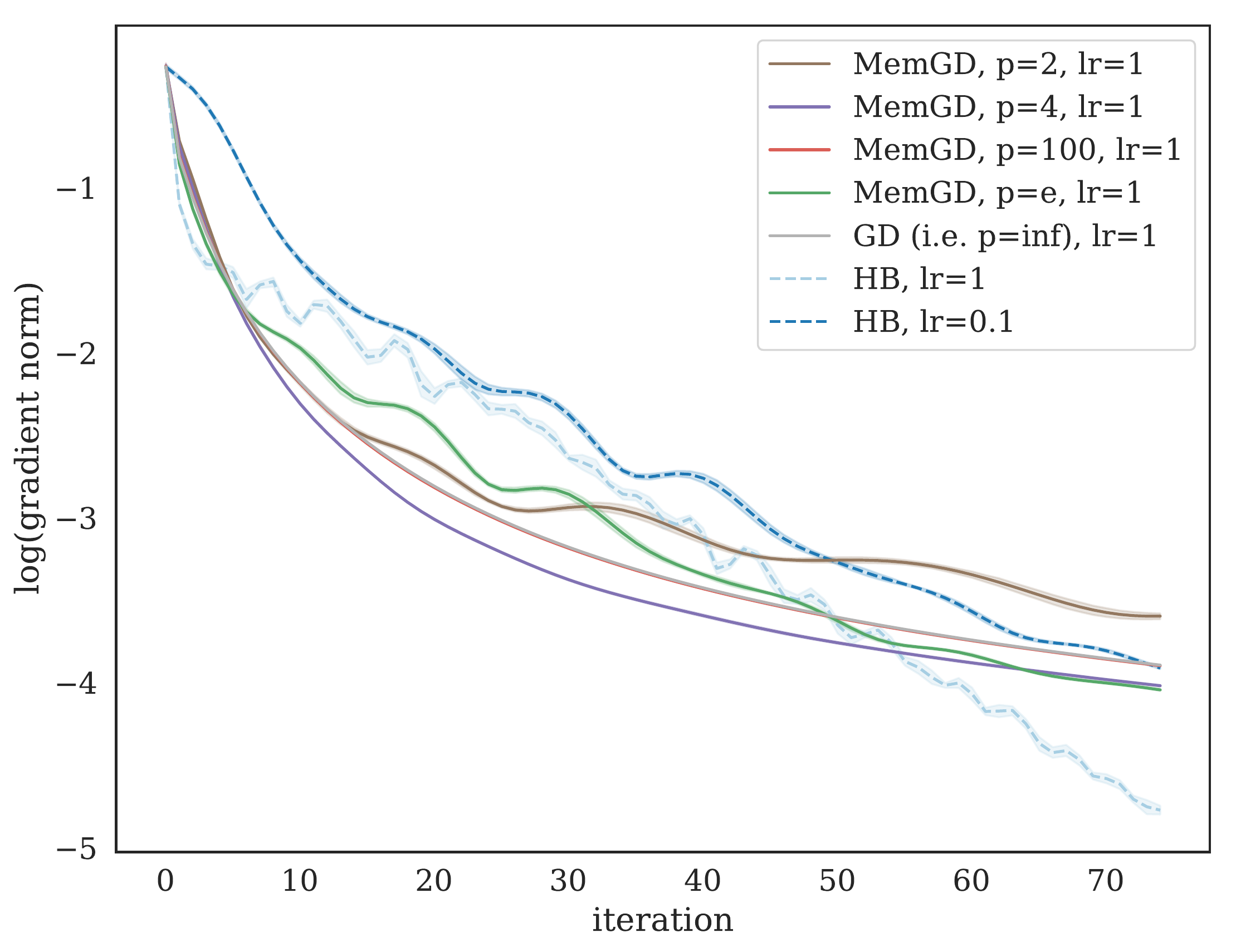}&
            \includegraphics[width=0.23\linewidth,valign=c]{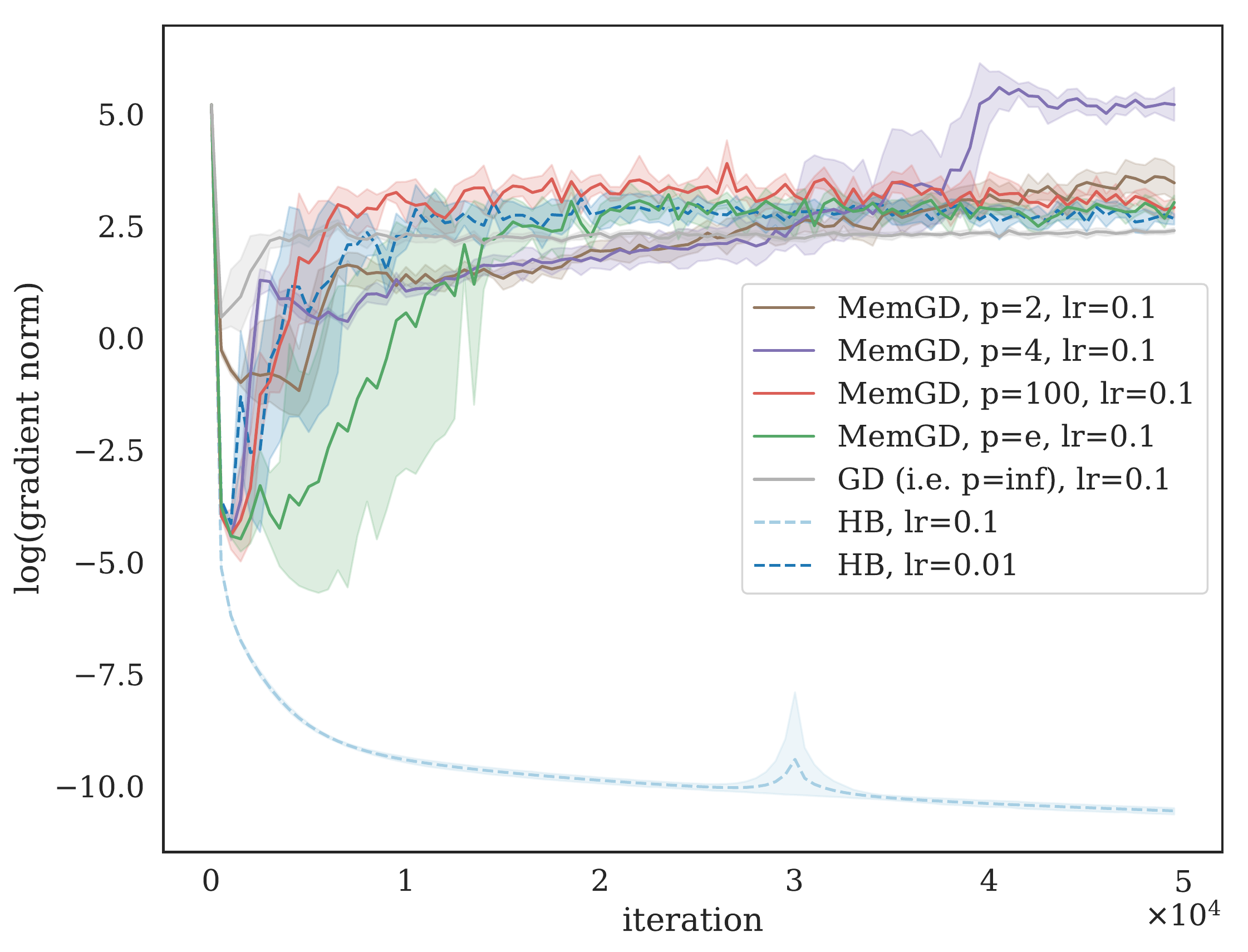}&
            \includegraphics[width=0.23\linewidth,valign=c]{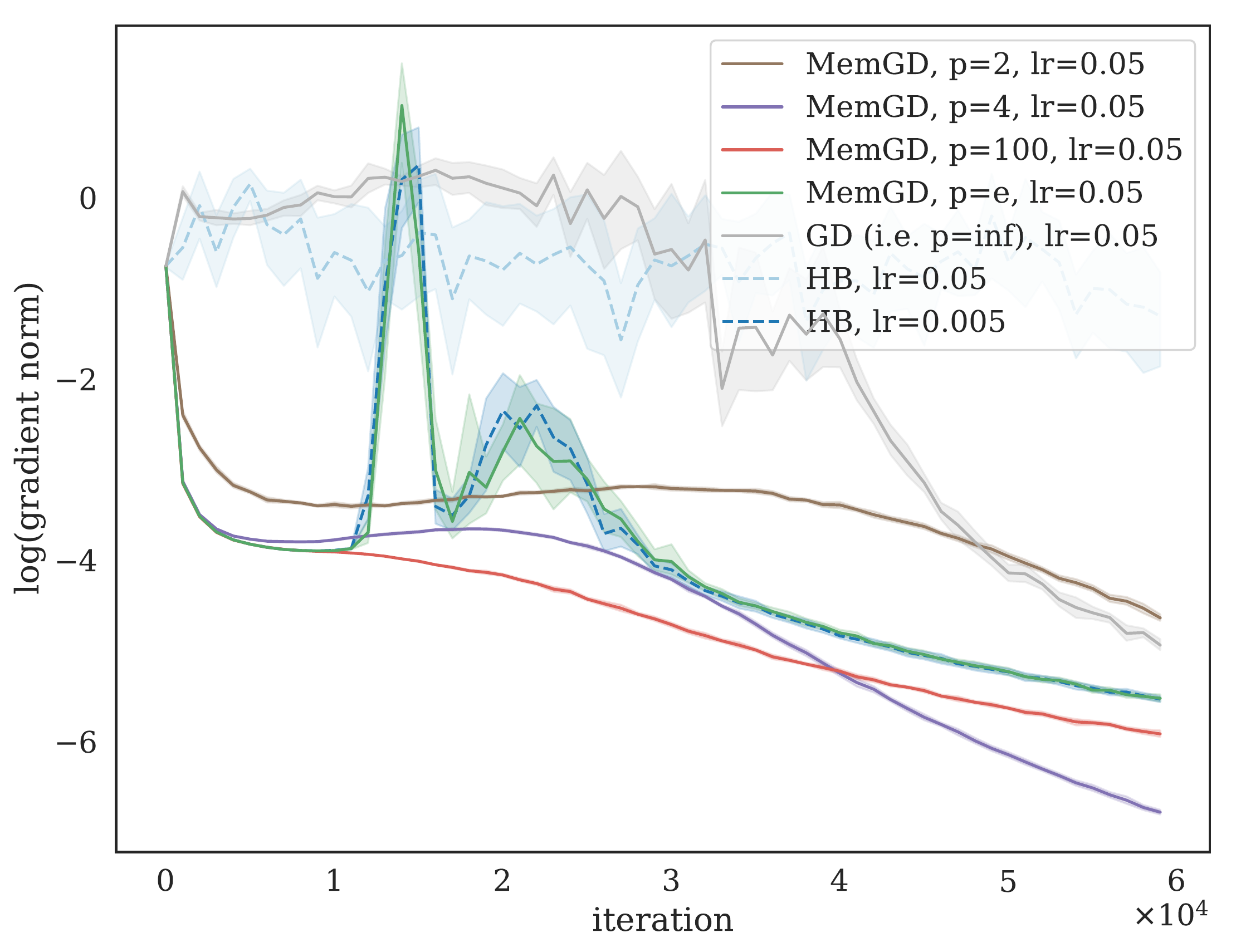}&
            \includegraphics[width=0.23\linewidth,valign=c]{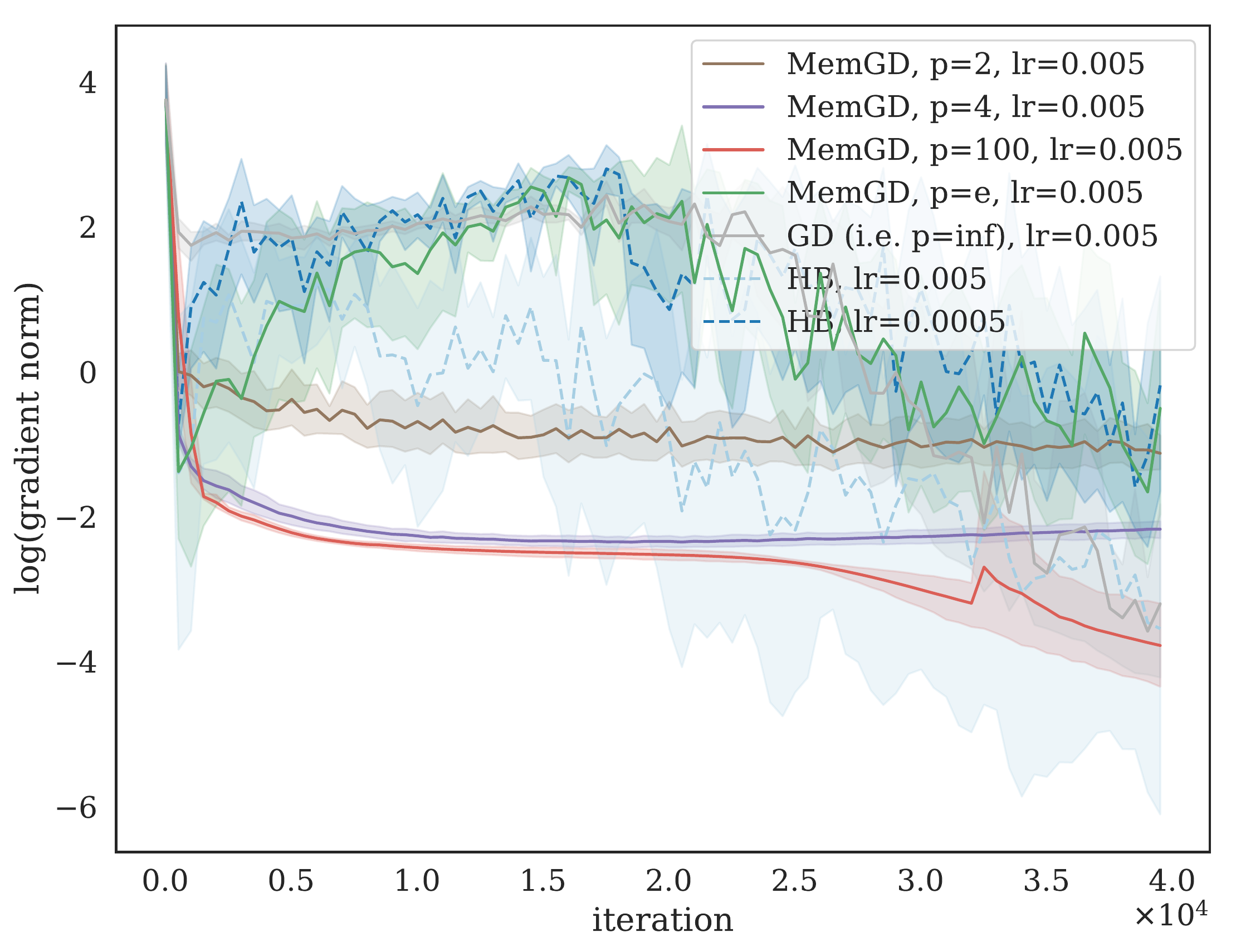}\\  
	  \end{tabular}
	  \vspace{-4mm}
          \caption{Real world experiments: Full gradient norm over iterations in mini- (top) and full batch (bottom) setting. Average and $95\%$ confidence interval of 10 runs with random initialization.}\label{fig:exp_loss}
     
\end{figure}

\begin{figure}[ht]
\centering 
          \begin{tabular}{c@{}c@{}c@{}}
          Covtype Logreg  & FashionMNIST MLP & CIFAR-10 CNN
          \\
            \includegraphics[width=0.23\linewidth,valign=c]{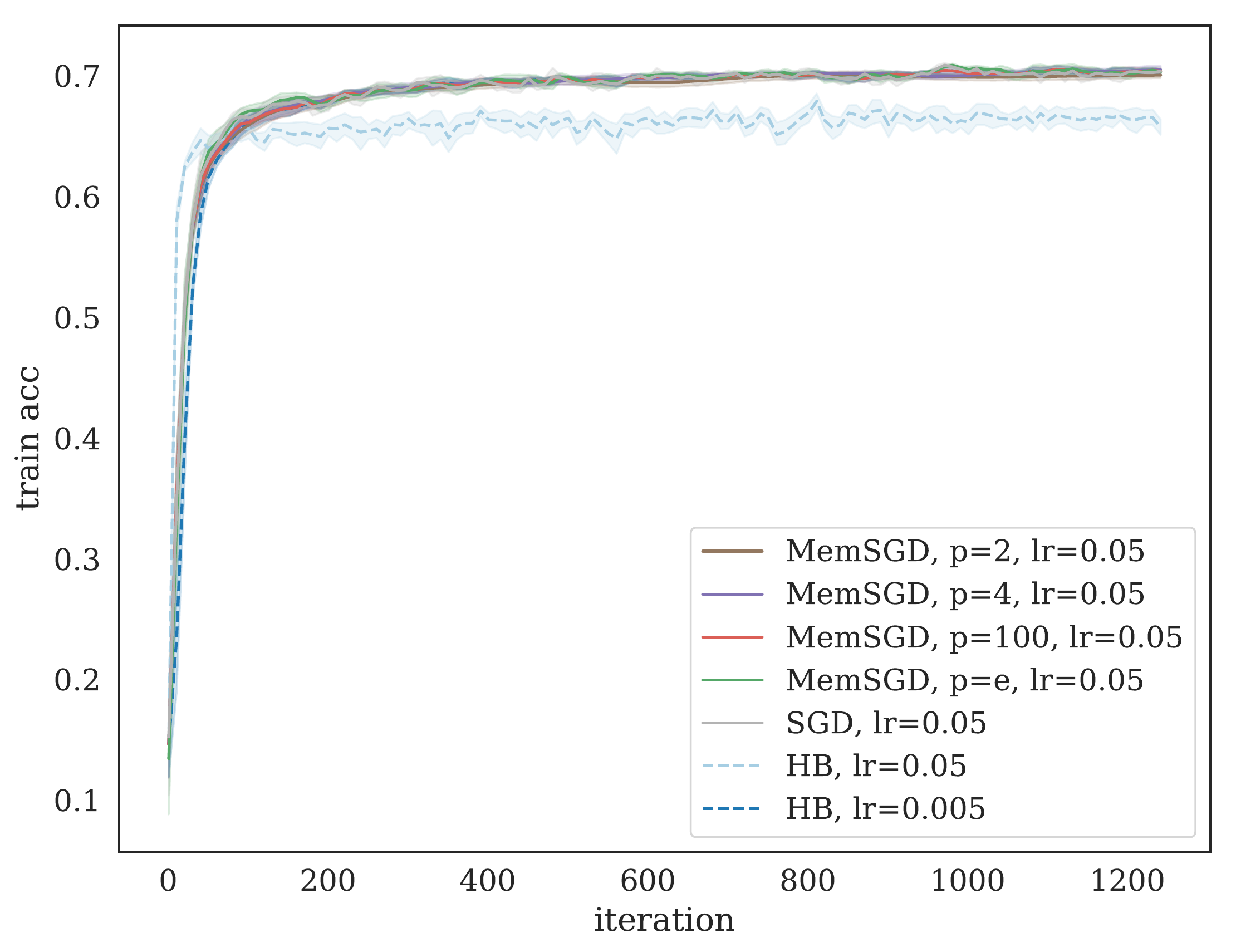}&
            \includegraphics[width=0.23\linewidth,valign=c]{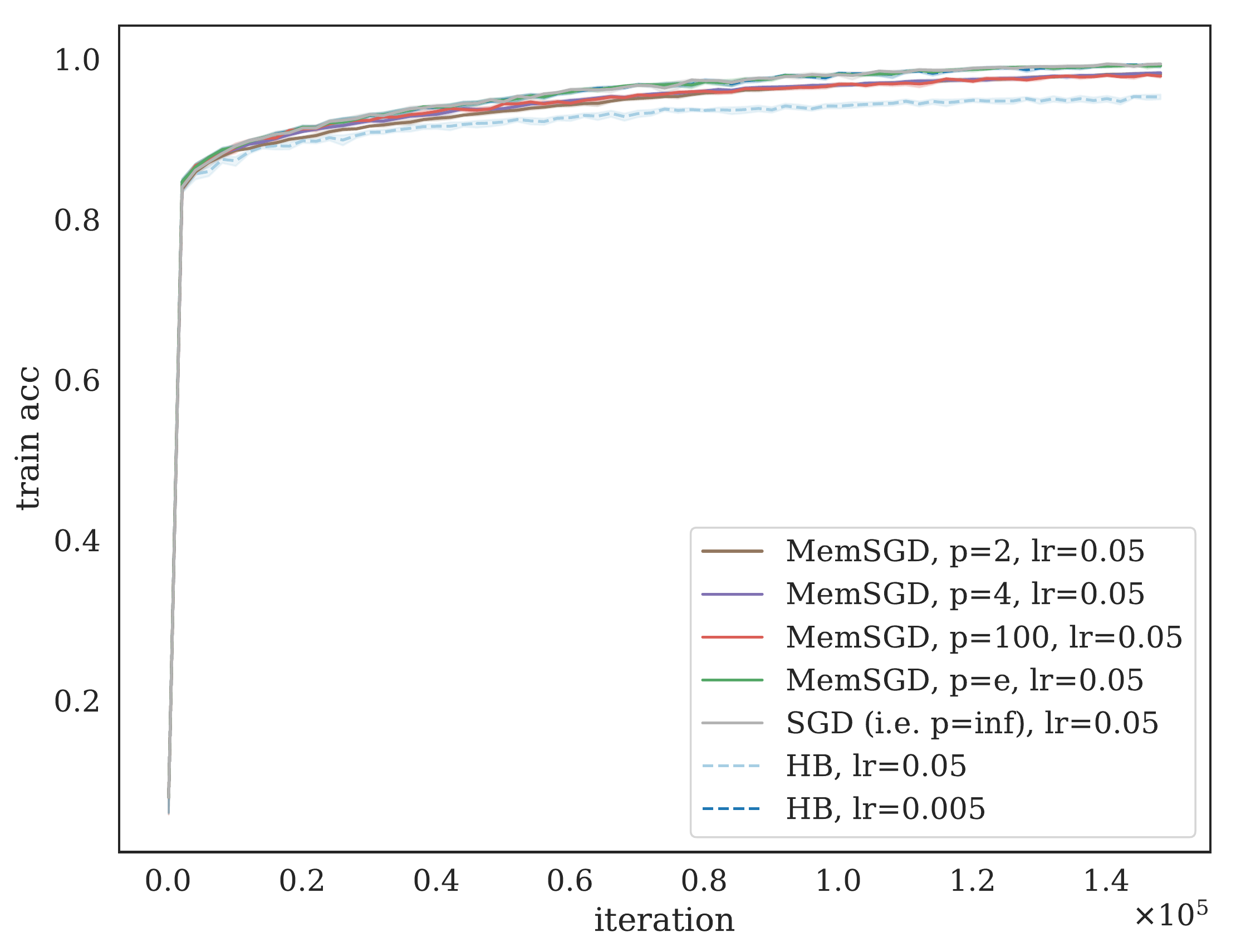}&
            \includegraphics[width=0.23\linewidth,valign=c]{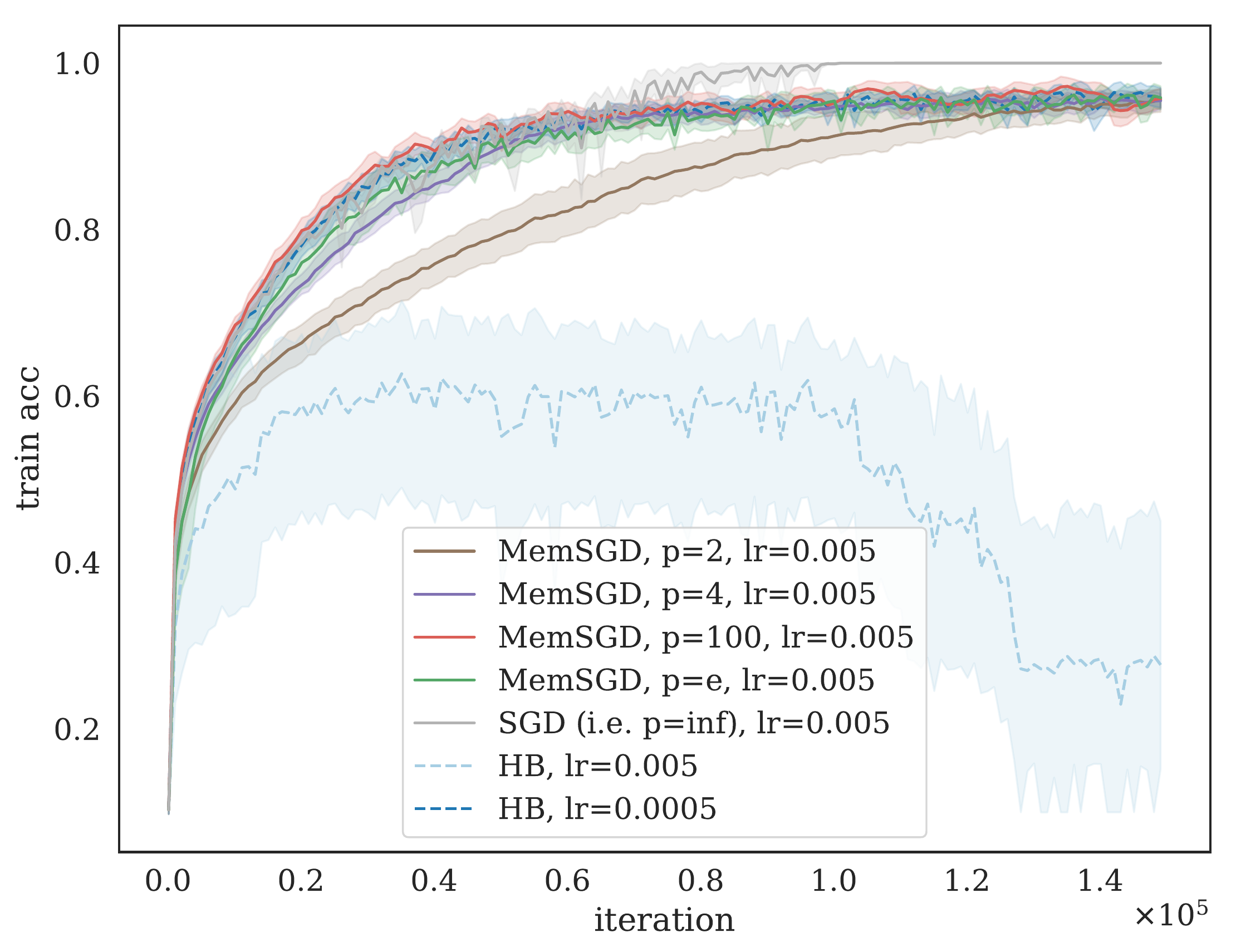}\\  
            ~
             \includegraphics[width=0.23\linewidth,valign=c]{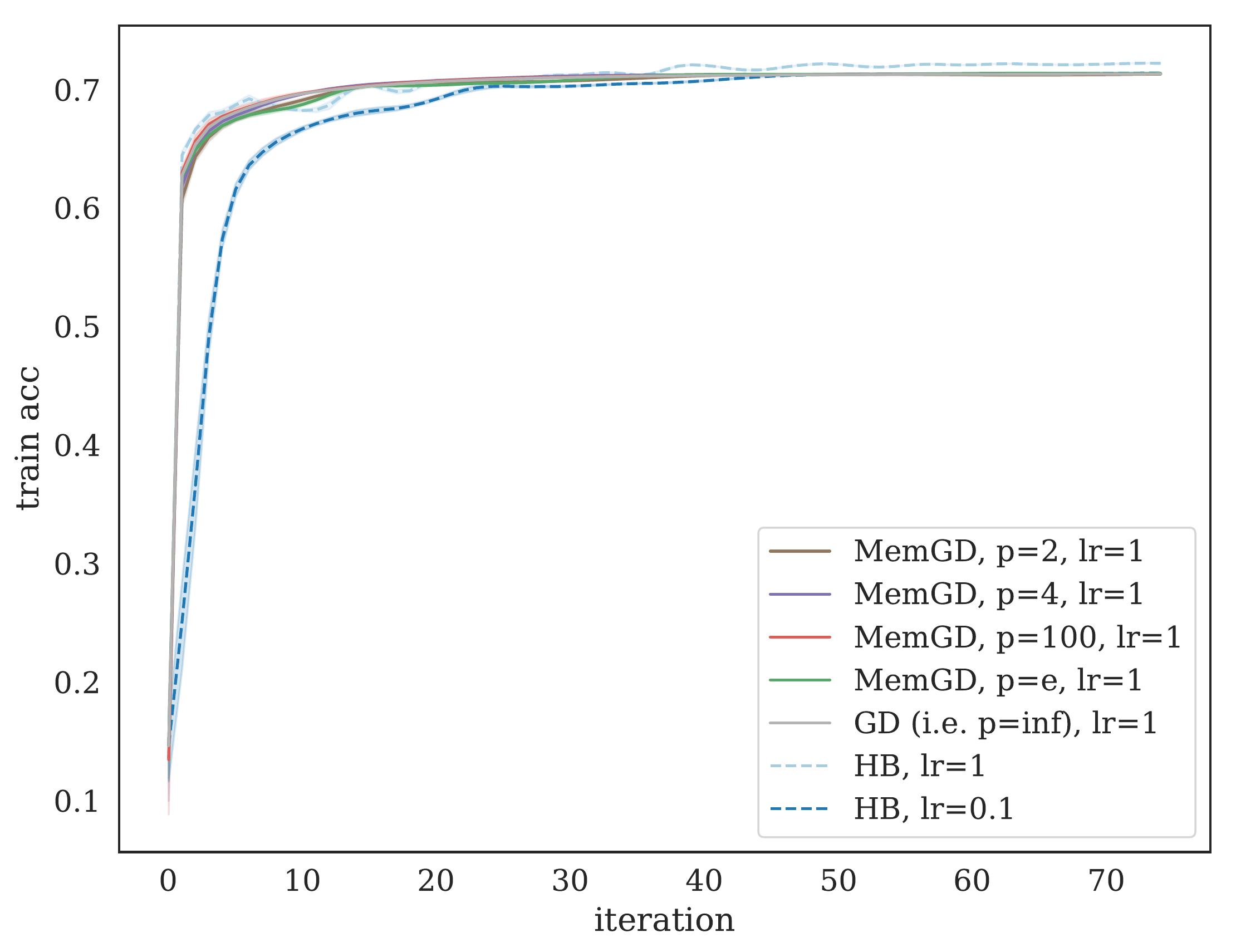}&
            \includegraphics[width=0.23\linewidth,valign=c]{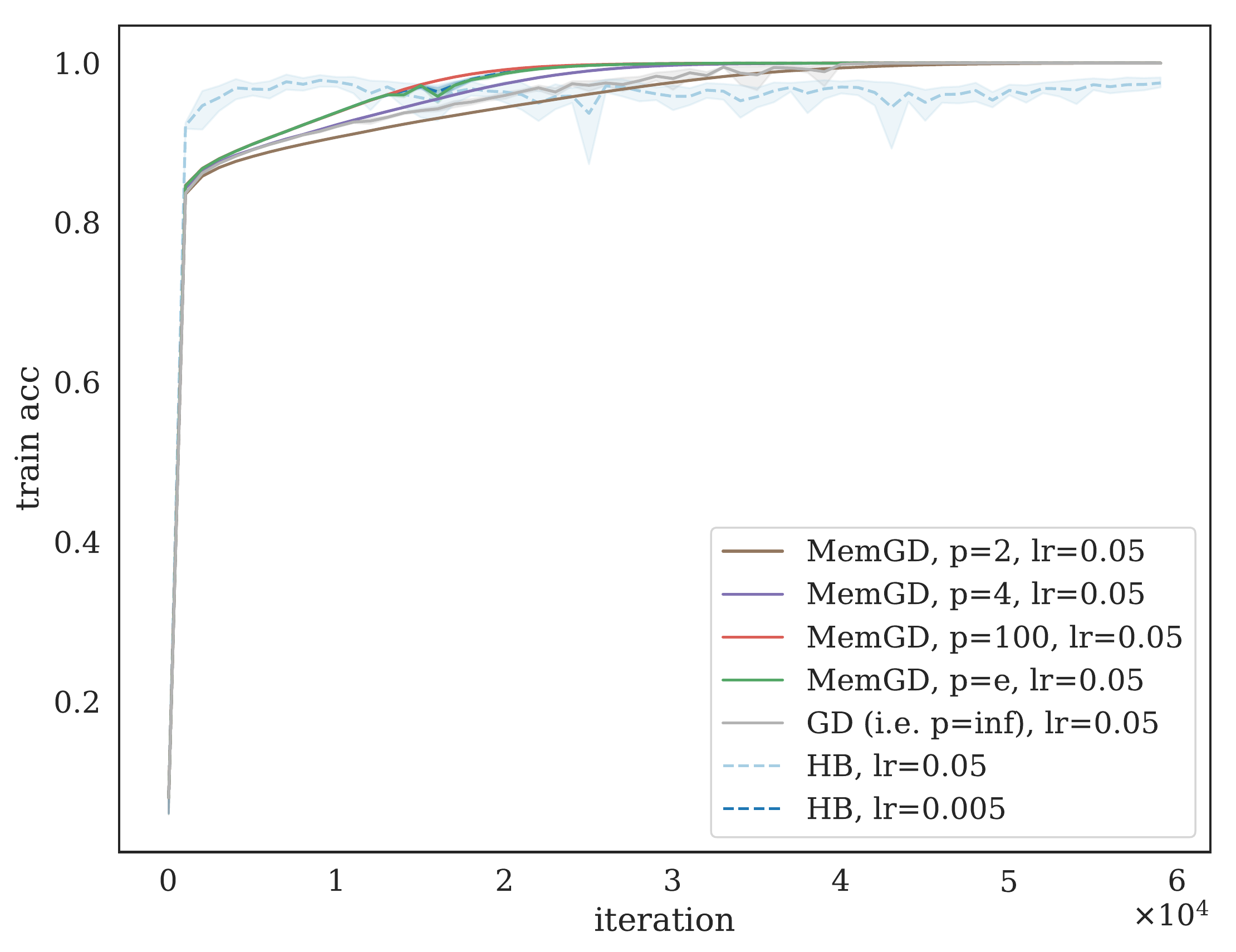}&
            \includegraphics[width=0.23\linewidth,valign=c]{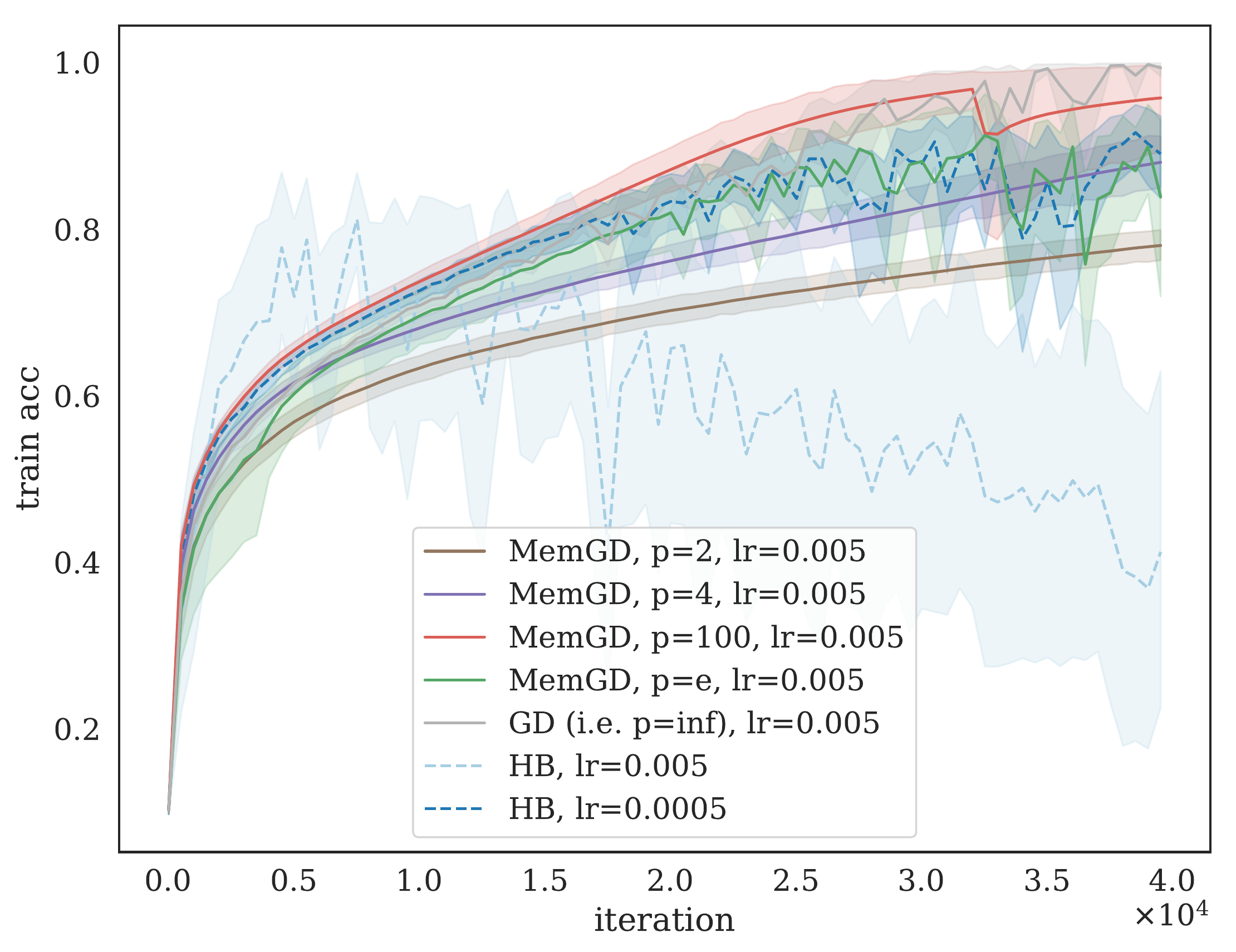}\\  
	  \end{tabular}
	  	  \vspace{-4mm}
          \caption{Real world experiments: Full training accuracy over iterations in mini- (top) and full batch (bottom) setting (undefined for autoencoder). Average and $95\%$ confidence interval of 10 runs with random initialization.} \label{fig:exp_acc}
\end{figure}

\begin{figure}[ht]
\centering 
          \begin{tabular}{c@{}c@{}c@{}c@{}}
          Covtype Logreg & MNIST Autoencoder & FashionMNIST MLP & CIFAR-10 CNN
          \\
            \includegraphics[width=0.23\linewidth,valign=c]{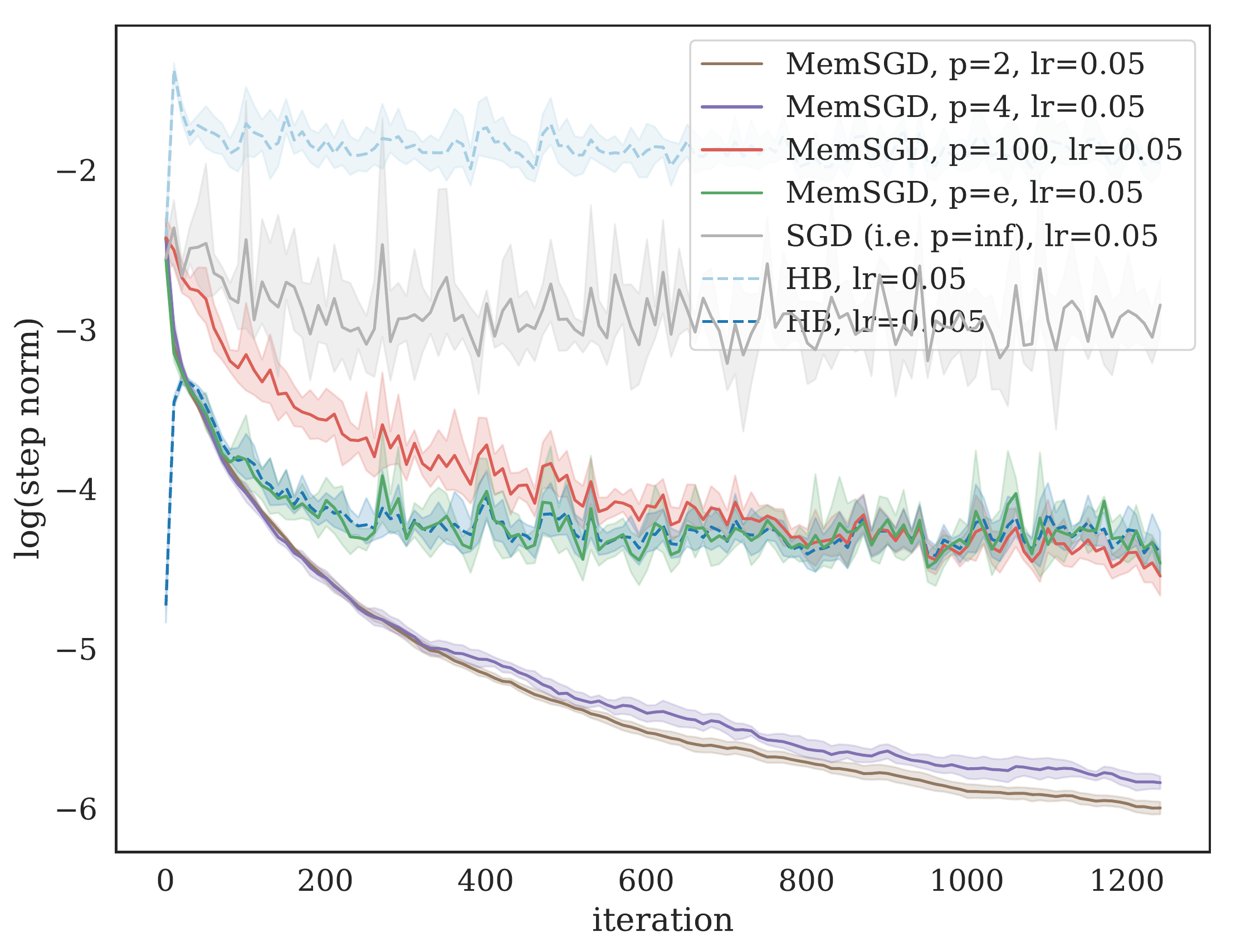}&
            \includegraphics[width=0.23\linewidth,valign=c]{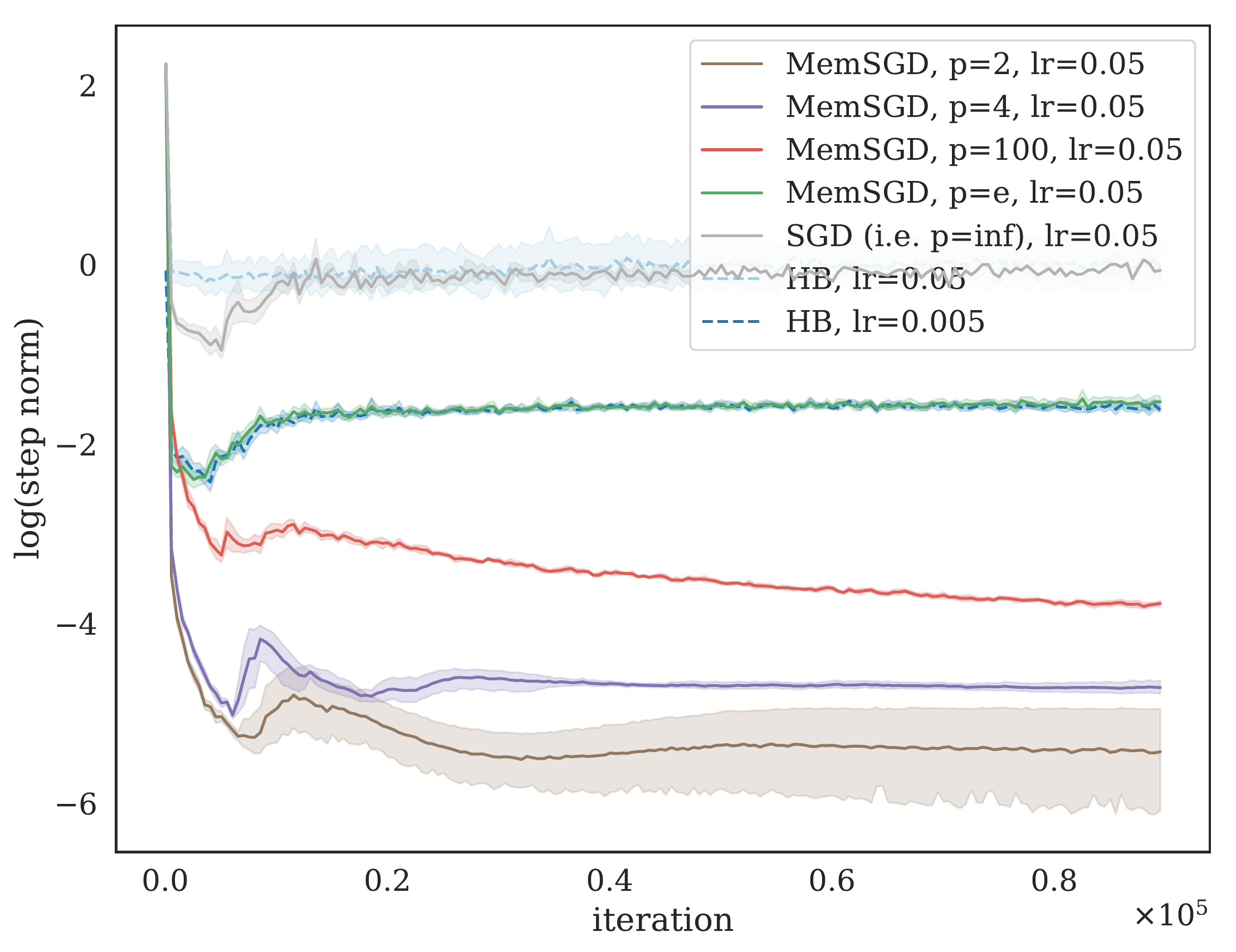}&
            \includegraphics[width=0.23\linewidth,valign=c]{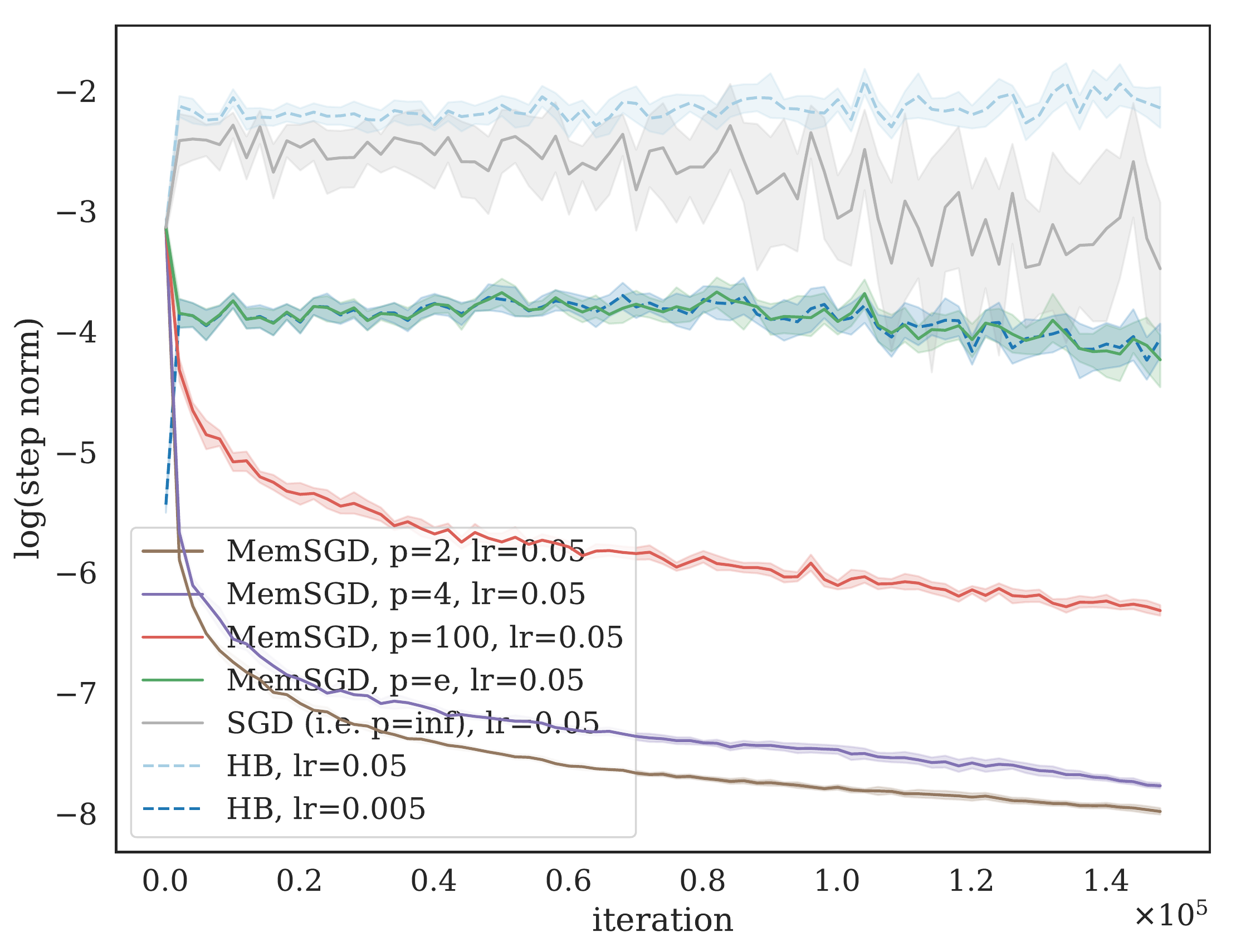}&
            \includegraphics[width=0.23\linewidth,valign=c]{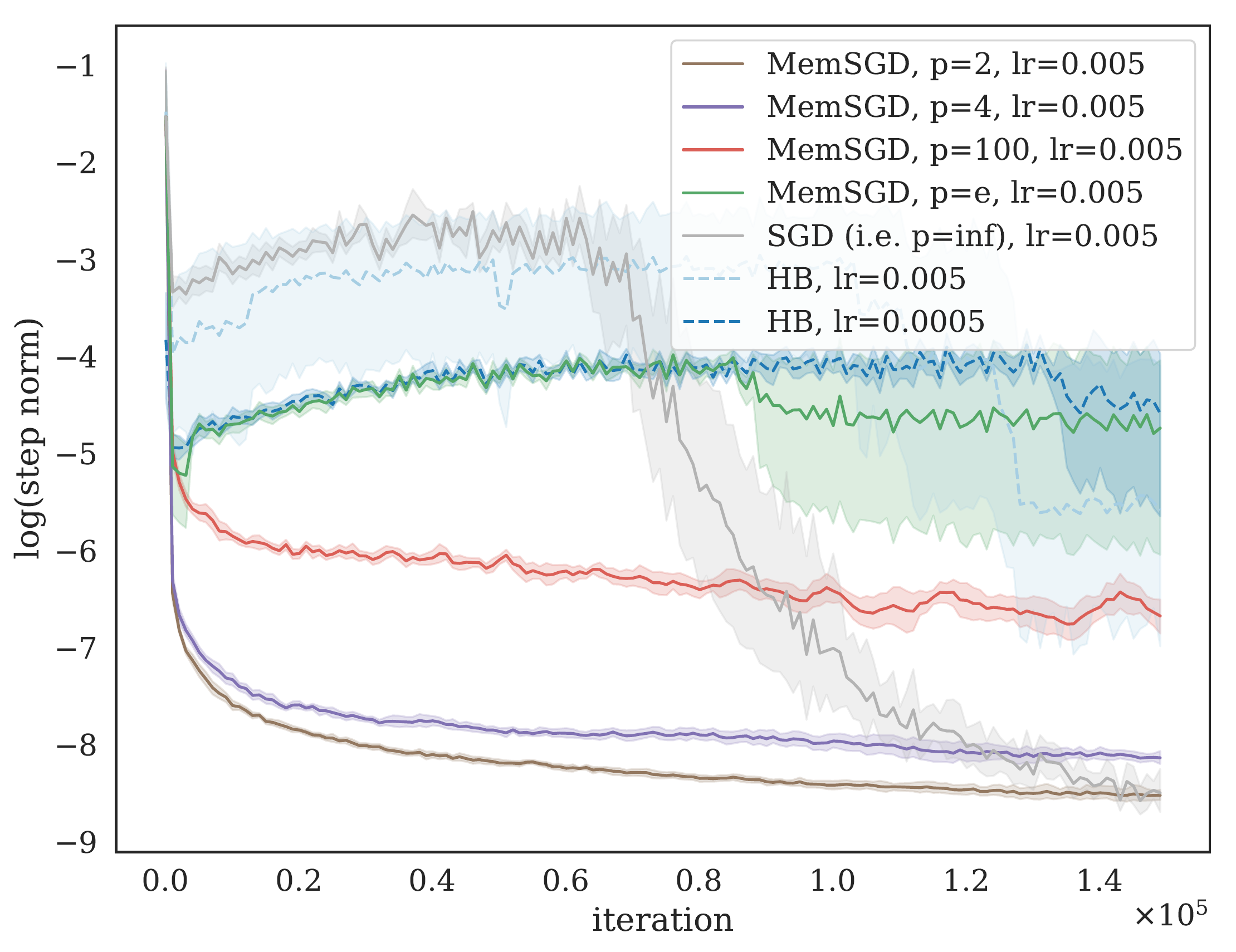}\\  
            ~
             \includegraphics[width=0.23\linewidth,valign=c]{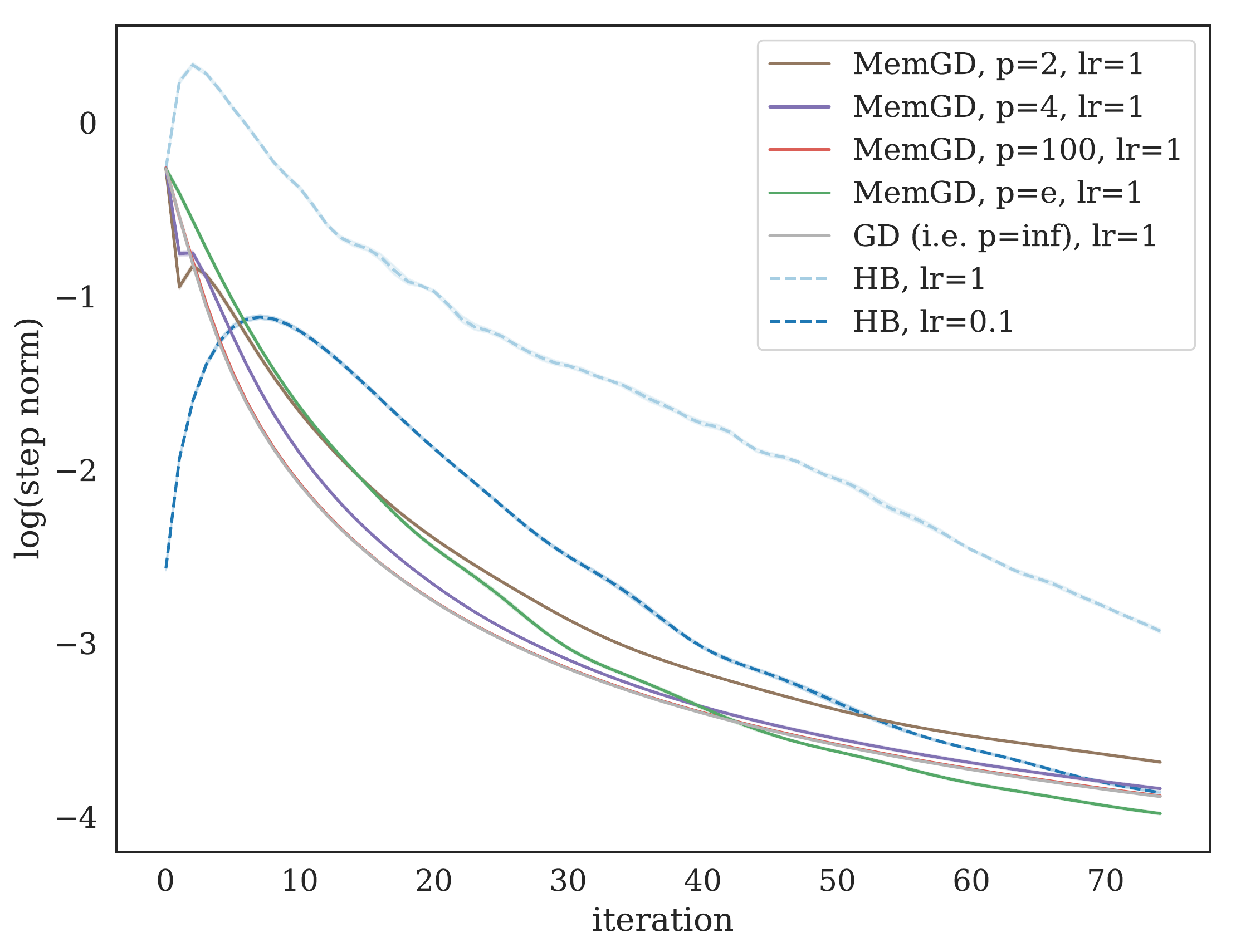}&
            \includegraphics[width=0.23\linewidth,valign=c]{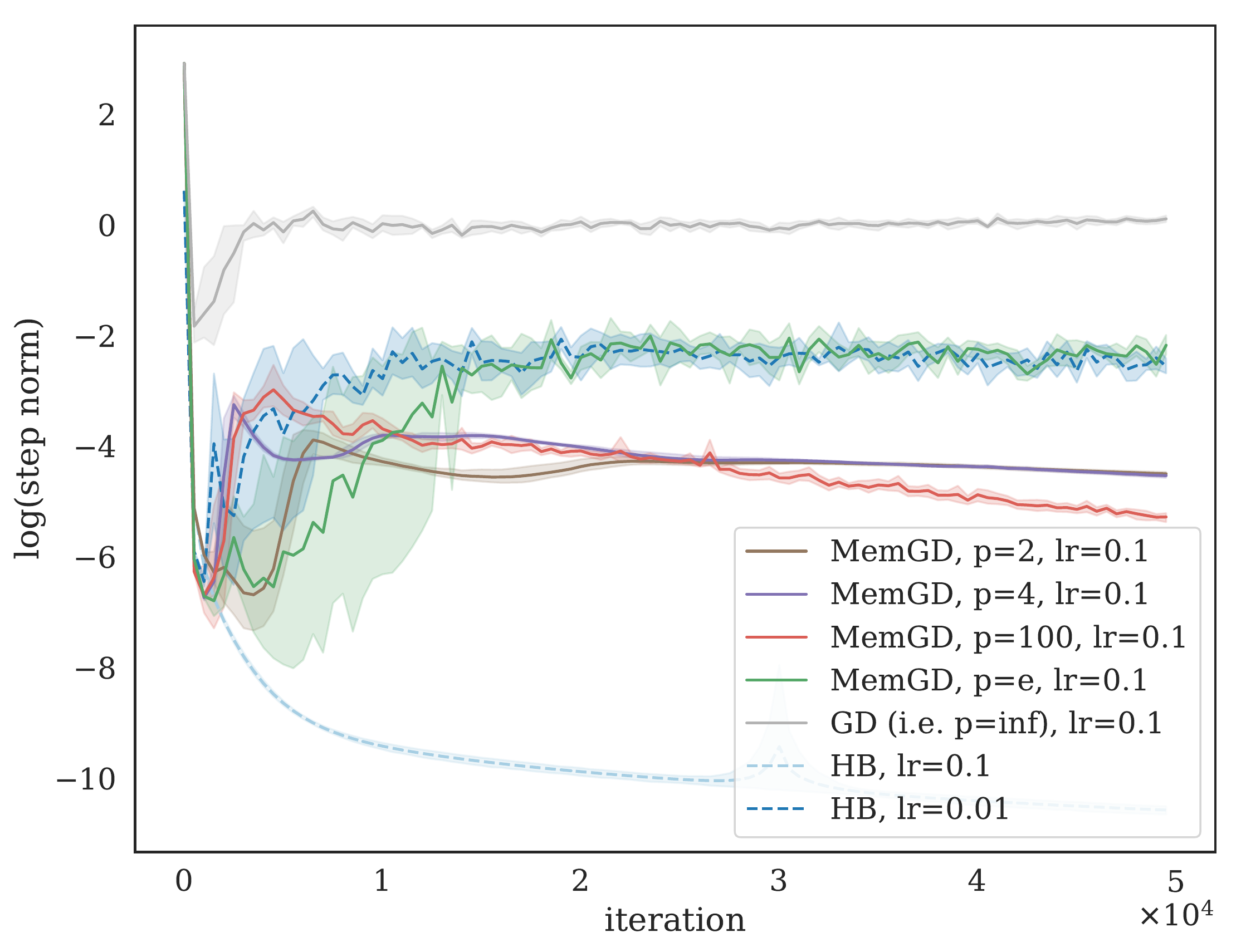}&
            \includegraphics[width=0.23\linewidth,valign=c]{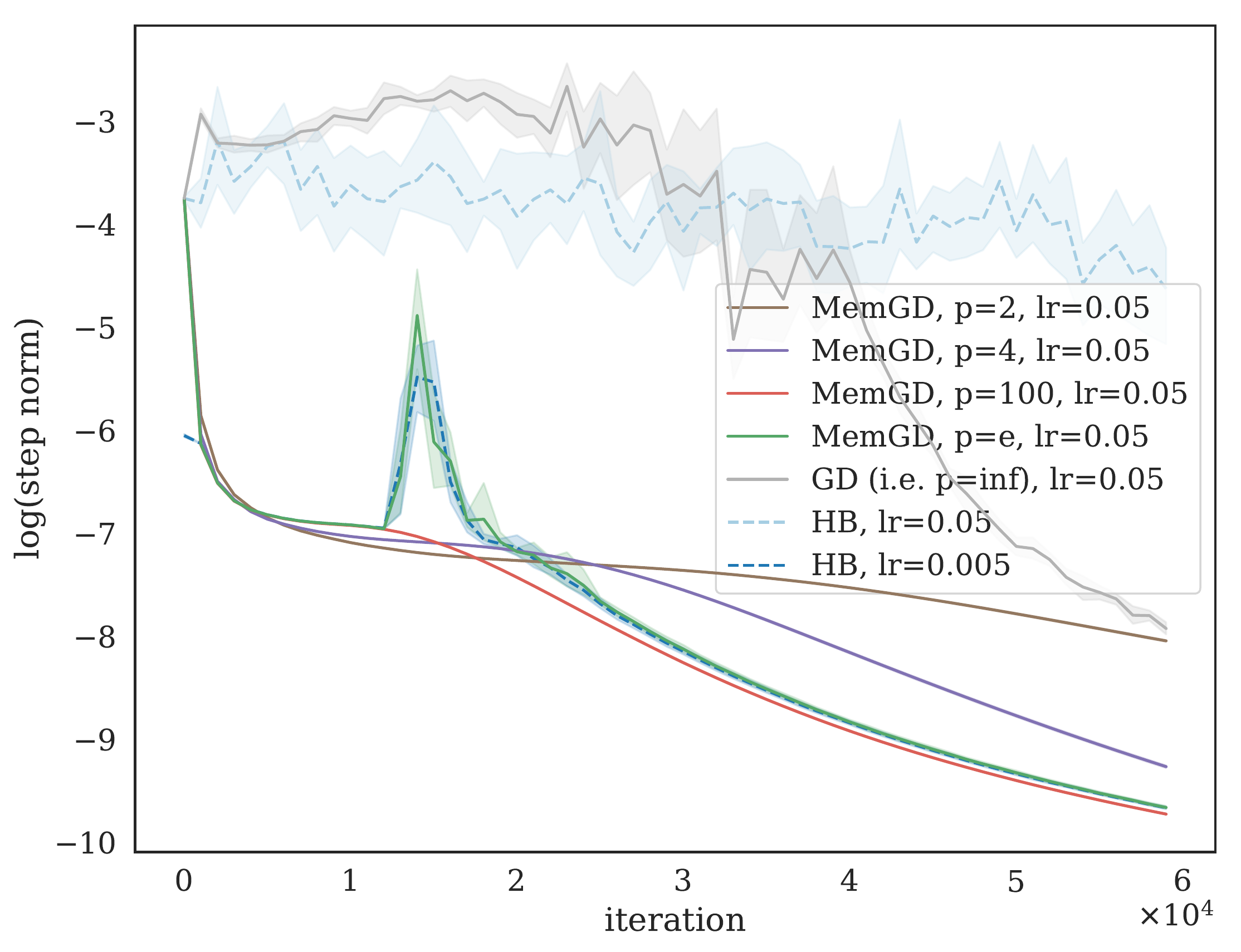}&
            \includegraphics[width=0.24\linewidth,valign=c]{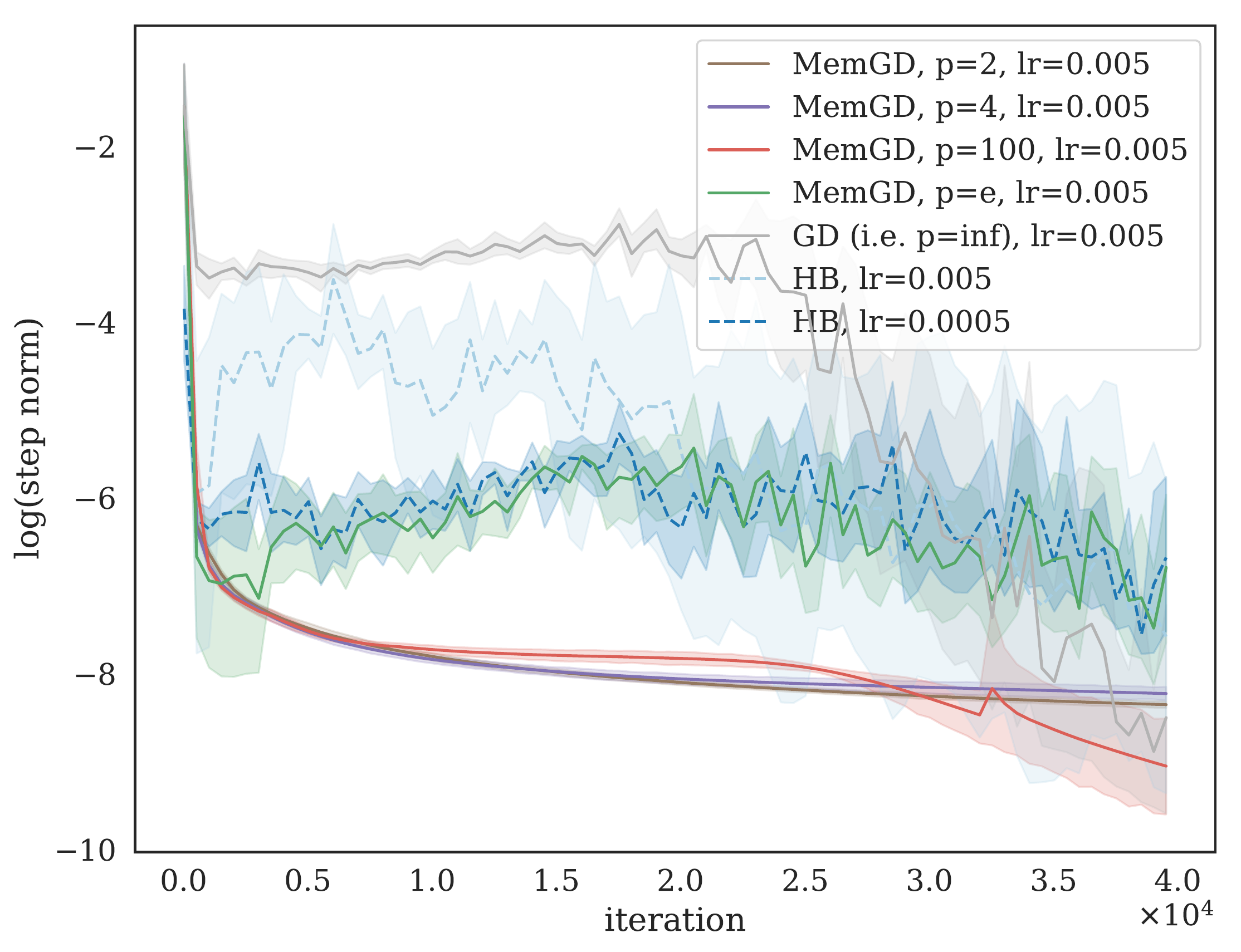}\\  
	  \end{tabular}
	  	  \vspace{-4mm}
          \caption{Real world experiments: Log step norm over iterations in mini- (top) and full batch (bottom) setting. Average and $95\%$ confidence interval of 10 runs with random initialization.}\label{fig:exp_stepnorm}
\end{figure}

\begin{figure}[ht]
\centering          
          \begin{tabular}{|c@{}|c@{}|}
          \hline
          Original&
             \includegraphics[width=0.7\linewidth,valign=c]{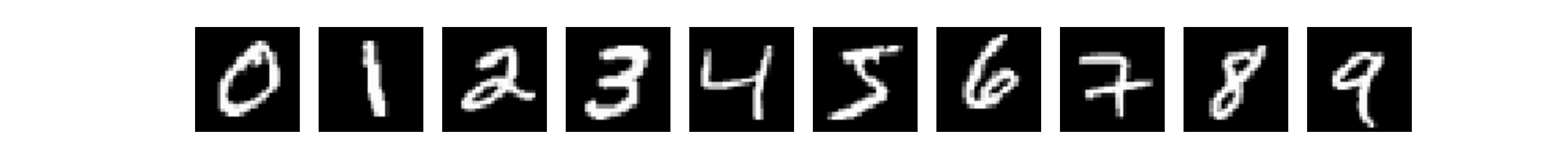} \\  \hline
              MemSGD (p=2) &
             \includegraphics[width=0.7\linewidth,valign=c]{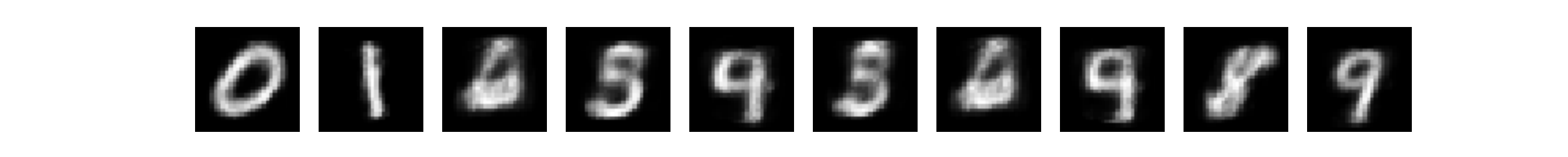} \\
              MemSGD (p=4) &
             \includegraphics[width=0.7\linewidth,valign=c]{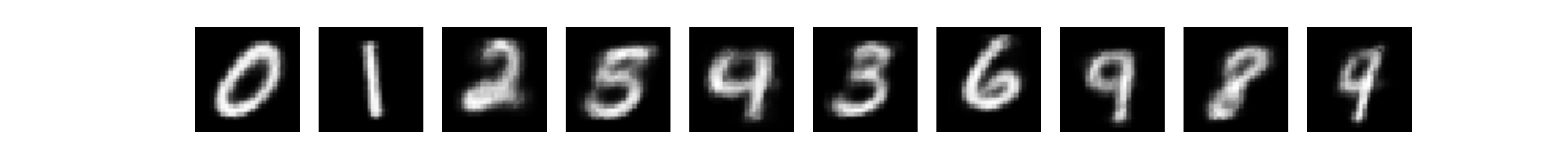} \\
             MemSGD (p=100) $\;$ &
             \includegraphics[width=0.7\linewidth,valign=c]{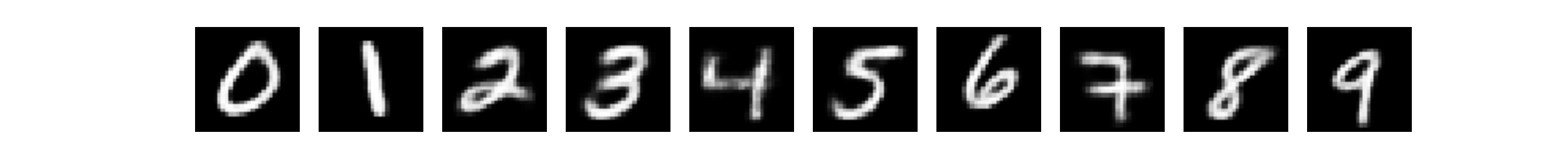} \\
              MemSGD (p=e) &
             \includegraphics[width=0.7\linewidth,valign=c]{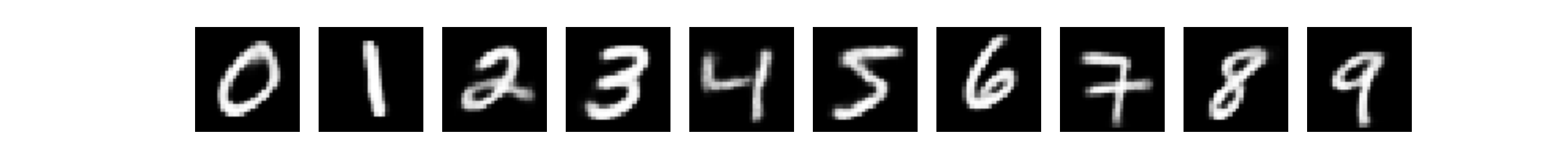} \\
             
          SGD (i.e. p=inf) &
             \includegraphics[width=0.7\linewidth,valign=c]{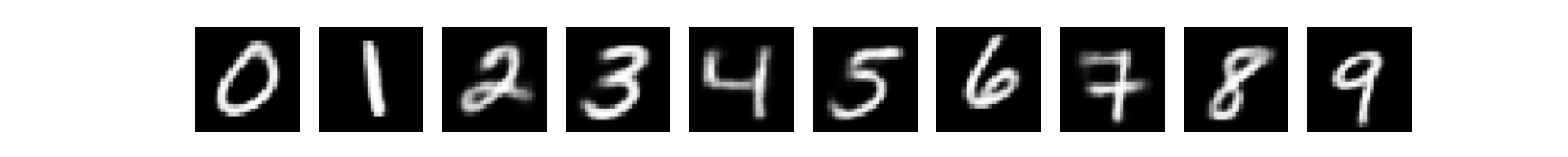} \\
         
          HB (lr=0.005) &
             \includegraphics[width=0.7\linewidth,valign=c]{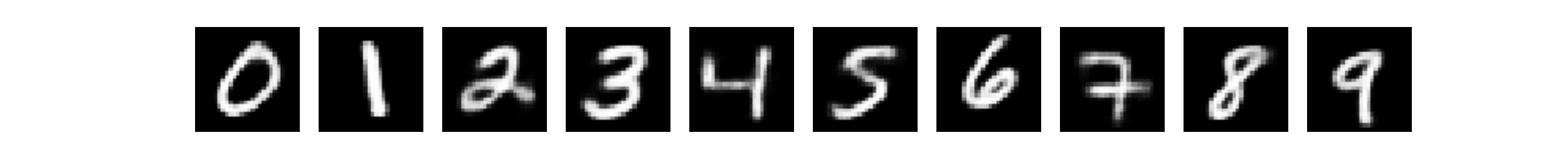} \\
       HB (lr=0.05) &
             \includegraphics[width=0.7\linewidth,valign=c]{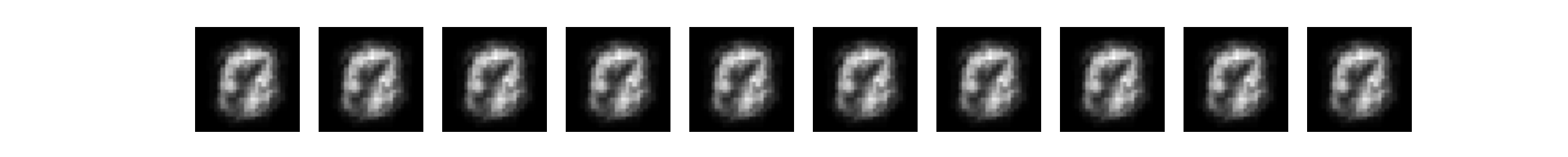} \\\hline
	  \end{tabular}
          \caption{ \footnotesize{Original and reconstructed MNIST digits for different stochastic optimization methods after convergence. Compare Fig.~\ref{fig:exp_loss} for corresponding loss.}}
     \label{fig:exp_rec}
     
\end{figure}

\end{document}